\documentclass[]{bytedance_seed}

\usepackage[utf8]{inputenc} 
\usepackage[T1]{fontenc}    
\usepackage{hyperref}       
\usepackage{url}            
\usepackage{booktabs}       %

\setcitestyle{numbers,square}

\usepackage{blindtext}

\usepackage{algorithm}
\usepackage{algpseudocode}
\usepackage{longtable}

\usepackage{tabularray}

\usepackage{pgfplots}
\pgfplotsset{compat=newest}

\usepackage{subcaption}
\usepackage{pstricks}

\allowdisplaybreaks

\usepackage{tcolorbox}

\newtcolorbox{takeawaybox}{
    colback=blue!8,
    colframe=black,
    boxrule=0.6pt,
    arc=4pt,
    left=6pt,
    right=6pt,
    top=6pt,
    bottom=6pt
}

\usepackage{amssymb}

\usepackage{mathtools}
\usepackage{commath}
\usepackage{relsize}
\usepackage[font={small}]{caption}
\usepackage{comment,soul}

\usepackage[thinlines]{easytable}

\usepackage{multirow}
\usepackage[title]{appendix}
\usepackage{cleveref}

\usepackage{amsmath,amsfonts,bm}
\usepackage{amsthm}

\theoremstyle{plain}
\newtheorem{theorem}{Theorem}[section]

\newtheorem{condition}{Condition}[section]
\newtheorem{claim}{Claim}[section]
\newtheorem{principle}{Principle}[section]

\theoremstyle{definition}

\newtheorem{assumption}[theorem]{Assumption}

\theoremstyle{remark}

\def\eqref#1{equation~\ref{#1}}

\def\1{\bm{1}}

\def\vzero{{\bm{0}}}

\def\vepsilon{{\bm{\epsilon}}}
\def\va{{\bm{a}}}
\def\vb{{\bm{b}}}

\def\vh{{\bm{h}}}

\def\vm{{\bm{m}}}

\def\vu{{\bm{u}}}
\def\vv{{\bm{v}}}

\def\vx{{\bm{x}}}
\def\vy{{\bm{y}}}
\def\vz{{\bm{z}}}

\def\mA{{\bm{A}}}
\def\mB{{\bm{B}}}

\def\mG{{\bm{G}}}

\def\mI{{\bm{I}}}

\def\mL{{\bm{L}}}
\def\mM{{\bm{M}}}

\def\mQ{{\bm{Q}}}
\def\mR{{\bm{R}}}

\def\mU{{\bm{U}}}
\def\mV{{\bm{V}}}
\def\mW{{\bm{W}}}

\def\mSigma{{\bm{\Sigma}}}

\DeclareMathAlphabet{\mathsfit}{\encodingdefault}{\sfdefault}{m}{sl}
\SetMathAlphabet{\mathsfit}{bold}{\encodingdefault}{\sfdefault}{bx}{n}

\newcommand{\R}{\mathbb{R}}

\newcommand{\normrms}{\mathrm{R}}

\newcommand{\nin}{n_{\mathrm{in}}}
\newcommand{\nout}{n_{\mathrm{out}}}

\title{Spectral Condition for $\mu$P under Width–Depth Scaling}

\author[1\ddagger]{Chenyu Zheng}
\author[1]{Rongzhen Wang}
\author[2]{Xinyu Zhang}
\author[1\dagger]{Chongxuan Li}

\affiliation[1]{Gaoling School of AI, Renmin University of China}
\affiliation[2]{ByteDance Seed}

\contribution[\ddagger]{Work done during an internship at ByteDance Seed}
\contribution[\dagger]{Correspondence to Chongxuan Li.}

\abstract{
Generative foundation models are increasingly scaled in both width and depth, posing significant challenges for stable feature learning and reliable hyperparameter (HP) transfer across model sizes.
While maximal update parameterization~($\mu$P) has provided a principled solution to both problems for width scaling, existing extensions to the joint width–depth scaling regime remain fragmented, architecture- and optimizer-specific, and often rely on technically involved theories.
In this work, we develop a simple and unified spectral framework for $\mu$P under joint width–depth scaling. 
For deep residual networks whose residual blocks contain $k$ transformations, the framework specifies how the norms of weights and their per-step updates should scale with width and depth.
It reveals a fundamental transition from $k=1$ to $k\geq 2$, unifying previously disparate $\mu$P formulations and identifying the $k\geq 2$ case as more appropriate for practical architectures with multi-transformation branches such as Transformers.
Building on this framework, we derive a general recipe for implementing $\mu$P across a broad class of optimizers by mapping spectral constraints to concrete HP parameterizations, recovering existing results and extending them to additional optimizers.
Finally, experiments on GPT-2 style language models show that the $\mu$P formulation derived from the $k\geq 2$ case achieves stable feature learning and robust HP transfer under width–depth scaling, whereas standard parameterization and $\mu$P in the $k=1$ case often fail to do so. These results support the practical effectiveness of the proposed spectral framework.
}

\date{Feb 28, 2026 (v1), May 11, 2026 (v2)}

\checkdata[Project Page]{\url{https://github.com/ML-GSAI/Width-Depth-muP}}

\begin{document}

\maketitle

\setcounter{tocdepth}{-1}

\section{Introduction}
\label{sec: intro}

Generative foundation models have been rapidly scaling in \emph{both width and depth}~\citep{scalinglaw-kaplan,scalinglaw-Chinchilla,liu2024deepseek,gpt-5,yang2025qwen3}, and this trend is expected to continue in the foreseeable future as datasets grow and task complexity increases. However, when model sizes become sufficiently large (e.g., billions of parameters), feature learning dynamics often become unstable or degenerate~\citep{DBLP:conf/iclr/SchoenholzGGS17-deepinfo,DBLP:conf/nips/JacotHG18-NTK}, and the hyperparameter (HP) tuning becomes prohibitively expensive~\citep{TP5}. These issues pose fundamental obstacles to efficient scaling, underscoring the need for principled methods enabling stable feature learning and reliable HP transfer from small models to larger ones.

Maximal update parameterization ($\mu$P)~\citep{TP4,TP5} was originally proposed to address both challenges for width scaling, and has recently been preliminarily extended to settings that jointly scale width and depth~\citep{TP-6,DBLP:conf/iclr/BordelonNLHP24-dmft-depth,DBLP:conf/nips/BordelonCP24-transformer,completep,muon-cp}. By appropriately reparameterizing HPs with model size, $\mu$P aims to preserve scale-invariant feature learning, while maximizing the feature change induced by parameter updates, leading to stable and efficient training dynamics~\citep{TP4,completep}. Moreover, $\mu$P empirically stabilizes optimal HPs across different scales, enabling direct transfer of HPs tuned on small models to much larger ones~\citep{TP5,dit-mup}.

However, in the joint width–depth scaling regime, existing $\mu$P formulations remain preliminary. They are often tightly coupled to specific architectural choices, such as the internal depth of residual blocks~\citep{TP-6,DBLP:conf/iclr/BordelonNLHP24-dmft-depth,DBLP:conf/nips/BordelonCP24-transformer,completep} and particular optimization algorithms~\citep{TP-6,DBLP:conf/iclr/BordelonNLHP24-dmft-depth,completep,muon-cp}. Moreover, their derivations typically rely on technically involved tools such as Tensor Programs~\citep{TP4b,TP-6} or dynamical mean-field theory~\citep{DBLP:conf/nips/BordelonCP24-transformer,DBLP:conf/iclr/BordelonNLHP24-dmft-depth}.
Consequently, it remains difficult for the community to both systematically understand existing results and extend the $\mu$P principle to new architectures and optimizers, highlighting the need for a simple and unified theoretical framework.

To address the challenges outlined above, we draw inspiration from the unified spectral perspective developed for width-scaling $\mu$P~\citep{mup-spectral}.
We extend this spectral perspective to the joint width–depth scaling regime, yielding a simple, unified framework for realizing the $\mu$P principle in deep residual networks and systematically deriving $\mu$P formulations across a broad class of optimizers.
Our main contributions are summarized as follows.

First, we introduce a unified spectral scaling framework for realizing the $\mu$P principle in deep residual networks with fixed $k$-layer residual blocks under width-depth scaling. It specifies how the RMS operator norms of weights and per-step updates should scale with model size.
Across architectures with different fixed residual-block depth $k$s, the spectral condition changes fundamentally from $k=1$ (Condition~\ref{condition: one-layer}) to $k=2$ (Condition~\ref{condition: scale-invariant fl}) due to the emergence of higher-order update terms, but yields no essential further change in the resulting $\mu$P formulation for $k\geq 2$.
Besides, Condition~\ref{condition: one-layer} recovers Depth-$\mu$P-style results~\citep{TP-6,DBLP:conf/iclr/BordelonNLHP24-dmft-depth}, while Condition~\ref{condition: scale-invariant fl} recovers CompleteP-style results~\citep{DBLP:conf/nips/BordelonCP24-transformer,completep,muon-cp}, which our theory suggests are more appropriate for practical architectures with multi-layer residual branches such as Transformers.
Notably, our analysis uses only elementary linear algebra and probability, making it easier to follow than previous works.

Second, building on the proposed spectral condition, we present a unified recipe for implementing $\mu$P across a broad class of optimizers by mapping the spectral constraints to concrete HP parameterizations. Concretely, we systematically derive $\mu$P parameterizations for Muon-Kimi~\citep{muon-kimi}, Muon~\citep{jordan6muon}, Shampoo~\citep{gupta2018shampoo}, SOAP~\citep{vyas2024soap}, AdamW~\citep{adamw}, Sophia~\citep{DBLP:conf/iclr/Liu0HL024-sophia}, Lion~\citep{DBLP:conf/nips/ChenLHRW0DLHLL23-lion}, SGD, and Spectral Sphere Optimizer (SSO)~\citep{xie2026controlled-sso}. 
These parameterizations are derived from the optimizers' update rules rather than ad hoc tuning heuristics; they also recover important existing width-depth $\mu$P formulations~\citep{TP-6,DBLP:conf/iclr/BordelonNLHP24-dmft-depth,DBLP:conf/nips/BordelonCP24-transformer,completep,muon-cp} as special cases.

Finally, through controlled experiments on GPT-2 style Transformer language models~\citep{gpt-2,nanogpt}, we empirically demonstrate that the $\mu$P formulation derived from Condition~\ref{condition: scale-invariant fl} achieves stable feature learning and robust HP transfer under joint width-depth scaling. In contrast, standard parameterization (SP) and the $\mu$P formulation derived from Condition~\ref{condition: one-layer} are often unstable or fail to transfer HPs reliably. Together, these results support the practical effectiveness of the proposed spectral framework in realistic pretraining settings.

\section{Preliminaries}
\label{sec: prelimi}

We begin by establishing the necessary background for mathematical techniques and $\mu$P.
Additional related work is discussed in Appendix~\ref{app: related work}.

\subsection{Mathematical Notations and Properties}

\paragraph{Scaling notation.}
Let $f$ and $g$ be positive functions of the scaling variables (e.g., width and depth).
We use $f=\mathcal{O}(g)$, $f=\Omega(g)$, and $f=\Theta(g)$ in the standard asymptotic sense.
In width-scaling settings, the asymptotics are taken with respect to width; in width-depth scaling settings, they are taken with respect to jointly increasing width and depth.
Fixed quantities, such as data dimensions, are hidden in the constants.

\paragraph{Vector and matrix norms.}
We define $[n] = \{1, 2, \dots, n\}$.
For a vector $\va \in \R^n$, we use $\Vert \va \Vert_2$ and $\Vert \va \Vert_{\normrms}$ to denote its $\ell_2$ norm and Root Mean Square (RMS) norm, respectively. By definition, we have $\Vert \va \Vert_{\normrms} = \Vert \va \Vert_2 / \sqrt{n}$. 
For a matrix $\mA \in \R^{m \times n}$, we use $\Vert \mA \Vert_2$, and $\Vert \mA \Vert_{\normrms}$ to denote its spectral norm and RMS operator norm, respectively. The RMS operator norm is defined as $\Vert \mA \Vert_{\normrms}
:= \max_{\vv \neq \vzero} \frac{\Vert \mA \vv \Vert_{\normrms}}{\Vert \vv \Vert_{\normrms}}
= \sqrt{\frac{n}{m}} \, \Vert \mA \Vert_2$. Since spectral norm conditions can be equivalently expressed using the RMS operator norm, we adopt the latter to write spectral conditions for notational simplicity throughout this paper.
Finally, in the main text, we primarily rely on the following elementary properties of vector and matrix norms.

\begin{itemize}
    \item \textbf{Subadditivity:} $\Vert \mA + \mB \Vert_{\normrms} \leq \Vert \mA \Vert_{\normrms} + \Vert \mB \Vert_{\normrms}$ and $\Vert \va + \vb \Vert_{\normrms} \leq \Vert \va \Vert_{\normrms} + \Vert \vb \Vert_{\normrms}$.
    \item \textbf{Submultiplicativity:} $\Vert \mA \mB \Vert_{\normrms} \leq \Vert \mA \Vert_{\normrms} \Vert \mB \Vert_{\normrms}$ and $\Vert \mA \vv \Vert_{\normrms} \leq \Vert \mA \Vert_{\normrms} \Vert \vv \Vert_{\normrms}$.
    \item \textbf{Spectral norm of random matrices}~\citep{hdp}: for a matrix $\mA \in \R^{m \times n}$ with i.i.d. entries sampled from Gaussian distribution $\mathcal{N}(0, \sigma^2)$, its spectral norm satisfies $\Vert \mA \Vert_2 = \Theta\big(\sigma(\sqrt{m}+\sqrt{n})\big)$ with high probability.
\end{itemize}

\subsection{Spectral Condition for $\mu$P under Width Scaling}
\label{sec: width setup}

We briefly review $\mu$P and its spectral condition under width scaling~\citep{mup-spectral}, which serves as the conceptual foundation of our extension to joint width–depth scaling. 

\paragraph{Theoretical setup.}
A canonical setting~\citep{mup-spectral} for analyzing $\mu$P under width scaling is the deep linear multi-layer perceptron (MLP) trained with one step on a single data point $(\vx,\vy)$.
Specifically, we set $\vh_0(\vx)=\mW_0\vx$ and denote by $\mW_l$ the matrix weight at layer $l$.
The network is then defined as
\begin{align*}
\vh_l(\vx) = \mW_l \vh_{l-1}(\vx), \quad l\in[L+1],
\end{align*}
where the depth $L=\Theta(1)$ is fixed, while the model widths scale to infinity.
Although highly simplified, this setup captures the core width-scaling behavior of feature learning~\citep{mup-spectral}.
Moreover, $\mu$P formulations motivated by this setup have been successfully used in practical pretraining~\citep{TP5,hu2024minicpm,dit-mup}, including Transformers trained with AdamW, enabling stable feature learning and reliable HP transfer.

\paragraph{$\mu$P principle and its spectral condition under width scaling.}
As network size increases, standard parameterization (SP) typically leads to either exploding or vanishing feature updates.
$\mu$P resolves this issue by reparameterizing HPs with size to realize the following stable and efficient principle~\citep{TP4}.
\begin{principle}[$\mu$P principle]
\label{principle: mup}
$\mu$P aims to realize scale-invariant feature learning while maximizing the feature change induced by parameter updates.
Formally, it requires
\begin{align}
&\|\vh_l(\vx)\|_{\normrms}=\Theta(1), \ \|\Delta\vh_l(\vx)\|_{\normrms}=\Theta(1),\ l\in[L]. \tag{P1} \label{principle: stable} \\ 
& \text{maximize $\Delta\mW_l$'s contribution to $\Delta\vh_L(\vx),\ l\in[L]$}. \tag{P2} \label{principle: max}
\end{align}
\end{principle}
Under the width-scaling regime,~\citet{mup-spectral} showed that Principle~\ref{principle: mup} is ensured by the following simple spectral scaling condition on the weights and their per-step updates:
\begin{align}
\|\mW_l\|_{\normrms}=\Theta(1),\ 
\|\Delta \mW_l\|_{\normrms}=\Theta(1), \ l\in [L+1].
\label{eqn:spec-width}
\end{align}
This spectral condition provides a concise and unified perspective on $\mu$P under width scaling, from which the HP parameterization of a broad class of optimization algorithms can be derived in a unified and transparent manner~\citep{mup-spectral,DBLP:conf/nips/SAM-mup,extending-mup}.

\paragraph{Limitation of the width-scaling condition.}
The spectral condition~(\ref{eqn:spec-width}), however, applies only when depth is fixed.
In contrast, modern foundation models scale both width and depth, and existing $\mu$P results in this regime rely on complex analyses, with conclusions that depend on specific architectures and optimizers~\citep{TP-6,DBLP:conf/nips/BordelonCP24-transformer,DBLP:conf/iclr/BordelonNLHP24-dmft-depth,completep,muon-cp}.
This motivates our central question: \emph{Can we establish a simple and unified spectral perspective in the joint width–depth scaling regime?}

\section{Spectral Condition for $\mu$P under Width-Depth Scaling}
\label{sec: spec condition}

In this section, we establish the spectral condition for $\mu$P under width-depth scaling. We first introduce our problem setup, then derive the corresponding spectral $\mu$P condition and discuss its implications.

\subsection{Problem Setup}
\label{sec: two-layer setup}

Our setup extends the width-scaling setting in Section~\ref{sec: width setup} by introducing residual connections, which are essential for stabilizing training of deep networks~\citep{resnet}.
Motivated by practical residual branches that often contain multiple transformations (e.g., attention or FFN modules in Transformers), we consider residual networks whose residual branches contain $k=\Theta(1)$ linear transformations.
For clarity, the main text focuses on the two-layer residual block ($k=2$), which is the minimal setting that captures the essential scaling behavior of any fixed-depth branches with $k\ge 2$ while keeping the analysis minimal.
The $k=1$ case and the general $k\ge2$ case are deferred to Appendices~\ref{app: spec one-layer} and~\ref{app: spectral multi-layer}, respectively.
Formally, the $k=2$ network studied in the main text is defined as:
\begin{align}
& \vh_0(\vx) = \alpha_0\mW_0\vx, \nonumber \\
& \vh_l(\vx) = \vh_{l-1}(\vx) + \alpha_l\mW_l^{(2)}\mW_l^{(1)}\vh_{l-1}(\vx), \ l \in [L] \label{eqn: resnet} \\
&\vh_{L+1}(\vx) = \alpha_{L+1}\mW_{L+1}\vh_L(\vx), \nonumber
\end{align}
where the weights
$\mW_0 \in \R^{n\times d_0}, \mW_l^{(1)} \in \R^{n_l\times {n}}, \mW_l^{(2)} \in \R^{n \times {n}_{l}}, \mW_{L+1} \in \R^{d_{L+1}\times n}$ are all initialized with Gaussian distribution $\left({\mW}_l\right)_{ij} \overset{\mathrm{i.i.d.}}{\sim} \mathcal{N}(0,\sigma_l^2)$\footnote{For notation simplicity, when quantities associated with $\mW_l^{(1)}$ and $\mW_l^{(2)}$ take the same form, we omit the superscript.}
and trained with layerwise learning rates $\eta_l$. Furthermore, $\{\alpha_l\}_{l=0}^{L+1}$ are block multipliers that control the effective strength of each transformation. 

Following existing $\mu$P literature~\citep{TP-6,DBLP:conf/nips/BordelonCP24-transformer,DBLP:conf/iclr/BordelonNLHP24-dmft-depth,completep}, we fix the input and output data dimensions and scale the width and depth to infinity, that is
\begin{equation}
d_0, d_{L+1} = \Theta(1), \quad n_l = \Theta(n),\quad n,L \to \infty .
\label{eq:dimensions}
\end{equation}
This setting is standard in Transformer-based large models~\citep{DBLP:conf/nips/transformer,gpt-5}, where $n$ denotes the model width and is typically of the same order as $n_l$ (e.g., the feed-forward width).
Moreover, we assume $\|\vx\|_{\normrms} = \Theta(1)$,
which holds for common data modalities such as natural images ($\|\vx\|_\normrms=\Theta(1)$) and one-hot encoded language inputs ($\|\vx\|_\normrms=\sqrt{1/d_0}=\Theta(1)$).

In the following subsections, we derive a sufficient and unified spectral condition under this $k=2$ setup for realizing the $\mu$P Principle~\ref{principle: mup} under joint width-depth scaling.

\subsection{Spectral Scaling Condition}

Analogous to the width-scaling condition in Equation~(\ref{eqn:spec-width}), the width-depth condition has two components.
The initial condition on $\mW_l$ controls forward feature propagation, yielding $\|\vh_l(\vx)\|_{\normrms}=\Theta(1)$ across depth.
The update condition on $\Delta\mW_l$ controls the one-step feature change, ensuring $\|\Delta\vh_l(\vx)\|_{\normrms}=\Theta(1)$ while maximizing the direct contribution of weight updates as required by Principle~(\ref{principle: max}).
We now state the resulting sufficient spectral condition.

\begin{condition}[Spectral condition for $\mu$P under joint width-depth scaling, two-layer residual block]
\label{condition: scale-invariant fl}
To realize $\mu$P Principle~\ref{principle: mup}, the initial weights and their per-step updates should satisfy:
\begin{itemize}
    \item \textbf{Initial condition.}
    
    Input and output weights:
    \begin{align}
        \alpha_0 \|\mW_0\|_{\normrms}  =  \Theta(1), \ \alpha_{L+1}\|\mW_{L+1}\|_{\normrms}  =  \Theta(1). \tag{C1.1} \label{eq:init_inandout}
    \end{align}
    Hidden weights:
    \begin{align}
        \alpha_l \|\mW_l^{(2)}\|_{\normrms}\,\|\mW_l^{(1)}\|_{\normrms}=\Theta(1/L), \ l\in[L]. \tag{C1.2} \label{eq:init_hidd}
    \end{align}
    
    \item \textbf{Update condition.}

    Input and output weights:
    \begin{align}
        &\alpha_0\|\Delta \mW_0\|_{\normrms} = \Theta(1),\  \alpha_{L+1}\|\Delta \mW_{L+1}\|_{\normrms} = \Theta(1). \tag{C2.1} \label{eq:update_inandout}
    \end{align}
    Hidden weights (first-order weight update):
    \begin{equation}
    \begin{aligned}
        &\alpha_l\|\Delta \mW_l^{(2)}\|_{\normrms}\,\|\mW_l^{(1)}\|_{\normrms}=\Theta(1/L), \ l\in[L] \\ &\alpha_l\|\mW_l^{(2)}\|_{\normrms}\,\|\Delta\mW_l^{(1)}\|_{\normrms}
        =\Theta(1/L), \ l\in[L]. 
    \end{aligned}
    \tag{C2.2} \label{eq:update_hidd_1}
    \end{equation}
    Hidden weights (second-order weight update):
    \begin{align}
        \alpha_l\|\Delta \mW_l^{(2)}\|_{\normrms}\,\|\Delta \mW_l^{(1)}\|_{\normrms}
        =\Theta(1/L), \ l\in[L]. \tag{C2.3} \label{eq:update_hidd_2}
    \end{align}
\end{itemize}
\end{condition}

Compared with the width-only condition in Equation~(\ref{eqn:spec-width}), Condition~\ref{condition: scale-invariant fl} introduces explicit depth factors for hidden residual blocks.
In particular, the products of the residual multiplier and the relevant weight or update norms must scale as $\Theta(1/L)$, reflecting the accumulation of residual contributions over $L$ blocks.

The hidden-layer update constraints in Condition~\ref{condition: scale-invariant fl} arise from expanding the one-step feature update $\Delta\vh_l(\vx)$.
For a two-layer residual block, this expansion contains first-order terms like $\Delta \mW_l^{(2)}\mW_l^{(1)}$, where exactly one branch weight is updated, and a second-order term including $\Delta\mW_l^{(2)} \Delta\mW_l^{(1)}$, where both branch weights are updated; these give rise to~(\ref{eq:update_hidd_1}) and~(\ref{eq:update_hidd_2}), respectively.
By contrast, a one-layer residual block produces only first-order direct update terms, so the second-order constraint is absent.

This update-order viewpoint helps unify prior disparate $\mu$P results under joint width-depth scaling~\citep{TP-6,DBLP:conf/iclr/BordelonNLHP24-dmft-depth,completep,DBLP:conf/nips/BordelonCP24-transformer,muon-cp} by varying the residual block depth $k$.
When $k=1$, the update expansion contains no second-order term, and the same analysis gives Condition~\ref{condition: one-layer} in Appendix~\ref{app: spec one-layer}. The resulting looser constraints naturally lead to residual multipliers $\alpha_l=\Theta(1/\sqrt{L})$ under standard width-scaling $\mu$P initialization, which recovers Depth-$\mu$P-style results~\citep{TP-6,DBLP:conf/iclr/BordelonNLHP24-dmft-depth}. We defer the details to Appendix~\ref{app: spec one-layer} and~\ref{app: implementation one-layer}.
In contrast, the $k=2$ case introduces the second-order constraint~(\ref{eq:update_hidd_2}), which tightens the hidden residual scaling and leads to residual multipliers $\alpha_l=\Theta(1/L)$ under the same initialization convention.
This recovers CompleteP-style results~\citep{DBLP:conf/nips/BordelonCP24-transformer,completep,muon-cp}, which are more appropriate for practical architectures with multi-transformation residual branches (e.g., Transformers).
Detailed HP parameterizations are given in Section~\ref{sec: implementation} and Appendix~\ref{app: implementation opts HPs}.

Moreover, our analysis naturally extends to residual blocks with any fixed depth $k$ (Condition~\ref{condition: multi-layer} in Appendix~\ref{app: spectral multi-layer}) and to architectures with biases (Condition~\ref{condition: bias} in Appendix~\ref{app: spectral condition bias}).
For residual blocks of depth $k$, the update condition constrains all first- through $k$-th order update terms to scale as $\Theta(1/L)$.
Analogous conditions arise in the presence of biases, accounting for their interactions with weight updates.
As shown in Appendix~\ref{app: spectral multi-layer} and~\ref{app: spectral condition bias}, these additional constraints do not lead to a different $\mu$P formulation compared to the $k=2$ case.
Therefore, \emph{the residual network with two-layer blocks in Equation~(\ref{eqn: resnet}) is the minimal setting that captures the core scaling behavior of practical architectures with multi-layer residual blocks.}

Although the spectral results are derived for a linear residual MLP with a one-step update, they generalize to more general and practical training regimes.
Theoretically, we introduce and verify some natural assumptions from~\citet{mup-spectral} in Appendix~\ref{app: Extension to General Training Settings}, under which the spectral results generalize to finite multiple gradient steps, nonlinearities, and finite multiple training examples.
Empirically, experiments in Section~\ref{sec: experiment} demonstrate that the resulting $\mu$P formulations from Condition~\ref{condition: scale-invariant fl} achieve stable feature learning and reliable HP transfer on GPT-2 style models.
Together with prior empirical $\mu$P studies~\citep{completep,muon-cp}, these results suggest that the simplified setup captures the core scaling behavior relevant in practice.

\begin{takeawaybox}
\textbf{Takeaway 1.}
\textbf{Residual block depth $k$ determines the depth $\mu$P scaling rule}:
$k=1$ gives the Depth-$\mu$P-style scaling, whereas $k\ge2$ introduces high-order update terms and requires the stricter CompleteP-style scaling. The latter is better suited to practical architectures such as Transformers.
\end{takeawaybox}

\subsection{Theoretical Derivation}
\label{sec: Theoretical Derivation}

In this section, we derive Condition~\ref{condition: scale-invariant fl} for the residual network in Equation~(\ref{eqn: resnet}).
We first obtain a preliminary initialization condition from forward feature stability, which controls the accumulated residual branch at initialization.
We then expand the one-step feature update into zero-, first-, and second-order terms.
The first- and second-order terms should satisfy the maximal-update requirement in Principle~(\ref{principle: max}) while keeping the total feature update stable. 
In particular, the second-order term, absent in one-layer residual blocks, yields the additional constraint~(\ref{eq:update_hidd_2}).
Finally, combining these update constraints refines the preliminary initialization condition to Condition~\ref{condition: scale-invariant fl}.

Throughout the derivation, we use norm estimates based on subadditivity and submultiplicativity to track the typical scales of $\|\vh_l(\vx)\|_{\normrms}$ and $\|\Delta\vh_l(\vx)\|_{\normrms}$.
For example, under standard random initialization, we use estimates of the form
$\|\vh_0(\vx)\|_{\normrms}
= \alpha_0\|\mW_0\vx\|_{\normrms}
= \Theta(\alpha_0\|\mW_0\|_{\normrms}\|\vx\|_{\normrms})$.
These estimates should be understood as scale estimates whose tightness relies on standard non-cancellation and alignment behavior in neural network initialization and training.
We discuss the tightness justification under width-depth scaling in Appendix~\ref{app: lower bound}, following the width-scaling treatment of~\citet{mup-spectral}.

\subsubsection{Preliminary Initial Condition}
\label{sec: Preliminary Initial Condition}

We first derive a preliminary initialization condition that ensures stability of feature magnitudes during forward propagation.
We consider each layer sequentially.

\paragraph{Input layer.} By submultiplicativity of the RMS operator norm, we can estimate the norm of $\vh_0(\vx) = \alpha_0 \mW_0\vx$ as
$
\Vert\vh_0(\vx)\Vert_\normrms
=\Theta(\alpha_0\Vert\mW_0\Vert_\normrms \Vert\vx\Vert_\normrms) = \Theta(\alpha_0\Vert\mW_0\Vert_\normrms),
$
where we have assumed $\|\vx\|_{\normrms}=\Theta(1)$.
Thus, requiring $\alpha_0\|\mW_0\|_{\normrms}=\Theta(1)$ ensures $\|\vh_0(\vx)\|_{\normrms}=\Theta(1)$.

\paragraph{Hidden layers.} 
To estimate the scale of hidden features, we expand the residual recursion in Equation~(\ref{eqn: resnet}), which yields
\begin{align}
\vh_s(\vx)
= \vh_{0}(\vx) + \sum_{l=1}^s \alpha_l \mW_l^{(2)}\mW_l^{(1)}\vh_{l-1}(\vx),\ s\in[L].
\label{eqn: hl(x)}
\end{align}

Applying subadditivity together with the scale-estimation convention described above, we estimate
\begin{align*}
\|\vh_s(\vx)\|_{\normrms}
= \Theta\bigg(\|\vh_0(\vx)\|_{\normrms}
+ \big\|\sum_{l=1}^s \alpha_l \mW_l^{(2)} \mW_l^{(1)} \vh_{l-1}(\vx)\big\|_{\normrms}\bigg).
\end{align*}
Since we have $\|\vh_0(\vx)\|_{\normrms}=\Theta(1)$, it suffices to ensure that
$\|\sum_{l=1}^s \alpha_l \mW_l^{(2)} \mW_l^{(1)} \vh_{l-1}(\vx)\|_{\normrms}=\mathcal{O}(1)$  for any $s\in[L]$ to preserve $\|\vh_s(\vx)\|_{\normrms}=\Theta(1)$.
Under i.i.d.\ zero-mean Gaussian initialization, the summands are approximately independent zero-mean random vectors~\citep{TP4,TP-6,completep}, so the typical squared RMS norm of their sum scales as the sum of the squared RMS norms (see Theorem~3.3.1 in~\citet{hdp}), yielding that
$
\|\sum_{l=1}^s \alpha_l \mW_l^{(2)} \mW_l^{(1)} \vh_{l-1}(\vx)\|_{\normrms} = \Theta(
\sqrt{\sum_{l=1}^s \|\alpha_l \mW_l^{(2)} \mW_l^{(1)} \vh_{l-1}(\vx)\|_{\normrms}^2}).
$
By submultiplicativity, we can further estimate $\|\alpha_l \mW_l^{(2)} \mW_l^{(1)} \vh_{l-1}(\vx)\|_{\normrms} =  \Theta( \alpha_l
\|\mW_l^{(2)}\|_{\normrms}
\|\mW_l^{(1)}\|_{\normrms}
\|\vh_{l-1}(\vx)\|_{\normrms})$.
Therefore, starting from $\|\vh_0(\vx)\|_{\normrms}=\Theta(1)$, imposing
$$\alpha_l\|\mW_l^{(2)}\|_{\normrms}\|\mW_l^{(1)}\|_{\normrms}=\mathcal{O}(1/\sqrt{L}),\ l\in[L],$$ recursively ensures
$\|\sum_{l=1}^s \alpha_l\mW_l^{(2)}\mW_l^{(1)}\vh_{l-1}(\vx)\|_{\normrms}
=\mathcal{O}(1)$ for any $s\in[L]$.
This provides a preliminary initial condition on the hidden weights, which will be further refined by incorporating update constraints established in the next subsection.

\paragraph{Output layer.} Submultiplicativity gives
$
\|\vh_{L+1}(\vx)\|_{\normrms}
= \Theta(\alpha_{L+1} \|\mW_{L+1}\|_{\normrms} \|\vh_{L}(\vx)\|_{\normrms})
= \Theta(\alpha_{L+1} \|\mW_{L+1}\|_{\normrms})
$, where $\|\vh_L(\vx)\|_{\normrms}=\Theta(1)$ follows from the hidden-layer argument above. Thus choosing $\alpha_{L+1}\|\mW_{L+1}\|_{\normrms}=\Theta(1)$ keeps the output stable.

\subsubsection{Update Condition} 

We next derive the update condition required to ensure stable feature evolution $\|\Delta \vh_l(\vx)\|_{\normrms}=\Theta(1)$ in Principle~(\ref{principle: stable}),
while maximally updating parameters as prescribed by Principle~(\ref{principle: max}).

\paragraph{Input layer.}
Since $\Delta\vh_0(\vx)=\alpha_0\Delta\mW_0\vx$, submultiplicativity of matrix norms yields
$$
\|\Delta\vh_0(\vx)\|_{\normrms}
=
\Theta(\alpha_0\|\Delta\mW_0\|_{\normrms}\|\vx\|_{\normrms})
= \Theta(\alpha_0\|\Delta\mW_0\|_{\normrms}),
$$
and hence we set $\alpha_0\|\Delta\mW_0\|_{\normrms}=\Theta(1)$ to ensure $\Vert\Delta\vh_0(\vx)\Vert_\normrms=\Theta(1)$.

\paragraph{Hidden layers.}
To analyze the hidden feature updates $\Delta\vh_s(\vx)$, we expand the residual representation in Equation~(\ref{eqn: hl(x)}) after a single gradient step: $\vh_s(\vx) + \Delta \vh_s(\vx)
= \vh_{0}(\vx) + \Delta\vh_{0}(\vx) + \sum_{l=1}^s \alpha_l (\mW_l^{(2)}+\Delta\mW_l^{(2)})(\mW_l^{(1)}+\Delta\mW_l^{(1)})(\vh_{l-1}(\vx)+\Delta\vh_{l-1}(\vx))$, leading to
\begin{align*}
\Delta\vh_s(\vx) 
&= \Delta\vh_0(\vx) + \underbrace{\sum_{l=1}^s \alpha_l \mW_l^{(2)}\mW_l^{(1)}\Delta\vh_{l-1}(\vx)}_{\vepsilon_0(s)} + \underbrace{\sum_{l=1}^s \alpha_l \mW_l^{(2)}\Delta\mW_l^{(1)}(\vh_{l-1}(\vx)+\Delta\vh_{l-1}(\vx))}_{\vepsilon_1^{(1)}(s)} \\
&\quad+ \underbrace{\sum_{l=1}^s \alpha_l \Delta\mW_l^{(2)}\mW_l^{(1)}(\vh_{l-1}(\vx)+\Delta\vh_{l-1}(\vx))}_{\vepsilon_1^{(2)}(s)} + \underbrace{\sum_{l=1}^s \alpha_l \Delta\mW_l^{(2)}\Delta\mW_l^{(1)}(\vh_{l-1}(\vx)+\Delta\vh_{l-1}(\vx))}_{\vepsilon_2(s)}.
\end{align*}
According to the degree of weight updates in the current residual block, we refer to these contributions as the zero-, first-, and second-order update terms,
denoted respectively by
$\vepsilon_0(s)$, $\vepsilon_1^{(1)}(s),\vepsilon_1^{(2)}(s)$, and $\vepsilon_2(s)$.
Using subadditivity together with the scale-estimation convention above, we have
\begin{align}
\Vert\Delta\vh_s(\vx)\Vert_\normrms = \Theta\big(\Vert\Delta\vh_0(\vx)\Vert_\normrms + \Vert\vepsilon_0(s)\Vert_\normrms + \Vert\vepsilon_1^{(1)}(s)\Vert_\normrms + \Vert\vepsilon_1^{(2)}(s)\Vert_\normrms + \Vert\vepsilon_2(s)\Vert_\normrms\big).
\label{eqn: last hidden update}
\end{align}
Since $\|\Delta\vh_0(\vx)\|_{\normrms}=\Theta(1)$ by the input-layer update, we have
$\|\Delta\vh_s(\vx)\|_{\normrms}=\Omega(1)$ for all $s\in[L]$.
Moreover, by subadditivity estimation, the order of remaining terms do not decay with depth, implying
$\|\Delta\vh_s(\vx)\|_{\normrms}=\mathcal{O}(\|\Delta\vh_L(\vx)\|_{\normrms})$ for any $s\in[L]$.
Therefore, to enforce Principle~\ref{principle: mup},
it suffices to require $\|\Delta\vh_L(\vx)\|_{\normrms}=\Theta(1)$ while satisfying
Principle~(\ref{principle: max}).
We next control terms on the right-hand side of Equation~(\ref{eqn: last hidden update}).

\textbf{Zero-order term.}
The term $\vepsilon_0(L)$ propagates feature updates from earlier layers and does not depend on the weight update $\Delta\mW_l$ at the current layer, so it does not need to be maximized from Principle~(\ref{principle: max}).
Therefore, it suffices to verify that $\vepsilon_0(L)$ remains $\mathcal{O}(1)$ under the preliminary initial condition. In fact, the same argument used for deriving $\Vert\vh_L(\vx)\Vert_\normrms$ in Section~\ref{sec: Preliminary Initial Condition} directly implies $\|\vepsilon_0(L)\|_{\normrms}
=
\Theta(
\sqrt{\sum_{l=1}^L \alpha_l^2
\|\mW_l^{(2)}\|_{\normrms}^2
\|\mW_l^{(1)}\|_{\normrms}^2
\|\Delta\vh_{l-1}(\vx)\|_{\normrms}^2}
)
= \mathcal{O}(1)$, where we use the self-consistent fact that $\Vert \Delta\vh_{l-1}(\vx)\Vert_\normrms = \Theta(1)$ for $l\in[L]$ if we finally ensure $\Vert \Delta\vh_L(\vx)\Vert_\normrms= \Theta(1)$.

\textbf{First-order terms.}
Using subadditivity and submultiplicativity, we estimate the order of $\|\vepsilon_1^{(1)}(L)\|_{\normrms}$ as
\begin{align*}
\|\vepsilon_1^{(1)}(L)\|_{\normrms}
= \Theta\bigg(\sum_{l=1}^L \alpha_l
\|\mW_l^{(2)}\|_{\normrms}
\|\Delta\mW_l^{(1)}\|_{\normrms}
\|\vh_{l-1}(\vx)\|_\normrms\bigg) + \Theta\bigg(\sum_{l=1}^L \alpha_l
\|\mW_l^{(2)}\|_{\normrms}
\|\Delta\mW_l^{(1)}\|_{\normrms}
\|\Delta\vh_{l-1}(\vx)\|_{\normrms}\bigg).
\end{align*}
For $l\in[L]$, using $\|\vh_{l-1}(\vx)\|_{\normrms}=\Theta(1)$ by the preliminary initial condition and $\Vert \Delta\vh_{l-1}(\vx)\Vert_\normrms = \Theta(1)$ if we finally set $\Vert \Delta\vh_L(\vx)\Vert_\normrms= \Theta(1)$, we can obtain
$
\|\vepsilon_1^{(1)}(L)\|_{\normrms}
= \Theta\big(\sum_{l=1}^L \alpha_l
\|\mW_l^{(2)}\|_{\normrms}
\|\Delta\mW_l^{(1)}\|_{\normrms}\big).
$
To satisfy Principle~(\ref{principle: max}), we need to maximize the contribution from each $\Delta\mW_l^{(1)}$ and ensure $\|\vepsilon_1^{(1)}(L)\|_{\normrms}=\Theta(1)$ at the same time, which naturally requires
\[
\alpha_l \|\mW_l^{(2)}\|_{\normrms}\|\Delta\mW_l^{(1)}\|_{\normrms}
= \Theta(1/L), \qquad \forall\, l\in[L].
\]
To control the scale of $\vepsilon_1^{(2)}(L)$, an identical argument gives
$\alpha_l\|\Delta\mW_l^{(2)}\|_{\normrms}\|\mW_l^{(1)}\|_{\normrms}=\Theta(1/L)$ for every $l \in [L]$, which completes the first-order update condition~(\ref{eq:update_hidd_1}).

\textbf{Second-order term.} 
\emph{This is the key term that distinguishes two-layer residual branches from one-layer residual branches because it is absent when $k=1$}. As a direct update term, it is also subject to Principle~(\ref{principle: max}).
Using subadditivity and submultiplicativity inequalities as for deriving $\|\vepsilon_1^{(1)}(L)\|_{\normrms}$, we can estimate its scale as
\begin{align*}
\|\vepsilon_2(L)\|_{\normrms}
=
\Theta\bigg(
\sum_{l=1}^L \alpha_l
\|\Delta\mW_l^{(2)}\|_{\normrms}
\|\Delta\mW_l^{(1)}\|_{\normrms}
\bigg).
\end{align*}
Principle~(\ref{principle: max}) requires maximizing each summand and ensuring $\|\vepsilon_2(L)\|_{\normrms}=\Theta(1)$ in the meanwhile, leading to
$
\alpha_l\|\Delta\mW_l^{(2)}\|_{\normrms}\|\Delta\mW_l^{(1)}\|_{\normrms}
= \Theta(1/L)
$ for all $l\in[L]$, which completes the derivation for the second-order update condition on hidden weights~(\ref{eq:update_hidd_2}).

\paragraph{Output layer.} For the output layer $\vh_{L+1}(\vx) = \alpha_{L+1}\mW_{L+1}\vh_L(\vx)$, its one-step feature update is 
\begin{align*}
\Delta\vh_{L+1}(\vx) &= \alpha_{L+1}\mW_{L+1}\Delta\vh_L(\vx) +
\alpha_{L+1}\Delta\mW_{L+1}\bigl(\vh_L(\vx)+\Delta\vh_L(\vx)\bigr).
\end{align*}
By subadditivity and submultiplicativity, we estimate
\begin{align*}
\|\Delta\vh_{L+1}(\vx)\|_{\normrms} 
&=
\Theta(\alpha_{L+1} \|\mW_{L+1}\|_{\normrms}
\|\Delta\vh_L(\vx)\|_{\normrms}) + \Theta(\alpha_{L+1} \|\Delta\mW_{L+1}\|_{\normrms}
\|\vh_L(\vx)+\Delta\vh_L(\vx)\|_{\normrms}) \\
&=
\Theta(1) + \Theta\left(
\alpha_{L+1}\|\Delta\mW_{L+1}\|_{\normrms}
\right),
\end{align*}
where we used
$\alpha_{L+1}\|\mW_{L+1}\|_{\normrms},\  \|\vh_L(\vx)\|_{\normrms}=\Theta(1)$ by the preliminary initial condition,
and
$\|\Delta\vh_L(\vx)\|_{\normrms}=\Theta(1)$ by the update condition on the hidden weights.
Therefore, requiring Principle~\ref{principle: mup} yields the update condition $\alpha_{L+1}\|\Delta\mW_{L+1}\|_{\normrms}=\Theta(1)$.

\subsubsection{Final Initial Condition}
\label{sec: Final Initial Condition}
We now derive the final initialization condition for the hidden weights~(\ref{eq:init_hidd}) by incorporating the constraints
imposed by the update conditions.
Multiplying the two first-order update conditions on hidden weights yields
\begin{align*}
\alpha_l^2\|\mW_l^{(1)}\|_{\normrms}
\|\mW_l^{(2)}\|_{\normrms}
\|\Delta\mW_l^{(1)}\|_{\normrms}
\|\Delta\mW_l^{(2)}\|_{\normrms}
= \Theta(1/L^2), \ \forall l \in [L].   
\end{align*}
On the other hand, the second-order update condition is
\(
\alpha_l\|\Delta\mW_l^{(1)}\|_{\normrms}
\|\Delta\mW_l^{(2)}\|_{\normrms}
= \Theta(1/L)
\)
for all $l\in[L]$.
Dividing the product of the first-order conditions by the second-order condition immediately gives
$\alpha_l\|\mW_l^{(1)}\|_{\normrms}
\|\mW_l^{(2)}\|_{\normrms}
= \Theta(1/L)$ for any $l\in[L]$,
which completes the derivation of Condition~\ref{condition: scale-invariant fl}.

Given the first-order update condition~(\ref{eq:update_hidd_1}), the refined initialization condition~(\ref{eq:init_hidd}) and the second-order update condition~(\ref{eq:update_hidd_2}) are algebraically equivalent up to constant factors.
Thus, one of them could be derived from the other under~(\ref{eq:update_hidd_1}).
We nevertheless keep both in Condition~\ref{condition: scale-invariant fl} because~(\ref{eq:init_hidd}) states the final initial forward-scale requirement, while~(\ref{eq:update_hidd_2}) makes explicit the second-order maximal-update constraint responsible for the $k=1$ versus $k\ge2$ distinction.

\section{Implementation of Spectral Condition}
\label{sec: implementation}

In this section, we map Condition~\ref{condition: scale-invariant fl} to concrete HP parameterizations.
The initialization parameters $\sigma_l$ and $\alpha_l$ are optimizer-agnostic, while the learning-rate scaling of $\eta_l$ depends on the optimizer. In the main text, we instantiate this recipe for Muon-Kimi~\citep{muon-kimi}. Table~\ref{tab: optimizer-family-overview} in
Appendix~\ref{app: implementation opts HPs} extends the similar recipe to additional optimizers and optimizer-dependent HPs, including weight decay and stability term $\varepsilon$.
The corresponding results for Condition~\ref{condition: one-layer} can be found in Table~\ref{tab: optimizer-family-overview-one-layer} of Appendix~\ref{app: implementation one-layer}.

\subsection{Initial Condition}
\label{sec:initial_condition}

Since these HPs interact to satisfy the spectral condition, multiple equivalent parameterization solutions exist~\citep{TP4,TP4b}. 
To facilitate practical adoption, we choose to align the initial variance to the standard width-scaling $\mu$P implementation~\citep{TP5}.
Specifically, for any weight matrix
${\mW}_l \in \R^{\nout \times \nin}$, we set:
\begin{align*}
    &\sigma_l=
    \left\{
    \begin{array}{ll}
     \Theta\big(\frac{1}{\sqrt{\nin}}\min\{1,\sqrt{\frac{\nout}{\nin}}\}\big),   & 0 \leq l \leq L,\\
    \Theta(1),  & l = L+1.
    \end{array} \right.  
\end{align*}

Under this variance parameterization, the RMS operator norms of weight matrices at initialization satisfy:
\begin{align}
\|{\mW}_l\|_{\normrms}
=\sqrt{\frac{\nin}{\nout}}\|{\mW}_l\|_2
=\sqrt{\frac{\nin}{\nout}}\cdot
\Theta\left(\sigma_l(\sqrt{\nin}+\sqrt{\nout})\right)
=
\left\{
\begin{array}{ll}
\Theta(1),   & 0 \leq l \leq L,\\
\Theta(\nin),  & l = L+1,
\end{array} \right.
\label{eq:w_norm}
\end{align}
where we used the spectral norm property of random matrices reviewed in Section~\ref{sec: prelimi}.
Based on Equation~(\ref{eq:w_norm}), we are ready to determine the parameterization of $\alpha_l$ to satisfy initial conditions in Condition~\ref{condition: scale-invariant fl}.

\textbf{For the input and output layers}, given
\begin{align*}
   \alpha_l\|\mW_l\|_{\normrms} =\left\{
    \begin{array}{ll}
    \Theta(\alpha_0),   & l = 0,\\
    \Theta(\alpha_{L+1}\nin),  & l = L+1,
    \end{array} \right. 
\end{align*}
to satisfy (\ref{eq:init_inandout}), we need to set 
\begin{align}
    \alpha_0=\Theta(1), \quad \alpha_{L+1}=\Theta(1/\nin). \label{eq:alpha_inandout}
\end{align}
\textbf{For the hidden layers}, given
\begin{align*}
&\alpha_l\|\mW_l^{(1)}\|_{\normrms}
\|\mW_l^{(2)}\|_{\normrms} =\Theta(\alpha_l), \ l\in[L],
\end{align*}
to satisfy (\ref{eq:init_hidd}), we need to set 
\begin{align}
    \alpha_l=\Theta(1/L), \ l\in[L]. \label{eq:alpha_l}
\end{align}

\subsection{Update Condition for Muon-Kimi}
\label{sec:update_condition_muonkimi}

\begin{table*}[t]

\renewcommand{\arraystretch}{1.1}
\renewcommand{\hl}[1]{\textcolor{purple}{#1}}
\renewcommand{\ll}[1]{\textcolor{gray}{#1}}
\centering

\caption{\textbf{$\mu$P implementation of Condition~\ref{condition: scale-invariant fl} ($k=2$) for Muon-Kimi~\citep{muon-kimi} under width-depth scaling.}
Entries in \hl{purple} indicate differences between $\mu$P and SP, while \ll{gray} shows the corresponding SP choices.
Here, $r_n$ and $r_L$ denote the width and depth scaling ratios relative to the base model. The variance of input weights is $\sigma^2_{\mathrm{base}}$ for language and $\sigma^2_{\mathrm{base}}/d_0$ for image.}
\label{tab: muon-kimi mup}
\begin{tabular}{cccc}
\toprule
   & Input weights & Hidden weights & Output weights \\
\midrule
Block Multiplier
& $\alpha_{\mathrm{base}}$
& \hl{$\alpha_{\mathrm{base}}/r_L$} \ \ll{($\alpha_{\mathrm{base}}$)}
& \hl{$\alpha_{\mathrm{base}}/r_n$} \ \ll{($\alpha_{\mathrm{base}}$)} \\

Initial Variance
& $\sigma^2_{\mathrm{base}}/d_0$ or $\sigma^2_{\mathrm{base}}$
& \hl{$\sigma^2_{\mathrm{base}}/r_n$} \ \ll{($\sigma^2_{\mathrm{base}}$)}
& $\sigma^2_{\mathrm{base}}$ \\

Learning Rate
& $\eta_{\mathrm{base}}$
& \hl{$\eta_{\mathrm{base}}/\sqrt{r_n}$} \ \ll{($\eta_{\mathrm{base}}$)}
& $\eta_{\mathrm{base}}$ \\

\bottomrule
\end{tabular}
\end{table*}

Since different optimizers take different scales of $\|\Delta\mW_l\|_{\normrms}$, the implementation of the update condition depends on the choice of optimizer.
In the main text, we focus on Muon-Kimi~\citep{muon-kimi}.
The derivations for Muon~\citep{jordan6muon}, Shampoo~\citep{gupta2018shampoo}, SOAP~\citep{vyas2024soap}, AdamW~\citep{adamw}, Sophia~\citep{DBLP:conf/iclr/Liu0HL024-sophia}, Lion~\citep{DBLP:conf/nips/ChenLHRW0DLHLL23-lion}, SGD, and SSO~\citep{xie2026controlled-sso} are deferred to Appendix~\ref{app: implementation opts HPs}, where we recover several existing $\mu$P results (e.g., for SGD, AdamW, and some matrix-preconditioned optimizers) in the width–depth scaling setting~\citep{TP-6,DBLP:conf/nips/BordelonCP24-transformer,DBLP:conf/iclr/BordelonNLHP24-dmft-depth,completep,muon-cp}.

Muon-Kimi~\citep{muon-kimi} is a widely used variant of Muon~\citep{jordan6muon} designed to align its update scales of matrix parameters with those of AdamW-optimized vector parameters by applying RMS normalization, which facilitates the reuse of HPs well-tuned for AdamW. 
It has been successfully applied to pretraining models with up to 1T parameters~\citep{team2025kimik2}.
Specifically, for a weight matrix ${\mW}_l \in \R^{\nout\times\nin}$, the update rule (without weight decay) is:
\begin{equation*}
\Delta{\mW}_l
=
-\,\eta_l \cdot 0.2 \sqrt{\max\{\nin,\nout\}}
\cdot \mU_l \mV_l^\top,
\end{equation*}
where $\mU_l,\mV_l$ are the left and right singular vector matrices of the gradient that
$\nabla_{{\mW}_l}\mathcal{L}=\mU_l \mSigma_l \mV_l^\top$.
The resulting update norm satisfies
\begin{align}
&\Vert\Delta{\mW}_l\Vert_\normrms = \sqrt{\frac{\nin}{\nout}} \Vert\Delta{\mW}_l\Vert_2 
= \Theta\left(\eta_l \sqrt{\nin}\max\left\{1,\sqrt{\frac{\nin}{\nout}}\right\}\right)
=\left\{
    \begin{array}{ll}
    \Theta(\eta_l),   & l = 0,\\
    \Theta(\eta_l\sqrt{\nin}),   & l \in [L],\\
    \Theta(\eta_l\nin),  & l = L+1.
    \end{array} \right. \label{eq:dw_norm}
\end{align}
Based on Equation~(\ref{eq:dw_norm}), we are now ready to determine the parameterization of $\eta_l$ to satisfy the update condition.

\textbf{For the input and output layers}, given the dimension magnitude assumption in Equation~(\ref{eq:dimensions}) and the $\alpha_l$ parameterization in Equation~(\ref{eq:alpha_inandout}), we have:
\begin{align*}
    \alpha_l\|\Delta\mW_l\|_{\normrms}
    &=\left\{
    \begin{array}{ll}
    \Theta(1)\Theta(\eta_l),   & l = 0, \notag\\
    \Theta(1/\nin)\Theta(\eta_l\nin),  & l = L+1,
    \end{array} \right. \\
    &=\Theta(\eta_l).
\end{align*}
Thus, to satisfy (\ref{eq:update_inandout}), we need to set:
\begin{align*}
    \eta_0 = \Theta(1), \quad \eta_{L+1} = \Theta(1).
\end{align*}

\textbf{For the hidden layers}, let us first consider the first-order update condition. 
Given the dimension magnitude in Equation~(\ref{eq:dimensions}), the weight norm at initialization in Equation~(\ref{eq:w_norm}), and the $\alpha_l$ parameterization in Equation~(\ref{eq:alpha_l}), we have:
\begin{align*}
    &\alpha_l\Vert \Delta\mW_l^{(2)}\Vert_\normrms \Vert \mW_l^{(1)}\Vert_\normrms 
    =\Theta(1/L) \cdot \|\Delta\mW_l^{(2)}\|_{\normrms} 
    =\Theta\left(\frac{1}{L}\eta_l^{(2)} \sqrt{\nin}\right). 
\end{align*}
Thus, to satisfy (\ref{eq:update_hidd_1}), we need to set:
\begin{align}
    \eta_l^{(2)} = \Theta(\frac{1}{\sqrt{\nin}}), \ l\in[L].
\label{eq:maintext_muonkimi_hid_lr}
\end{align}
Symmetrically, we have the same choice for $\mW_l^{(1)}$ that $\eta_l^{(1)} = \Theta(\frac{1}{\sqrt{\nin}})$ to ensure the first-order condition.

By the algebraic equivalence discussed in Section~\ref{sec: Final Initial Condition}, the second-order condition~(\ref{eq:update_hidd_2}) is then automatically satisfied by the initial condition~(\ref{eq:init_hidd}) and the first-order condition~(\ref{eq:update_hidd_1}), so no further constraint is needed for implementing the second-order condition~(\ref{eq:update_hidd_2}).

This completes the $\mu$P parameterization of Muon-Kimi, which is summarized in Table~\ref{tab: muon-kimi mup}.

\subsection{Simplified Implementation Rule for Modern Optimizers}
\label{sec: simplified implementation rule}

The Muon-Kimi derivation above, together with derivations for other optimizers in Appendix~\ref{app: implementation opts HPs}, reveals a useful implementation-level simplification.
For the modern optimizers considered in this paper, except SGD, implementing Condition~\ref{condition: scale-invariant fl} amounts to taking the corresponding width-scaling $\mu$P implementation and adding the hidden residual multiplier $\alpha_l=\Theta(1/L)$.
The reason is that normalized and preconditioned updates remove the depth factor $\alpha_l$ inherited by the raw hidden-layer gradients.
As a result, their update norms do not depend on $\alpha_l$, and the optimizer-specific HP rule remains the same as in width-scaling $\mu$P.
SGD does not share this simplification because its update is proportional to the raw gradient.
After setting $\alpha_l=\Theta(1/L)$, the spectral update condition therefore still requires an additional $\alpha_l$-dependent learning rate rescaling.

\begin{takeawaybox}
\textbf{Takeaway 2.}
For most modern optimizers, normalization or preconditioning removes the depth factor $\alpha_l$ of hidden gradients, so the $\mu$P formulation from Condition~\ref{condition: scale-invariant fl} is just width-scaling $\mu$P plus a hidden residual multiplier $\alpha_l = \Theta(1/L)$.
\end{takeawaybox}

\subsection{Practical HP Parameterization and Transfer}

In practice, $\mu$P is often implemented using a ratio-based approach~\citep{TP5,completep,dit-mup}. 
We define width and depth scaling ratios as $r_n = n/n_{\mathrm{base}}$ and $r_L = L/L_{\mathrm{base}}$, where $n_{\mathrm{base}}$ and $L_{\mathrm{base}}$ are some fixed base model constants. 
The target model's HPs are then set by scaling the corresponding base HPs, denoted as $\alpha_{\mathrm{base}}$, $\sigma_{\mathrm{base}}^2$, and $\eta_{\mathrm{base}}$, according to these ratios. 

For instance, for Muon-Kimi, the hidden layer learning rate is set to $\eta_l = \eta_{\mathrm{base}}/\sqrt{r_n}$, which satisfies the theoretical requirement $\eta_l = \eta_{\mathrm{base}}/\sqrt{n/n_{\mathrm{base}}} = \Theta(\eta_{\mathrm{base}}/\sqrt{n})$ in Equation~(\ref{eq:maintext_muonkimi_hid_lr}). 
Table~\ref{tab: muon-kimi mup} summarizes the complete HP parameterization for Muon-Kimi under width-depth scaling derived above.

As illustrated in Section~\ref{sec: intro}, a critical utility of $\mu$P is enabling HP transfer, which effectively reduces the cost of HP search for training large models.
In practice, the transfer follows this procedure: optimal base HPs (e.g., $\eta_{\mathrm{base}}$) are first identified on a small model; these optimal values are then transferred to a larger target model to obtain true HPs (e.g., $\eta_{\mathrm{base}}/\sqrt{r_n}$). Consequently, we only need to search the base HPs on the computationally inexpensive small model.

Note that although the $\mu$P parameterization is derived from a simplified setup, we apply it to standard language models pretraining and verify its practical utility in the next section.

\section{Experiments}
\label{sec: experiment}

In this section, we empirically evaluate the $\mu$P formulations derived from our spectral conditions on GPT-2 style language models.
We first show that Condition~\ref{condition: scale-invariant fl} ($k\ge2$)\footnote{In fact, Condition~\ref{condition: scale-invariant fl} is derived when $k=2$. Since Condition~\ref{condition: multi-layer} ($k\ge2$) leads to the same $\mu$P formulation, we use $k\ge2$ to refer to this formulation throughout the experiments.} enables stable feature learning and robust HP transfer under width-depth scaling.
We then compare it with Condition~\ref{condition: one-layer} ($k=1$) to validate the role of residual block depth $k$ on depth scaling.
The complete details and results are deferred to Appendix~\ref{app: Additional Experimental Details}.

\subsection{Experimental Settings}

Following standard empirical $\mu$P studies~\citep{completep,muon-cp}, we train GPT-2 style Transformer language models~\citep{gpt-2,nanogpt} on the OpenWebText dataset~\citep{Gokaslan2019OpenWeb}, using the GPT-2 tokenizer with a maximum sequence length of 1024.
All models fix the attention head dimension to $64$ and use a feedforward (FFN) dimension of $4n$.
We vary model width and depth around a base model $(n_{\mathrm{base}}, L_{\mathrm{base}})=(256,4)$, with widths up to $4096$ and depths up to $256$.

Since SGD and AdamW have been extensively studied in prior $\mu$P studies under width-depth scaling~\citep{TP-6,DBLP:conf/nips/BordelonCP24-transformer,DBLP:conf/iclr/BordelonNLHP24-dmft-depth,completep}, we focus on more recent optimizers: Muon-Kimi-AdamW, Muon-AdamW, Shampoo-AdamW, and Sophia.
Following common practice~\citep{muon-kimi}, Muon-Kimi-AdamW uses Muon-Kimi for hidden matrix parameters and AdamW for all other parameters, such as embeddings, the LM head, and biases.
Muon-AdamW and Shampoo-AdamW are defined analogously.
In the main text, we present Muon-Kimi-AdamW as the primary example; additional optimizer results are reported in Appendix~\ref{app: Additional Experimental Details}.

We implement $\mu$P for both Condition~\ref{condition: scale-invariant fl} ($k\ge2$) and Condition~\ref{condition: one-layer} ($k=1$); the corresponding HP parameterization overviews are given in Table~\ref{tab: optimizer-family-overview} at Appendix~\ref{app: implementation opts HPs} and Table~\ref{tab: optimizer-family-overview-one-layer} at Appendix~\ref{app: implementation one-layer}, respectively.
For example, for the hybrid Muon-Kimi-AdamW optimizer under Condition~\ref{condition: scale-invariant fl}, the Muon-Kimi part follows Table~\ref{tab: muon-kimi-wd mup}, while the AdamW part follows Table~\ref{tab: adamw-mup}. Muon-AdamW and Shampoo-AdamW are implemented analogously, with details deferred to Appendix~\ref{app: Additional Experimental Details}.
The main text focuses on learning rate transfer without weight decay, and weight decay transfer results are deferred to Figure~\ref{figures: mup vs sp wd} in Appendix~\ref{app: Additional Details of HP Transfer Experiments}.

\subsection{Feature Learning and HP Transfer}

\begin{figure*}[t]
\centering

\subfloat{
\includegraphics[height=0.185\textwidth]{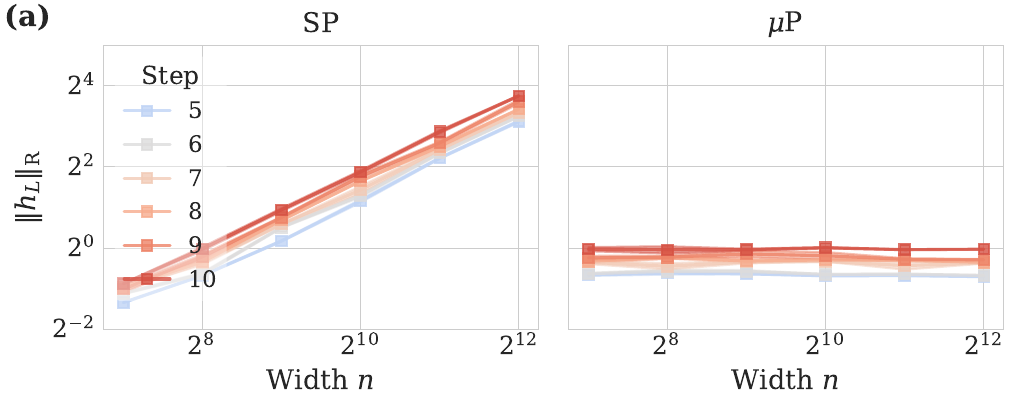}
\label{fig: sp_width_fl}
}%
\hskip 1ex
\subfloat{
\includegraphics[height=0.185\textwidth]{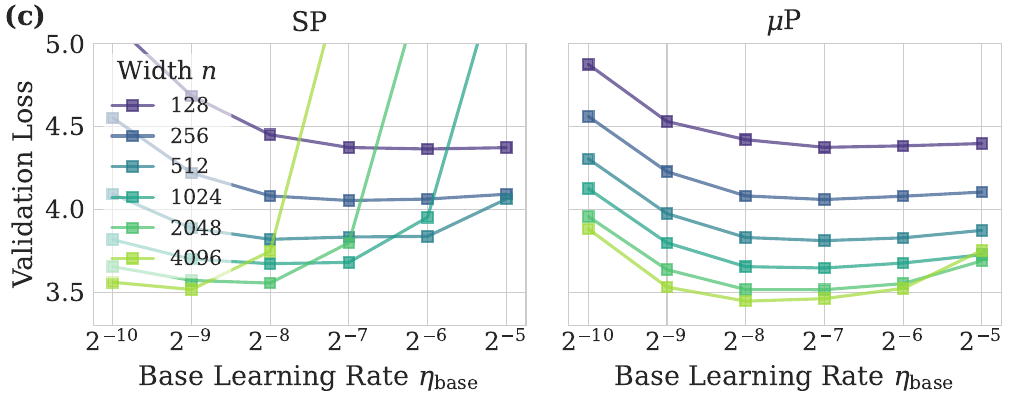}
\label{fig: sp_width_transfer}
}%
\hskip 1ex
\subfloat{
\includegraphics[height=0.185\textwidth]{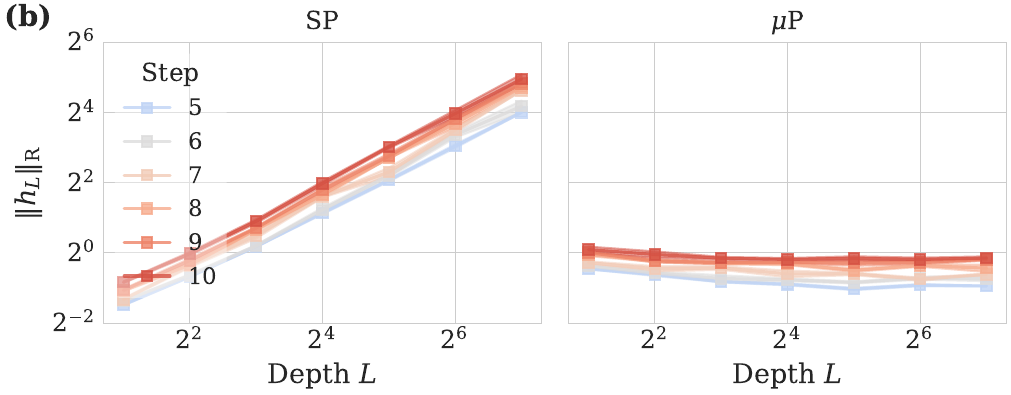}
\label{fig: sp_depth_fl}
}%
\hskip 1ex
\subfloat{
\includegraphics[height=0.185\textwidth]{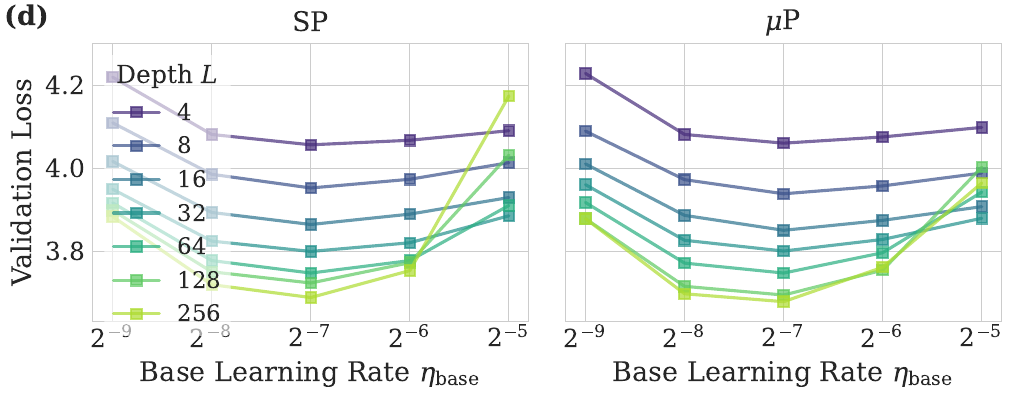}
\label{fig: sp_depth_transfer}
}%

\vskip 0.1in
\caption{
\textbf{Feature learning and HP transfer of Muon-Kimi-AdamW under SP and $\mu$P.} 
We train GPT-2 style models with Muon-Kimi-AdamW using SP and $\mu$P derived from Condition~\ref{condition: scale-invariant fl} (see Tables~\ref{tab: muon-kimi mup} and~\ref{tab: adamw-mup}). $\mu$P maintains stable feature norms and enables robust HP transfer across both width and depth scaling, while generally achieving lower loss than SP as the model size increases. The detailed numerical values are provided in Appendix~\ref{app: Additional Details of Muon-Kimi-AdamW}.
}

\label{figures: mup vs sp}
\end{figure*}

In this section, we compare the feature learning stability and HP transferability of SP and the $\mu$P formulation derived from Condition~\ref{condition: scale-invariant fl} ($k\ge 2$). The main results are shown in Figure~\ref{figures: mup vs sp}, with complete numerical results deferred to Appendix~\ref{app: Additional Details of Muon-Kimi-AdamW}.

\paragraph{Feature learning.}
We first examine feature-scale stability using standard coordinate-check tests~\citep{TP5,completep,extending-mup}.
Models are trained for $10$ steps while scaling either width $n$ or depth $L$, and we measure the RMS norm at the output of the final Transformer block $\|\vh_L\|_{\normrms}$.
As shown in Figure~\ref{figures: mup vs sp}(a,b), under SP, the feature scale grows rapidly with both width and depth.
In contrast, $\mu$P maintains stable and scale-invariant feature scales, consistent with Principle~\ref{principle: mup}.
This supports that the $\mu$P formulation derived from Condition~\ref{condition: scale-invariant fl} preserves stable feature learning under width-depth scaling.

\paragraph{HP transfer.}
We next evaluate HP transferability by training all models for 300M tokens with a batch size of 240, using a learning rate schedule with linear warmup followed by cosine decay.  
As shown in Figure~\ref{figures: mup vs sp}(c), SP exhibits substantial shifts in the optimal learning rate when width is scaled, whereas $\mu$P keeps the optimal base learning rate nearly invariant.
Under depth scaling, $\mu$P also preserves HP transferability and achieves lower loss than SP as depth increases (Figure~\ref{figures: mup vs sp}(d)).
These results support that Condition~\ref{condition: scale-invariant fl} provides a practical HP-transfer rule, which can significantly reduce tuning cost when scaling model size, particularly for pretraining large models~\citep{gpt-5,team2025kimik2}. 

Figures~\ref{figures: mup vs sp muon},~\ref{figures: mup vs sp shampoo}, and~\ref{figures: mup vs sp sophia} in Appendix~\ref{app: Additional Experimental Details} report analogous feature learning and HP transfer results for Muon-AdamW, Shampoo-AdamW, and Sophia, respectively. They show the similar advantage of the $\mu$P formulation from Condition~\ref{condition: scale-invariant fl} over SP.

\subsection{Role of Residual Block Depth}

\begin{figure*}[t]
\centering

\subfloat{
\includegraphics[height=0.185\textwidth]{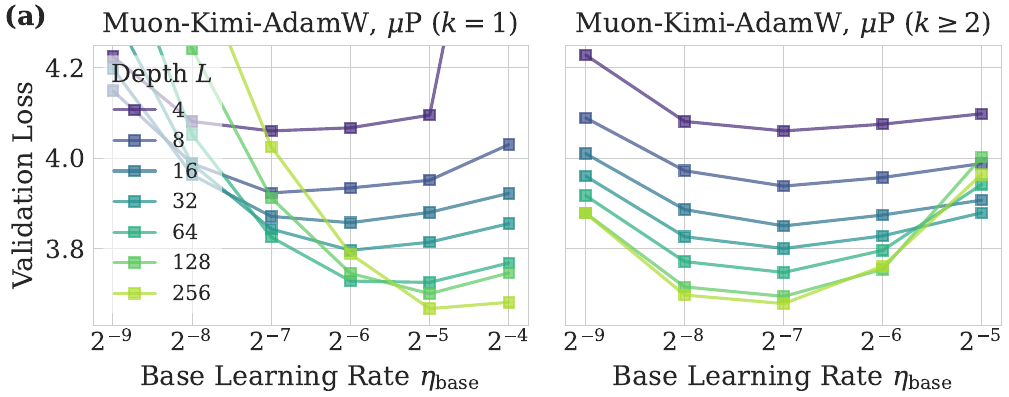}
\label{fig: cp vs dmup muon-kimi}
}%
\hskip 1ex
\subfloat{
\includegraphics[height=0.185\textwidth]{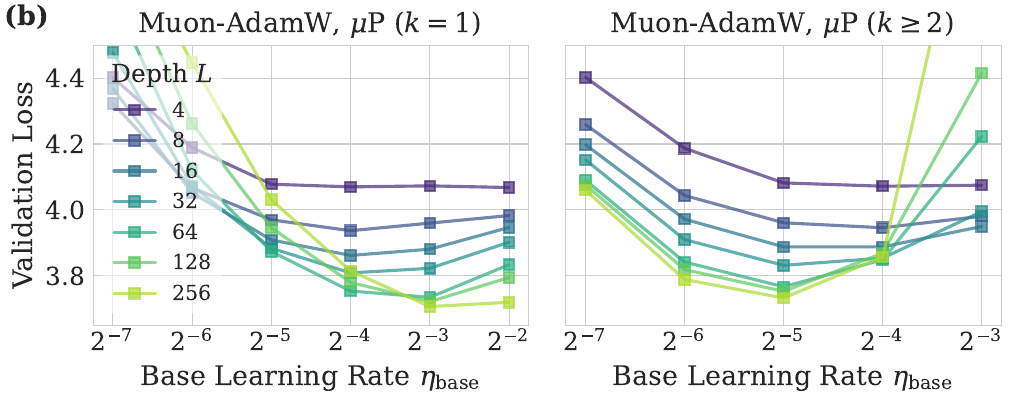}
\label{fig: cp vs dmup muon}
}%

\vskip 0.05in
\caption{
\textbf{Validating the role of residual block depth $k$.}
We compare two $\mu$P implementations for GPT-2-style models trained with Muon-Kimi-AdamW and Muon-AdamW: Depth-$\mu$P-style formulation from Condition~\ref{condition: one-layer} ($k=1$) and CompleteP-style formulation from Condition~\ref{condition: scale-invariant fl} ($k\ge 2$).
Condition~\ref{condition: scale-invariant fl} yields more stable HP transfer, empirically supporting it as the appropriate $\mu$P condition for architectures with multi-transformation residual branches.
The numerical values are provided in Appendix~\ref{app: Additional Details of HP Transfer Experiments}.
}

\label{figures: cp vs dmup muon style}
\end{figure*}

We further examine the central prediction of our theory: residual-branch depth $k$ determines the appropriate depth $\mu$P formulation.
Condition~\ref{condition: one-layer} gives the Depth-$\mu$P-style scaling for $k=1$, whereas Condition~\ref{condition: scale-invariant fl} gives the stricter CompleteP-style scaling for $k\ge2$.
Since the GPT-2 style Transformer contains multiple transformations in each residual branch, our theory predicts that Condition~\ref{condition: scale-invariant fl} should provide a better parameterization.

To test this prediction, we compare the two $\mu$P implementations under depth scaling.
As shown in Figure~\ref{figures: cp vs dmup muon style}, Condition~\ref{condition: scale-invariant fl} yields stable learning-rate transfer for both Muon-Kimi-AdamW and Muon-AdamW.
In contrast, Condition~\ref{condition: one-layer} shifts the optimal learning rate as depth increases, indicating a failure of HP transfer in practical multi-transformation architectures.
These results support the claim in Section~\ref{sec: spec condition}: the high-order update term appearing at $k\ge2$ imposes the stricter scaling needed for Transformer-like residual blocks.

The same comparison for Shampoo-AdamW and Sophia in Figures~\ref{figures: cp vs dmup shampoo} and~\ref{figures: cp vs dmup sophia} at Appendix~\ref{app: Additional Details of HP Transfer Experiments} shows similar trends, further supporting the effectiveness of Condition~\ref{condition: scale-invariant fl} ($k\ge 2$) beyond Muon-style optimizers.

\subsection{Additional Diagnostics}

For Muon-Kimi-AdamW, SP can appear to transfer the optimal learning rate reasonably well across depths in Figure~\ref{figures: mup vs sp}(d).
We attribute this to two factors.
First, the tested depths are still moderate; as depth increases further, Figure~\ref{figures: mup vs sp}(b) suggests that hidden features eventually diverge, making stable depth scaling under SP infeasible.
Second, modern architectural components such as LayerNorm~\citep{DBLP:journals/corr/BaKH16-LN} and QKNorm~\citep{DBLP:conf/emnlp/HenryDPC20-qknorm} substantially enhance training stability, partially masking the underlying scaling pathology of SP at practical depths.
To isolate this effect, we remove LayerNorm layers and repeat the depth-scaling experiments in Appendix~\ref{app: Additional Details of Muon-Kimi-AdamW}.
The results in Figure~\ref{figures: mup vs sp-hp-nonLN} show that SP training becomes unstable and depth-wise HP transfer breaks down, while $\mu$P remains stable even at large depths ($L=256$) and continues to exhibit robust HP transfer.

We emphasize that this apparent SP depth transfer is not universal.
Figure~\ref{figures: mup vs sp shampoo} and~\ref{figures: mup vs sp sophia} in Appendix~\ref{app: Additional Experimental Details} show that for Shampoo-AdamW and Sophia, the $\mu$P parameterization gives substantially more reliable depth-wise HP transfer than SP.
Thus, the Muon-Kimi-AdamW behavior in Figure~\ref{figures: mup vs sp}(d) appears to be a partially masked case, likely aided by normalization and the tested depth range.

\begin{takeawaybox}
\textbf{Takeaway 3.}
CompleteP-style scaling from Condition~\ref{condition: scale-invariant fl} ($k\ge 2$) achieves stable feature learning and robust HP transfer under width-depth scaling, while SP and Depth-$\mu$P-style scaling from Condition~\ref{condition: one-layer} ($k=1$) often fail to do so.
\end{takeawaybox}

\section{Conclusion}
\label{sec: conclusion}

In this paper, we present a simple and unified spectral framework for $\mu$P under joint width-depth scaling.
The framework gives an operational condition on the norm of weights and their per-step updates, and explains why residual blocks with one transformation and those with multiple transformations lead to different depth-scaling rules.
By mapping this condition to concrete HP choices, we obtain a general recipe for implementing $\mu$P across a broad class of optimizers.
Experiments on GPT-2 style language models show that the resulting $k\ge2$ formulation preserves scale-invariant feature learning and supports robust HP transfer, while the $k=1$ formulation and SP often fail to do so.
These results suggest that the proposed spectral perspective provides a practical and interpretable route to width-depth $\mu$P for modern architectures.



\bibliography{main}  

\begin{thebibliography}{48}
\providecommand{\natexlab}[1]{#1}
\providecommand{\url}[1]{\texttt{#1}}
\expandafter\ifx\csname urlstyle\endcsname\relax
  \providecommand{\doi}[1]{doi: #1}\else
  \providecommand{\doi}{doi: \begingroup \urlstyle{rm}\Url}\fi

\bibitem[Ba et~al.(2016)Ba, Kiros, and Hinton]{DBLP:journals/corr/BaKH16-LN}
Lei~Jimmy Ba, Jamie~Ryan Kiros, and Geoffrey~E. Hinton.
\newblock Layer normalization.
\newblock \emph{CoRR}, abs/1607.06450, 2016.

\bibitem[Balzano et~al.(2025)Balzano, Ding, Haeffele, Kwon, Qu, Wang, Wang, and Yaras]{DBLP:journals/corr/low-rank}
Laura Balzano, Tianjiao Ding, Benjamin~D. Haeffele, Soo~Min Kwon, Qing Qu, Peng Wang, Zhangyang Wang, and Can Yaras.
\newblock An overview of low-rank structures in the training and adaptation of large models.
\newblock \emph{CoRR}, abs/2503.19859, 2025.

\bibitem[Blake et~al.(2025)Blake, Eichenberg, Dean, Balles, Prince, Deiseroth, Cruz{-}Salinas, Luschi, Weinbach, and Orr]{DBLP:conf/iclr/u-muP}
Charlie Blake, Constantin Eichenberg, Josef Dean, Lukas Balles, Luke~Yuri Prince, Bj{\"{o}}rn Deiseroth, Andr{\'{e}}s~Felipe Cruz{-}Salinas, Carlo Luschi, Samuel Weinbach, and Douglas Orr.
\newblock u-{\(\mu\)}p: The unit-scaled maximal update parametrization.
\newblock In \emph{{ICLR}}, 2025.

\bibitem[Bordelon and Pehlevan(2022)]{DBLP:conf/nips/BordelonP22-dmft}
Blake Bordelon and Cengiz Pehlevan.
\newblock Self-consistent dynamical field theory of kernel evolution in wide neural networks.
\newblock In \emph{NeurIPS}, 2022.

\bibitem[Bordelon et~al.(2024{\natexlab{a}})Bordelon, Chaudhry, and Pehlevan]{DBLP:conf/nips/BordelonCP24-transformer}
Blake Bordelon, Hamza~Tahir Chaudhry, and Cengiz Pehlevan.
\newblock Infinite limits of multi-head transformer dynamics.
\newblock In \emph{NeurIPS}, 2024{\natexlab{a}}.

\bibitem[Bordelon et~al.(2024{\natexlab{b}})Bordelon, Noci, Li, Hanin, and Pehlevan]{DBLP:conf/iclr/BordelonNLHP24-dmft-depth}
Blake Bordelon, Lorenzo Noci, Mufan~Bill Li, Boris Hanin, and Cengiz Pehlevan.
\newblock Depthwise hyperparameter transfer in residual networks: Dynamics and scaling limit.
\newblock In \emph{{ICLR}}, 2024{\natexlab{b}}.

\bibitem[Chen et~al.(2023)Chen, Liang, Huang, Real, Wang, Pham, Dong, Luong, Hsieh, Lu, and Le]{DBLP:conf/nips/ChenLHRW0DLHLL23-lion}
Xiangning Chen, Chen Liang, Da~Huang, Esteban Real, Kaiyuan Wang, Hieu Pham, Xuanyi Dong, Thang Luong, Cho{-}Jui Hsieh, Yifeng Lu, and Quoc~V. Le.
\newblock Symbolic discovery of optimization algorithms.
\newblock In \emph{NeurIPS}, 2023.

\bibitem[Dey et~al.(2023)Dey, Gosal, Chen, Khachane, Marshall, Pathria, Tom, and Hestness]{DBLP:journals/corr/cerebras}
Nolan Dey, Gurpreet Gosal, Zhiming Chen, Hemant Khachane, William Marshall, Ribhu Pathria, Marvin Tom, and Joel Hestness.
\newblock Cerebras-gpt: Open compute-optimal language models trained on the cerebras wafer-scale cluster.
\newblock \emph{CoRR}, abs/2304.03208, 2023.

\bibitem[Dey et~al.(2024)Dey, Bergsma, and Hestness]{DBLP:conf/nips/Dey-sparse}
Nolan Dey, Shane Bergsma, and Joel Hestness.
\newblock Sparse maximal update parameterization: {A} holistic approach to sparse training dynamics.
\newblock In \emph{NeurIPS}, 2024.

\bibitem[Dey et~al.(2025)Dey, Zhang, Noci, Li, Bordelon, Bergsma, Pehlevan, Hanin, and Hestness]{completep}
Nolan Dey, Bin~Claire Zhang, Lorenzo Noci, Mufan~Bill Li, Blake Bordelon, Shane Bergsma, Cengiz Pehlevan, Boris Hanin, and Joel Hestness.
\newblock Don't be lazy: Completep enables compute-efficient deep transformers.
\newblock \emph{CoRR}, abs/2505.01618, 2025.

\bibitem[Gokaslan and Cohen(2019)]{Gokaslan2019OpenWeb}
Aaron Gokaslan and Vanya Cohen.
\newblock Openwebtext corpus.
\newblock \url{http://Skylion007.github.io/OpenWebTextCorpus}, 2019.

\bibitem[Gupta et~al.(2018)Gupta, Koren, and Singer]{gupta2018shampoo}
Vineet Gupta, Tomer Koren, and Yoram Singer.
\newblock Shampoo: Preconditioned stochastic tensor optimization.
\newblock In \emph{International Conference on Machine Learning}, pages 1842--1850. PMLR, 2018.

\bibitem[Haas et~al.(2024)Haas, Xu, Cevher, and Vankadara]{DBLP:conf/nips/SAM-mup}
Moritz Haas, Jin Xu, Volkan Cevher, and Leena~Chennuru Vankadara.
\newblock {\(\mu\)}p\({}^{\mbox{2}}\): Effective sharpness aware minimization requires layerwise perturbation scaling.
\newblock In \emph{NeurIPS}, 2024.

\bibitem[He et~al.(2016)He, Zhang, Ren, and Sun]{resnet}
Kaiming He, Xiangyu Zhang, Shaoqing Ren, and Jian Sun.
\newblock Deep residual learning for image recognition.
\newblock In \emph{{CVPR}}, pages 770--778, 2016.

\bibitem[Henry et~al.(2020)Henry, Dachapally, Pawar, and Chen]{DBLP:conf/emnlp/HenryDPC20-qknorm}
Alex Henry, Prudhvi~Raj Dachapally, Shubham~Shantaram Pawar, and Yuxuan Chen.
\newblock Query-key normalization for transformers.
\newblock In Trevor Cohn, Yulan He, and Yang Liu, editors, \emph{Findings of the Association for Computational Linguistics: {EMNLP} 2020}, volume {EMNLP} 2020, pages 4246--4253, 2020.

\bibitem[Hoffmann et~al.(2022)Hoffmann, Borgeaud, Mensch, Buchatskaya, Cai, Rutherford, Casas, Hendricks, Welbl, Clark, et~al.]{scalinglaw-Chinchilla}
Jordan Hoffmann, Sebastian Borgeaud, Arthur Mensch, Elena Buchatskaya, Trevor Cai, Eliza Rutherford, Diego de~Las Casas, Lisa~Anne Hendricks, Johannes Welbl, Aidan Clark, et~al.
\newblock Training compute-optimal large language models.
\newblock \emph{CoRR}, abs/2203.15556, 2022.

\bibitem[Hu et~al.(2024)Hu, Tu, Han, He, Cui, Long, Zheng, Fang, Huang, Zhao, et~al.]{hu2024minicpm}
Shengding Hu, Yuge Tu, Xu~Han, Chaoqun He, Ganqu Cui, Xiang Long, Zhi Zheng, Yewei Fang, Yuxiang Huang, Weilin Zhao, et~al.
\newblock Minicpm: Unveiling the potential of small language models with scalable training strategies.
\newblock \emph{arXiv preprint arXiv:2404.06395}, 2024.

\bibitem[Ishikawa and Karakida(2024)]{DBLP:conf/iclr/second-mup}
Satoki Ishikawa and Ryo Karakida.
\newblock On the parameterization of second-order optimization effective towards the infinite width.
\newblock In \emph{{ICLR}}, 2024.

\bibitem[Jacot et~al.(2018)Jacot, Hongler, and Gabriel]{DBLP:conf/nips/JacotHG18-NTK}
Arthur Jacot, Cl{\'{e}}ment Hongler, and Franck Gabriel.
\newblock Neural tangent kernel: Convergence and generalization in neural networks.
\newblock In \emph{NeurIPS}, pages 8580--8589, 2018.

\bibitem[Jordan et~al.(2024{\natexlab{a}})Jordan, Bernstein, Rappazzo, @fernbear.bsky.social, Vlado, Jiacheng, Cesista, Koszarsky, and @Grad62304977]{modded_nanogpt_2024}
Keller Jordan, Jeremy Bernstein, Brendan Rappazzo, @fernbear.bsky.social, Boza Vlado, You Jiacheng, Franz Cesista, Braden Koszarsky, and @Grad62304977.
\newblock modded-nanogpt: Speedrunning the nanogpt baseline, 2024{\natexlab{a}}.
\newblock URL \url{https://github.com/KellerJordan/modded-nanogpt}.

\bibitem[Jordan et~al.(2024{\natexlab{b}})Jordan, Jin, Boza, Jiacheng, Cecista, Newhouse, and Bernstein]{jordan6muon}
Keller Jordan, Yuchen Jin, Vlado Boza, You Jiacheng, Franz Cecista, Laker Newhouse, and Jeremy Bernstein.
\newblock Muon: An optimizer for hidden layers in neural networks.
\newblock \emph{URL https://kellerjordan. github. io/posts/muon}, 6, 2024{\natexlab{b}}.

\bibitem[Kaplan et~al.(2020)Kaplan, McCandlish, Henighan, Brown, Chess, Child, Gray, Radford, Wu, and Amodei]{scalinglaw-kaplan}
Jared Kaplan, Sam McCandlish, Tom Henighan, Tom~B. Brown, Benjamin Chess, Rewon Child, Scott Gray, Alec Radford, Jeffrey Wu, and Dario Amodei.
\newblock Scaling laws for neural language models.
\newblock \emph{CoRR}, abs/2001.08361, 2020.

\bibitem[Karpathy(2022)]{nanogpt}
Andrej Karpathy.
\newblock nanogpt.
\newblock \url{https://github.com/karpathy/nanoGPT}, 2022.

\bibitem[Liu et~al.(2024{\natexlab{a}})Liu, Feng, Xue, Wang, Wu, Lu, Zhao, Deng, Zhang, Ruan, et~al.]{liu2024deepseek}
Aixin Liu, Bei Feng, Bing Xue, Bingxuan Wang, Bochao Wu, Chengda Lu, Chenggang Zhao, Chengqi Deng, Chenyu Zhang, Chong Ruan, et~al.
\newblock Deepseek-v3 technical report.
\newblock \emph{arXiv preprint arXiv:2412.19437}, 2024{\natexlab{a}}.

\bibitem[Liu et~al.(2024{\natexlab{b}})Liu, Li, Hall, Liang, and Ma]{DBLP:conf/iclr/Liu0HL024-sophia}
Hong Liu, Zhiyuan Li, David Leo~Wright Hall, Percy Liang, and Tengyu Ma.
\newblock Sophia: {A} scalable stochastic second-order optimizer for language model pre-training.
\newblock In \emph{{ICLR}}, 2024{\natexlab{b}}.

\bibitem[Liu et~al.(2025)Liu, Su, Yao, Jiang, Lai, Du, Qin, Xu, Lu, Yan, et~al.]{muon-kimi}
Jingyuan Liu, Jianlin Su, Xingcheng Yao, Zhejun Jiang, Guokun Lai, Yulun Du, Yidao Qin, Weixin Xu, Enzhe Lu, Junjie Yan, et~al.
\newblock Muon is scalable for llm training.
\newblock \emph{arXiv preprint arXiv:2502.16982}, 2025.

\bibitem[Loshchilov and Hutter(2019)]{adamw}
Ilya Loshchilov and Frank Hutter.
\newblock Decoupled weight decay regularization.
\newblock In \emph{{ICLR}}, 2019.

\bibitem[Nesterov(1983)]{nesterov1983method}
Yurii Nesterov.
\newblock A method for solving the convex programming problem with convergence rate o(1/k2).
\newblock In \emph{Dokl akad nauk Sssr}, volume 269, page 543, 1983.

\bibitem[Ngom et~al.(2025)Ngom, Foreman, Vishwanath, et~al.]{extending-mup}
Marieme Ngom, Sam Foreman, Venkatram Vishwanath, et~al.
\newblock Extending $\mu$p: Spectral conditions for feature learning across optimizers.
\newblock In \emph{OPT 2025: Optimization for Machine Learning}, 2025.

\bibitem[Qiu et~al.(2025)Qiu, Chen, Phan, Lei, and Wilson]{muon-cp}
Shikai Qiu, Zixi Chen, Hoang Phan, Qi~Lei, and Andrew~Gordon Wilson.
\newblock Hyperparameter transfer enables consistent gains of matrix-preconditioned optimizers across scales.
\newblock \emph{arXiv preprint arXiv:2512.05620}, 2025.

\bibitem[Radford et~al.(2019)Radford, Wu, Child, Luan, Amodei, Sutskever, et~al.]{gpt-2}
Alec Radford, Jeffrey Wu, Rewon Child, David Luan, Dario Amodei, Ilya Sutskever, et~al.
\newblock Language models are unsupervised multitask learners.
\newblock \emph{OpenAI blog}, 1\penalty0 (8):\penalty0 9, 2019.

\bibitem[Schoenholz et~al.(2017)Schoenholz, Gilmer, Ganguli, and Sohl{-}Dickstein]{DBLP:conf/iclr/SchoenholzGGS17-deepinfo}
Samuel~S. Schoenholz, Justin Gilmer, Surya Ganguli, and Jascha Sohl{-}Dickstein.
\newblock Deep information propagation.
\newblock In \emph{{ICLR}}, 2017.

\bibitem[Singh et~al.(2025)Singh, Fry, Perelman, Tart, Ganesh, El-Kishky, McLaughlin, Low, Ostrow, Ananthram, et~al.]{gpt-5}
Aaditya Singh, Adam Fry, Adam Perelman, Adam Tart, Adi Ganesh, Ahmed El-Kishky, Aidan McLaughlin, Aiden Low, AJ~Ostrow, Akhila Ananthram, et~al.
\newblock Openai gpt-5 system card.
\newblock \emph{arXiv preprint arXiv:2601.03267}, 2025.

\bibitem[Team et~al.(2025)Team, Bai, Bao, Chen, Chen, Chen, Chen, Chen, Chen, Chen, et~al.]{team2025kimik2}
Kimi Team, Yifan Bai, Yiping Bao, Guanduo Chen, Jiahao Chen, Ningxin Chen, Ruijue Chen, Yanru Chen, Yuankun Chen, Yutian Chen, et~al.
\newblock Kimi k2: Open agentic intelligence.
\newblock \emph{arXiv preprint arXiv:2507.20534}, 2025.

\bibitem[Vankadara et~al.(2024)Vankadara, Xu, Haas, and Cevher]{mamba-mup}
Leena~Chennuru Vankadara, Jin Xu, Moritz Haas, and Volkan Cevher.
\newblock On feature learning in structured state space models.
\newblock In \emph{NeurIPS}, 2024.

\bibitem[Vaswani et~al.(2017)Vaswani, Shazeer, Parmar, Uszkoreit, Jones, Gomez, Kaiser, and Polosukhin]{DBLP:conf/nips/transformer}
Ashish Vaswani, Noam Shazeer, Niki Parmar, Jakob Uszkoreit, Llion Jones, Aidan~N. Gomez, Lukasz Kaiser, and Illia Polosukhin.
\newblock Attention is all you need.
\newblock In \emph{NIPS}, pages 5998--6008, 2017.

\bibitem[Vershynin(2018)]{hdp}
Roman Vershynin.
\newblock \emph{High-dimensional probability: An introduction with applications in data science}, volume~47.
\newblock Cambridge university press, 2018.

\bibitem[Vyas et~al.(2024)Vyas, Morwani, Zhao, Kwun, Shapira, Brandfonbrener, Janson, and Kakade]{vyas2024soap}
Nikhil Vyas, Depen Morwani, Rosie Zhao, Mujin Kwun, Itai Shapira, David Brandfonbrener, Lucas Janson, and Sham Kakade.
\newblock Soap: Improving and stabilizing shampoo using adam.
\newblock \emph{arXiv preprint arXiv:2409.11321}, 2024.

\bibitem[Xie et~al.(2026)Xie, Luo, Tang, Hu, Liu, Ren, Wang, Zhao, Yan, Su, et~al.]{xie2026controlled-sso}
Tian Xie, Haoming Luo, Haoyu Tang, Yiwen Hu, Jason~Klein Liu, Qingnan Ren, Yang Wang, Wayne~Xin Zhao, Rui Yan, Bing Su, et~al.
\newblock Controlled llm training on spectral sphere.
\newblock \emph{arXiv preprint arXiv:2601.08393}, 2026.

\bibitem[Yang et~al.(2025)Yang, Li, Yang, Zhang, Hui, Zheng, Yu, Gao, Huang, Lv, et~al.]{yang2025qwen3}
An~Yang, Anfeng Li, Baosong Yang, Beichen Zhang, Binyuan Hui, Bo~Zheng, Bowen Yu, Chang Gao, Chengen Huang, Chenxu Lv, et~al.
\newblock Qwen3 technical report.
\newblock \emph{arXiv preprint arXiv:2505.09388}, 2025.

\bibitem[Yang(2020)]{TP3}
Greg Yang.
\newblock Tensor programs {III:} neural matrix laws.
\newblock \emph{CoRR}, abs/2009.10685, 2020.

\bibitem[Yang and Hu(2021)]{TP4}
Greg Yang and Edward~J. Hu.
\newblock Tensor programs {IV:} feature learning in infinite-width neural networks.
\newblock In \emph{{ICML}}, volume 139, pages 11727--11737. {PMLR}, 2021.

\bibitem[Yang and Littwin(2023)]{TP4b}
Greg Yang and Etai Littwin.
\newblock Tensor programs ivb: Adaptive optimization in the infinite-width limit.
\newblock \emph{CoRR}, abs/2308.01814, 2023.

\bibitem[Yang et~al.(2022)Yang, Hu, Babuschkin, Sidor, Liu, Farhi, Ryder, Pachocki, Chen, and Gao]{TP5}
Greg Yang, Edward~J. Hu, Igor Babuschkin, Szymon Sidor, Xiaodong Liu, David Farhi, Nick Ryder, Jakub Pachocki, Weizhu Chen, and Jianfeng Gao.
\newblock Tensor programs {V:} tuning large neural networks via zero-shot hyperparameter transfer.
\newblock \emph{CoRR}, abs/2203.03466, 2022.

\bibitem[Yang et~al.(2023)Yang, Simon, and Bernstein]{mup-spectral}
Greg Yang, James~B. Simon, and Jeremy Bernstein.
\newblock A spectral condition for feature learning.
\newblock \emph{CoRR}, abs/2310.17813, 2023.

\bibitem[Yang et~al.(2024)Yang, Yu, Zhu, and Hayou]{TP-6}
Greg Yang, Dingli Yu, Chen Zhu, and Soufiane Hayou.
\newblock Tensor programs {VI:} feature learning in infinite depth neural networks.
\newblock In \emph{{ICLR}}, 2024.

\bibitem[Zhao et~al.(2024)Zhao, Zhang, Chen, Wang, Anandkumar, and Tian]{DBLP:conf/icml/Zhao0CWAT24-galore}
Jiawei Zhao, Zhenyu Zhang, Beidi Chen, Zhangyang Wang, Anima Anandkumar, and Yuandong Tian.
\newblock Galore: Memory-efficient {LLM} training by gradient low-rank projection.
\newblock In \emph{{ICML}}, 2024.

\bibitem[Zheng et~al.(2025)Zheng, Zhang, Wang, Huang, Tian, Huang, Zhu, and Li]{dit-mup}
Chenyu Zheng, Xinyu Zhang, Rongzhen Wang, Wei Huang, Zhi Tian, Weilin Huang, Jun Zhu, and Chongxuan Li.
\newblock Scaling diffusion transformers efficiently via {\(\mu\)}p.
\newblock \emph{CoRR}, abs/2505.15270, 2025.

\end{thebibliography}
\bibliographystyle{plainnat}

\newpage

\begin{appendices}

\renewcommand{\contentsname}{Contents of Appendix}
\tableofcontents

\addtocontents{toc}{\protect\setcounter{tocdepth}{2}} 

\newpage

\section{Additional Related Work}
\label{app: related work}

\subsection{$\mu$P under Width Scaling}
\label{app: muP under Width Scaling}

$\mu$P was originally introduced to characterize and control training dynamics in the infinite-width limit of neural networks, to enable stable feature learning through appropriate HPs adjustment~\citep{TP4}.
Early theoretical work formalized $\mu$P for MLP trained with SGD using Tensor Programs~\citep{TP3,TP4} and dynamical mean-field theory~\citep{DBLP:conf/nips/BordelonP22-dmft}.
Empirically, \citet{TP5} showed that $\mu$P stabilizes optimal HPs across model widths, thereby substantially reducing the tuning cost when scaling up model size.

Motivated by these advantages, the $\mu$P principle has been successfully extended to a wide range of modern architectures, including convolutional neural networks~\citep{TP4b}, Transformers~\citep{TP4b}, diffusion Transformers~\citep{dit-mup}, and state-space models~\citep{mamba-mup}.
In parallel, $\mu$P has been developed for a broad class of optimization algorithms, such as AdamW~\citep{adamw}, Muon~\citep{extending-mup}, sharpness-aware optimizer~\citep{DBLP:conf/nips/SAM-mup}, second-order optimizers~\citep{DBLP:conf/iclr/second-mup}, low-precision training~\citep{DBLP:conf/iclr/u-muP}, and sparse training~\citep{DBLP:conf/nips/Dey-sparse}.
These $\mu$P-based methods have also been successfully applied to the pretraining of large-scale foundation models in industrial settings~\citep{TP5,DBLP:journals/corr/cerebras,hu2024minicpm,dit-mup}.

Despite substantial progress, $\mu$P formulations are often tightly coupled to specific architectures~\citep{TP4b,dit-mup,mamba-mup} or particular optimization algorithms~\citep{TP4b,DBLP:conf/nips/SAM-mup,DBLP:conf/iclr/second-mup,extending-mup}, and their derivations typically rely on technically involved tools such as Tensor Programs or dynamical mean-field theory~\citep{TP3,TP4,TP4b,DBLP:conf/nips/BordelonP22-dmft}. As a result, it remains difficult to systematically analyze new architectures or optimizers and derive the corresponding $\mu$P formulations. To alleviate this limitation, \citet{mup-spectral} proposed a simple and general spectral condition for realizing $\mu$P in the width-scaling regime, enabling transparent derivations for a broad class of optimization algorithms~\citep{mup-spectral,extending-mup,DBLP:conf/nips/SAM-mup}. However, this spectral perspective focuses solely on width scaling and does not account for depth scaling, which is crucial for modern deep architectures.

\subsection{$\mu$P under Width-Depth Scaling}
\label{app: muP under Width-Depth Scaling}

Recent work has begun to extend the $\mu$P principle beyond pure width scaling to regimes where network depth grows jointly with model size.
Early theoretical analyses~\citep{TP-6,DBLP:conf/iclr/BordelonNLHP24-dmft-depth} of residual networks with one-layer residual blocks trained by SGD or Adam showed that a hidden residual multiplier $\alpha_l$ of order $\Theta(1/\sqrt{L})$ suffices to preserve stable feature learning, but observed that this scaling fails to maintain HP transferability in practical architectures such as Transformers~\citep{TP-6,completep}.

Subsequent studies~\citep{DBLP:conf/nips/BordelonCP24-transformer} of Transformers with two-layer residual blocks trained by SGD using dynamical mean-field theory argued that a stronger hidden residual scaling of $\Theta(1/L)$ is preferable, as it enables both nontrivial feature learning and non-negligible updates in attention layers.
More recently,~\citet{completep} shows that for residual networks with two-layer blocks trained by AdamW, the residual multiplier $\alpha_l$ of $\Theta(1/L)$ is in fact necessary to simultaneously maintain stable feature learning and maximize parameter updates. Moreover,~\citet{completep} empirically find that this parameterization enables HP transfer in GPT-2-style Transformer.
This hidden residual multiplier~\citep{DBLP:conf/nips/BordelonCP24-transformer,completep} is further applied to some matrix-preconditioned optimizers~\citep{muon-cp}.

Overall, existing $\mu$P extensions to the joint width-depth scaling regime remain fragmented, architecture- and optimizer-specific, and often rely on technically involved analyses, motivating the need for a simple and unified framework.

\section{Spectral Condition for General Residual Networks}
\label{app: Spectral Condition for General Residual Networks}

In this section, we provide derivations that complement the main-text analysis of the two-layer residual block ($k=2$) in Section~\ref{sec: spec condition}.
The goal is to clarify how the spectral condition changes with the internal depth $k$ of the residual branch and to justify why the two-layer case is the minimal representative of fixed-depth branches with $k\ge2$.
We first analyze one-layer residual blocks ($k=1$), where the absence of second-order update terms leads to the looser Depth-$\mu$P-style scaling in Condition~\ref{condition: one-layer}.
We then extend the analysis to residual blocks with an arbitrary fixed number $k\ge2$ of transformations, showing that the additional higher-order update terms do not change the resulting $\mu$P implementation beyond the two-layer case.
Finally, we discuss bias parameters and show that they can be incorporated as a lightweight extension without changing the main HP parameterization for the matrix weights.

As in the main text, we assume $\|\vx\|_{\normrms}=\Theta(1)$ for simplicity, which holds for natural image data and one-hot language data ($\Theta(1/\sqrt{d_0})=\Theta(1)$). We also assume the network dimensions satisfy Equation~(\ref{eq:dimensions}). Furthermore, we also use norm estimates based on subadditivity and submultiplicativity to track the typical scales of $\|\vh_l(\vx)\|_{\normrms}$ and $\|\Delta\vh_l(\vx)\|_{\normrms}$.
The tightness justification under width-depth scaling is in Appendix~\ref{app: lower bound}, following the width-scaling treatment of~\citet{mup-spectral}.

\subsection{One-layer Residual Block}
\label{app: spec one-layer}

\subsubsection{Problem Setup}

We consider a residual network with one-layer residual blocks ($k=1$), defined as
\begin{align*}
\vh_0(\vx) &= \alpha_0 \mW_0 \vx,\\
\vh_l(\vx) &= \vh_{l-1}(\vx) + \alpha_l \mW_l \vh_{l-1}(\vx), \quad \forall\, l \in [L], \\
\vh_{L+1}(\vx) &= \alpha_{L+1} \mW_{L+1} \vh_L(\vx),
\end{align*}
where $\mW_0 \in \R^{n \times d_0}$, $\mW_l \in \R^{n \times n}$ for $l \in [L]$, and $\mW_{L+1} \in \R^{d_{L+1} \times n}$.
The network output $\vh_{L+1}(\vx) \in \R^{d_{L+1}}$ is used to compute the loss
$\mathcal{L}(\vh_{L+1}(\vx), \vy)$.
This case serves as a reference point for understanding the transition from $k=1$ to $k\ge2$: because each residual branch contains only one transformation, its update expansion has no second-order direct update term.

\subsubsection{Spectral Scaling Condition}

We now state the spectral scaling condition for the above residual network with one-layer blocks for realizing the $\mu$P Principle~\ref{principle: mup} under joint width–depth scaling.

\begin{condition}[Spectral condition for $\mu$P under joint width-depth scaling, one-layer residual block]
\label{condition: one-layer}
To realize $\mu$P Principle~\ref{principle: mup}, the initial weights and their per-step updates should satisfy:
\begin{itemize}
    \item \textbf{Initial condition.}
    \begin{itemize}
        \item Input and output weights:
        $\alpha_0 \|\mW_0\|_{\normrms},\ \alpha_{L+1}\|\mW_{L+1}\|_{\normrms}=\Theta(1)$.
        \item Hidden weights: $\alpha_l\|\mW_l\|_{\normrms}=\mathcal{O}(1/\sqrt{L})$,
        $\forall l\in[L]$.
    \end{itemize}
    \item \textbf{Update condition.}
    \begin{itemize}
        \item Input and output weights:
        $\alpha_0\|\Delta\mW_0\|_{\normrms},\ \alpha_{L+1}\|\Delta\mW_{L+1}\|_{\normrms}=\Theta(1)$.
        \item Hidden weights (first-order):
        $\alpha_l\|\Delta\mW_l\|_{\normrms}
        =\Theta(1/L),\ \forall l\in[L].$
    \end{itemize}
\end{itemize}
\end{condition}

The essential distinction between one-layer and two-layer residual blocks lies in the
\emph{order of the weight-update terms} that directly affect feature evolution.
For a one-layer residual block, the feature update expansion contains only zero-order ($\vepsilon_0(L)$) and first-order ($\vepsilon_1(L)$) terms in the weight updates (see details in Appendix~\ref{app: one-layer Derivation for Update Condition}).
As a result, only the first-order direct update term needs to be maximized under Principle~\ref{principle: max},
leading to the condition
$\alpha_l\|\Delta \mW_l\|_{\normrms}=\Theta(1/L)$,
while leaving the initialization scale unconstrained beyond the preliminary condition
$\alpha_l\|\mW_l\|_{\normrms}=\mathcal{O}(1/\sqrt{L})$.

From an algorithmic (HP parameterization) perspective, when the initialization variance $\sigma_l^2$ is aligned with the
standard width-scaling $\mu$P framework~\citep{TP5} as in Section~\ref{sec: implementation}, the condition $\alpha_l\|\mW_l\|_{\normrms}=\mathcal{O}(1/\sqrt{L})$
naturally induces an $\mathcal{O}(1/\sqrt{L})$ residual multiplier.
Moreover, Depth-$\mu$P-style formulation~\citep{DBLP:conf/iclr/BordelonNLHP24-dmft-depth,TP-6}
adopts the $\Theta(1/\sqrt{L})$ residual multiplier, which they interpret as further promoting \emph{feature diversity}.
Within our spectral framework, this choice is unified as a natural case corresponding to further maximizing the magnitude of the zero-order feature update $\|\vepsilon_0(L)\|_{\normrms}$ (see derivations in Appendix~\ref{app: one-layer Derivation for Update Condition}).
The parameterization of other HPs (e.g., learning rate) in Depth-$\mu$P-style formulation~\citep{DBLP:conf/iclr/BordelonNLHP24-dmft-depth,TP-6} can also be recovered from Condition~\ref{condition: one-layer}, with the details deferred to Appendix~\ref{app: implementation one-layer}.

In contrast, two-layer residual blocks introduce \emph{second-order} update terms
arising from products of weight updates across the two sublayers.
To satisfy the $\mu$P principle~(\ref{principle: max}), these second-order contributions should be maximized to $\Theta(1)$ as in (\ref{eq:update_hidd_2}).
This requirement imposes an additional constraint on the scaling of weight updates,
which in turn tightens the initialization condition to $
\alpha_l\|\mW_l^{(1)}\|_{\normrms}\,\|\mW_l^{(2)}\|_{\normrms}=\Theta(1/L)
$ in~(\ref{eq:init_hidd}) and thus the residual multiplier to $\alpha_l = \Theta(1/L)$.
This important difference explains why the Depth-$\mu$P-style scaling ($k=1$) does not directly capture residual branches with two or more transformations, and helps account for its poor depth-wise HP transfer behavior in Transformer experiments in Section~\ref{sec: experiment} and~\citet{TP-6,completep}.

\subsubsection{Derivation for Initial Condition}

We first derive the initialization condition that ensures stability
of feature magnitudes during forward propagation for single-layer residual
blocks. We consider each layer sequentially.

\paragraph{Input layer.}
The argument is identical to the two-layer case. By the submultiplicativity of the RMS operator norm, we have
\begin{align*}
\Vert\vh_0(\vx)\Vert_\normrms
= \alpha_0\Vert\mW_0\vx\Vert_\normrms
=  \Theta(\alpha_0\Vert\mW_0\Vert_\normrms \Vert\vx\Vert_\normrms)
= \Theta(\alpha_0\Vert\mW_0\Vert_\normrms),
\end{align*}
where we assume $\|\vx\|_{\normrms}=\Theta(1)$. Thus, choosing
$\alpha_0\|\mW_0\|_{\normrms}=\Theta(1)$ ensures
$\|\vh_0(\vx)\|_{\normrms}=\Theta(1)$.

\paragraph{Hidden layers.}
For a single-layer residual block, the forward recursion is
\[
\vh_l(\vx) = \vh_{l-1}(\vx) + \alpha_l\mW_l \vh_{l-1}(\vx).
\]
Expanding the recursion yields
\begin{align}
\vh_s(\vx)
= \vh_0(\vx) + \sum_{l=1}^s \alpha_l\mW_l \vh_{l-1}(\vx).
\label{eqn: hl_single}
\end{align}
Applying subadditivity, we can estimate their order as
\begin{align*}
\|\vh_s(\vx)\|_\normrms
= \Theta\left(\|\vh_0(\vx)\|_\normrms + \|\sum_{l=1}^s \alpha_l\mW_l \vh_{l-1}(\vx)\|_\normrms\right).
\end{align*}
Since we have $\|\vh_0(\vx)\|_{\normrms}=\Theta(1)$, it suffices to ensure that
$\|\sum_{l=1}^s \alpha_l \mW_l \vh_{l-1}(\vx)\|_{\normrms}=\mathcal{O}(1)$  for any $s\in[L]$ to preserve $\|\vh_s(\vx)\|_{\normrms}=\Theta(1)$.
Under i.i.d.\ zero-mean Gaussian initialization, the summands are approximately independent zero-mean random vectors~\citep{TP4,TP-6,completep}, so the typical squared RMS norm of their sum scales as the sum of the squared RMS norms (see Theorem~3.3.1 in~\citet{hdp}), yielding that
$$
\|\sum_{l=1}^s \alpha_l \mW_l \vh_{l-1}(\vx)\|_{\normrms} = \Theta\left(
\sqrt{\sum_{l=1}^s \|\alpha_l \mW_l \vh_{l-1}(\vx)\|_{\normrms}^2}\right).
$$
By submultiplicativity, we can further estimate $\|\alpha_l \mW_l \vh_{l-1}(\vx)\|_{\normrms} =  \Theta( \alpha_l
\|\mW_l\|_{\normrms}
\|\vh_{l-1}(\vx)\|_{\normrms})$.
Therefore, starting from $\|\vh_0(\vx)\|_{\normrms}=\Theta(1)$, imposing
$$\alpha_l\|\mW_l\|_{\normrms}=\mathcal{O}(1/\sqrt{L}),\quad l\in[L]$$ recursively ensures
$\|\sum_{l=1}^s \alpha_l\mW_l\vh_{l-1}(\vx)\|_{\normrms}
=\mathcal{O}(1)$ for any $s\in[L]$.
This provides the initial condition on the hidden weights.

\paragraph{Output layer.}
The same argument as for the two-layer block case gives
\begin{align*}
\|\vh_{L+1}(\vx)\|_{\normrms}
=
\|\alpha_{L+1} \mW_{L+1}\vh_L(\vx)\|_{\normrms}
=
\Theta(\alpha_{L+1} \|\mW_{L+1}\|_{\normrms}\|\vh_L(\vx)\|_{\normrms})
=
\Theta(\alpha_{L+1}\|\mW_{L+1}\|_{\normrms}),
\end{align*}
so choosing $\alpha_{L+1}\|\mW_{L+1}\|_{\normrms}=\Theta(1)$ keeps the output scale stable.
This completes the initialization analysis.

\subsubsection{Derivation for Update Condition}
\label{app: one-layer Derivation for Update Condition}

We next derive the update condition required to ensure stable feature evolution,
i.e., $\|\Delta \vh_l(\vx)\|_{\normrms}=\Theta(1)$, while maximally updating parameters as prescribed by $\mu$P Principle~(\ref{principle: max}).

\paragraph{Input layer.}
Since $\Delta\vh_0(\vx)=\alpha_0 \Delta\mW_0\vx$, submultiplicativity yields
\[
\|\Delta\vh_0(\vx)\|_{\normrms}
=
\Theta(\alpha_0 \|\Delta\mW_0\|_{\normrms}\|\vx\|_{\normrms})
=
\Theta(\alpha_0 \|\Delta\mW_0\|_{\normrms}),
\]
and thus we set $\alpha_0\|\Delta\mW_0\|_{\normrms}=\Theta(1)$.

\paragraph{Hidden layers.}
Expanding Equation~(\ref{eqn: hl_single}) after a single gradient step gives
\begin{align*}
\Delta\vh_s(\vx)
&=
\Delta\vh_0(\vx)
+
\underbrace{\sum_{l=1}^s \alpha_l \mW_l \Delta\vh_{l-1}(\vx)}_{\vepsilon_0(s)}
+
\underbrace{\sum_{l=1}^s \alpha_l \Delta\mW_l (\vh_{l-1}(\vx)+\Delta\vh_{l-1}(\vx))}_{\vepsilon_1(s)}.
\end{align*}
\emph{Unlike the two-layer case, there is no second-order update term}, since each residual block contains only a single weight matrix.
By the subadditivity of vector norms, we have
\begin{align*}
\Vert\Delta\vh_s(\vx)\Vert_\normrms = \Theta(\Vert\Delta\vh_0(\vx)\Vert_\normrms + \Vert\vepsilon_0(s)\Vert_\normrms + \Vert\vepsilon_1(s)\Vert_\normrms).
\end{align*}
Since $\|\Delta\vh_0(\vx)\|_{\normrms}=\Theta(1)$ by the input-layer update, we have
$\|\Delta\vh_s(\vx)\|_{\normrms}=\Omega(1)$ for all $s\in[L]$.
Moreover, by subadditivity, the remaining terms do not decay with depth, implying
$\|\Delta\vh_s(\vx)\|_{\normrms}=\mathcal{O}(\|\Delta\vh_L(\vx)\|_{\normrms})$ for any $s\in[L]$.
Therefore, to enforce Principle~\ref{principle: mup},
it suffices to require $\|\Delta\vh_L(\vx)\|_{\normrms}=\Theta(1)$ while satisfying
Principle~(\ref{principle: max}).

\textbf{Zero-order term.}
The term $\vepsilon_0(L)$ propagates feature updates from earlier layers and does not depend on the weight update $\Delta\mW_l$ at the current layer, so it does not need to be maximized from Principle~(\ref{principle: max}).
Therefore, it suffices to verify that $\vepsilon_0(L)$ remains $\mathcal{O}(1)$ under the initial condition. In fact, the same argument used for deriving $\Vert\vh_L(\vx)\Vert_\normrms$ directly implies $$\|\vepsilon_0(L)\|_{\normrms}
=
\Theta\left(
\sqrt{\sum_{l=1}^L \alpha_l^2
\|\mW_l\|_{\normrms}^2
\|\Delta\vh_{l-1}(\vx)\|_{\normrms}^2
}
\right)
= \mathcal{O}(1),$$
where we use the self-consistent fact that $\Vert \Delta\vh_{l-1}(\vx)\Vert_\normrms = \Theta(1)$ for $l\in[L]$ if we finally set $\Vert \Delta\vh_L(\vx)\Vert_\normrms= \Theta(1)$. 
\emph{We note that if we further set $\alpha_l\|\mW_l\|_{\normrms}=\Theta(1/\sqrt{L})$ for all $l\in[L]$ in the initial condition, then $\|\vepsilon_0(L)\|_{\normrms} = \Theta(1)$ and is maximized, which leads to the Depth-$\mu$P formulations}~\citep{TP-6,DBLP:conf/iclr/BordelonNLHP24-dmft-depth}.

\textbf{First-order terms.}
The first-order update terms reflect the direct effect of weight updates $\Delta\mW_l$ on features
and must be maximized ($\Theta(1)$) to satisfy the $\mu$P Principle~(\ref{principle: max}).
Using subadditivity and submultiplicativity, we estimate the order of $\|\vepsilon_1(L)\|_{\normrms}$ as
\begin{align*}
\|\vepsilon_1(L)\|_{\normrms}
= \Theta\left(\sum_{l=1}^L \alpha_l
\|\Delta\mW_l\|_{\normrms}
\|\vh_{l-1}(\vx)\|_\normrms\right) + \Theta\left(\sum_{l=1}^L \alpha_l
\|\Delta\mW_l\|_{\normrms}
\|\Delta\vh_{l-1}(\vx)\|_{\normrms}\right).
\end{align*}
For $l\in[L]$, using $\|\vh_{l-1}(\vx)\|_{\normrms}=\Theta(1)$ by the preliminary initial condition and the self-consistent fact that $\Vert \Delta\vh_{l-1}(\vx)\Vert_\normrms = \Theta(1)$ if we finally set $\Vert \Delta\vh_L(\vx)\Vert_\normrms= \Theta(1)$, we can obtain
$
\|\vepsilon_1(L)\|_{\normrms}
= \Theta\big(\sum_{l=1}^L \alpha_l
\|\Delta\mW_l\|_{\normrms}\big).
$
To satisfy Principle~(\ref{principle: max}), we need to maximize the contribution from each $\Delta\mW_l$ and ensure $\|\vepsilon_1(L)\|_{\normrms}=\Theta(1)$ at the same time, which naturally requires
\[
\alpha_l \|\Delta\mW_l\|_{\normrms}
= \Theta(1/L), \qquad \forall\, l\in[L],
\]
which completes the first-order update condition on hidden weights.

\paragraph{Output layer.}
The output layer has the same form as in the two-layer block case ($k=2$), since it depends only on $\vh_L(\vx)$ and not on the internal structure of the residual blocks.
Given $\|\vh_L(\vx)\|_{\normrms}=\Theta(1)$ and
$\|\Delta\vh_L(\vx)\|_{\normrms}=\Theta(1)$ from the hidden-layer analysis, the same argument as in Section~\ref{sec: Theoretical Derivation} yields
\[
\alpha_{L+1}\|\Delta\mW_{L+1}\|_{\normrms}=\Theta(1).
\]

\subsection{Multi-layer Residual Block}
\label{app: spectral multi-layer}

This subsection justifies the main-text choice of analyzing the two-layer block ($k=2$) as the representative case for all fixed-depth residual branches with $k\ge2$.

\subsubsection{Problem Setup}

We now extend the spectral analysis from one- and two-layer residual blocks
to the general case of $k$-layer residual blocks, where $k \ge 2$ is a fixed $\Theta(1)$ constant.
The fixed-$k$ assumption is important: we do not address regimes where the internal block depth itself scales with width or network depth.
Specifically, we consider a residual network of depth $L$ whose forward propagation is given by
\begin{align*}
\vh_0(\vx) &= \alpha_0 \mW_0 \vx,\\
\vh_l(\vx) &= \vh_{l-1}(\vx)
+ \alpha_l \mW_l^{(k)} \mW_l^{(k-1)} \cdots \mW_l^{(1)} \vh_{l-1}(\vx)
= \vh_{l-1}(\vx) + \alpha_l \prod_{i=1}^k \mW_l^{(i)} \vh_{l-1}(\vx),
\quad \forall\, l \in [L], \\
\vh_{L+1}(\vx) &= \alpha_{L+1} \mW_{L+1} \vh_L(\vx).
\end{align*}
Here, each residual block consists of a depth-$k$ linear transformation,
with $\{\mW_l^{(i)}\}_{i=1}^k$ denoting the weight matrices within the $l$-th block.
As in the previous sections, $\vh_{L+1}(\vx)$ denotes the network output used to compute the loss.
As in the two-layer block case, our goal is to derive a spectral condition for realizing $\mu$P Principle~\ref{principle: mup} in this setting.

In the following, we show that although increasing the internal block depth $k$
introduces higher-order interactions between weight updates, the resulting spectral conditions admit a simple and systematic form, and do not fundamentally alter the algorithmic implementation of $\mu$P.

\subsubsection{Spectral Scaling Condition}

We now state the spectral scaling condition for the above residual network with $k$-layer residual blocks that is sufficient for the $\mu$P principle under joint width–depth scaling.

\begin{condition}[Spectral condition for $\mu$P under joint width-depth scaling, $k$-layer residual block]
\label{condition: multi-layer}
To realize $\mu$P Principle~\ref{principle: mup}, the initial weights and their per-step updates should satisfy:
\begin{itemize}
    \item \textbf{Initial condition.}
    \begin{itemize}
        \item Input and output weights:
        $\alpha_0 \|\mW_0\|_{\normrms},\ \alpha_{L+1}\|\mW_{L+1}\|_{\normrms}=\Theta(1)$.
        \item Hidden weights: $\alpha_l\prod_{i=1}^k \|\mW_l^{(i)}\|_{\normrms}=\Theta(1/L)$,
        $\forall l\in[L]$.
    \end{itemize}
    \item \textbf{Update condition.}
    \begin{itemize}
        \item Input and output weights:
        $\alpha_0\|\Delta\mW_0\|_{\normrms},\ \alpha_{L+1}\|\Delta\mW_{L+1}\|_{\normrms}=\Theta(1)$.
        \item Hidden weights (first-order):
        $\alpha_l\|\Delta\mW_l^{(i)}\|_{\normrms}
\prod_{m\neq i}\|\mW_l^{(m)}\|_{\normrms}
=\Theta(1/L),\ \forall l\in[L], i\in[k].$
        \item Hidden weights ($j$-order, $j\ge2$), automatically satisfied by combining the initial condition and the first-order update condition: $\alpha_l \prod_{i\in S}\|\Delta\mW_l^{(i)}\|_{\normrms}
\prod_{i\notin S}\|\mW_l^{(i)}\|_{\normrms}
=
\Theta(1/L),\ \forall S \subseteq [k],\ |S|=j,\ j\in [k],\ l\in[L]$.
    \end{itemize}
\end{itemize}
\end{condition}

Condition~\ref{condition: multi-layer} reveals that extending residual blocks from two layers to a general fixed depth $k\ge2$ does not change the algorithmic realization of the $\mu$P principle.
Compared to the two-layer case, the new elements introduced by a deeper block are higher-order interaction terms among weight updates within the same block.
However, we show these higher-order terms do not impose additional constraints beyond those already enforced by the initial condition and the first-order update condition.

Concretely, once the product of spectral norms at initialization satisfies
$\alpha_l \prod_{i=1}^k \|\mW_l^{(i)}\|_{\normrms}=\Theta(1/L)$
and each update obeys the first-order scaling
$\alpha_l\|\Delta\mW_l^{(i)}\|_{\normrms}\prod_{m\neq i}\|\mW_l^{(m)}\|_{\normrms}=\Theta(1/L)$, all higher-order update contributions of order $j\ge2$ are automatically controlled as $\Theta(1/L)$.
As a result, increasing the internal block depth $k$ only increases the number of such higher-order contributions, but does not alter their scaling behavior.

Following the same steps as derivations for implementations in Section~\ref{sec: implementation} and Appendix~\ref{app: implementation opts HPs}, we can find that \emph{implementing $\mu$P for a $k$-layer residual block requires no additional parameterization beyond those already needed for the two-layer case}. 
In particular, when the initialization variance is aligned with the standard width-scaling $\mu$P formulation~\citep{TP4,TP5} as in Section~\ref{sec: implementation} ($\|\mW_l\|_\normrms=\Theta(1),\ \forall l\in[L]$),
the initial condition still induces the residual multiplier $\alpha_l=\Theta(1/L)$ for $l\in[L]$, which is the same as the two-layer case. Built upon the initial condition, the first-order update condition is reduced to
\begin{align*}
\alpha_l\|\Delta\mW_l^{(i)}\|_{\normrms}\prod_{m\neq i}\|\mW_l^{(m)}\|_{\normrms} 
=
\Theta\left(\frac{1}{L} \|\Delta{\mW}_l^{(i)}\|_{\normrms} \right).
\end{align*}
Therefore, requiring the first-order update condition yields $\|\Delta{\mW}_l^{(i)}\|_{\normrms} = \Theta(1)$ for $\forall l\in[L],i\in[k]$. This is also in the same way as the two-layer case ($\|\Delta{\mW}_l^{(i)}\|_{\normrms} = \Theta(1)$ for $\forall l\in[L],i\in[2]$), thus leading to the same optimizer-related HPs adjustment.
The multi-layer analysis, therefore, serves to justify the robustness and generality of the two-layer $\mu$P prescription, rather than to introduce a distinct algorithm dependent on block depth.

\subsubsection{Derivation for Preliminary Initial Condition}

We first derive a preliminary initialization condition that guarantees
stability of feature magnitudes during forward propagation for $k$-layer
($k \ge 2$) residual blocks.
As in the two-layer case, we analyze each layer sequentially.

\paragraph{Input layer.}
By the submultiplicativity of the RMS operator norm, we have
\begin{align*}
\|\vh_0(\vx)\|_{\normrms}
=
\alpha_0 \|\mW_0 \vx\|_{\normrms}
=
\Theta(\alpha_0 \|\mW_0\|_{\normrms}\,\|\vx\|_{\normrms})
=
\Theta(\alpha_0 \|\mW_0\|_{\normrms}),
\end{align*}
where we have assumed $\|\vx\|_{\normrms}=\Theta(1)$.
Thus, choosing $\alpha_0\|\mW_0\|_{\normrms}=\Theta(1)$ ensures
$\|\vh_0(\vx)\|_{\normrms}=\Theta(1)$.

\paragraph{Hidden layers.}
Expanding the residual recursion yields
\begin{align}
\vh_s(\vx)
&=
\vh_{s-1}(\vx)
+
\alpha_s \prod_{i=1}^k \mW_s^{(i)} \vh_{s-1}(\vx)
= \cdots
= \vh_0(\vx)
+ \sum_{l=1}^s \alpha_l
\prod_{i=1}^k \mW_l^{(i)} \vh_{l-1}(\vx).
\label{eqn: hl-k}
\end{align}
Applying subadditivity, we can estimate their order as
\begin{align*}
\|\vh_s(\vx)\|_{\normrms}
=
\Theta\left(\|\vh_0(\vx)\|_{\normrms}
+
\Big\|
\sum_{l=1}^s \alpha_l
\prod_{i=1}^k \mW_l^{(i)} \vh_{l-1}(\vx)
\Big\|_{\normrms}\right).
\end{align*}
Since we have $\|\vh_0(\vx)\|_{\normrms}=\Theta(1)$, it suffices to ensure that
$\|\sum_{l=1}^s \alpha_l
\prod_{i=1}^k \mW_l^{(i)} \vh_{l-1}(\vx)
\|_{\normrms}=\mathcal{O}(1)$  for any $s\in[L]$ to preserve $\|\vh_s(\vx)\|_{\normrms}=\Theta(1)$.
Under i.i.d.\ zero-mean Gaussian initialization, the summands are approximately independent zero-mean random vectors~\citep{TP4,TP-6,completep}, so the typical squared RMS norm of their sum scales as the sum of the squared RMS norms (see Theorem~3.3.1 in~\citet{hdp}), yielding that
$$
\left\|\sum_{l=1}^s \alpha_l
\prod_{i=1}^k \mW_l^{(i)} \vh_{l-1}(\vx)
\right\|_{\normrms} = \Theta\left(
\sqrt{\sum_{l=1}^s \|\alpha_l
\prod_{i=1}^k \mW_l^{(i)} \vh_{l-1}(\vx)\|_{\normrms}^2}\right).
$$
By submultiplicativity, we can further estimate $\|\alpha_l
\prod_{i=1}^k \mW_l^{(i)} \vh_{l-1}(\vx)\|_{\normrms} =  \Theta( \alpha_l
\prod_{i=1}^k \|\mW_l^{(i)}\|_\normrms
\|\vh_{l-1}(\vx)\|_{\normrms})$.
Therefore, starting from $\|\vh_0(\vx)\|_{\normrms}=\Theta(1)$, imposing
$$\alpha_l
\prod_{i=1}^k \|\mW_l^{(i)}\|_\normrms=\mathcal{O}(1/\sqrt{L}),\quad l\in[L]$$ recursively ensures
$\|\sum_{l=1}^s \alpha_l
\prod_{i=1}^k \mW_l^{(i)} \vh_{l-1}(\vx)
\|_{\normrms}=\mathcal{O}(1)$ for any $s\in[L]$.
This provides a preliminary initial condition on the hidden weights, which will be further refined once update constraints are incorporated.

\paragraph{Output layer.}
The output layer has the same form as in the two-layer case ($k=2$), yielding $\alpha_{L+1}\|\mW_{L+1}\|_{\normrms}=\Theta(1)$, which keeps the output stable.
This completes the preliminary initialization analysis.

\subsubsection{Derivation for Update Condition}

We next derive the update conditions required to ensure stable feature evolution by Principle~(\ref{principle: stable}),
i.e., $\|\Delta \vh_l(\vx)\|_{\normrms}=\Theta(1)$,
while maximally updating parameters as prescribed by Principle~(\ref{principle: max}).

\paragraph{Input layer.}
Since $\Delta\vh_0(\vx)=\alpha_0\Delta\mW_0\vx$, the submultiplicativity yields
\[
\|\Delta\vh_0(\vx)\|_{\normrms}
=
\Theta(\alpha_0\|\Delta\mW_0\|_{\normrms}\|\vx\|_{\normrms})
=
\Theta(\alpha_0\|\Delta\mW_0\|_{\normrms}),
\]
and thus we set $\alpha_0\|\Delta\mW_0\|_{\normrms}=\Theta(1)$.

\paragraph{Hidden layers.}
Expanding the residual recursion in Equation~(\ref{eqn: hl-k}) after one update step gives
\begin{align*}
\Delta\vh_s(\vx)
&=
\Delta\vh_0(\vx)
+
\underbrace{\sum_{l=1}^s \alpha_l
\prod_{i=1}^k \mW_l^{(i)} \Delta\vh_{l-1}(\vx)}_{\vepsilon_0(s)}
+
\sum_{j=1}^k \vepsilon_j(s),
\end{align*}
where $\vepsilon_j(s)$ collects all terms that are \emph{$j$-th order} in
$\{\Delta \mW_l^{(i)}\}_{i=1}^k$.
By the subadditivity of vector norms, we have
\begin{align*}
\Vert\Delta\vh_s(\vx)\Vert_\normrms = \Theta\left(\Vert\Delta\vh_0(\vx)\Vert_\normrms + \Vert\vepsilon_0(s)\Vert_\normrms + \sum_{j=1}^k\Vert\vepsilon_j(s)\Vert_\normrms\right).
\end{align*}
Since $\|\Delta\vh_0(\vx)\|_{\normrms}=\Theta(1)$ by the input-layer update, we have
$\|\Delta\vh_s(\vx)\|_{\normrms}=\Omega(1)$ for all $s\in[L]$.
Moreover, by subadditivity, the remaining terms do not decay with depth, implying
$\|\Delta\vh_s(\vx)\|_{\normrms}=\mathcal{O}(\|\Delta\vh_L(\vx)\|_{\normrms})$ for any $s\in[L]$.
Therefore, to enforce Principle~\ref{principle: mup},
it suffices to require $\|\Delta\vh_L(\vx)\|_{\normrms}=\Theta(1)$ while satisfying
Principle~(\ref{principle: max}).

\textbf{Zero-order term.}
The term $\vepsilon_0(L)$ propagates feature updates from earlier layers and does not depend on the weight update $\Delta\mW_l$ at the current layer, so it does not need to be maximized from Principle~(\ref{principle: max}).
Therefore, it suffices to verify that $\vepsilon_0(L)$ remains $\mathcal{O}(1)$ under the preliminary initial condition. In fact, the same argument used for deriving $\Vert\vh_L(\vx)\Vert_\normrms$ directly implies 
$$\|\vepsilon_0(L)\|_{\normrms}
=
\Theta\left(
\sqrt{\sum_{l=1}^L \alpha_l^2
\prod_{i=1}^k \|\mW_l^{(i)}\|_{\normrms}^2
\|\Delta\vh_{l-1}(\vx)\|_{\normrms}^2 }
\right)
= \mathcal{O}(1),$$
where we use the self-consistent fact that $\|\Delta\vh_{l-1}(\vx)\|_{\normrms}=\Theta(1)$ for $l \in [L]$ if we finally enforce $\|\Delta\vh_{L}(\vx)\|_{\normrms}=\Theta(1)$.

\paragraph{First-order terms.}
The first-order contributions take the form
\[
\vepsilon_1(L)
=
\sum_{l=1}^L \alpha_l
\sum_{i=1}^k
\Big(
\mW_l^{(k)}\cdots \Delta\mW_l^{(i)} \cdots \mW_l^{(1)}
\Big)
(\vh_{l-1}(\vx)+\Delta\vh_{l-1}(\vx)).
\]
Using subadditivity and submultiplicativity,
\[
\|\vepsilon_1(L)\|_{\normrms}
=
\Theta\left(
\sum_{l=1}^L \alpha_l
\sum_{i=1}^k
\|\Delta\mW_l^{(i)}\|_{\normrms}
\prod_{m\neq i}\|\mW_l^{(m)}\|_{\normrms}
\right)
=
\sum_{i=1}^k \Theta\left(
\sum_{l=1}^L \alpha_l
\|\Delta\mW_l^{(i)}\|_{\normrms}
\prod_{m\neq i}\|\mW_l^{(m)}\|_{\normrms}
\right),
\]
where we used
$\|\vh_{l-1}(\vx)\|_{\normrms}=\Theta(1)$ for $l\in [L]$ by the preliminary initial condition and the self-consistent fact that $\|\Delta\vh_{l-1}(\vx)\|_{\normrms}=\Theta(1)$ for $l \in [L]$ if we finally enforce $\|\Delta\vh_{L}(\vx)\|_{\normrms}=\Theta(1)$. To satisfy Principle~(\ref{principle: max}), we need to maximize the contribution from each $\Delta\mW_l$ and ensure $\|\vepsilon_1(L)\|_{\normrms}=\Theta(1)$ at the same time, which naturally requires
\[
\alpha_l \|\Delta\mW_l^{(i)}\|_{\normrms}
\prod_{m\neq i}\|\mW_l^{(m)}\|_{\normrms}
=
\Theta(1/L),
\quad \forall\, l\in[L],\ i\in[k].
\]

\paragraph{Any $j$-order terms.}

Similar to the first-order term, for $j\in [k]$, the $j$-th order feature update term $\vepsilon_j(L)$ admits the explicit form
\[
\vepsilon_j(L)
=
\sum_{l=1}^L \alpha_l
\sum_{\substack{S \subseteq [k] \\ |S| = j}}
\left(
\prod_{i \in S} \Delta \mW_l^{(i)}
\right)
\left(
\prod_{i \notin S} \mW_l^{(i)}
\right)
(\vh_{l-1}(\vx) + \Delta\vh_{l-1}(\vx)),
\]
where $S$ indexes the subset of sublayers whose weights are replaced by their
per-step updates, and the products are ordered consistently with the forward
computation within each residual block. Therefore, by the same subadditivity and submultiplicativity arguments for $\|\vepsilon_1(L)\|_{\normrms}$, the $j$-th order update terms satisfy
\[
\|\vepsilon_j(L)\|_{\normrms}
=
\sum_{\substack{S \subseteq [k] \\ |S| = j}}
\Theta\left(
\sum_{l=1}^L \alpha_l
\prod_{i\in S}\|\Delta\mW_l^{(i)}\|_{\normrms}
\prod_{i\notin S}\|\mW_l^{(i)}\|_{\normrms}
\right).
\]
Principle~(\ref{principle: max}) requires maximizing each summand while ensuring $\|\vepsilon_j(L)\|_{\normrms}=\Theta(1)$. It is therefore sufficient to impose
\begin{align*}
\alpha_l \prod_{i\in S}\|\Delta\mW_l^{(i)}\|_{\normrms}
\prod_{i\notin S}\|\mW_l^{(i)}\|_{\normrms}
=
\Theta(1/L),
\quad \forall\, S \subseteq [k],\ |S|=j,\ j\in [k],\ l\in[L].
\end{align*}

\paragraph{Output layer.}
The same argument as in the two-layer case in Section~\ref{sec: spec condition} yields
$\alpha_{L+1}\|\Delta\mW_{L+1}\|_{\normrms}=\Theta(1)$.

\subsubsection{Derivation for Final Initial Condition}

Multiplying the first-order update conditions for each hidden weight yields
\begin{align*}
\alpha_l^k \prod_{i=1}^k\|\mW_l^{(i)}\|_{\normrms}^{k-1}
\prod_{i=1}^k \|\Delta\mW_l^{(i)}\|_{\normrms}
= \Theta(1/L^k),\quad \forall l\in[L].
\end{align*}
On the other hand, the highest $k$-order update condition is
\(
\alpha_l \prod_{i=1}^k \|\Delta\mW_l^{(i)}\|_{\normrms}
= \Theta(1/L)
\)
for all $l\in[L]$.
Combining the two relations immediately gives
\[
\alpha_l\prod_{i=1}^k \|\mW_l^{(i)}\|_{\normrms}
=
\Theta(1/L),
\quad \forall\, l\in[L],
\]
which refines the preliminary initialization condition.

Finally, as in the two-layer case, we prove that \emph{the refined initial condition and the first-order update condition can derive any $j$-order ($j\ge2$) update condition on hidden weights}. Thus, retaining the refined initial condition and the first-order update condition in Condition~\ref{condition: multi-layer} is sufficient. 

Formally, for any $S \subseteq [k],\ |S|=j,\ j\in [k],\ l\in[L]$, we need to prove that
\[
\alpha_l \prod_{i\in S}\|\Delta\mW_l^{(i)}\|_{\normrms}
\prod_{i\notin S}\|\mW_l^{(i)}\|_{\normrms}
=
\Theta(1/L)
\]
based on the refined initial condition and the first-order update condition. By multiplying the first-order update conditions, we have
\begin{align*}
\frac{1}{L^j} &= \prod_{i\in S}\left(\alpha_l\|\Delta\mW_l^{(i)}\|_{\normrms}
\prod_{m\neq i}\|\mW_l^{(m)}\|_{\normrms}\right) \\
&=
\alpha_l^j \left(\prod_{i\in S}\|\Delta\mW_l^{(i)}\|_{\normrms}\right)
\left(\prod_{i\in S} \prod_{m\neq i}\|\mW_l^{(m)}\|_{\normrms}\right) \\
&=\alpha_l^j
\left(\prod_{i\in S}\|\Delta\mW_l^{(i)}\|_{\normrms}\right)
\left(
\prod_{i\notin S}\|\mW_l^{(i)}\|_{\normrms}
\right)
\left( \prod_{i=1}^k\|\mW_l^{(i)}\|_{\normrms}\right)^{j-1} \\
&= \left(\alpha_l\prod_{i\in S}\|\Delta\mW_l^{(i)}\|_{\normrms} \prod_{i\notin S}\|\mW_l^{(i)}\|_{\normrms} \right)
\left( \alpha_l\prod_{i=1}^k\|\mW_l^{(i)}\|_{\normrms}\right)^{j-1} \\
&= \left(\alpha_l \prod_{i\in S}\|\Delta\mW_l^{(i)}\|_{\normrms} \prod_{i\notin S}\|\mW_l^{(i)}\|_{\normrms} \right) \cdot \frac{1}{L^{j-1}},
\end{align*}
which implies that $\alpha_l\prod_{i\in S}\|\Delta\mW_l^{(i)}\|_{\normrms} \prod_{i\notin S}\|\mW_l^{(i)}\|_{\normrms}=\Theta(1/L)$, which finishes the derivation. Therefore, for any fixed $k\ge2$, the spectral constraints reduce to the same implementation as in the $k=2$ case: hidden residual multiplier $\alpha_l=\Theta(1/L)$ and hidden update norms $\|\Delta \mW_l^{(i)}\|_{\normrms}=\Theta(1)$ under standard width-scaling $\mu$P initialization.

\subsection{Bias Parameters}
\label{app: spectral condition bias}

The purpose of this subsection is not to introduce a new depth-scaling rule, but to show that bias parameters can be assigned order-one initialization and update scales once the matrix-weight parameterization from Condition~\ref{condition: scale-invariant fl} is in place.

\subsubsection{Problem Setup}

As shown in Appendix~\ref{app: spectral multi-layer}, residual blocks with an arbitrary fixed internal depth $k\ge2$ admit spectral scaling conditions that are algorithmically equivalent to the $k=2$ case.
Therefore, to illustrate that bias parameters do not change the main $\mu$P prescription, we analyze the representative two-layer residual block with additive biases.
Specifically, we consider a residual network whose forward propagation is given by
\begin{align*}
\vh_0(\vx) &= \alpha_0 \bigl(\mW_0 \vx + \vb_0\bigr),\\
\vh_l(\vx) &= \vh_{l-1}(\vx)
+ \alpha_l \Bigl(\mW_l^{(2)} \bigl(\mW_l^{(1)} \vh_{l-1}(\vx) + \vb_l^{(1)}\bigr) + \vb_l^{(2)}\Bigr),
\quad \forall\, l \in [L], \\
\vh_{L+1}(\vx) &= \alpha_{L+1}\mW_{L+1} \vh_L(\vx).
\end{align*}
Here, each residual block consists of a two-layer linear transformation with additive biases,
where $\mW_l^{(1)}, \mW_l^{(2)}$ denote the weight matrices and
$\vb_l^{(1)}, \vb_l^{(2)}$ denote the corresponding bias vectors within the $l$-th block.
The scalars $\{\alpha_l\}_{l=0}^{L+1}$ represent block multipliers that control the effective strength of each transformation.
As in the previous sections, $\vh_{L+1}(\vx)$ denotes the network output used to compute the loss.
Our goal is to derive a spectral condition that realizes $\mu$P Principle~\ref{principle: mup} in this setting.

\subsubsection{Spectral Scaling Condition}

We now state the spectral scaling condition for the residual network with biases for realizing the $\mu$P principle under joint width–depth scaling.

\begin{condition}[Spectral condition for $\mu$P under joint width-depth scaling, two-layer residual block with biases]
\label{condition: bias}
To realize $\mu$P Principle~\ref{principle: mup}, the initial parameters and their per-step updates should satisfy:
\begin{itemize}
    \item \textbf{Initial condition.}
    \begin{itemize}
        \item Input parameters:
        $\alpha_0\|\mW_0\|_{\normrms} = \Theta(1)$, $\alpha_0\|\vb_0\|_{\normrms}=\Theta(1)$.
        \item Hidden parameters:
        \begin{itemize}
        \item $\alpha_l \|\mW_l^{(2)}\|_{\normrms}\,\|\mW_l^{(1)}\|_{\normrms}=\Theta(1/L), \quad \forall l\in[L]$.
        \item $\alpha_l \|\mW_l^{(2)}\|_{\normrms} \|\vb_l^{(1)}\|_{\normrms} = \Theta(1/L),\quad \forall l\in[L]$. 
        \item $\alpha_l\|\vb_l^{(2)}\|_{\normrms} = \mathcal{O}(1/\sqrt{L}),\quad \forall l\in[L]$.
        \end{itemize}
        \item Output parameters:
        $\alpha_{L+1}\|\mW_{L+1}\|_{\normrms}=\Theta(1)$.
    \end{itemize}
    \item \textbf{Update condition.}
    \begin{itemize}
        \item Input parameters:
        $\alpha_0\|\Delta \mW_0\|_{\normrms}=\Theta(1),\  \alpha_0\|\Delta \vb_0\|_{\normrms}=\Theta(1)$.
        \item Hidden parameters (first-order):
        \begin{itemize}
        \item $\alpha_l\|\Delta\mW_l^{(2)}\|_{\normrms}\,\|\mW_l^{(1)}\|_{\normrms} =\Theta(1/L),\quad \forall l\in[L]$.
        \item $\alpha_l\|\mW_l^{(2)}\|_{\normrms}\,\|\Delta\mW_l^{(1)}\|_{\normrms}
        =\Theta(1/L),\quad \forall l\in[L]$.
        \item $\alpha_l\|\Delta\mW_l^{(2)}\|_{\normrms}\|\vb_l^{(1)}\|_{\normrms}= \Theta(1/L),\quad \forall l\in[L]$.
        \item $\alpha_l\|\mW_l^{(2)}\|_{\normrms}\|\Delta\vb_l^{(1)}\|_{\normrms}= \Theta(1/L),\quad \forall l\in[L]$.
        \item $\alpha_l\|\Delta\vb_l^{(2)}\|_{\normrms}=\Theta(1/L),\quad \forall l\in[L]$.
        \end{itemize}
        \item Hidden parameters (second-order), automatically satisfied given initial condition and first-order update conditions:
        \begin{itemize}
        \item $\alpha_l\|\Delta\mW_l^{(2)}\|_{\normrms}\,\|\Delta\mW_l^{(1)}\|_{\normrms}=\Theta(1/L),\quad \forall l\in[L]$.
        \item $\alpha_l\|\Delta\mW_l^{(2)}\|_{\normrms}\|\Delta\vb_l^{(1)}\|_{\normrms}= \Theta(1/L),\quad \forall l\in[L]$.
        \end{itemize}
        \item Output parameters:
        $\alpha_{L+1}\|\Delta \mW_{L+1}\|_{\normrms}=\Theta(1)$.
    \end{itemize}
    \item \textbf{Efficient implementation.}
Under the $\mu$P parameterization ($k=2$) of block multipliers $\{\alpha_l\}$ and matrix weights $\{\mW_l\}$ described in Section~\ref{sec: implementation} and Appendix~\ref{app: implementation opts HPs}, all bias-related spectral conditions can be satisfied simultaneously by initializing and training with biases of order $\Theta(1)$.
Concretely, it is sufficient to enforce
\begin{align}
\|\vb_l\|_{\normrms} = \Theta(1),
\quad
\|\Delta \vb_l\|_{\normrms} = \Theta(1),
\quad
\forall\, 0 \le l \le L. \label{eq:bias_spectral_condition}
\end{align}
The initial condition $\|\vb_l\|_{\normrms} = \Theta(1)$ can be satisfied by setting $\sigma_{\vb_l} = \Theta(1)$, and the implementation for update condition $\|\Delta \vb_l\|_{\normrms} = \Theta(1)$ will be derived in Appendix~\ref{app: implementation opts HPs}.
\end{itemize}
\end{condition}

Condition~\ref{condition: bias} shows that, under joint width-depth scaling,
bias parameters can be incorporated without modifying the existing HP parameterization of the weight matrices.
Specifically, once the block multipliers $\{\alpha_l\}$ and weights $\{\mW_l\}$ are implemented as in Section~\ref{sec: implementation}, biases admit additional, simple order-one spectral conditions that guarantee their initialization and updates scale properly.
Thus, biases can be handled by lightweight extensions of our framework, while the $\mu$P formulation for bias-free residual blocks remains unchanged.

\subsubsection{Derivation for Preliminary Initialization Condition}

\paragraph{Input layer.}
By subadditivity and submultiplicativity,
\begin{align*}
\|\vh_0(\vx)\|_{\normrms}
=
\Theta(\alpha_0(
\|\mW_0\|_{\normrms}\|\vx\|_{\normrms}
+
\|\vb_0\|_{\normrms}
))
=
\Theta\left(
\alpha_0\|\mW_0\|_{\normrms}\right)+ \Theta\left(\alpha_0\|\vb_0\|_{\normrms}
\right),
\end{align*}
where we assumed $\|\vx\|_{\normrms}=\Theta(1)$.
Thus, choosing $\alpha_0\|\mW_0\|_{\normrms}, \alpha_0\|\vb_0\|_{\normrms}=\Theta(1)$ ensures
$\|\vh_0(\vx)\|_{\normrms}=\Theta(1)$.

\paragraph{Hidden layers.}
Expanding the residual recursion yields
\begin{align}
\vh_s(\vx)
&=
\vh_0(\vx)
+
\sum_{l=1}^s
\alpha_l\Big(
\mW_l^{(2)}\mW_l^{(1)}\vh_{l-1}(\vx)
+
\mW_l^{(2)}\vb_l^{(1)}
+
\vb_l^{(2)}
\Big).
\label{eqn: hl-bias}
\end{align}
Applying subadditivity, we can estimate their order as
\begin{align*}
\|\vh_s(\vx)\|_{\normrms}
=
\Theta\left(\|\vh_0(\vx)\|_{\normrms}
+
\Big\|
\sum_{l=1}^s
\alpha_l
\mW_l^{(2)}\mW_l^{(1)}\vh_{l-1}(\vx)
+
\sum_{l=1}^s \alpha_l \mW_l^{(2)}\vb_l^{(1)}
+
\sum_{l=1}^s \alpha_l\vb_l^{(2)}
\Big\|_{\normrms}\right).
\end{align*}
Since we have $\|\vh_0(\vx)\|_{\normrms}=\Theta(1)$, it suffices to ensure other terms are
$\mathcal{O}(1)$  for any $s\in[L]$ to preserve $\|\vh_s(\vx)\|_{\normrms}=\Theta(1)$.
Under i.i.d.\ zero-mean Gaussian initialization, the summands are approximately independent zero-mean random vectors~\citep{TP4,TP-6,completep}, so the typical squared RMS norm of their sum scales as the sum of the squared RMS norms (see Theorem~3.3.1 in~\citet{hdp}). Therefore, we can obtain
\begin{align*}
&\Big\|
\sum_{l=1}^s
\alpha_l
\mW_l^{(2)}\mW_l^{(1)}\vh_{l-1}(\vx)
+
\sum_{l=1}^s \alpha_l \mW_l^{(2)}\vb_l^{(1)}
+
\sum_{l=1}^s \alpha_l\vb_l^{(2)}
\Big\|_{\normrms}\\
&=
\sqrt{\Big\|
\sum_{l=1}^s
\alpha_l\mW_l^{(2)}\mW_l^{(1)}\vh_{l-1}(\vx)
\Big\|_{\normrms}^2
+
\Big\|
\sum_{l=1}^s
\alpha_l\mW_l^{(2)}\vb_l^{(1)}
\Big\|_{\normrms}^2
+
\Big\|
\sum_{l=1}^s
\alpha_l\vb_l^{(2)}
\Big\|_{\normrms}^2
}
\end{align*}

Furthermore, using the same probability argument and submultiplicativity inequality as in the derivation without biases (e.g., see Section~\ref{sec: spec condition}), we have
\begin{align*}
\Big\|
\sum_{l=1}^s
\alpha_l\mW_l^{(2)}\mW_l^{(1)}\vh_{l-1}(\vx)
\Big\|_{\normrms}^2
&=
\Theta\left(
{\sum_{l=1}^s
\alpha_l^2
\|\mW_l^{(2)}\|_{\normrms}^2
\|\mW_l^{(1)}\|_{\normrms}^2\|\vh_{l-1}(\vx)\|_{\normrms}^2}
\right),\\
\Big\|
\sum_{l=1}^s
\alpha_l\mW_l^{(2)}\vb_l^{(1)}
\Big\|_{\normrms}^2
&=
\Theta\left(
{\sum_{l=1}^s
\alpha_l^2
\|\mW_l^{(2)}\|_{\normrms}^2
\|\vb_l^{(1)}\|_{\normrms}^2}
\right),\\
\Big\|
\sum_{l=1}^s
\alpha_l\vb_l^{(2)}
\Big\|_{\normrms}^2
&=
\Theta\left(
{\sum_{l=1}^s
\alpha_l^2
\|\vb_l^{(2)}\|_{\normrms}^2}
\right).
\end{align*}
Therefore, starting from $\|\vh_0(\vx)\|_{\normrms}=\Theta(1)$, imposing
\[
\alpha_l
\|\mW_l^{(2)}\|_{\normrms}
\|\mW_l^{(1)}\|_{\normrms}
=
\mathcal{O}(1/\sqrt{L}),
\quad
\alpha_l
\|\mW_l^{(2)}\|_{\normrms}
\|\vb_l^{(1)}\|_{\normrms}
=
\mathcal{O}(1/\sqrt{L}),
\quad
\alpha_l\|\vb_l^{(2)}\|_{\normrms}
=
\mathcal{O}(1/\sqrt{L})
\]
recursively ensures $\|\vh_s(\vx)\|_{\normrms}=\Theta(1)$ for $s\in[L]$.
This yields a preliminary initialization condition, which will be refined after incorporating update constraints.

\paragraph{Output layer.}
The same argument as in the two-layer residual block case yields
$\alpha_{L+1}\|\mW_{L+1}\|_{\normrms}=\Theta(1)$.

\subsubsection{Derivation for Update Condition}

\paragraph{Input layer.}
Recall that
\[
\vh_0(\vx)=\alpha_0\bigl(\mW_0\vx+\vb_0\bigr).
\]
After one gradient step, the feature update satisfies
\[
\Delta \vh_0(\vx)
=
\alpha_0\bigl(\Delta \mW_0\,\vx+\Delta\vb_0\bigr).
\]
By subadditivity, submultiplicativity, and using $\|\vx\|_{\normrms}=\Theta(1)$ by data assumption, we obtain
\[
\|\Delta \vh_0(\vx)\|_{\normrms}
=
\Theta(\alpha_0
\|\Delta \mW_0\|_{\normrms}
+
\alpha_0\|\Delta \vb_0\|_{\normrms})
.
\]
Therefore, we choose
\[
\alpha_0\|\Delta \mW_0\|_{\normrms}
=
\Theta(1),
\quad
\alpha_0\|\Delta \vb_0\|_{\normrms}
=
\Theta(1)
\]
to realize $\|\Delta \vh_0(\vx)\|_{\normrms}=\Theta(1)$.

\paragraph{Hidden layers.} We next analyze the feature updates $\Delta\vh_s(\vx)$ after one gradient step.
Expanding Equation~(\ref{eqn: hl-bias}) yields
\begin{align*}
\Delta\vh_s(\vx)
&=
\Delta\vh_0(\vx)
+
\Delta\sum_{l=1}^s
\alpha_l\Big(
\mW_l^{(2)}\mW_l^{(1)}\vh_{l-1}(\vx)
+
\mW_l^{(2)}\vb_l^{(1)}
+
\vb_l^{(2)}
\Big) \\
&= \Delta\vh_0(\vx)
+
\Delta\sum_{l=1}^s
\alpha_l
\mW_l^{(2)}\mW_l^{(1)}\vh_{l-1}(\vx)
+
\Delta\sum_{l=1}^s
\alpha_l \mW_l^{(2)}\vb_l^{(1)}
+
\Delta\sum_{l=1}^s
\alpha_l \vb_l^{(2)}.
\end{align*}
By the subadditivity of vector norms, we have
\begin{align*}
\Vert\Delta\vh_s(\vx)\Vert_\normrms = \Theta\left(\Vert\Delta\vh_0(\vx)\Vert_\normrms + 
\bigg\Vert\Delta\sum_{l=1}^s
\alpha_l
\mW_l^{(2)}\mW_l^{(1)}\vh_{l-1}(\vx)\bigg\Vert_\normrms
+ 
\bigg\Vert\Delta\sum_{l=1}^s
\alpha_l \mW_l^{(2)}\vb_l^{(1)}\bigg\Vert_\normrms 
+
\bigg\Vert\Delta\sum_{l=1}^s
\alpha_l \vb_l^{(2)}\bigg\Vert_\normrms\right).
\end{align*}
Since $\|\Delta\vh_0(\vx)\|_{\normrms}=\Theta(1)$ by the input-layer update, we have
$\|\Delta\vh_s(\vx)\|_{\normrms}=\Omega(1)$ for all $s\in[L]$.
Moreover, by subadditivity, the remaining terms do not decay with depth, implying
$\|\Delta\vh_s(\vx)\|_{\normrms}=\mathcal{O}(\|\Delta\vh_L(\vx)\|_{\normrms})$ for any $s\in[L]$.
Therefore, to enforce Principle~\ref{principle: mup},
it suffices to require $\|\Delta\vh_L(\vx)\|_{\normrms}=\Theta(1)$ while satisfying
Principle~(\ref{principle: max}). We discuss the components of $\Delta\vh_L(\vx)$ in sequence.

\paragraph{Matrix-weight terms.}
The contributions from
\[
\Delta\vh_0(\vx)
+
\Delta\sum_{l=1}^L
\alpha_l
\mW_l^{(2)}\mW_l^{(1)}\vh_{l-1}(\vx)
\]
have been fully analyzed in the bias-free two-layer case
(see Section~\ref{sec: spec condition}).
Applying the same reasoning yields the first- and second-order update conditions
on hidden matrix weights:
\begin{align*}
\alpha_l\|\Delta\mW_l^{(2)}\|_{\normrms}\,\|\mW_l^{(1)}\|_{\normrms} &= \Theta(1/L), \\
\alpha_l\|\mW_l^{(2)}\|_{\normrms}\,\|\Delta\mW_l^{(1)}\|_{\normrms} &= \Theta(1/L), \\
\alpha_l\|\Delta\mW_l^{(2)}\|_{\normrms}\,\|\Delta\mW_l^{(1)}\|_{\normrms} &= \Theta(1/L),
\qquad \forall\, l\in[L].
\end{align*}
We then need to control the newly introduced bias-related terms.

\paragraph{First-layer bias-related term.}
Consider
$\Delta\sum_{l=1}^L \alpha_l \mW_l^{(2)}\vb_l^{(1)}$.
Expanding the update yields
\begin{align*}
\Delta\sum_{l=1}^L
\alpha_l \mW_l^{(2)}\vb_l^{(1)}
=
\sum_{l=1}^L
\alpha_l\Big(
\Delta\mW_l^{(2)}\vb_l^{(1)}
+
\mW_l^{(2)}\Delta\vb_l^{(1)}
+
\Delta\mW_l^{(2)}\Delta\vb_l^{(1)}
\Big).
\end{align*}
By subadditivity and submultiplicativity of the RMS norm, we have
\begin{align*}
\bigg\Vert
\Delta\sum_{l=1}^L
\alpha_l \mW_l^{(2)}\vb_l^{(1)}
\bigg\Vert_\normrms
=
\Theta\bigg(\sum_{l=1}^L
\alpha_l
\|\Delta\mW_l^{(2)}\|_{\normrms}
\|\vb_l^{(1)}\|_{\normrms} 
+
\sum_{l=1}^L
\alpha_l
\|\mW_l^{(2)}\|_{\normrms}
\|\Delta\vb_l^{(1)}\|_{\normrms}
+
\sum_{l=1}^L
\alpha_l
\|\Delta\mW_l^{(2)}\|_{\normrms}
\|\Delta\vb_l^{(1)}\|_{\normrms}\bigg).
\end{align*}
According to Principle~(\ref{principle: max}), we require
$\left\Vert
\Delta\sum_{l=1}^L
\alpha_l \mW_l^{(2)}\vb_l^{(1)}
\right\Vert_\normrms
=\Theta(1)$ and maximize the contribution from each summand, leading to
\begin{align*}
\alpha_l
\|\Delta\mW_l^{(2)}\|_{\normrms}
\|\vb_l^{(1)}\|_{\normrms}
&= \Theta(1/L), \\
\alpha_l
\|\mW_l^{(2)}\|_{\normrms}
\|\Delta\vb_l^{(1)}\|_{\normrms}
&= \Theta(1/L), \\
\alpha_l
\|\Delta\mW_l^{(2)}\|_{\normrms}
\|\Delta\vb_l^{(1)}\|_{\normrms}
&= \Theta(1/L),
\qquad \forall\, l\in[L].
\end{align*}

\paragraph{Second-layer bias-related term.}
Finally, for
$\Delta\sum_{l=1}^L \alpha_l \vb_l^{(2)}$,
we have
\begin{align*}
\bigg\Vert
\Delta\sum_{l=1}^L
\alpha_l \vb_l^{(2)}
\bigg\Vert_\normrms
=
\bigg\Vert
\sum_{l=1}^L
\alpha_l \Delta\vb_l^{(2)}
\bigg\Vert_\normrms
=
\Theta\bigg(\sum_{l=1}^L
\alpha_l
\|\Delta\vb_l^{(2)}\|_{\normrms}\bigg).
\end{align*}
To maximally update parameters according to Principle~(\ref{principle: max}), we require this term to remain $\Theta(1)$ and maximize each summand, which yields
\[
\alpha_l
\|\Delta\vb_l^{(2)}\|_{\normrms}
=
\Theta(1/L),
\qquad \forall\, l\in[L].
\]

\paragraph{Output layer.}
The same argument as in the two-layer case in Section~\ref{sec: spec condition} yields
$\alpha_{L+1}\|\Delta\mW_{L+1}\|_{\normrms}=\Theta(1)$.

\subsubsection{Derivation for Final Initial Condition}

We now derive the final initialization conditions by incorporating the update
constraints obtained in the previous subsection.

\paragraph{Hidden matrix weights.}
As already shown in the bias-free setting (Section~\ref{sec: spec condition}), combining the first-order and second-order update conditions immediately yields the initialization constraint
\[
\alpha_l \|\mW_l^{(1)}\|_{\normrms}\,
\|\mW_l^{(2)}\|_{\normrms}
=
\Theta(1/L),
\quad \forall\, l \in [L].
\]
Therefore, the presence of biases does not alter the initialization scaling of hidden matrix weights.

\paragraph{Bias parameters.}
We now derive the initialization conditions for bias terms by combining the
first- and second-order update constraints.
For the first-layer bias $\vb_l^{(1)}$, the first-order update conditions give
\begin{align*}
\alpha_l
\|\Delta\mW_l^{(2)}\|_{\normrms}
\|\vb_l^{(1)}\|_{\normrms}
&= \Theta(1/L), \\
\alpha_l
\|\mW_l^{(2)}\|_{\normrms}
\|\Delta\vb_l^{(1)}\|_{\normrms}
&= \Theta(1/L),
\end{align*}
while the second-order update condition yields
\[
\alpha_l
\|\Delta\mW_l^{(2)}\|_{\normrms}
\|\Delta\vb_l^{(1)}\|_{\normrms}
= \Theta(1/L).
\]
Multiplying the two first-order conditions and dividing by the second-order one,
we obtain
\[
\alpha_l
\|\mW_l^{(2)}\|_{\normrms}
\|\vb_l^{(1)}\|_{\normrms}
= \Theta(1/L),\quad \forall l\in[L].
\]
Similar to the hidden matrix weights, the second-order bias-related condition is automatically satisfied by combining the refined initial condition and the corresponding first-order update condition.

We note that the second-layer bias $\vb_l^{(2)}$ has no multiplicative interaction with another parameter in the forward block, so its initialization condition remains the preliminary upper bound $\alpha_l\|\vb_l^{(2)}\|_{\normrms}=O(1/\sqrt L)$.

\subsubsection{Derivation for Efficient Implementation}

Recall that, based on the $\mu$P parameterization ($k=2$) introduced for matrix weights in Section~\ref{sec: implementation} and Appendix~\ref{app: implementation opts HPs}, we have $\alpha_0=\Theta(1)$ in Equation~(\ref{eq:alpha_inandout}), $\alpha_l=\Theta(1/L)$ for $l\in[L]$ in Equation~(\ref{eq:alpha_l}), $\|{\mW}_l\|_{\normrms}=\Theta(1)$ for $0\leq l\leq L$ in Equation~(\ref{eq:w_norm}) and $\|\Delta{\mW}_l\|_{\normrms}=\Theta(1)$ for $0\leq l\leq L$. Based on these conditions, Condition~\ref{condition: bias} reduces to

\begin{itemize}
    \item \textbf{Initial condition.}
    \begin{itemize}
        \item Input parameters:
        $\|\vb_0\|_{\normrms}=\Theta(1)$.
        \item Hidden parameters:
        \begin{itemize}
        \item $\|\vb_l^{(1)}\|_{\normrms} = \Theta(1),\quad \forall l\in[L]$. 
        \item $\|\vb_l^{(2)}\|_{\normrms} = \mathcal{O}(\sqrt{L}),\quad \forall l\in[L]$.
        \end{itemize}
    \end{itemize}
    \item \textbf{Update condition.}
    \begin{itemize}
        \item Input parameters:
        $\|\Delta \vb_0\|_{\normrms}=\Theta(1)$.
        \item Hidden parameters (first-order):
        \begin{itemize}
        \item $\|\Delta\vb_l^{(1)}\|_{\normrms}= \Theta(1),\quad \forall l\in[L]$.
        \item $\|\Delta\vb_l^{(2)}\|_{\normrms}=\Theta(1),\quad \forall l\in[L]$.
        \end{itemize}
    \end{itemize}
\end{itemize}
Therefore, it is sufficient to enforce the order-one spectral condition for biases: 
\begin{align*}
\|\vb_l\|_{\normrms} = \Theta(1),
\quad
\|\Delta \vb_l\|_{\normrms} = \Theta(1),
\quad
\forall\, 0 \le l \le L.
\end{align*}
Although the preliminary condition permits $\|\vb_l^{(2)}\|_{\normrms}$ as large as $\mathcal{O}(\sqrt{L})$, choosing it to be $\Theta(1)$ is a simpler sufficient choice and keeps all biases on the same scale.

\section{Implementing Condition~\ref{condition: scale-invariant fl} for Optimizers with Weight Decay}
\label{app: implementation opts HPs}

Recall that in Section~\ref{sec:initial_condition} of the main text, we implemented Condition~\ref{condition: scale-invariant fl} for initialization and specified the parameterization of the {block multipliers $\alpha_l$} and the {initialization variances $\sigma_l$}, which is optimizer-agnostic.
In Section~\ref{sec:update_condition_muonkimi}, we further implemented Condition~\ref{condition: scale-invariant fl} for updates and derived the parameterization of the {learning rates $\eta_l$} for the Muon-Kimi~\citep{muon-kimi}.
We now extend this update-condition analysis to a broader class of optimizers with weight decay.
For bias parameters, we also derive the
corresponding bias HPs when the optimizer is commonly applied to biases.

\subsection{Preparation}

\paragraph{Unified update form with weight decay.}
To provide a unified derivation across different optimizers, we begin by expressing their one-step update rules in a common form.
When weight decay is included, a single update step of the weight matrix can be written as
\begin{align*}
    \Delta \mW_l = -\eta_l \bigl( \mA_l + \lambda_l \mW_l \bigr),
\end{align*}
where $\mA_l$ denotes the optimizer-specific update direction before applying
the learning rate and excluding weight decay. For example,
$\mA_l=\nabla_{\mW_l}\mathcal{L}$ for SGD, while for Muon-Kimi
$\mA_l=0.2\sqrt{\max\{\nin,\nout\}}\,\mU_l\mV_l^\top$. The scalar
$\lambda_l$ is the weight-decay coefficient.

The update magnitude $\|\Delta \mW_l\|_{\normrms}
= \eta_l \|\mA_l + \lambda_l \mW_l\|_{\normrms}
$ is required to satisfy the update conditions in Condition~\ref{condition: scale-invariant fl}. 
We analyze this requirement under two complementary regimes.

\textbf{Without weight decay.}
When weight decay is disabled ($\lambda_l = 0$), the update reduces to
$\|\Delta \mW_l\|_{\normrms}= \eta_l \|\mA_l\|_{\normrms}.$ In this case, the learning rate is chosen so that
\begin{align}
    \|\Delta \mW_l\|_{\normrms}
    = \eta_l \|\mA_l\|_{\normrms}
    \ \text{satisfies Condition~\ref{condition: scale-invariant fl}}.
    \tag{$\Delta1$}
    \label{eq:condition_wd_1}
\end{align}

\textbf{With weight decay.}
When weight decay is enabled ($\lambda_l \neq 0$), we choose the weight decay term to be comparable in scale to the optimizer-driven term.
\begin{align}
    \|\lambda_l \mW_l\|_{\normrms}
    = \Theta\left( \|\mA_l\|_{\normrms} \right).
    \tag{$\Delta2$}
    \label{eq:condition_wd_2}
\end{align}
This is a non-degenerate implementation convention: it keeps weight decay active
in the update dynamics without letting it dominate the optimizer-driven term.
Under the usual scale estimate,
$\|\mA_l+\lambda_l\mW_l\|_{\normrms}
=\Theta(\|\mA_l\|_{\normrms})$, so the learning-rate scaling rule derived from
Equation~(\ref{eq:condition_wd_1}) is preserved.

\paragraph{Weights and biases.}
In the following, we derive parameterizations of the learning rate and weight decay coefficient for a range of optimizers.
Since matrix-based optimizers such as Muon and Shampoo are typically not used for bias parameters, we restrict our analysis of bias parameterization to vector-based optimizers such as SGD and AdamW. The same two rules, Equations~(\ref{eq:condition_wd_1}) and~(\ref{eq:condition_wd_2}), are then
applied to the biases with corresponding spectral condition in Equation~\ref{eq:bias_spectral_condition} on vector RMS norms.

\paragraph{Momentum.}
We note that momentum is typically omitted in standard $\mu$P analyses~\citep{TP4,TP4b,TP5} (e.g., by setting $\beta_1=\beta_2=0$ in AdamW), while in practical $\mu$P implementations the momentum coefficients are taken to be $\Theta(1)$.
The main rationale is that the norm of the momentum term and the norm of the current update are expected to be of the same order, and since the spectral condition aims to control the update scale, omitting the momentum is regarded as an acceptable simplification.
Moreover, analyzing the update without momentum can be interpreted as studying the \emph{first update step} after initialization, which has been empirically observed to be reliable for understanding neural network training~\citep{mup-spectral,extending-mup,DBLP:conf/nips/BordelonP22-dmft,DBLP:conf/nips/BordelonCP24-transformer}.
In the subsequent derivations, we adopt this simplification as well.

\paragraph{Low-rank structure of updates.}
Following the common practice used in the $\mu$P spectral
analysis~\citep{mup-spectral,extending-mup}, we also introduce a useful preliminary result for controlling the norm of the update term $\mA_l$.
The key idea is to exploit the effective \emph{low-rank structure} of neural-network updates, which has been widely observed in neural network training~\citep{mup-spectral,DBLP:journals/corr/low-rank,DBLP:conf/icml/Zhao0CWAT24-galore}: only a small number of dominant singular directions carry most of the update.

\begin{assumption}[Low-rank update structure]
For vector-based optimizers such as SGD and AdamW, we assume that the effective update term $\mA_l$ has constant rank with respect to width and depth, i.e., $r(\mA_l)=\Theta(1)$.
\end{assumption}

Under this assumption, the spectral norm and Frobenius norm of the update term are of the same order:
\begin{equation}
\|\mA_l\|_{2}
=
\Theta(\|\mA_l\|_{\mathrm{F}}),
\label{eqn: low-rank}
\end{equation}
where the hidden constants are independent of width and depth. Indeed, this is because
\begin{align*}
\|\mA_l\|_{2}
\leq
\|\mA_l\|_{\mathrm{F}}
\leq
\sqrt{r(\mA_l)} \|\mA_l\|_{2}
=
\Theta(\|\mA_l\|_{2}).
\end{align*}
We will use this low-rank update property to estimate the scale of $\Vert\mA_l\Vert_\normrms$ for vector-based optimizers.

\subsection{Overview}
\label{app: implementation opts HPs overview}

Table~\ref{tab: optimizer-family-overview} summarizes the optimizer families covered in this section and points to the corresponding detailed $\mu$P parameterization tables.
All rows use the optimizer-agnostic initialization and block-multiplier rules from Section~\ref{sec:initial_condition}; the table only organizes the optimizer-dependent update rules derived below.

\begin{table}[t]
\renewcommand{\arraystretch}{1.15}
\centering
\caption{\textbf{Overview of $\mu$P implementation from Condition~\ref{condition: scale-invariant fl} ($k=2$) for optimizer families with weight decay.}
Optimizers in the same row share the same $\mu$P scaling rules.}
\label{tab: optimizer-family-overview}
\vskip 0.05in
\begin{tabular}{cc}
\toprule
Optimizer family & Detailed parameterization \\
\midrule
Muon-Kimi & Table~\ref{tab: muon-kimi-wd mup} \\
Muon / Shampoo / SOAP & Table~\ref{tab: muon-wd mup} \\
SGD & Table~\ref{tab: sgd-mup} \\
AdamW / Lion / Sophia & Table~\ref{tab: adamw-mup} \\
SSO & Table~\ref{tab: sso mup} \\
\bottomrule
\end{tabular}
\end{table}

\subsection{Muon-Kimi}
\label{app: Muon-Kimi (with Weight Decay)}

\begin{table}[t]

\renewcommand{\arraystretch}{1.3}
\renewcommand{\hl}[1]{\textcolor{purple}{#1}}
\renewcommand{\ll}[1]{\textcolor{gray}{#1}}
\centering

\caption{\textbf{$\mu$P implementation of Condition~\ref{condition: scale-invariant fl} ($k=2$) for Muon-Kimi~\citep{muon-kimi} with weight decay under width-depth scaling.}
Entries in \hl{purple} indicate differences between $\mu$P and SP, while \ll{gray} shows the corresponding SP choices.
Here, $r_n$ and $r_L$ denote the width and depth scaling ratios relative to the base model. The variance of input weights is $\sigma^2_{\mathrm{base}}$ for language and $\sigma^2_{\mathrm{base}}/d_0$ for image.}
\label{tab: muon-kimi-wd mup}
\vskip 0.05in

\begin{tabular}{cccc}
\toprule
   & Input weights & Hidden weights & Output weights \\
\midrule
Block Multiplier
& $\alpha_{\mathrm{base}}$
& \hl{$\alpha_{\mathrm{base}}/r_L$} \ \ll{($\alpha_{\mathrm{base}}$)}
& \hl{$\alpha_{\mathrm{base}}/r_n$} \ \ll{($\alpha_{\mathrm{base}}$)} \\

Initial Variance
& $\sigma^2_{\mathrm{base}}/d_0$ or $\sigma^2_{\mathrm{base}}$
& \hl{$\sigma^2_{\mathrm{base}}/r_n$} \ \ll{($\sigma^2_{\mathrm{base}}$)}
& $\sigma^2_{\mathrm{base}}$ \\

Learning Rate
& $\eta_{\mathrm{base}}$
& \hl{$\eta_{\mathrm{base}}/\sqrt{r_n}$} \ \ll{($\eta_{\mathrm{base}}$)}
& $\eta_{\mathrm{base}}$ \\

Weight Decay
& $\lambda_{\mathrm{base}}$
& \hl{$\lambda_{\mathrm{base}}\sqrt{r_n}$} \ \ll{($\lambda_{\mathrm{base}}$)}
& $\lambda_{\mathrm{base}}$ \\

\bottomrule
\end{tabular}
\end{table}

For a weight matrix
${\mW}_l \in \R^{\nout\times\nin}$, the update rule of Muon-Kimi~\citep{muon-kimi} with weight decay is
\begin{equation*}
\Delta{\mW}_l
=
-\,\eta_l \bigl(\,\underbrace{0.2 \sqrt{\max\{\nin,\nout\}}
\,\mU_l \mV_l^\top}_{\mA_l} + \lambda_l {\mW}_l\bigl),
\end{equation*}
where $\mU_l,\mV_l$ arise from the compact SVD of the gradient $\nabla_{{\mW}_l}\mathcal{L}=\mU_l \mSigma_l \mV_l^\top$.

Recalling that in Section~\ref{sec:update_condition_muonkimi} in the main text, we have derived the learning rate parameterizations to achieve (\ref{eq:condition_wd_1}) that
\begin{align*}
    \eta_0 = \Theta\left(1\right), \ \eta_l^{(1)} = \Theta\left(\frac{1}{\sqrt{\nin}}\right), \ \eta_l^{(2)} = \Theta\left(\frac{1}{\sqrt{\nin}}\right), \ \eta_{L+1} = \Theta\left(1\right).
\end{align*}

According to the update norm in Equation~(\ref{eq:dw_norm}), we have 
\begin{align*}
    \Vert \mA_l \Vert_{\normrms} 
    = \Theta\left(\sqrt{\nin}\max\left\{1,\sqrt{\frac{\nin}{\nout}}\right\}\right)
    =\left\{
    \begin{array}{ll}
    \Theta(1),   & l = 0,\\
    \Theta(\sqrt{\nin}),  & l \in [L], \\
    \Theta({\nin}),  & l = L+1.
    \end{array} \right.
\end{align*}
Given the magnitude of $\Vert {\mW}_l\Vert_{\normrms}$ in Equation~(\ref{eq:w_norm}), as desired $\|\lambda_l \mW_l\|_{\normrms} = \Theta\left( \|\mA_l\|_{\normrms} \right)$ by (\ref{eq:condition_wd_2}), the parameterizations of $\lambda_l$ need to be set as follows:
\begin{align*}
    \lambda_l 
    =\left\{
    \begin{array}{ll}
    \Theta(1),   & l = 0,\\
    \Theta(\sqrt{\nin}),  & l \in [L], \\
    \Theta(1),  & l = L+1.
    \end{array} \right.
\end{align*}

This completes the implementation of the update condition for
Muon-Kimi with weight decay, as summarized in Table~\ref{tab: muon-kimi-wd mup}.

\subsection{Muon}
\label{app:muon}

\begin{table}[t]

\renewcommand{\arraystretch}{1.3}
\renewcommand{\hl}[1]{\textcolor{purple}{#1}}
\renewcommand{\ll}[1]{\textcolor{gray}{#1}}
\centering
\caption{\textbf{$\mu$P implementation of Condition~\ref{condition: scale-invariant fl} ($k=2$) for Muon~\citep{jordan6muon}, Shampoo~\citep{gupta2018shampoo}, and SOAP~\citep{vyas2024soap} with weight decay under width-depth scaling.}
Entries in \hl{purple} indicate differences between $\mu$P and SP, while \ll{gray} shows the corresponding SP choices.
Here, $r_n$ and $r_L$ denote the width and depth scaling ratios relative to the base model.
The variance of input weights is $\sigma^2_{\mathrm{base}}$ for language and $\sigma^2_{\mathrm{base}}/d_0$ for image.}
\label{tab: muon-wd mup}
\vskip 0.05in
\begin{tabular}{cccc}
\toprule
 & Input weights & Hidden weights & Output weights \\
\midrule

Block Multiplier
& $\alpha_{\mathrm{base}}$
& \hl{$\alpha_{\mathrm{base}}/r_L$} \ \ll{($\alpha_{\mathrm{base}}$)}
& \hl{$\alpha_{\mathrm{base}}/r_n$} \ \ll{($\alpha_{\mathrm{base}}$)} \\

Initial Variance
& $\sigma^2_{\mathrm{base}}/d_0$ or $\sigma^2_{\mathrm{base}}$
& \hl{$\sigma^2_{\mathrm{base}}/r_n$} \ \ll{($\sigma^2_{\mathrm{base}}$)}
& $\sigma^2_{\mathrm{base}}$ \\

Learning Rate
& \hl{$\eta_{\mathrm{base}}\sqrt{r_n}$} \ \ll{($\eta_{\mathrm{base}}$)}
& $\eta_{\mathrm{base}}$
& \hl{$\eta_{\mathrm{base}}\sqrt{r_n}$} \ \ll{($\eta_{\mathrm{base}}$)} \\

Weight Decay
& \hl{$\lambda_{\mathrm{base}}/\sqrt{r_n}$} \ \ll{($\lambda_{\mathrm{base}}$)}
& $\lambda_{\mathrm{base}}$
& \hl{$\lambda_{\mathrm{base}}/\sqrt{r_n}$} \ \ll{($\lambda_{\mathrm{base}}$)} \\

Shampoo $\varepsilon$
& \hl{$\varepsilon_{\mathrm{base}}/r_n$} \ \ll{($\varepsilon_{\mathrm{base}}$)}
& \hl{$\varepsilon_{\mathrm{base}}/r_L^2$} \ \ll{($\varepsilon_{\mathrm{base}}$)}
& \hl{$\varepsilon_{\mathrm{base}}/r_n$} \ \ll{($\varepsilon_{\mathrm{base}}$)} \\

\bottomrule
\end{tabular}
\end{table}

In this section, we derive the $\mu$P implementation from
Condition~\ref{condition: scale-invariant fl} for Muon,
which recovers and extends the $\mu$P scaling rules studied in
\citet{muon-cp}.

\subsubsection{Update Rule}

For a weight matrix ${\mW}_l \in \R^{\nout\times\nin}$, the update rule of Muon~\citep{jordan6muon} is
\begin{equation}
\Delta{\mW}_l
=
-\,\eta_l \bigl(\,\underbrace{\mU_l \mV_l^\top}_{\mA_l}
+ \lambda_l {\mW}_l\bigl),
\label{eq:updaterule_muon}
\end{equation}
where $\mU_l,\mV_l$ arise from the compact SVD of the gradient
$\nabla_{{\mW}_l}\mathcal{L}=\mU_l \mSigma_l \mV_l^\top$.
Compared with Muon-Kimi~\citep{muon-kimi}, the only difference lies in the absence of the
$0.2\sqrt{\max\{\nin,\nout\}}$ prefactor.

Considering the dimension assumption in Equation~(\ref{eq:dimensions}), the resulting norm of $\mA_l$ satisfies
\begin{align} 
\Vert \mA_l \Vert_\normrms 
=
\Vert \mU_l \mV_l^\top \Vert_\normrms 
=
\sqrt{\frac{\nin}{\nout}} \Vert \mU_l \mV_l^\top \Vert_2
=
\sqrt{\frac{\nin}{\nout}}
=
\left\{
    \begin{array}{ll}
    \Theta(1/\sqrt{\nout}),   & l = 0,\\
    \Theta(1),   & l \in [L],\\
    \Theta(\sqrt{\nin}),  & l = L+1.
    \end{array} \right. \label{eq:Anorm_muon}
\end{align}

\subsubsection{Derivation of Parameterization}

\paragraph{Input and output layers.} 
When weight decay is disabled ($\lambda_0 = 0$), given the dimension assumption $d_0,d_{L+1}=\Theta(1), n_l=\Theta(n)$ in Equation~(\ref{eq:dimensions}), the multiplier parameterizations $\alpha_0=\Theta(1),\alpha_{L+1}=\Theta(1/\nin)$ in Equation~(\ref{eq:alpha_inandout}), and the scale of $\Vert\mA_l\Vert_\normrms$ in Equation~(\ref{eq:Anorm_muon}), we have
\begin{align*}
    \alpha_l\|\Delta\mW_l\|_{\normrms}
    =\alpha_l\eta_l\|\mA_l\|_{\normrms}
    =\left\{
    \begin{array}{ll}
    \Theta(\eta_0/\sqrt{\nout}),   & l = 0,\\
    \Theta(\eta_{L+1}/\sqrt{\nin}),  & l = L+1.
    \end{array} \right.
\end{align*}
As desired in (\ref{eq:condition_wd_1}), to satisfy (\ref{eq:update_inandout}) that $\alpha_0\|\Delta \mW_0\|_{\normrms}, \alpha_{L+1}\|\Delta \mW_{L+1}\|_{\normrms} = \Theta(1)$, we need to set 
\begin{align*}
    \eta_0=\Theta(\sqrt{\nout}), \quad \eta_{L+1}=\Theta(\sqrt{\nin}).
\end{align*}
When $\lambda_l \neq 0$, given Equation~(\ref{eq:w_norm}) that $\|{\mW}_0\|_{\normrms} = \Theta(1)$ and $\|{\mW}_{L+1}\|_{\normrms} = \Theta(\nin)$, we have
\begin{align*}
    \|\lambda_l\mW_l\|_{\normrms} 
    =\left\{
    \begin{array}{ll}
    \Theta(\lambda_0),   & l = 0,\\
    \Theta(\lambda_{L+1}\nin),  & l = L+1.
    \end{array} \right.
\end{align*}
To satisfy (\ref{eq:condition_wd_2}) that $\|\lambda_l \mW_l\|_{\normrms}
    = \Theta\left( \|\mA_l\|_{\normrms} \right)$, we need to set 
\begin{align*}
    \lambda_0=\Theta(1/\sqrt{\nout}), \quad \lambda_{L+1}=\Theta(1/\sqrt{\nin}),
\end{align*}

\paragraph{Hidden layers (first-order).} 
When weight decay is disabled ($\lambda_l = 0$), given the dimension assumption $d_0,d_{L+1}=\Theta(1), n_l=\Theta(n)$ in Equation~(\ref{eq:dimensions}), the weight norm $\|{\mW}_l\|_{\normrms} = \Theta(1)$ in Equation~(\ref{eq:w_norm}), the multiplier parameterization $\alpha_l=\Theta(1/L)$ in Equation~(\ref{eq:alpha_l}), and the scale of $\Vert\mA_l\Vert_\normrms$ in Equation~(\ref{eq:Anorm_muon}), we have
\begin{align*}
    \alpha_l\|\Delta\mW_l^{(2)}\|_{\normrms}\|\mW_l^{(1)}\|_{\normrms}
    =\Theta(1/L) \cdot \eta_l^{(2)}\|\mA_l^{(2)}\|_{\normrms}\|\mW_l^{(1)}\|_{\normrms}
    =\Theta(\eta_l^{(2)}/L).
\end{align*}
As desired in (\ref{eq:condition_wd_1}), to satisfy the first-order update condition on hidden weights (\ref{eq:update_hidd_1}) that $\alpha_l\|\Delta \mW_l^{(2)}\|_{\normrms}\,\|\mW_l^{(1)}\|_{\normrms}=\Theta(1/L)$, we need to set 
\begin{align*}
    \eta_l^{(2)}=\Theta(1).
\end{align*}
When weight decay is enabled ($\lambda_l \neq 0$), given the weight norm $\|{\mW}_l\|_{\normrms} = \Theta(1)$ in Equation~(\ref{eq:w_norm}), we have
\begin{align*}
    \|\lambda_l^{(2)}\mW_l^{(2)}\|_{\normrms} = \Theta(\lambda_l^{(2)}).
\end{align*}
To satisfy (\ref{eq:condition_wd_2}) that $\|\lambda_l \mW_l\|_{\normrms}
    = \Theta\left( \|\mA_l\|_{\normrms} \right)$ we need to set 
\begin{align*}
    \lambda_l^{(2)}=\Theta(1).
\end{align*}

Symmetrically, we have the same choice for $\mW_l^{(1)}$:
\begin{align*}
    \eta_l^{(1)} = \Theta(1), \quad \lambda_l^{(1)}=\Theta(1).
\end{align*}

\paragraph{Hidden layers (second-order).} 
As discussed in Section~\ref{sec: Final Initial Condition}, the second-order update condition is satisfied automatically once the initial condition and the first-order update condition are met.
We explain here again for clarity: Multiplying two equations in (\ref{eq:update_hidd_1}) gives $\alpha_l^2\|\mW_l^{(2)}\|_{\normrms}\|\mW_l^{(1)}\|_{\normrms}\,\|\Delta \mW_l^{(2)}\|_{\normrms}\|\Delta\mW_l^{(1)}\|_{\normrms}=\Theta(1/L^2).$
Combining this with (\ref{eq:init_hidd}) that $\alpha_l \|\mW_l^{(2)}\|_{\normrms}\,\|\mW_l^{(1)}\|_{\normrms}=\Theta(1/L)$ directly implies the second-order condition (\ref{eq:update_hidd_2}).

This completes the implementation of the update condition for Muon with
weight decay, which is summarized in Table~\ref{tab: muon-wd mup}.

\subsection{SGD}
\label{app:sgd_mup}

\begin{table}[t]
\renewcommand{\arraystretch}{1.3}
\renewcommand{\hl}[1]{\textcolor{purple}{#1}}
\renewcommand{\ll}[1]{\textcolor{gray}{#1}}
\centering
\caption{\textbf{$\mu$P implementation of Condition~\ref{condition: scale-invariant fl} ($k=2$) for SGD with weight decay under width–depth scaling.}
Entries in \hl{purple} indicate differences between $\mu$P and SP, while \ll{gray} shows the corresponding SP choices.
Here, $r_n$ and $r_L$ denote the width and depth scaling ratios relative to the base model.
The variance of input weights is $\sigma^2_{\mathrm{base}}$ for language and $\sigma^2_{\mathrm{base}}/d_0$ for image. The initial variance of input bias is $\sigma^2_{\mathrm{base}}$.}
\label{tab: sgd-mup}
\vskip 0.05in
\begin{tabular}{ccccc}
\toprule
 & Input weights \& biases & Hidden weights & Output weights & Hidden biases\\
\midrule

Block Multiplier
& $\alpha_{\mathrm{base}}$
& \hl{$\alpha_{\mathrm{base}}/r_L$} \ \ll{($\alpha_{\mathrm{base}}$)}
& \hl{$\alpha_{\mathrm{base}}/r_n$} \ \ll{($\alpha_{\mathrm{base}}$)} 
&\hl{$\alpha_{\mathrm{base}}/r_L$} \ \ll{($\alpha_{\mathrm{base}}$)} \\

Initial Variance
& $\sigma^2_{\mathrm{base}}/d_0$ or $\sigma^2_{\mathrm{base}}$
& \hl{$\sigma^2_{\mathrm{base}}/r_n$} \ \ll{($\sigma^2_{\mathrm{base}}$)}
& $\sigma^2_{\mathrm{base}}$
&$\sigma^2_{\mathrm{base}}$ \\

Learning Rate
& \hl{$\eta_{\mathrm{base}}{r_n}$} \ \ll{($\eta_{\mathrm{base}}$)}
& \hl{$\eta_{\mathrm{base}}r_L$} \ \ll{($\eta_{\mathrm{base}}$)}
& \hl{$\eta_{\mathrm{base}}{r_n}$} \ \ll{($\eta_{\mathrm{base}}$)} &
\hl{$\eta_{\mathrm{base}}r_Lr_n$} \ \ll{($\eta_{\mathrm{base}}$)} \\

Weight Decay
& \hl{$\lambda_{\mathrm{base}}/{r_n}$} \ \ll{($\lambda_{\mathrm{base}}$)}
& \hl{$\lambda_{\mathrm{base}}/r_L$} \ \ll{($\lambda_{\mathrm{base}}$)}
& \hl{$\lambda_{\mathrm{base}}/{r_n}$} \ \ll{($\lambda_{\mathrm{base}}$)}
& \hl{$\lambda_{\mathrm{base}}/(r_Lr_n)$} \ \ll{($\lambda_{\mathrm{base}}$)} \\

\bottomrule
\end{tabular}
\end{table}

In this section, we derive the $\mu$P implementation from
Condition~\ref{condition: scale-invariant fl} for SGD,
which recovers and extends the $\mu$P scaling rules studied in
\citet{DBLP:conf/nips/BordelonCP24-transformer}.

\subsubsection{Update Rule}

For a weight matrix $\mW_l\in\R^{\nout\times\nin}$, the SGD update rule with weight decay can be written as:
\begin{equation*}
\Delta\mW_l
=
-\eta_l\bigl(\,\underbrace{\nabla_{\mW_l}\mathcal{L}}_{\mA_l}
+\lambda_l \mW_l\bigl).
\end{equation*}
Here, we estimate the scale of raw gradient $\nabla_{\mW_l}\mathcal{L}$ following the spectral argument of~\citet{mup-spectral}. From the derivation of the update condition in Section~\ref{sec: spec condition}, we can observe that gradient updates $\Delta\mW_0,\Delta\mW_{L+1}$ induce a change $\Vert\Delta\vh_{L+1}\Vert_\normrms=\Theta(1)$ in the output, which induces a change $\Delta\mathcal{L} = \Theta(1)$ for common loss functions $\mathcal{L}$. In contrast, each hidden gradient update $\Delta\mW_l$ ($l\in[L]$) induces a change $\Vert\Delta\vh_{L+1}\Vert_\normrms=\Theta(1/L)$ in the output, which induces a change $\Delta\mathcal{L} = \Theta(1/L)$. We use these properties to derive the scale of $\nabla_{\mW_l}\mathcal{L}$ as follows.

For the input weights, we have
\begin{align*}
\Theta(1)=
\Delta_{\mW_0}\mathcal{L} = \Theta(\langle\Delta\mW_0, \nabla_{\mW_0}\mathcal{L}\rangle) = \Theta(\Vert\Delta\mW_0\Vert_\mathrm{F} \Vert\nabla_{\mW_0}\mathcal{L}\Vert_\mathrm{F})=
\Theta(\Vert\Delta\mW_0\Vert_2 \Vert\nabla_{\mW_0}\mathcal{L}\Vert_2),
\end{align*}
where $\langle\cdot,\cdot\rangle$ denotes the trace inner product, and we use the facts that the two arguments of the inner product are proportional to each other and the low-rank property of $\mA_l$ in Equation~(\ref{eqn: low-rank}). Since we finally realize the spectral condition~(\ref{eq:update_inandout}) that $\alpha_0\|\Delta \mW_0\|_{\normrms} = \Theta(1)$ and use $\alpha_0=\Theta(1)$ by initial implementation in Equation~(\ref{eq:alpha_inandout}), we have $\|\Delta\mW_0\|_{\normrms}=\Theta(1)$ so $\|\Delta\mW_0\|_{2}=\Theta(\sqrt{\nout/\nin})$.
Therefore, we obtain $\Vert\nabla_{\mW_0}\mathcal{L}\Vert_2 = \Theta(\sqrt{{\nin}/{\nout}})$, which leads to
\begin{equation*}
\|\mA_0\|_{\normrms}
=
\Vert\nabla_{\mW_0}\mathcal{L}\Vert_\normrms = \sqrt{\frac{\nin}{\nout}} \Vert\nabla_{\mW_0}\mathcal{L}\Vert_2 = \Theta(\frac{\nin}{\nout}) = \Theta(\frac{1}{\nout}).
\end{equation*}

Similarly, for the hidden weight $\mW_l$, we have
\begin{align*}
\Theta(\frac{1}{L})=
\Delta_{\mW_l}\mathcal{L} = \Theta(\langle\Delta\mW_l, \nabla_{\mW_l}\mathcal{L}\rangle) = \Theta(\Vert\Delta\mW_l\Vert_\mathrm{F} \Vert\nabla_{\mW_l}\mathcal{L}\Vert_\mathrm{F})=
\Theta(\Vert\Delta\mW_l\Vert_2 \Vert\nabla_{\mW_l}\mathcal{L}\Vert_2),
\end{align*}
Since we finally set $\alpha_l\|\Delta \mW_l\|_{\normrms}\| \mW_l\|_{\normrms} = \Theta(1/L)$ to satisfy the update condition~(\ref{eq:update_hidd_1}), and use $\alpha_l=\Theta(1/L)$ in Equation~(\ref{eq:alpha_l}), $\| \mW_l\|_{\normrms}=\Theta(1)$ in Equation~(\ref{eq:w_norm}) by initial implementation, we have $\|\Delta\mW_l\|_{\normrms}=\Theta(1)$ so $\|\Delta\mW_l\|_{2}=\Theta(\sqrt{\nout/\nin})$.
Therefore, we obtain $\Vert\nabla_{\mW_l}\mathcal{L}\Vert_2 = \Theta(L^{-1}\sqrt{{\nin}/{\nout}})$, which leads to
\begin{equation*}
\|\mA_l\|_{\normrms}
=
\Vert\nabla_{\mW_l}\mathcal{L}\Vert_\normrms = \sqrt{\frac{\nin}{\nout}} \Vert\nabla_{\mW_l}\mathcal{L}\Vert_2 = \Theta(\frac{\nin}{L\nout}) = \Theta(\frac{1}{L}).
\end{equation*}

Finally, for the output weight $\mW_{L+1}$, we have
\begin{align*}
\Theta(1)=
\Delta_{\mW_{L+1}}\mathcal{L} = \Theta(\langle\Delta\mW_{L+1}, \nabla_{\mW_{L+1}}\mathcal{L}\rangle) = \Theta(\Vert\Delta\mW_{L+1}\Vert_\mathrm{F} \Vert\nabla_{\mW_{L+1}}\mathcal{L}\Vert_\mathrm{F})=
\Theta(\Vert\Delta\mW_{L+1}\Vert_2 \Vert\nabla_{\mW_{L+1}}\mathcal{L}\Vert_2).
\end{align*}
Since we will set $\alpha_{L+1}\|\Delta \mW_{L+1}\|_{\normrms} = \Theta(1)$ to realize the update condition~(\ref{eq:update_inandout}) and use $\alpha_{L+1}=\Theta(1/\nin)$ in initial implementation in Equation~(\ref{eq:alpha_inandout}), we have $\|\Delta\mW_{L+1}\|_{\normrms}=\Theta(\nin)$ so $\|\Delta\mW_{L+1}\|_{2}=\Theta(\sqrt{\nout\nin})$.
Therefore, we obtain $\Vert\nabla_{\mW_{L+1}}\mathcal{L}\Vert_2 = \Theta(1/\sqrt{{\nin}{\nout}})$, which leads to
\begin{equation*}
\|\mA_{L+1}\|_{\normrms}
=
\Vert\nabla_{\mW_{L+1}}\mathcal{L}\Vert_\normrms = \sqrt{\frac{\nin}{\nout}} \Vert\nabla_{\mW_{L+1}}\mathcal{L}\Vert_2 = \Theta(\frac{1}{\nout}) = \Theta(1).
\end{equation*}
To sum up, we have
\begin{align}
\|\mA_l\|_{\normrms}
=
\Vert\nabla_{\mW_l}\mathcal{L}\Vert_\normrms
=
\begin{cases}
\Theta(1/{\nout}), & l=0,\\
\Theta(1/L),             & l\in[\,L\,],\\
\Theta(1),   & l=L+1.
\end{cases}
\label{eqn: sgd gradient matrix}
\end{align}

\subsubsection{Derivation of Parameterization}

\paragraph{Input and output layers.}
When weight decay is disabled ($\lambda_0 = 0$), using the dimension assumptions $d_0,d_{L+1}=\Theta(1)$ and $n_l=\Theta(n)$ in Equation~(\ref{eq:dimensions}), together with the initialization parameterization $\alpha_0=\Theta(1)$ and $\alpha_{L+1}=\Theta(1/\nin)$ in Equation~(\ref{eq:alpha_inandout}), we obtain
\begin{align*}
\alpha_l\|\Delta\mW_l\|_{\normrms}
= \alpha_l\eta_l\|\mA_l\|_{\normrms}
=
\begin{cases}
\Theta\big(\eta_0/{\nout}\big), & l=0,\\
\Theta\big(\eta_{L+1}/{\nin}\big), & l=L+1.
\end{cases}
\end{align*}
To satisfy the input/output update requirement
$\alpha_0\|\Delta\mW_0\|_{\normrms},\ \alpha_{L+1}\|\Delta\mW_{L+1}\|_{\normrms}=\Theta(1)$ in
Condition~(\ref{eq:update_inandout}),
we therefore choose
\begin{align*}
\eta_0 = \Theta({\nout}), \quad
\eta_{L+1} = \Theta({\nin}).
\end{align*}

When weight decay is active ($\lambda_l\neq0$), using $\|\mW_0\|_{\normrms}=\Theta(1)$ and $\|\mW_{L+1}\|_{\normrms}=\Theta(\nin)$ from the initialization implementation in Equation~(\ref{eq:w_norm}), we have
\[
\|\lambda_l\mW_l\|_{\normrms} =
\begin{cases}
\Theta(\lambda_0), & l=0,\\
\Theta(\lambda_{L+1}\nin), & l=L+1.
\end{cases}
\]
Matching this to $\|\mA_l\|_{\normrms}$ as desired by condition~(\ref{eq:condition_wd_2}) yields
\begin{align*}
\lambda_0 = \Theta(1/{\nout}), \quad
\lambda_{L+1} = \Theta(1/{\nin}).
\end{align*}

\paragraph{Hidden layers (first-order).}
For a hidden block we have implemented $\alpha_l=\Theta(1/L)$ in Equation~(\ref{eq:alpha_l}) and $\|\mW_l^{(i)}\|_{\normrms}=\Theta(1)$ in Equation~(\ref{eq:w_norm}).
When $\lambda_l=0$ we obtain
\begin{align*}
\alpha_l \|\Delta\mW_l^{(2)}\|_{\normrms}\,\|\mW_l^{(1)}\|_{\normrms}
&=
\Theta(1/L)\cdot \eta_l^{(2)} \|\mA_l^{(2)}\|_{\normrms}
=
\Theta(\eta_l^{(2)}/L^2).
\end{align*}
Enforcing the first-order hidden update condition~(\ref{eq:update_hidd_1}) that
$\alpha_l\|\Delta\mW_l^{(2)}\|_{\normrms}\,\|\mW_l^{(1)}\|_{\normrms}=\Theta(1/L)$ gives
\begin{align*}
\eta_l^{(2)} = \Theta(L).
\end{align*}
By the same reasoning, the same choice applies to the other learning rate, $\eta_l^{(1)}=\Theta(L)$.

If weight decay is enabled on hidden matrices, using $\|\mW_l^{(i)}\|_{\normrms}=\Theta(1)$ by Equation~(\ref{eq:w_norm}), we obtain
$\|\lambda_l^{(i)}\mW_l^{(i)}\|_{\normrms}=\Theta(\lambda_l^{(i)})$,
so condition~(\ref{eq:condition_wd_2}) that $\|\lambda_l \mW_l\|_{\normrms} = \Theta\left( \|\mA_l\|_{\normrms} \right)=\Theta(1/L)$ implies the natural choice
\begin{align*}
\lambda_l^{(i)}=\Theta(1/L),\qquad i=1,2.
\end{align*}

\paragraph{Hidden layers (second-order).}
As illustrated in Section~\ref{sec: Final Initial Condition} and Appendix~\ref{app:muon}, the second-order update condition is satisfied automatically once the initial condition and the first-order update condition are met.

\paragraph{Biases.}
For biases, we use the spectral bias condition in
Equation~(\ref{eq:bias_spectral_condition}), which sets
$\|\Delta\vb_l\|_{\normrms}=\Theta(1)$ for the input and hidden biases under
Condition~\ref{condition: scale-invariant fl}.
We estimate the corresponding
raw-gradient scales in the same way as for matrix weights.
For the input biases $\vb_0 \in \R^{\nout \times 1}$, we have
\begin{align*}
\Theta(1)=
\Delta_{\vb_0}\mathcal{L} = \Theta(\langle\Delta\vb_0, \nabla_{\vb_0}\mathcal{L}\rangle) = \Theta(\Vert\Delta\vb_0\Vert_2 \Vert\nabla_{\vb_0}\mathcal{L}\Vert_2),
\end{align*}
Since we finally set $\|\Delta \vb_0\|_{\normrms} = \Theta(1)$ to satisfy the update condition in Equation~(\ref{eq:bias_spectral_condition}), we have $\|\Delta\vb_0\|_{2}=\Theta(\sqrt{\nout})$.
Therefore, we obtain $\Vert\nabla_{\vb_0}\mathcal{L}\Vert_2 = \Theta(1/\sqrt{{\nout}})$, which leads to
\begin{equation*}
\Vert\nabla_{\vb_0}\mathcal{L}\Vert_\normrms = \sqrt{\frac{1}{\nout}} \Vert\nabla_{\vb_0}\mathcal{L}\Vert_2 = \Theta(\frac{1}{\nout}).
\end{equation*}

Similarly, for the hidden biases $\vb_l \in \R^{\nout \times 1}$, we have
\begin{align*}
\Theta(1/L)=
\Delta_{\vb_l}\mathcal{L} = \Theta(\langle\Delta\vb_l, \nabla_{\vb_l}\mathcal{L}\rangle) = \Theta(\Vert\Delta\vb_l\Vert_2 \Vert\nabla_{\vb_l}\mathcal{L}\Vert_2),
\end{align*}
Since we finally set $\|\Delta \vb_l\|_{\normrms} = \Theta(1)$ to satisfy the update condition in Equation~(\ref{eq:bias_spectral_condition}), we have $\|\Delta\vb_l\|_{2}=\Theta(\sqrt{\nout})$.
Therefore, we obtain $\Vert\nabla_{\vb_l}\mathcal{L}\Vert_2 = \Theta(1/(L\sqrt{{\nout}}))$, which leads to
\begin{equation*}
\Vert\nabla_{\vb_l}\mathcal{L}\Vert_\normrms = \sqrt{\frac{1}{\nout}} \Vert\nabla_{\vb_l}\mathcal{L}\Vert_2 = \Theta(\frac{1}{L\nout}).
\end{equation*}

To sum up, we can estimate the scale of $\Vert\nabla_{\vb_l}\mathcal{L}\Vert_\normrms$ as
\begin{align*}
\Vert\nabla_{\vb_l}\mathcal{L}\Vert_\normrms
=
\begin{cases}
\Theta(1/{\nout}), & l=0,\\
\Theta(1/(L\nout)),       & l\in[L].
\end{cases}
\end{align*}
Requiring $\|\Delta\vb_l\|_\normrms = \eta_{\vb_l} \Vert\nabla_{\vb_l}\mathcal{L}\Vert_\normrms = \Theta(1)$ according to update condition in Equation~(\ref{eq:bias_spectral_condition}) leads to the learning rate as
\begin{align*}
    \eta_{\vb_l}
    =
    \begin{cases}
    \Theta(\nout), & l=0,\\
    \Theta(L\nout),       & l\in[L].
    \end{cases}
\end{align*}
For the weight decays, we need to satisfy $\lambda_{\vb_l} \|\vb_l\|_\normrms = \Vert\nabla_{\vb_l}\mathcal{L}\Vert_\normrms$ and given $\|\vb_l\|_\normrms = \Theta(1)$ by initialization implementation in Condition~\ref{condition: bias}, we have
\begin{align*}
    \lambda_{\vb_l}=
    \begin{cases}
    \Theta(1/\nout), & l=0,\\
    \Theta\left(1/(L\nout)\right),       & l\in[L].
    \end{cases}
\end{align*}

This completes the implementation of the update condition for SGD with
weight decay, which is summarized in Table~\ref{tab: sgd-mup}.

\subsection{AdamW}
\label{app:admw}

\begin{table}[t]
\renewcommand{\arraystretch}{1.3}
\setlength{\tabcolsep}{2pt}
\renewcommand{\hl}[1]{\textcolor{purple}{#1}}
\renewcommand{\ll}[1]{\textcolor{gray}{#1}}
\centering
\caption{\textbf{$\mu$P implementation of Condition~\ref{condition: scale-invariant fl} ($k=2$) for AdamW~\citep{adamw}, Lion~\citep{DBLP:conf/nips/ChenLHRW0DLHLL23-lion}, and Sophia~\citep{DBLP:conf/iclr/Liu0HL024-sophia} with weight decay under width–depth scaling.}
Entries in \hl{purple} indicate differences between $\mu$P and SP, while \ll{gray} shows the corresponding SP choices.
Here, $r_n$ and $r_L$ denote the width and depth scaling ratios relative to the base model.
The variance of input weights is $\sigma^2_{\mathrm{base}}$ for language and $\sigma^2_{\mathrm{base}}/d_0$ for image. The initial variance of input bias is $\sigma^2_{\mathrm{base}}$.}
\label{tab: adamw-mup}
\vskip 0.05in
\begin{tabular}{ccccc}
\toprule
 & Input weights \& biases & Hidden weights & Output weights & Hidden biases\\
\midrule

Block Multiplier
& $\alpha_{\mathrm{base}}$
& \hl{$\alpha_{\mathrm{base}}/r_L$} \ \ll{($\alpha_{\mathrm{base}}$)}
& \hl{$\alpha_{\mathrm{base}}/r_n$} \ \ll{($\alpha_{\mathrm{base}}$)} 
&\hl{$\alpha_{\mathrm{base}}/r_L$} \ \ll{($\alpha_{\mathrm{base}}$)} \\

Initial Variance
& $\sigma^2_{\mathrm{base}}/d_0$ or $\sigma^2_{\mathrm{base}}$
& \hl{$\sigma^2_{\mathrm{base}}/r_n$} \ \ll{($\sigma^2_{\mathrm{base}}$)}
& $\sigma^2_{\mathrm{base}}$
&$\sigma^2_{\mathrm{base}}$ \\

Learning Rate
& $\eta_{\mathrm{base}}$
& \hl{$\eta_{\mathrm{base}}/r_n$} \ \ll{($\eta_{\mathrm{base}}$)}
& $\eta_{\mathrm{base}}$
& $\eta_{\mathrm{base}}$ \\

Weight Decay
& $\lambda_{\mathrm{base}}$
& \hl{$\lambda_{\mathrm{base}}r_n$} \ \ll{($\lambda_{\mathrm{base}}$)}
& $\lambda_{\mathrm{base}}$
& $\lambda_{\mathrm{base}}$ \\

AdamW $\varepsilon$
& \hl{$\varepsilon_{\mathrm{base}}/r_n$} \ \ll{($\varepsilon_{\mathrm{base}}$)}
& \hl{$\varepsilon_{\mathrm{base}}/(r_Lr_n)$} \ \ll{($\varepsilon_{\mathrm{base}}$)}
& \hl{$\varepsilon_{\mathrm{base}}/r_n$} \ \ll{($\varepsilon_{\mathrm{base}}$)}
& \hl{$\varepsilon_{\mathrm{base}}/(r_Lr_n)$} \ \ll{($\varepsilon_{\mathrm{base}}$)}\\
\bottomrule
\end{tabular}
\end{table}

In this section, we derive the $\mu$P implementation from
Condition~\ref{condition: scale-invariant fl} for AdamW,
which recovers and extends the $\mu$P scaling rules studied in
\citet{completep}.

\subsubsection{Update Rule}
\label{app:adam_update_rule}

First, we present the full update rule of AdamW~\citep{adamw}.
To distinguish iteration steps, we append a superscript $t \in [T]$, which might be omitted later when no confusion arises.
\begin{align*}
    \mW_l^{(t)} = \mW_l^{(t-1)} - \eta_l^{(t)} \left(\mathrm{AdamW}\left(\nabla_{\mW_l^{(t)}}\mathcal{L}\right) + \lambda_l \mW_l^{(t)}\right),
\end{align*}
where
\begin{equation}
\begin{aligned}
    &\mathrm{AdamW}\left(\nabla_{\mW_l^{(t)}}\mathcal{L}\right) = \frac{\hat{\vm}_l^{(t)}}{\sqrt{\hat{\vv}_l^{(t)}}+\varepsilon_l}, \\
    &\text{where} \quad \left\{
    \begin{array}{ll}
    \hat{\vm}_l^{(t)} = \frac{{\vm}_l^{(t)}}{1-\beta_1^t}, \quad {\vm}_l^{(t)} = \beta_1 {\vm}_l^{(t-1)} + (1-\beta_1)\nabla_{\mW_l^{(t)}}\mathcal{L},\\
    \hat{\vv}_l^{(t)} = \frac{{\vv}_l^{(t)}}{1-\beta_2^t}, \quad {\vv}_l^{(t)} = \beta_2 {\vv}_l^{(t-1)} + (1-\beta_2)\left(\nabla_{\mW_l^{(t)}}\mathcal{L}\right)^2.
    \end{array} \right.
\end{aligned}
\label{eq:Adam_as_function}
\end{equation}
We simplify the full update rule by omitting the momentum and the stabilization term, i.e., setting $\beta_1 = 0$, $\beta_2 = 0$, and $\varepsilon_l = 0$.
As discussed at the beginning of the Appendix~\ref{app: implementation opts HPs}, omitting momentum does not affect the scaling analysis.
The stabilization term $\varepsilon_l$ must, in fact, be scaled consistently with $\sqrt{\hat{\vv}_l^{(t)}}$, and since it does not alter the resulting parameterization of learning rate, we defer its discussion to the end of this section.
Now, the AdamW is reduced to sign gradient descent as:
\begin{align*}
    \mW_l^{(t)} = \mW_l^{(t-1)} - \eta_l^{(t)} \left(\mathrm{sign}\left(\nabla_{\mW_l^{(t)}}\mathcal{L}\right) + \lambda_l \mW_l^{(t)}\right).
\end{align*}
Here, the superscript of the iteration step can be left out, and we write this simplified update rule as:
\begin{align}
    \Delta{\mW}_l
    =-\eta_l \bigl(\,\underbrace{\mathrm{sign}\left(\nabla_{\mW_l}\mathcal{L}\right)}_{\mA_l} 
    + \lambda_l {\mW}_l\bigl).
\label{eq:updaterule_signgd}
\end{align}
Given the dimension assumption $d_0,d_{L+1}=\Theta(1), n_l=\Theta(n)$ in Equation~(\ref{eq:dimensions}), the norm of $\mA_l$ satisfies
\begin{align} 
\Vert \mA_l \Vert_\normrms 
&= \left\Vert \mathrm{sign}\left(\nabla_{\mW_l}\mathcal{L}\right) \right\Vert_\normrms 
= \sqrt{\frac{\nin}{\nout}}\left\Vert \mathrm{sign}\left(\nabla_{\mW_l}\mathcal{L}\right) \right\Vert_2 \nonumber \\
&= \sqrt{\frac{\nin}{\nout}} \cdot \Theta\left(\left\Vert \mathrm{sign}\left(\nabla_{\mW_l}\mathcal{L}\right) \right\Vert_F\right) \tag{low-rank property of $\mA_l$ in Equation~(\ref{eqn: low-rank})} \\
&= \Theta\left(\sqrt{\frac{\nin}{\nout}}\sqrt{\nin\nout}\right) \nonumber \\
&= \Theta(\nin)
=\left\{
    \begin{array}{ll}
    \Theta(1),   & l = 0,\\
    \Theta(\nin),   & l \in [L],\\
    \Theta(\nin),  & l = L+1.
    \end{array} \right. \label{eq:Anorm_adam}
\end{align}

\subsubsection{Derivation of Parameterization}

\paragraph{Input and output layers.} 
When weight decay is disabled ($\lambda_0 = 0$), given the dimension assumption $d_0,d_{L+1}=\Theta(1), n_l=\Theta(n)$ in Equation~(\ref{eq:dimensions}), the multiplier parameterizations $\alpha_0=\Theta(1),\alpha_{L+1}=\Theta(1/\nin)$ in Equation~(\ref{eq:alpha_inandout}), and the scale of $\Vert\mA_l\Vert_\normrms$ in Equation~(\ref{eq:Anorm_adam}), we have
\begin{align*}
    \alpha_l\|\Delta\mW_l\|_{\normrms}
    =\alpha_l\eta_l\|\mA_l\|_{\normrms}
    =\left\{
    \begin{array}{ll}
    \Theta(\eta_0),   & l = 0,\\
    \Theta(\eta_{L+1}),  & l = L+1.
    \end{array} \right.
\end{align*}
As desired in (\ref{eq:condition_wd_1}), to satisfy (\ref{eq:update_inandout}) that $\alpha_0\|\Delta \mW_0\|_{\normrms}, \alpha_{L+1}\|\Delta \mW_{L+1}\|_{\normrms} = \Theta(1)$, we need to set 
\begin{align*}
    \eta_0=\Theta(1), \quad \eta_{L+1}=\Theta(1).
\end{align*}
When $\lambda_l \neq 0$, given Equation~(\ref{eq:w_norm}) that $\|{\mW}_0\|_{\normrms} = \Theta(1)$ and $\|{\mW}_{L+1}\|_{\normrms} = \Theta(\nin)$, we have
\begin{align*}
    \|\lambda_l\mW_l\|_{\normrms} 
    =\left\{
    \begin{array}{ll}
    \Theta(\lambda_0),   & l = 0,\\
    \Theta(\lambda_{L+1}\nin),  & l = L+1.
    \end{array} \right.
\end{align*}
To satisfy (\ref{eq:condition_wd_2}) that $\|\lambda_l \mW_l\|_{\normrms}
    = \Theta\left( \|\mA_l\|_{\normrms} \right)$, we need to set 
\begin{align*}
    \lambda_0=\Theta(1), \quad \lambda_{L+1}=\Theta(1).
\end{align*}

\paragraph{Hidden layers (first-order).} 
When weight decay is disabled ($\lambda_0 = 0$), given the dimension assumption $d_0,d_{L+1}=\Theta(1), n_l=\Theta(n)$ in Equation~(\ref{eq:dimensions}), the weight norm $\|{\mW}_l\|_{\normrms} = \Theta(1)$ in Equation~(\ref{eq:w_norm}), the multiplier parameterization $\alpha_l=\Theta(1/L)$ in Equation~(\ref{eq:alpha_l}), and the scale of $\Vert\mA_l\Vert_\normrms$ in Equation~(\ref{eq:Anorm_adam}), we have
\begin{align*}
    \alpha_l\|\Delta\mW_l^{(2)}\|_{\normrms}\|\mW_l^{(1)}\|_{\normrms}
    =\Theta(1/L) \cdot \eta_l^{(2)}\|\mA_l^{(2)}\|_{\normrms}\|\mW_l^{(1)}\|_{\normrms}
    =\Theta(\eta_l^{(2)}\nin/L).
\end{align*}
As desired in (\ref{eq:condition_wd_1}), to satisfy the first-order update condition on hidden weights (\ref{eq:update_hidd_1}) that $\alpha_l\|\Delta \mW_l^{(2)}\|_{\normrms}\,\|\mW_l^{(1)}\|_{\normrms}=\Theta(1/L)$, we need to set 
\begin{align*}
    \eta_l^{(2)}=\Theta(1/\nin).
\end{align*}
When $\lambda_l \neq 0$, given the weight norm $\|{\mW}_l\|_{\normrms} = \Theta(1)$ in Equation~(\ref{eq:w_norm}), we have
\begin{align*}
    \|\lambda_l^{(2)}\mW_l^{(2)}\|_{\normrms} = \Theta(\lambda_l^{(2)}).
\end{align*}
To satisfy (\ref{eq:condition_wd_2}) that $\|\lambda_l \mW_l\|_{\normrms}
    = \Theta\left( \|\mA_l\|_{\normrms} \right)$ we need to set 
\begin{align*}
    \lambda_l^{(2)}=\Theta(\nin).
\end{align*}

Symmetrically, we have the same choice for $\mW_l^{(1)}$:
\begin{align*}
    \eta_l^{(1)} = \Theta(1/\nin), \quad \lambda_l^{(1)}=\Theta(\nin).
\end{align*}

\paragraph{Hidden layers (second-order).} 
As illustrated in Section~\ref{sec: Final Initial Condition} or in Appendix~\ref{app:muon} for Muon, the second-order update condition is satisfied automatically once the initial condition and the first-order update condition are met.

\paragraph{Biases.} For bias parameters $\vb_l \in \R^{\nout\times 1}$, by the definition we have
\begin{align*}
\|\mathrm{sign}\left(\nabla_{\vb_l}\mathcal{L}\right)\|_\normrms = \Theta(1),  \quad 0 \leq l \leq L.
\end{align*}

To satisfy the condition $
\|\Delta \vb_l\|_{\normrms} = \Theta(1)$ for $\forall\ 0 \le l \le L$ in Equation~(\ref{eq:bias_spectral_condition}), we set
\begin{align*}
    \eta_{\vb_l}=\Theta(1), \quad 0 \leq l \leq L,
\end{align*}
and the corresponding weight decay
\begin{align*}
    \lambda_{\vb_l}=\Theta(1), \quad 0 \leq l \leq L.
\end{align*}

\paragraph{Parameterization of $\varepsilon_l$.}

To make the stabilization term $\varepsilon_l$ effective and not dominate the gradient, we desire it to be of the same scale as $\sqrt{\hat{\vv}_l^{(t)}}$. 
When omitting the momentum, we have $\sqrt{\hat{\vv}_l} = \nabla_{\mW_l}\mathcal{L}$.
Therefore, we need to ensure $\varepsilon_l = \Theta(\|\nabla_{\mathrm{Vec}(\mW_l)}\mathcal{L}\|_\normrms)$, the latter can be estimated based on the derivation for $\Vert\nabla_{\mW_l}\mathcal{L}\Vert_\normrms$ in Equation~(\ref{eqn: sgd gradient matrix}) of Appendix~\ref{app:sgd_mup}. 

For the input weights $\mW_0$, we have
\begin{align*}
\|\nabla_{\mathrm{Vec}(\mW_0)}\mathcal{L}\|_\normrms 
= \frac{1}{\sqrt{\nin\nout}}\|\nabla_{\mW_0}\mathcal{L}\|_\mathrm{F}
= \Theta\left(\frac{1}{\sqrt{\nin\nout}} \|\nabla_{\mW_0}\mathcal{L}\|_\mathrm{2}\right)
= \Theta\left(\frac{1}{\sqrt{\nin\nout}} \sqrt{\frac{\nin}{\nout}}\right)
= \Theta(\frac{1}{\nout}).
\end{align*}
Therefore, we set 
$$\varepsilon_0 = \Theta(\frac{1}{\nout}).$$

For the hidden weights $\mW_l, l\in[L]$, we have
\begin{align*}
\|\nabla_{\mathrm{Vec}(\mW_l)}\mathcal{L}\|_\normrms 
= \Theta\left(\frac{1}{\sqrt{\nin\nout}} \|\nabla_{\mW_l}\mathcal{L}\|_\mathrm{2}\right)
= \Theta\left(\frac{1}{\sqrt{\nin\nout}} \frac{1}{L} \sqrt{\frac{\nin}{\nout}}\right)
= \Theta(\frac{1}{L\nout}).
\end{align*}
Therefore, we set 
$$\varepsilon_l = \Theta(\frac{1}{L\nout}), \quad l \in [L].$$

For the output weights $\mW_{L+1}$, we have
\begin{align*}
\|\nabla_{\mathrm{Vec}(\mW_{L+1})}\mathcal{L}\|_\normrms 
= \Theta\left(\frac{1}{\sqrt{\nin\nout}} \|\nabla_{\mW_{L+1}}\mathcal{L}\|_\mathrm{2}\right)
= \Theta\left(\frac{1}{\sqrt{\nin\nout}}  \sqrt{\frac{1}{\nin\nout}}\right)
= \Theta(\frac{1}{\nin}).
\end{align*}
Therefore, we set 
$$\varepsilon_{L+1} = \Theta(\frac{1}{\nin}).$$

Similarly, for the biases we have derived in Appendix~\ref{app:sgd_mup} that
\begin{align*}
\Vert\nabla_{\vb_l}\mathcal{L}\Vert_\normrms
=
\begin{cases}
\Theta(1/{\nout}), & l=0,\\
\Theta(1/(L\nout)),       & l\in[L].
\end{cases}
\end{align*}
Therefore, we set the stabilization term as
\begin{align*}
\varepsilon_{\vb_l}
=
\begin{cases}
\Theta(1/{\nout}), & l=0,\\
\Theta(1/(L\nout)),       & l\in[L].
\end{cases}
\end{align*}

This completes the implementation of the update condition for AdamW, which is summarized in Table~\ref{tab: adamw-mup}.

\subsection{Lion}
\label{app:lion}

In this section, we derive the $\mu$P implementation from
Condition~\ref{condition: scale-invariant fl} for Lion.

The full update rule of Lion~\citep{DBLP:conf/nips/ChenLHRW0DLHLL23-lion} is
\begin{align*}
    \mW_l^{(t)} = \mW_l^{(t-1)} - \eta^{(t)} \left(\vu_l^{(t)} + \lambda_l \mW_l^{(t)}\right),
\end{align*}
where
\begin{align*}
    &\vu_l^{(t)} = \mathrm{sign}\left(\beta_1 {\vm}_l^{(t-1)} + (1-\beta_1)\nabla_{\mW_l^{(t)}}\mathcal{L}\right),\\
    & {\vm}_l^{(t)} = \beta_2 {\vm}_l^{(t-1)} + (1-\beta_2) \nabla_{\mW_l^{(t)}}\mathcal{L}.
\end{align*}

If the momentum terms are omitted, i.e., setting $\beta_1 = 0$ and $\beta_2 = 0$, Lion is reduced to sign gradient descent as in the simplified AdamW update rule in Equation~(\ref{eq:updaterule_signgd}).
Therefore, we reuse AdamW's parameterizations in Table~\ref{tab: adamw-mup} for Lion.

\subsection{Sophia}
\label{app:sophia}

In this section, we derive the $\mu$P implementation from
Condition~\ref{condition: scale-invariant fl} for Sophia.

The full update rule of Sophia~\citep{DBLP:conf/iclr/Liu0HL024-sophia} is
\begin{align*}
    \mW_l^{(t)} = \mW_l^{(t-1)} - \eta^{(t)} \left(\mathrm{clip}\left(\frac{\vm_l^{(t)}}{\max\{\rho \vh_l^{(t)}, \varepsilon\}}, 1 \right) + \lambda_l \mW_l^{(t)}\right),
\end{align*}
where
\begin{align*}
    \vm_l^{(t)} = \beta_1 \vm_l^{(t-1)} + (1-\beta_1) \mG_l^{(t)},
\end{align*}
and $\vh_l^{(t)}$ is updated every $k$ iterations as
\begin{align*}
    \vh_l^{(t)} =
    \left\{
    \begin{array}{ll}
    \beta_2 \vh_l^{(t-1)} + (1-\beta_2) \hat{\vh}_l^{(t)}, & t \ \mathrm{mod} \ k = 1,\\
    \vh_l^{(t-1)} ,& t \ \mathrm{mod} \ k \neq 1.
    \end{array}\right.
\end{align*}
where
the elements of $\hat{\vh}_l^{(t)}$ are the diagonal second-order derivatives with respect to $\mW_l^{(t)}$, i.e.,
$\left(\hat{\vh}_l^{(t)}\right)_{ij} = \frac{\partial^2\mathcal{L}}{\partial {(\mW_l^{(t)})}_{ij}^2}$.

Letting $\mA_l^{(t)} = \mathrm{clip}\left(\frac{\vm_l^{(t)}}{\max\{\rho \vh_l^{(t)}, \varepsilon\}}, 1 \right)$, we use the following upper bound to estimate its norm, following~\citet{extending-mup}, which derives a parameterization for Sophia under width scaling:
\begin{align*}
    \Vert \mA_l^{(t)} \Vert_{\normrms} \leq \Vert \boldsymbol{1}_{\nout \times \nin} \Vert_{\normrms} = \sqrt{\frac{\nin}{\nout}} \Vert \boldsymbol{1}_{\nout \times \nin} \Vert_{2} = \sqrt{\frac{\nin}{\nout}}\sqrt{\nin\nout} = \nin.
\end{align*}
The resulting width-scaling parameterization is empirically validated to be effective in~\citet{extending-mup}.
Since this bound has the same order as the AdamW estimate in Equation~(\ref{eq:Anorm_adam}), Sophia shares AdamW's parameterizations in Table~\ref{tab: adamw-mup}.

\subsection{Shampoo}
\label{app:shampoo}

In this section, we derive the $\mu$P implementation from
Condition~\ref{condition: scale-invariant fl} for Shampoo,
which recovers and extends the $\mu$P scaling rules studied in
\citet{muon-cp}.

\subsubsection{Update Rule}

Denote $\mG_l^{(t)} = \nabla_{\mW_l^{(t)}}\mathcal{L}$.
Following Shampoo~\citep{gupta2018shampoo}, the gradient is preconditioned by Kronecker-factored left and right preconditioners:
\begin{align}
    &\mM_l^{(t)} = \beta_1\mM_l^{(t-1)} + (1-\beta_1)\mG_l^{(t)}, \nonumber\\
    &\mL_l^{(t)} = \beta_2\mL_l^{(t-1)} + (1-\beta_2)\mG_l^{(t)}{\mG_l^{(t)}}^\top,\nonumber\\
    &\mR_l^{(t)} = \beta_2\mR_l^{(t-1)} + (1-\beta_2){\mG_l^{(t)}}^\top\mG_l^{(t)},\nonumber\\
    &\widehat{\mL}_l^{(t)} = \frac{\mL_l^{(t)}}{1-\beta_2^t}, \qquad
    \widehat{\mR}_l^{(t)} = \frac{\mR_l^{(t)}}{1-\beta_2^t},\nonumber\\
    &\mW_l^{(t)}
    =
    \mW_l^{(t-1)}
    -
    \eta^{(t)}
    \left(
    \left(\widehat{\mL}_l^{(t)} + \varepsilon_l \mI\right)^{-\frac{1}{4}}
    \mM_l^{(t)}
    \left(\widehat{\mR}_l^{(t)} + \varepsilon_l \mI\right)^{-\frac{1}{4}}
    + \lambda_l\mW_l^{(t)}
    \right),
\label{eq:full_shampoo}
\end{align}

We simplify the full update rule by omitting the momentum and the stabilization term, i.e., setting $\beta_1 = 0$, $\beta_2 = 0$, and $\varepsilon_l = 0$.
As discussed at the beginning of the Appendix~\ref{app: implementation opts HPs}, omitting momentum does not affect the scaling analysis.
The stabilization term $\varepsilon_l$ must, in fact, be scaled consistently with $\widehat{\mL}_l^{(t)}$ and $\widehat{\mR}_l^{(t)}$, and since it does not alter the resulting parameterization of learning rate, we defer its discussion to the end of this section.

Now, we have $\widehat{\mL}_l^{(t)}=\mG_l^{(t)}{\mG_l^{(t)}}^\top$, and
$\widehat{\mR}_l^{(t)}={\mG_l^{(t)}}^\top\mG_l^{(t)}$.
Therefore, we obtain
\begin{align*}
    \mW_l^{(t)}
    =
    \mW_l^{(t-1)}
    -
    \eta^{(t)}
    \left(
    \left(\mG_l^{(t)}{\mG_l^{(t)}}^\top\right)^{-\frac{1}{4}}
    \mG_l^{(t)}
    \left({\mG_l^{(t)}}^\top\mG_l^{(t)}\right)^{-\frac{1}{4}}
    + \lambda_l\mW_l^{(t)}
    \right).
\end{align*}

\subsubsection{Derivation of Parameterization}

\paragraph{Learning rate and weight decay.}
Applying compact SVD to $\mG_l^{(t)}$ as in Muon, and interpreting the inverse powers on the support of the gradient, we have
\begin{align*}
    \mG_l^{(t)}=\mU_l^{(t)} \mSigma_l^{(t)} {\mV_l^{(t)}}^\top,
\end{align*}
and thus
\begin{align*}
    \mG_l^{(t)}{\mG_l^{(t)}}^\top
    = \mU_l^{(t)} {\mSigma_l^{(t)}}^2 {\mU_l^{(t)}}^\top,
    \quad
    {\mG_l^{(t)}}^\top\mG_l^{(t)}
    =
    \mV_l^{(t)} {\mSigma_l^{(t)}}^2 {\mV_l^{(t)}}^\top.
\end{align*}
The preconditioned direction is therefore
\begin{align*}
&\left(\mG_l^{(t)}{\mG_l^{(t)}}^\top\right)^{-\frac{1}{4}}
\mG_l^{(t)}
\left({\mG_l^{(t)}}^\top\mG_l^{(t)}\right)^{-\frac{1}{4}} \\
&\quad =
\mU_l^{(t)}{\mSigma_l^{(t)}}^{-\frac{1}{2}}{\mU_l^{(t)}}^\top
\mU_l^{(t)}\mSigma_l^{(t)}{\mV_l^{(t)}}^\top
\mV_l^{(t)}{\mSigma_l^{(t)}}^{-\frac{1}{2}}{\mV_l^{(t)}}^\top
=
\mU_l^{(t)}{\mV_l^{(t)}}^\top.
\end{align*}
Consequently, under this simplification, Shampoo reduces to
\begin{align*}
    \mW_l^{(t)}
    =
    \mW_l^{(t-1)}
    -
    \eta^{(t)}
    \left(\mU_l^{(t)}{\mV_l^{(t)}}^\top + \lambda_l \mW_l^{(t)}\right),
\end{align*}
which matches the update rule of Muon in Equation~(\ref{eq:updaterule_muon}).
Therefore, the learning rate and weight decay of Shampoo share Muon's parameterizations in Table~\ref{tab: muon-wd mup}.

Note that the hidden layer learning rate parameterization derived in~\citet[Table 1]{muon-cp} is $\frac{(\nout/\nin)^{1-(e_L+e_R)}}{L^{2(e_L+e_R)-1}n_{\mathrm{blk}}^{e_L+e_R}}$, where the $e_L$ and $e_R$ are the exponents of $\mL_l^{(t)}$ and $\mR_l^{(t)}$ which equal $\frac{1}{4}$ in the standard Shampoo as Equation~(\ref{eq:full_shampoo}), and $n_{\mathrm{blk}}$ is the number of blocks which equals $1$ when blocking is not used. In this case, $\frac{(\nout/\nin)^{1-(e_L+e_R)}}{L^{2(e_L+e_R)-1}n_{\mathrm{blk}}^{e_L+e_R}} = \sqrt{\nout/\nin} = \Theta(1)$, consistent with our result for Shampoo in Table~\ref{tab: muon-wd mup}.

\paragraph{Parameterization of $\varepsilon_l$.}

To make the stabilization term $\varepsilon_l$ effective and not dominate the update, we desire it to be of the same scale as the single value of $\widehat{\mL}_l^{(t)}$ and $\widehat{\mR}_l^{(t)}$.
When omitting the momentum, we have
\begin{align*}
\|\widehat{\mL}_l^{(t)}\|_2 = \|\widehat{\mR}_l^{(t)}\|_2 =
\|\mG_l^{(t)}\|^2_2
=\|\nabla_{\mW_l^{(t)}}\mathcal{L}\|^2_2 =
\left\{
    \begin{array}{ll}
    \Theta({\frac{\nin}{\nout}}) = \Theta({\frac{1}{\nout}}),   & l = 0,\\
    \Theta(\frac{1}{L^2}{\frac{\nin}{\nout}}) = \Theta(\frac{1}{L^2}),   & l \in [L],\\
    \Theta({\frac{1}{\nin\nout}}) = \Theta({\frac{1}{\nin}}),  & l = L+1,
    \end{array} \right.
\end{align*}
where the estimation of $\|\nabla_{\mW_l^{(t)}}\mathcal{L}\|_2$ can be found in the derivation for $\Vert\nabla_{\mW_l}\mathcal{L}\Vert_\normrms$ in Equation~(\ref{eqn: sgd gradient matrix}) at Appendix~\ref{app:sgd_mup}.
Therefore, we need to set the stabilization term as
\begin{align*}
\varepsilon_l =
\left\{
    \begin{array}{ll}
    \Theta({\frac{1}{\nout}}),   & l = 0,\\
     \Theta(\frac{1}{L^2}),   & l \in [L],\\
    \Theta({\frac{1}{\nin}}),  & l = L+1.
    \end{array} \right.
\end{align*}

Note that the hidden layer $\varepsilon_l$ parameterization derived in~\citet[Table 1]{muon-cp} is $\frac{\nin}{L^2\nout n_{\mathrm{blk}}}$, where the $n_{\mathrm{blk}}$ is the number of blocks which equals $1$ when blocking is not used. In this case,$\frac{\nin}{L^2\nout n_{\mathrm{blk}}} = \frac{\nin}{L^2\nout} = \Theta(\frac{1}{L^2})$, consistent with our result for Shampoo in Table~\ref{tab: muon-wd mup}.

\subsection{SOAP}
\label{app:soap}

In this section, we derive the $\mu$P implementation from
Condition~\ref{condition: scale-invariant fl} for SOAP,
which recovers and extends the $\mu$P scaling rules studied in
\citet{muon-cp}.

Denote the weight gradient as $\mG_l^{(t)} = \nabla_{\mW_l^{(t)}}\mathcal{L}\in \R^{\nout\times\nin}$ and its rank $r = \mathrm{rank}(\mG_l^{(t)})$.
SOAP adopts a similar precondition of Shampoo's for $\mL_l^{(t)}$ and $\mR_l^{(t)}$:
\begin{align*}
    \mL_l^{(t)} = \beta_3\mL_l^{(t-1)} + (1-\beta_3)\mG_l^{(t)}{\mG_l^{(t)}}^\top, \quad \mR_l^{(t)} = \beta_3\mR_l^{(t-1)} + (1-\beta_3){\mG_l^{(t)}}^\top\mG_l^{(t)}.
\end{align*}
By applying eigendecomposition to matrices $\mL_l^{(t)} \in \R^{\nout\times\nout}$ and $\mR_l^{(t)}\in \R^{\nin\times\nin}$, we get two orthogonal matrices $\mQ_{\mL_l}^{(t)}\in \R^{\nout\times\nout}$ and $\mQ_{\mR_l}^{(t)}\in \R^{\nin\times\nin}$ as:
\begin{align}
    \mL_l^{(t)} = \mQ_{\mL_l}^{(t)}\boldsymbol{\Lambda}_{\mL_l}^{(t)}{\mQ_{\mL_l}^{(t)}}^\top, \quad \mR_l^{(t)} = \mQ_{\mR_l}^{(t)}\boldsymbol{\Lambda}_{\mR_l}^{(t)}{\mQ_{\mR_l}^{(t)}}^\top.
\label{eq:LR_eigendecomp_soap}
\end{align}
This induces a rotated gradient:
\begin{align*}
    {\mG_l^\prime}^{(t)} = {\mQ_{\mL_l}^{(t)}}^\top \mG_l^{(t)} \mQ_{\mR_l}^{(t)}.
\end{align*}

The full update rule of SOAP is 
\begin{align*}
    \mW_l^{(t)} = \mW_l^{(t-1)} - \eta^{(t)} \left(\mQ_{\mL_l}^{(t)}\mathrm{AdamW}\left({\mG_l^\prime}^{(t)}\right){\mQ_{\mR_l}^{(t)}}^\top + \lambda_l \mW_l^{(t)}\right),
\end{align*}
where $\mathrm{AdamW}\left(\cdot\right)$ is defined as in Equation~(\ref{eq:Adam_as_function}).

First, omit the momentum and the stabilization term in the AdamW operator. Then, $\mathrm{AdamW}\left(\cdot\right)$ is reduced to $\mathrm{sign}\left(\cdot\right)$ as discussed in Appendix~\ref{app:adam_update_rule}.

Then, omit the momentum term in $\mL_l^{(t)}$ and $\mR_l^{(t)}$, i.e., set $\beta_3 = 0$. 
Applying compact SVD to the weight matrix $\mG_l^{(t)}=\mU_l^{(t)} \mSigma_l^{(t)} {\mV_l^{(t)}}^\top$, where $\mU_l^{(t)}\in\R^{\nout \times r}$, $\mSigma_l^{(t)}\in\R^{r\times r}$, and $\mV_l^{(t)}\in\R^{\nin \times r}$, we have 
\begin{align*}
    \mL_l^{(t)} = \mG_l^{(t)}{\mG_l^{(t)}}^\top = \mU_l^{(t)} {\mSigma_l^{(t)}}^2 {\mU_l^{(t)}}^\top, \quad \mR_l^{(t)} = {\mG_l^{(t)}}^\top\mG_l^{(t)} = \mV_l^{(t)} {\mSigma_l^{(t)}}^2 {\mV_l^{(t)}}^\top.
\end{align*}

According to Equation~(\ref{eq:LR_eigendecomp_soap}), the eigenvectors corresponding to the non-zero eigenvalues match the singular vectors, i.e., 
\begin{align*}
     {\mQ_{\mL_l}^{(t)}}_{[:,:r]} = \mU_l^{(t)}, \quad {\mQ_{\mR_l}^{(t)}}_{[:,:r]} = \mV_l^{(t)}.
\end{align*}
We can partition the orthogonal matrices as $\mQ_{\mL_l}^{(t)} = [\mU_l^{(t)} \ \mU_{\perp}^{(t)}]$ and $\mQ_{\mR_l}^{(t)} = [\mV_l^{(t)} \ \mV_{\perp}^{(t)}]$. Substituting the eigendecomposition of $\mG_l^{(t)}$, the rotated gradient becomes:
\begin{align*}
    {\mG_l^\prime}^{(t)} &= {\mQ_{\mL_l}^{(t)}}^\top \mG_l^{(t)} \mQ_{\mR_l}^{(t)} \nonumber = \begin{bmatrix} {\mU_l^{(t)}}^\top \\ {\mU_{\perp}^{(t)}}^\top \end{bmatrix}  \mU_l^{(t)} \mSigma_l^{(t)} {\mV_l^{(t)}}^\top  \begin{bmatrix} \mV_l^{(t)} & \mV_{\perp}^{(t)} \end{bmatrix} \nonumber = \begin{bmatrix} \mSigma_l^{(t)} & \mathbf{0} \\ \mathbf{0} & \mathbf{0} \end{bmatrix}.
\end{align*}
Then, we find that SOAP is reduced to:
\begin{align*}
    \mW_l^{(t)} 
    &= \mW_l^{(t-1)} - \eta^{(t)} \left(\mQ_{\mL_l}^{(t)}\mathrm{sign}\left(\begin{bmatrix} \mSigma_l^{(t)} & \mathbf{0} \\ \mathbf{0} & \mathbf{0} \end{bmatrix}\right){\mQ_{\mR_l}^{(t)}}^\top + \lambda_l \mW_l^{(t)}\right)\\
    &= \mW_l^{(t-1)} - \eta^{(t)} \left(\mQ_{\mL_l}^{(t)}\begin{bmatrix} \mI_{r \times r} & \mathbf{0} \\ \mathbf{0} & \mathbf{0} \end{bmatrix}{\mQ_{\mR_l}^{(t)}}^\top + \lambda_l \mW_l^{(t)}\right)\\
    &=\mW_l^{(t-1)} - \eta^{(t)} \left(\mU_l^{(t)}{\mV_l^{(t)}}^\top + \lambda_l \mW_l^{(t)}\right),
\end{align*}
which, again, matches exactly the update rule of Muon in Equation~(\ref{eq:updaterule_muon}).
Therefore, SOAP shares Muon's parameterizations in Table~\ref{tab: muon-wd mup}.

Note that the hidden layer learning rate parameterization derived in~\citet[Table 1]{muon-cp} is $\frac{\nout^{e_L/2}\nin^{e_R/2}}{\nin}$, where the $e_L$ and $e_R$ are the indicators for left- and right-side preconditioners, which equals $1$ for standard SOAP. In this case, $\frac{\nout^{e_L/2}\nin^{e_R/2}}{\nin} = \frac{\sqrt{\nout\nin}}{\nin} = \Theta(1)$, consistent with our result for SOAP in Table~\ref{tab: muon-wd mup}.

\subsection{Spectral Sphere Optimizer (SSO)}
\label{app:sso}

\begin{table}[t]

\renewcommand{\arraystretch}{1.3}
\renewcommand{\hl}[1]{\textcolor{purple}{#1}}
\renewcommand{\ll}[1]{\textcolor{gray}{#1}}
\centering

\caption{\textbf{$\mu$P implementation of Condition~\ref{condition: scale-invariant fl} ($k=2$) for SSO~\citep{xie2026controlled-sso} with weight decay under width-depth scaling.}
Entries in \hl{purple} indicate differences between $\mu$P and SP, while \ll{gray} shows the corresponding SP choices.
Here, $r_n$ and $r_L$ denote the width and depth scaling ratios relative to the base model.
The variance of input weights is $\sigma^2_{\mathrm{base}}$ for language and $\sigma^2_{\mathrm{base}}/d_0$ for image.}
\label{tab: sso mup}
\vskip 0.05in
\begin{tabular}{cccc}
\toprule
 & Input weights & Hidden weights & Output weights \\
\midrule

Block Multiplier
& $\alpha_{\mathrm{base}}$
& \hl{$\alpha_{\mathrm{base}}/r_L$} \ \ll{($\alpha_{\mathrm{base}}$)}
& \hl{$\alpha_{\mathrm{base}}/r_n$} \ \ll{($\alpha_{\mathrm{base}}$)} \\

Initial Variance
& $\sigma^2_{\mathrm{base}}/d_0$ or $\sigma^2_{\mathrm{base}}$
& \hl{$\sigma^2_{\mathrm{base}}/r_n$} \ \ll{($\sigma^2_{\mathrm{base}}$)}
& $\sigma^2_{\mathrm{base}}$ \\

Learning Rate
& $\eta_{\mathrm{base}}$
& $\eta_{\mathrm{base}}$
& \hl{$\eta_{\mathrm{base}}{r_n}$} \ \ll{($\eta_{\mathrm{base}}$)} \\

Weight Decay
& $\lambda_{\mathrm{base}}$
& $\lambda_{\mathrm{base}}$
& \hl{$\lambda_{\mathrm{base}}/{r_n}$} \ \ll{($\lambda_{\mathrm{base}}$)} \\

\bottomrule
\end{tabular}
\end{table}

In this section, we derive the $\mu$P implementation from
Condition~\ref{condition: scale-invariant fl} for SSO.

\subsubsection{Update Rule}
SSO~\citep{xie2026controlled-sso} aims to perform steepest descent on the spectral sphere (see Section 3.1 in the original paper), where the update follows:
\begin{equation*}
\begin{aligned}
    \Delta{\mW}_l = -\eta_l\bigl(\,\underbrace{R\boldsymbol{\Phi}_l}_{\mA_l}  + \lambda_l {\mW}_l\bigl),
\end{aligned}
\end{equation*}
with
\begin{align*}
    R = \Theta\left(\sqrt{\frac{\nout}{\nin}}\right),\quad \text{and} \quad \boldsymbol{\Phi}_l = \arg\max_{\boldsymbol{\Phi}} \langle \nabla_{\mW_l}\mathcal{L}, \boldsymbol{\Phi}\rangle 
    \ \mathrm{s.t.} \ 
    \Vert \boldsymbol{\Phi} \Vert_2 = 1, \ \Vert \mW_l - \eta_l \boldsymbol{\Phi} \Vert_2 = \Vert \mW_l\Vert_2 = R.
\end{align*}

Thus we have
\begin{align}
    \Vert \mA_l \Vert_{\normrms} 
    = \sqrt{\frac{\nin}{\nout}} \Vert \mA_l \Vert_{2} = \sqrt{\frac{\nin}{\nout}}R \Vert \boldsymbol{\Phi}_l \Vert_{2} = \sqrt{\frac{\nin}{\nout}}\Theta\left(\sqrt{\frac{\nout}{\nin}}\right) = 1.
\label{eq:Anorm_sso}
\end{align}

\subsubsection{Derivation of Parameterization}

\paragraph{Input and output layers.} 
When weight decay is disabled ($\lambda_0 = 0$), given the dimension assumption $d_0,d_{L+1}=\Theta(1), n_l=\Theta(n)$ in Equation~(\ref{eq:dimensions}), the multiplier parameterizations $\alpha_0=\Theta(1),\alpha_{L+1}=\Theta(1/\nin)$ in Equation~(\ref{eq:alpha_inandout}), and the scale of $\Vert\mA_l\Vert_\normrms$ in Equation~(\ref{eq:Anorm_sso}), we have
\begin{align*}
    \alpha_l\|\Delta\mW_l\|_{\normrms}
    =\alpha_l\eta_l\|\mA_l\|_{\normrms}
    =\left\{
    \begin{array}{ll}
    \Theta(\eta_0),   & l = 0,\\
    \Theta(\eta_{L+1}/{\nin}),  & l = L+1.
    \end{array} \right.
\end{align*}
As desired in (\ref{eq:condition_wd_1}), to satisfy (\ref{eq:update_inandout}) that $\alpha_0\|\Delta \mW_0\|_{\normrms}, \alpha_{L+1}\|\Delta \mW_{L+1}\|_{\normrms} = \Theta(1)$, we need to set 
\begin{align*}
    \eta_0=\Theta(1), \quad \eta_{L+1}=\Theta({\nin}).
\end{align*}
When $\lambda_l \neq 0$, given Equation~(\ref{eq:w_norm}) that $\|{\mW}_0\|_{\normrms} = \Theta(1)$ and $\|{\mW}_{L+1}\|_{\normrms} = \Theta(\nin)$, we have
\begin{align*}
    \|\lambda_l\mW_l\|_{\normrms} 
    =\left\{
    \begin{array}{ll}
    \Theta(\lambda_0),   & l = 0,\\
    \Theta(\lambda_{L+1}\nin),  & l = L+1.
    \end{array} \right.
\end{align*}
To satisfy (\ref{eq:condition_wd_2}) that $\|\lambda_l \mW_l\|_{\normrms}
    = \Theta\left( \|\mA_l\|_{\normrms} \right)$, we need to set 
\begin{align*}
    \lambda_0=\Theta(1), \quad \lambda_{L+1}=\Theta(1/{\nin}),
\end{align*}

\paragraph{Hidden layers (first-order).} 
When weight decay is disabled ($\lambda_l = 0$), given the dimension assumption $d_0,d_{L+1}=\Theta(1), n_l=\Theta(n)$ in Equation~(\ref{eq:dimensions}), the weight norm $\|{\mW}_l\|_{\normrms} = \Theta(1)$ in Equation~(\ref{eq:w_norm}), the multiplier parameterization $\alpha_l=\Theta(1/L)$ in Equation~(\ref{eq:alpha_l}), and the scale of $\Vert\mA_l\Vert_\normrms$ in Equation~(\ref{eq:Anorm_sso}), we have
\begin{align*}
\alpha_l\|\Delta\mW_l^{(2)}\|_{\normrms}\|\mW_l^{(1)}\|_{\normrms}
    =\Theta(1/L) \cdot \eta_l^{(2)}\|\mA_l^{(2)}\|_{\normrms}\|\mW_l^{(1)}\|_{\normrms}
    =\Theta(\eta_l^{(2)}/L).
\end{align*}
As desired in (\ref{eq:condition_wd_1}), to satisfy the first-order update condition on hidden weights (\ref{eq:update_hidd_1}) that $\alpha_l\|\Delta \mW_l^{(2)}\|_{\normrms}\,\|\mW_l^{(1)}\|_{\normrms}=\Theta(1/L)$, we need to set 
\begin{align*}
    \eta_l^{(2)}=\Theta(1).
\end{align*}
When weight decay is enabled ($\lambda_l \neq 0$), given the weight norm $\|{\mW}_l\|_{\normrms} = \Theta(1)$ in Equation~(\ref{eq:w_norm}), we have
\begin{align*}
    \|\lambda_l^{(2)}\mW_l^{(2)}\|_{\normrms} = \Theta(\lambda_l^{(2)}).
\end{align*}
To satisfy (\ref{eq:condition_wd_2}) that $\|\lambda_l \mW_l\|_{\normrms}
    = \Theta\left( \|\mA_l\|_{\normrms} \right)$ we need to set 
\begin{align*}
    \lambda_l^{(2)}=\Theta(1).
\end{align*}

Symmetrically, we have the same choice for $\mW_l^{(1)}$:
\begin{align*}
    \eta_l^{(1)} = \Theta(1), \quad \lambda_l^{(1)}=\Theta(1).
\end{align*}

\paragraph{Hidden layers (second-order).} 
As illustrated in Section~\ref{sec: Final Initial Condition} or in Appendix~\ref{app:muon} for Muon, the second-order update condition is satisfied automatically once the initial condition and the first-order update condition are met.

This completes the implementation of the update condition for SSO with weight decay, which is summarized in Table~\ref{tab: sso mup}.

\section{Implementing Condition~\ref{condition: one-layer} for Optimizers with Weight Decay}
\label{app: implementation one-layer}

We implement Condition~\ref{condition: one-layer} using the same
width-scaling $\mu$P initialization convention as in
Section~\ref{sec:initial_condition}.
Under this convention, hidden matrix weights satisfy
$\|\mW_l\|_{\normrms}=\Theta(1)$ at initialization, so the hidden initial
condition in Condition~\ref{condition: one-layer} permits
$\alpha_l=\mathcal{O}(1/\sqrt{L})$.
In this section, we focus on the Depth-$\mu$P-style choice
$\alpha_l=\Theta(1/\sqrt{L})$, which corresponds to maximizing the
zero-order feature-update contribution discussed in
Appendix~\ref{app: spec one-layer}.
The input and output layer multipliers remain the same as in Section~\ref{sec:initial_condition}, that $\alpha_0=\Theta(1)$ and
$\alpha_{L+1}=\Theta(1/\nin)$.
We now start update-condition analysis for optimizers with weight decay.

To provide a unified derivation across different optimizers, we begin by expressing their update rules in a general form. 
When weight decay is included, a single update step of the weight matrix can be written as
\begin{align*}
    \Delta \mW_l = -\eta_l \bigl( \mA_l + \lambda_l \mW_l \bigr),
\end{align*}
where $\mA_l$ denotes an optimizer-specific update for $\mW_l$, and $\lambda_l$ is the weight decay coefficient.

The update magnitude $\|\Delta \mW_l\|_{\normrms}
= \eta_l \|\mA_l + \lambda_l \mW_l\|_{\normrms}
$ is required to satisfy the update conditions in Condition~\ref{condition: one-layer}. 
We analyze this requirement under two complementary regimes.

\textbf{Without weight decay.}
When weight decay is disabled ($\lambda_l = 0$), the update reduces to
$\|\Delta \mW_l\|_{\normrms}= \eta_l \|\mA_l\|_{\normrms}.$ In this case, we require:
\begin{align}
    \|\Delta \mW_l\|_{\normrms}
    = \eta_l \|\mA_l\|_{\normrms}
    \ \text{satisfies Condition~\ref{condition: one-layer}}.
    \tag{$\Upsilon1$}
    \label{eq:condition_wd_1_onelayer}
\end{align}

\textbf{With weight decay.}
When weight decay is enabled ($\lambda_l \neq 0$), we choose the weight decay term to be comparable in scale to the optimizer-driven term:
\begin{align}
    \|\lambda_l \mW_l\|_{\normrms}
    = \Theta\left( \|\mA_l\|_{\normrms} \right).
    \tag{$\Upsilon2$}
    \label{eq:condition_wd_2_onelayer}
\end{align}
As discussed in Appendix~\ref{app: implementation opts HPs}, this is a
non-degenerate implementation convention: it keeps weight decay active in the
update dynamics without letting it dominate the optimizer-driven term.
Under the usual scale estimate,
$\|\mA_l+\lambda_l\mW_l\|_{\normrms}
=\Theta(\|\mA_l\|_{\normrms})$, so the learning-rate scaling rule derived from
Equation~(\ref{eq:condition_wd_1_onelayer}) is preserved.

\paragraph{Weights and biases.}
In the following, we derive parameterizations of the learning rate and weight
decay coefficient for a range of optimizers.
As in Appendix~\ref{app: implementation opts HPs}, matrix-based optimizers such
as Muon and Shampoo are typically not applied to bias parameters, so we restrict
the analysis of bias parameterization to vector-based optimizers such as SGD and
AdamW.
The same two rules, Equations~(\ref{eq:condition_wd_1_onelayer})
and~(\ref{eq:condition_wd_2_onelayer}), are then applied to the biases with the
corresponding one-layer specialization of spectral Condition~\ref{condition: bias}.

\paragraph{Momentum.}
We use the same momentum simplification as in
Appendix~\ref{app: implementation opts HPs}: the derivations omit momentum,
while practical implementations take momentum coefficients to be $\Theta(1)$.
This can also be interpreted as analyzing the first update step after
initialization.

\subsection{Overview}
\label{app: implementation one-layer overview}

Table~\ref{tab: optimizer-family-overview-one-layer} summarizes the optimizer
families covered in this section and points to the corresponding detailed
Depth-$\mu$P-style parameterization tables.
All rows use the initialization convention above, with hidden residual
multiplier $\alpha_l=\Theta(1/\sqrt{L})$; the table only organizes the
optimizer-dependent update rules derived below.

\begin{table}[t]
\renewcommand{\arraystretch}{1.15}
\centering
\caption{\textbf{Overview of $\mu$P implementation from Condition~\ref{condition: one-layer} ($k=1$) for optimizer families with weight decay.}
Optimizers in the same row share the same $\mu$P scaling rules.}
\label{tab: optimizer-family-overview-one-layer}
\vskip 0.05in
\begin{tabular}{cc}
\toprule
Optimizer family & Detailed parameterization \\
\midrule
Muon-Kimi & Table~\ref{tab: muon-kimi-wd mup one-layer} \\
Muon / Shampoo / SOAP & Table~\ref{tab: muon-wd mup one-layer} \\
SGD & Table~\ref{tab: sgd-mup one-layer} \\
AdamW / Lion / Sophia & Table~\ref{tab: adamw-mup one-layer} \\
SSO & Table~\ref{tab: sso-mup one-layer} \\
\bottomrule
\end{tabular}
\end{table}

\subsection{Muon-Kimi}
\label{app:muon-kimi onelayer}

\begin{table}[t]

\renewcommand{\arraystretch}{1.3}
\renewcommand{\hl}[1]{\textcolor{purple}{#1}}
\renewcommand{\ll}[1]{\textcolor{gray}{#1}}
\centering

\caption{\textbf{$\mu$P implementation of Condition~\ref{condition: one-layer} ($k=1$) for Muon-Kimi~\citep{muon-kimi} with weight decay under width-depth scaling.}
Entries in \hl{purple} indicate differences between $\mu$P and SP, while \ll{gray} shows the corresponding SP choices.
Here, $r_n$ and $r_L$ denote the width and depth scaling ratios relative to the base model. The variance of input weights is $\sigma^2_{\mathrm{base}}$ for language and $\sigma^2_{\mathrm{base}}/d_0$ for image.}
\label{tab: muon-kimi-wd mup one-layer}
\vskip 0.05in
\begin{tabular}{cccc}
\toprule
   & Input weights & Hidden weights & Output weights \\
\midrule
Block Multiplier
& $\alpha_{\mathrm{base}}$
& \hl{$\alpha_{\mathrm{base}}/\sqrt{r_L}$} \ \ll{($\alpha_{\mathrm{base}}$)}
& \hl{$\alpha_{\mathrm{base}}/r_n$} \ \ll{($\alpha_{\mathrm{base}}$)} \\

Initial Variance
& $\sigma^2_{\mathrm{base}}/d_0$ or $\sigma^2_{\mathrm{base}}$
& \hl{$\sigma^2_{\mathrm{base}}/r_n$} \ \ll{($\sigma^2_{\mathrm{base}}$)}
& $\sigma^2_{\mathrm{base}}$ \\

Learning Rate
& $\eta_{\mathrm{base}}$
& \hl{$\eta_{\mathrm{base}}/\sqrt{r_n r_L}$} \ \ll{($\eta_{\mathrm{base}}$)}
& $\eta_{\mathrm{base}}$ \\

Weight Decay
& $\lambda_{\mathrm{base}}$
& \hl{$\lambda_{\mathrm{base}}\sqrt{r_n}$} \ \ll{($\lambda_{\mathrm{base}}$)}
& $\lambda_{\mathrm{base}}$ \\

\bottomrule
\end{tabular}
\end{table}

In this section, we derive the $\mu$P implementation from
Condition~\ref{condition: one-layer} for Muon-Kimi.

For a weight matrix
${\mW}_l \in \R^{\nout\times\nin}$, the update rule of Muon-Kimi~\citep{muon-kimi} with weight decay is
\begin{equation*}
\Delta{\mW}_l
=
-\,\eta_l \bigl(\,\underbrace{0.2 \sqrt{\max\{\nin,\nout\}}
\,\mU_l \mV_l^\top}_{\mA_l} + \lambda_l {\mW}_l\bigl),
\end{equation*}
where $\mU_l,\mV_l$ arise from the compact SVD of the gradient
$\nabla_{{\mW}_l}\mathcal{L}=\mU_l \mSigma_l \mV_l^\top$.
Using the norm computation in Equation~(\ref{eq:dw_norm}), we have 
\begin{align*}
    \Vert \mA_l \Vert_{\normrms} 
    = \Theta\left(\sqrt{\nin}\max\left\{1,\sqrt{\frac{\nin}{\nout}}\right\}\right)
    =\left\{
    \begin{array}{ll}
    \Theta(1),   & l = 0,\\
    \Theta(\sqrt{\nin}),  & l \in [L], \\
    \Theta({\nin}),  & l = L+1.
    \end{array} \right.
\end{align*}

\subsubsection{Derivation of Parameterization}

\paragraph{Input and output layers.} 
The input and output layer conditions in Condition~\ref{condition: one-layer}
are identical to those in Condition~\ref{condition: scale-invariant fl}.
Therefore, as in Appendix~\ref{app: Muon-Kimi (with Weight Decay)}, their
Muon-Kimi learning-rate and weight-decay parameterizations are
\begin{align*}
    \eta_0 = \Theta\left(1\right),\quad
    \eta_{L+1} = \Theta\left(1\right),\quad
    \lambda_0=\Theta(1), \quad
    \lambda_{L+1}=\Theta(1).
\end{align*}

\paragraph{Hidden layers (first-order).} 
The only change from the $k=2$ Muon-Kimi implementation is the hidden
multiplier: Condition~\ref{condition: one-layer} uses the
Depth-$\mu$P-style choice $\alpha_l=\Theta(1/\sqrt{L})$ rather than
$\Theta(1/L)$.
When $\lambda_l = 0$, given the dimension assumption
$d_0,d_{L+1}=\Theta(1), n_l=\Theta(n)$ in Equation~(\ref{eq:dimensions}) and
the scale of $\Vert\mA_l\Vert_\normrms$ above, we have
\begin{align*}
    \alpha_l\|\Delta\mW_l\|_{\normrms}
    =\Theta(1/\sqrt{L}) \cdot \eta_l\|\mA_l\|_{\normrms}
    =\Theta(\eta_l\sqrt{\nin}/\sqrt{L}).
\end{align*}
As desired in Equation~(\ref{eq:condition_wd_1_onelayer}), to satisfy the
first-order update condition on hidden weights,
$\alpha_l\|\Delta \mW_l\|_{\normrms}=\Theta(1/L)$, we need to set 
\begin{align*}
    \eta_l=\Theta(\frac{1}{\sqrt{\nin L}}).
\end{align*}
When $\lambda_l \neq 0$, the weight-decay parameterization is unchanged from
the $k=2$ Muon-Kimi implementation because it matches the scale of $\mA_l$ and
does not depend on the residual multiplier.
Given the weight norm $\|{\mW}_l\|_{\normrms} = \Theta(1)$ in
Equation~(\ref{eq:w_norm}), we have
\begin{align*}
    \|\lambda_l\mW_l\|_{\normrms} = \Theta(\lambda_l).
\end{align*}
To satisfy Equation~(\ref{eq:condition_wd_2_onelayer}) that
$\|\lambda_l \mW_l\|_{\normrms}
    = \Theta\left( \|\mA_l\|_{\normrms} \right)$, we need to set 
\begin{align*}
    \lambda_l=\Theta(\sqrt{\nin}).
\end{align*}

This completes the implementation of Condition~\ref{condition: one-layer} for
Muon-Kimi with weight decay, as summarized in
Table~\ref{tab: muon-kimi-wd mup one-layer}.

\subsection{Muon, Shampoo and SOAP}
\label{app:muon onelayer}

\begin{table}[t]

\renewcommand{\arraystretch}{1.3}
\renewcommand{\hl}[1]{\textcolor{purple}{#1}}
\renewcommand{\ll}[1]{\textcolor{gray}{#1}}
\centering
\caption{\textbf{$\mu$P implementation of Condition~\ref{condition: one-layer} ($k=1$) for Muon~\citep{jordan6muon}, Shampoo~\citep{gupta2018shampoo}, and SOAP~\citep{vyas2024soap} with weight decay under width-depth scaling.}
Entries in \hl{purple} indicate differences between $\mu$P and SP, while \ll{gray} shows the corresponding SP choices.
Here, $r_n$ and $r_L$ denote the width and depth scaling ratios relative to the base model.
The variance of input weights is $\sigma^2_{\mathrm{base}}$ for language and $\sigma^2_{\mathrm{base}}/d_0$ for image.}
\label{tab: muon-wd mup one-layer}
\vskip 0.05in
\begin{tabular}{cccc}
\toprule
 & Input weights & Hidden weights & Output weights \\
\midrule

Block Multiplier
& $\alpha_{\mathrm{base}}$
& \hl{$\alpha_{\mathrm{base}}/\sqrt{r_L}$} \ \ll{($\alpha_{\mathrm{base}}$)}
& \hl{$\alpha_{\mathrm{base}}/r_n$} \ \ll{($\alpha_{\mathrm{base}}$)} \\

Initial Variance
& $\sigma^2_{\mathrm{base}}/d_0$ or $\sigma^2_{\mathrm{base}}$
& \hl{$\sigma^2_{\mathrm{base}}/r_n$} \ \ll{($\sigma^2_{\mathrm{base}}$)}
& $\sigma^2_{\mathrm{base}}$ \\

Learning Rate
& \hl{$\eta_{\mathrm{base}}\sqrt{r_n}$} \ \ll{($\eta_{\mathrm{base}}$)}
& \hl{$\eta_{\mathrm{base}}/\sqrt{r_L}$} \ \ll{($\eta_{\mathrm{base}}$)}
& \hl{$\eta_{\mathrm{base}}\sqrt{r_n}$} \ \ll{($\eta_{\mathrm{base}}$)} \\

Weight Decay
& \hl{$\lambda_{\mathrm{base}}/\sqrt{r_n}$} \ \ll{($\lambda_{\mathrm{base}}$)}
& $\lambda_{\mathrm{base}}$
& \hl{$\lambda_{\mathrm{base}}/\sqrt{r_n}$} \ \ll{($\lambda_{\mathrm{base}}$)} \\

Shampoo $\varepsilon$
& \hl{$\varepsilon_{\mathrm{base}}/r_n$} \ \ll{($\varepsilon_{\mathrm{base}}$)}
& \hl{$\varepsilon_{\mathrm{base}}/r_L$} \ \ll{($\varepsilon_{\mathrm{base}}$)}
& \hl{$\varepsilon_{\mathrm{base}}/r_n$} \ \ll{($\varepsilon_{\mathrm{base}}$)} \\

\bottomrule
\end{tabular}
\end{table}

In this section, we derive the $\mu$P implementation from
Condition~\ref{condition: one-layer} for Muon, Shampoo, and SOAP.

For a weight matrix ${\mW}_l \in \R^{\nout\times\nin}$, the update rule of
Muon~\citep{jordan6muon} with weight decay is
\begin{equation*}
\Delta{\mW}_l
=
-\,\eta_l \bigl(\,\underbrace{\mU_l \mV_l^\top}_{\mA_l}
+ \lambda_l {\mW}_l\bigl),
\end{equation*}
where $\mU_l,\mV_l$ arise from the compact SVD of the gradient
$\nabla_{{\mW}_l}\mathcal{L}=\mU_l \mSigma_l \mV_l^\top$.
Using the norm computation in Equation~(\ref{eq:Anorm_muon}), we have
\begin{align*} 
\Vert \mA_l \Vert_\normrms 
=
\sqrt{\frac{\nin}{\nout}}
=
\left\{
    \begin{array}{ll}
    \Theta(1/\sqrt{\nout}),   & l = 0,\\
    \Theta(1),   & l \in [L],\\
    \Theta(\sqrt{\nin}),  & l = L+1.
    \end{array} \right.
\end{align*}
As discussed in Appendices~\ref{app:shampoo} and~\ref{app:soap}, Shampoo and
SOAP reduce to the same Muon update direction under the simplifications used in
this section, so they share the same parameterization.

\subsubsection{Derivation of Parameterization}

\paragraph{Input and output layers.} 
The input and output layer conditions in Condition~\ref{condition: one-layer}
are identical to those in Condition~\ref{condition: scale-invariant fl}.
Therefore, as in Appendix~\ref{app:muon}, their Muon learning-rate and
weight-decay parameterizations are
\begin{align*}
    \eta_0=\Theta(\sqrt{\nout}),\quad
    \eta_{L+1}=\Theta(\sqrt{\nin}),\quad
    \lambda_0=\Theta(1/\sqrt{\nout}), \quad
    \lambda_{L+1}=\Theta(1/\sqrt{\nin}).
\end{align*}

\paragraph{Hidden layers (first-order).} 
The only change from the $k=2$ Muon implementation is the hidden multiplier:
Condition~\ref{condition: one-layer} uses the Depth-$\mu$P-style choice
$\alpha_l=\Theta(1/\sqrt{L})$ rather than $\Theta(1/L)$.
When $\lambda_l = 0$, given the dimension assumption
$d_0,d_{L+1}=\Theta(1), n_l=\Theta(n)$ in Equation~(\ref{eq:dimensions}) and
the scale of $\Vert\mA_l\Vert_\normrms$ above, we have
\begin{align*}
    \alpha_l\|\Delta\mW_l\|_{\normrms}
    =\Theta(1/\sqrt{L}) \cdot \eta_l\|\mA_l\|_{\normrms}
    =\Theta(\eta_l/\sqrt{L}).
\end{align*}
As desired in Equation~(\ref{eq:condition_wd_1_onelayer}), to satisfy the
first-order update condition on hidden weights,
$\alpha_l\|\Delta \mW_l\|_{\normrms}=\Theta(1/L)$, we need to set 
\begin{align*}
    \eta_l=\Theta(1/\sqrt{L}).
\end{align*}
When $\lambda_l \neq 0$, the weight-decay parameterization is unchanged from
the $k=2$ Muon implementation because it matches the scale of $\mA_l$ and does
not depend on the residual multiplier.
Given the weight norm $\|{\mW}_l\|_{\normrms} = \Theta(1)$ in
Equation~(\ref{eq:w_norm}), we have
\begin{align*}
    \|\lambda_l\mW_l\|_{\normrms} = \Theta(\lambda_l).
\end{align*}
To satisfy Equation~(\ref{eq:condition_wd_2_onelayer}) that
$\|\lambda_l \mW_l\|_{\normrms}
    = \Theta\left( \|\mA_l\|_{\normrms} \right)$, we need to set 
\begin{align*}
    \lambda_l=\Theta(1).
\end{align*}

\paragraph{Parameterization of Shampoo $\varepsilon_l$.}
As in Appendix~\ref{app:shampoo}, Shampoo's stabilization term is added to
the left and right preconditioners
$\widehat{\mL}_l^{(t)}$ and $\widehat{\mR}_l^{(t)}$, so it should match the
scale of these preconditioners rather than the elementwise gradient scale.
When omitting the momentum, this gives
\begin{align*}
\|\widehat{\mL}_l^{(t)}\|_2 = \|\widehat{\mR}_l^{(t)}\|_2 =
\|\mG_l^{(t)}\|^2_2
=\|\nabla_{\mW_l^{(t)}}\mathcal{L}\|^2_2.
\end{align*}
The input and output gradient estimates are unchanged from
Appendix~\ref{app:shampoo}. For hidden weights, the one-layer condition uses
$\alpha_l=\Theta(1/\sqrt{L})$, and the raw-gradient estimate in
Equation~(\ref{eqn: sgd gradient matrix one-layer}) gives
$\|\nabla_{\mW_l}\mathcal{L}\|_2
=\Theta(\sqrt{\nin/(\nout L)})$.
Therefore,
\begin{align*}
\|\widehat{\mL}_l^{(t)}\|_2 = \|\widehat{\mR}_l^{(t)}\|_2
=
\left\{
    \begin{array}{ll}
    \Theta({\frac{\nin}{\nout}}) = \Theta({\frac{1}{\nout}}),   & l = 0,\\
    \Theta(\frac{1}{L}{\frac{\nin}{\nout}}) = \Theta(\frac{1}{L}),   & l \in [L],\\
    \Theta({\frac{1}{\nin\nout}}) = \Theta({\frac{1}{\nin}}),  & l = L+1.
    \end{array} \right.
\end{align*}
Thus, the stabilization term should be parameterized as
\begin{align*}
\varepsilon_l =
\left\{
    \begin{array}{ll}
    \Theta({\frac{1}{\nout}}),   & l = 0,\\
    \Theta(\frac{1}{L}),   & l \in [L],\\
    \Theta({\frac{1}{\nin}}),  & l = L+1.
    \end{array} \right.
\end{align*}
This corresponds to the Shampoo $\varepsilon$ row in
Table~\ref{tab: muon-wd mup one-layer}.

This completes the implementation of Condition~\ref{condition: one-layer} for
Muon, Shampoo, and SOAP with weight decay, as summarized in
Table~\ref{tab: muon-wd mup one-layer}.

\subsection{SGD}
\label{app:sgd dmup}

\begin{table}[t]
\renewcommand{\arraystretch}{1.3}
\setlength{\tabcolsep}{2pt}
\renewcommand{\hl}[1]{\textcolor{purple}{#1}}
\renewcommand{\ll}[1]{\textcolor{gray}{#1}}
\centering
\caption{\textbf{$\mu$P implementation of Condition~\ref{condition: one-layer} ($k=1$) for SGD with weight decay under width–depth scaling.}
Entries in \hl{purple} indicate differences between $\mu$P and SP, while \ll{gray} shows the corresponding SP choices.
Here, $r_n$ and $r_L$ denote the width and depth scaling ratios relative to the base model.
The variance of input weights is $\sigma^2_{\mathrm{base}}$ for language and $\sigma^2_{\mathrm{base}}/d_0$ for image. The initial variance of input bias is $\sigma^2_{\mathrm{base}}$.}
\label{tab: sgd-mup one-layer}
\vskip 0.05in
\begin{tabular}{ccccc}
\toprule
 & Input weights \& biases & Hidden weights & Output weights & Hidden biases\\
\midrule

Block Multiplier
& $\alpha_{\mathrm{base}}$
& \hl{$\alpha_{\mathrm{base}}/\sqrt{r_L}$} \ \ll{($\alpha_{\mathrm{base}}$)}
& \hl{$\alpha_{\mathrm{base}}/r_n$} \ \ll{($\alpha_{\mathrm{base}}$)} 
& \hl{$\alpha_{\mathrm{base}}/\sqrt{r_L}$} \ \ll{($\alpha_{\mathrm{base}}$)} \\

Initial Variance
& $\sigma^2_{\mathrm{base}}/d_0$ or $\sigma^2_{\mathrm{base}}$
& \hl{$\sigma^2_{\mathrm{base}}/r_n$} \ \ll{($\sigma^2_{\mathrm{base}}$)}
& $\sigma^2_{\mathrm{base}}$
&$\sigma^2_{\mathrm{base}}$ \\

Learning Rate
& \hl{$\eta_{\mathrm{base}}{r_n}$} \ \ll{($\eta_{\mathrm{base}}$)}
& {$\eta_{\mathrm{base}}$}
& \hl{$\eta_{\mathrm{base}}{r_n}$} \ \ll{($\eta_{\mathrm{base}}$)} &
\hl{$\eta_{\mathrm{base}} r_n$} \ \ll{($\eta_{\mathrm{base}}$)} \\

Weight Decay
& \hl{$\lambda_{\mathrm{base}}/{r_n}$} \ \ll{($\lambda_{\mathrm{base}}$)}
& \hl{$\lambda_{\mathrm{base}}/\sqrt{r_L}$} \ \ll{($\lambda_{\mathrm{base}}$)}
& \hl{$\lambda_{\mathrm{base}}/{r_n}$} \ \ll{($\lambda_{\mathrm{base}}$)}
& \hl{$\lambda_{\mathrm{base}}/(r_n\sqrt{r_L})$} \ \ll{($\lambda_{\mathrm{base}}$)} \\

\bottomrule
\end{tabular}
\end{table}

In this section, we derive the $\mu$P implementation from
Condition~\ref{condition: one-layer} for SGD,
which recovers and extends the $\mu$P scaling rules studied in
\citet{TP-6,DBLP:conf/iclr/BordelonNLHP24-dmft-depth}.

For a weight matrix $\mW_l\in\R^{\nout\times\nin}$, the SGD update rule with weight decay can be written as:
\begin{equation*}
\Delta\mW_l
=
-\,\eta_l\bigl(\,\underbrace{\nabla_{\mW_l}\mathcal{L}}_{\mA_l}
+\lambda_l \mW_l\bigl).
\end{equation*}
For SGD, $\mA_l$ is the raw gradient, so the scale of $\mA_l$ depends on the
residual multiplier used in the spectral update condition.
The input and output layer estimates are the same as in
Appendix~\ref{app:sgd_mup}:
$\Vert\mA_0\Vert_\normrms = \Theta(\nin/\nout) = \Theta(1/\nout)$ and
$\Vert\mA_{L+1}\Vert_\normrms = \Theta(1/\nout) = \Theta(1)$.
The hidden-layer estimate changes because Condition~\ref{condition: one-layer}
uses $\alpha_l=\Theta(1/\sqrt{L})$.

For a hidden weight $\mW_l$, we have
\begin{align*}
\Theta(\frac{1}{L})=
\Delta_{\mW_l}\mathcal{L} = \Theta(\langle\Delta\mW_l, \nabla_{\mW_l}\mathcal{L}\rangle) = \Theta(\Vert\Delta\mW_l\Vert_\mathrm{F} \Vert\nabla_{\mW_l}\mathcal{L}\Vert_\mathrm{F})=
\Theta(\Vert\Delta\mW_l\Vert_2 \Vert\nabla_{\mW_l}\mathcal{L}\Vert_2).
\end{align*}
Since we set
$\alpha_l \| \Delta\mW_l\|_{\normrms} = \Theta(1/L)$ to satisfy the hidden
update condition in Condition~\ref{condition: one-layer}, and use
$\alpha_l=\Theta(1/\sqrt{L})$, we have
$\|\Delta\mW_l\|_{\normrms}=\Theta(1/\sqrt{L})$ and thus
$\|\Delta\mW_l\|_{2}=\Theta(\sqrt{\nout/(\nin L)})$.
Therefore,
$\Vert\nabla_{\mW_l}\mathcal{L}\Vert_2
= \Theta(\sqrt{{\nin}/{(\nout L)}})$, which leads to
\begin{equation*}
\|\mA_l\|_{\normrms}
=
\Vert\nabla_{\mW_l}\mathcal{L}\Vert_\normrms = \sqrt{\frac{\nin}{\nout}} \Vert\nabla_{\mW_l}\mathcal{L}\Vert_2 = \Theta(\frac{\nin}{\nout\sqrt{L}}) = \Theta(\frac{1}{\sqrt{L}}).
\end{equation*}

To sum up, we have
\begin{align}
\|\mA_l\|_{\normrms}
=
\Vert\nabla_{\mW_l}\mathcal{L}\Vert_\normrms
=
\begin{cases}
\Theta(1/{\nout}), & l=0,\\
\Theta(1/\sqrt{L}),             & l\in[\,L\,],\\
\Theta(1),   & l=L+1.
\end{cases}
\label{eqn: sgd gradient matrix one-layer}
\end{align}

\subsubsection{Derivation of Parameterization}

\paragraph{Input and output layers.}
The input and output layer conditions in Condition~\ref{condition: one-layer}
are identical to those in Condition~\ref{condition: scale-invariant fl}.
Therefore, as in Appendix~\ref{app:sgd_mup}, their SGD learning-rate and
weight-decay parameterizations are
\begin{align*}
\eta_0 = \Theta({\nout}),\quad
\eta_{L+1} = \Theta({\nin}),\quad
\lambda_0 = \Theta(1/{\nout}), \quad
\lambda_{L+1} = \Theta(1/{\nin}).
\end{align*}

\paragraph{Hidden layers (first-order).}
Unlike normalized or preconditioned optimizers, SGD uses the raw gradient, so
changing the hidden multiplier also changes the hidden raw-gradient scale.
When $\lambda_l=0$, using
$\alpha_l=\Theta(1/\sqrt{L})$ and
$\|\mA_l\|_{\normrms}=\Theta(1/\sqrt{L})$, we obtain
\begin{align*}
\alpha_l \|\Delta\mW_l\|_{\normrms}
=
\Theta(1/\sqrt{L}) \cdot \eta_l \|\mA_l\|_{\normrms}
=
\Theta(\eta_l/L).
\end{align*}
Enforcing the hidden update condition in Condition~\ref{condition: one-layer} that
$\alpha_l\|\Delta\mW_l\|_{\normrms}=\Theta(1/L)$ gives
\begin{align*}
\eta_l = \Theta(1).
\end{align*}

If weight decay is enabled on hidden matrices, using
$\|\mW_l\|_{\normrms}=\Theta(1)$ by Equation~(\ref{eq:w_norm}), we obtain
$\|\lambda_l \mW_l\|_{\normrms}=\Theta(\lambda_l)$,
so Equation~(\ref{eq:condition_wd_2_onelayer}) implies
$\|\lambda_l \mW_l\|_{\normrms}
=\Theta(\|\mA_l\|_{\normrms})=\Theta(1/\sqrt{L})$ and therefore
\begin{align*}
\lambda_l =\Theta(1/\sqrt{L}).
\end{align*}

\paragraph{Biases.} 
As in Appendix~\ref{app:sgd_mup}, the raw-gradient scale of input biases is
$\Vert\nabla_{\vb_0}\mathcal{L}\Vert_\normrms = \Theta(1/\nout)$.
For hidden biases $\vb_l \in \R^{\nout \times 1}$, we similarly have
\begin{align*}
\Theta(1/L)=
\Delta_{\vb_l}\mathcal{L} = \Theta(\langle\Delta\vb_l, \nabla_{\vb_l}\mathcal{L}\rangle) = \Theta(\Vert\Delta\vb_l\Vert_2 \Vert\nabla_{\vb_l}\mathcal{L}\Vert_2),
\end{align*}
Since we set
$\alpha_l\|\Delta \vb_l\|_{\normrms}
= \Theta(1/\sqrt{L}) \cdot \|\Delta \vb_l\|_{\normrms}
=\Theta(1/L)$ to satisfy the one-layer bias update condition in Condition~\ref{condition: bias}, we have
$\|\Delta\vb_l\|_\normrms=\Theta(1/\sqrt{L})$ and thus
$\|\Delta\vb_l\|_{2}=\Theta(\sqrt{\nout/L})$.
Therefore,
$\Vert\nabla_{\vb_l}\mathcal{L}\Vert_2
= \Theta(1/\sqrt{\nout L})$, which leads to
\begin{equation*}
\Vert\nabla_{\vb_l}\mathcal{L}\Vert_\normrms = \sqrt{\frac{1}{\nout}} \Vert\nabla_{\vb_l}\mathcal{L}\Vert_2 = \Theta(\frac{1}{\nout\sqrt{L}}).
\end{equation*}

To sum up, we can estimate the scale of $\Vert\nabla_{\vb_l}\mathcal{L}\Vert_\normrms$ as
\begin{align*}
\Vert\nabla_{\vb_l}\mathcal{L}\Vert_\normrms
=
\begin{cases}
\Theta(1/{\nout}), & l=0,\\
\Theta\left(1/(\nout\sqrt{L})\right),       & l\in[L].
\end{cases}
\end{align*}
Requiring
$\alpha_0 \|\Delta\vb_0\|_\normrms
= \alpha_0 \eta_{\vb_0} \Vert\nabla_{\vb_0}\mathcal{L}\Vert_\normrms
= \Theta(1)$ and
$\alpha_l \|\Delta\vb_l\|_\normrms
= \alpha_l \eta_{\vb_l} \Vert\nabla_{\vb_l}\mathcal{L}\Vert_\normrms
= \Theta(1/L)$ for $l \in [L]$ gives
\begin{align*}
    \eta_{\vb_l}
    =
    \Theta(\nout), \quad l \leq L.
\end{align*}
For the weight decays, we match
$\lambda_{\vb_l} \|\vb_l\|_\normrms$ to
$\Vert\nabla_{\vb_l}\mathcal{L}\Vert_\normrms$.
Given $\|\vb_l\|_\normrms = \Theta(1)$ by the initialization implementation in
Condition~\ref{condition: bias}, we have
\begin{align*}
    \lambda_{\vb_l}=
    \begin{cases}
    \Theta(1/\nout), & l=0,\\
    \Theta\left(1/(\nout\sqrt{L})\right),       & l\in[L].
    \end{cases}
\end{align*}

This completes the implementation of Condition~\ref{condition: one-layer} for
SGD with weight decay, as summarized in Table~\ref{tab: sgd-mup one-layer}.

\subsection{AdamW, Sophia and Lion}
\label{app:admw one-layer}

\begin{table}[t]
\renewcommand{\arraystretch}{1.3}
\setlength{\tabcolsep}{2pt} 
\renewcommand{\hl}[1]{\textcolor{purple}{#1}}
\renewcommand{\ll}[1]{\textcolor{gray}{#1}}
\centering
\caption{\textbf{$\mu$P implementation of Condition~\ref{condition: one-layer} ($k=1$) for AdamW~\citep{adamw}, Lion~\citep{DBLP:conf/nips/ChenLHRW0DLHLL23-lion}, and Sophia~\citep{DBLP:conf/iclr/Liu0HL024-sophia} with weight decay under width–depth scaling.}
Entries in \hl{purple} indicate differences between $\mu$P and SP, while \ll{gray} shows the corresponding SP choices.
Here, $r_n$ and $r_L$ denote the width and depth scaling ratios relative to the base model.
The variance of input weights is $\sigma^2_{\mathrm{base}}$ for language and $\sigma^2_{\mathrm{base}}/d_0$ for image. The initial variance of input bias is $\sigma^2_{\mathrm{base}}$.}
\label{tab: adamw-mup one-layer}
\vskip 0.05in
\begin{tabular}{ccccc}
\toprule
 & Input weights \& biases & Hidden weights & Output weights & Hidden biases\\
\midrule

Block Multiplier
& $\alpha_{\mathrm{base}}$
& \hl{$\alpha_{\mathrm{base}}/\sqrt{r_L}$} \ \ll{($\alpha_{\mathrm{base}}$)}
& \hl{$\alpha_{\mathrm{base}}/r_n$} \ \ll{($\alpha_{\mathrm{base}}$)} 
&\hl{$\alpha_{\mathrm{base}}/\sqrt{r_L}$} \ \ll{($\alpha_{\mathrm{base}}$)} \\

Initial Variance
& $\sigma^2_{\mathrm{base}}/d_0$ or $\sigma^2_{\mathrm{base}}$
& \hl{$\sigma^2_{\mathrm{base}}/r_n$} \ \ll{($\sigma^2_{\mathrm{base}}$)}
& $\sigma^2_{\mathrm{base}}$
&$\sigma^2_{\mathrm{base}}$ \\

Learning Rate
& $\eta_{\mathrm{base}}$
& \hl{$\eta_{\mathrm{base}}/(r_n\sqrt{r_L})$} \ \ll{($\eta_{\mathrm{base}}$)}
& $\eta_{\mathrm{base}}$
& \hl{$\eta_{\mathrm{base}}/\sqrt{r_L}$} \ \ll{($\eta_{\mathrm{base}}$)} \\

Weight Decay
& $\lambda_{\mathrm{base}}$
& \hl{$\lambda_{\mathrm{base}}r_n$} \ \ll{($\lambda_{\mathrm{base}}$)}
& $\lambda_{\mathrm{base}}$
& $\lambda_{\mathrm{base}}$ \\

AdamW $\varepsilon$
& \hl{$\varepsilon_{\mathrm{base}}/r_n$} \ \ll{($\varepsilon_{\mathrm{base}}$)}
& \hl{$\varepsilon_{\mathrm{base}}/(r_n\sqrt{r_L})$} \ \ll{($\varepsilon_{\mathrm{base}}$)}
& \hl{$\varepsilon_{\mathrm{base}}/r_n$} \ \ll{($\varepsilon_{\mathrm{base}}$)}
& \hl{$\varepsilon_{\mathrm{base}}/(r_n\sqrt{r_L})$} \ \ll{($\varepsilon_{\mathrm{base}}$)}\\
\bottomrule
\end{tabular}
\end{table}

In this section, we derive the $\mu$P implementation from
Condition~\ref{condition: one-layer} for AdamW (same for Sophia and Lion),
which recovers and extends the $\mu$P scaling rules studied in~\citet{TP-6}.

As in Appendix~\ref{app:admw}, we reduce AdamW to sign gradient descent by setting $\beta_1 = 0$,
$\beta_2 = 0$, and $\varepsilon_l = 0$:
\begin{align*}
    \Delta{\mW}_l
    =-\eta_l \bigl(\,\underbrace{\mathrm{sign}\left(\nabla_{\mW_l}\mathcal{L}\right)}_{\mA_l} 
    + \lambda_l {\mW}_l\bigl).
\end{align*}
Using the norm estimate in Equation~(\ref{eq:Anorm_adam}), we have
\begin{align*} 
\Vert \mA_l \Vert_\normrms 
= \left\Vert \mathrm{sign}\left(\nabla_{\mW_l}\mathcal{L}\right) \right\Vert_\normrms 
= \Theta(\nin)
=\left\{
    \begin{array}{ll}
    \Theta(1),   & l = 0,\\
    \Theta(\nin),   & l \in [L],\\
    \Theta(\nin),  & l = L+1.
    \end{array} \right.
\end{align*}
As discussed in Appendices~\ref{app:sophia} and~\ref{app:lion}, Sophia and
Lion share the same parameterization as AdamW under the simplifications used in
this section.

\subsubsection{Derivation of Parameterization}

\paragraph{Input and output layers.} 
The input and output layer conditions in Condition~\ref{condition: one-layer}
are identical to those in Condition~\ref{condition: scale-invariant fl}.
Therefore, as in Appendix~\ref{app:admw}, their AdamW learning-rate and
weight-decay parameterizations are
\begin{align*}
    \eta_0=\Theta(1), \quad
    \eta_{L+1}=\Theta(1),\quad
    \lambda_0=\Theta(1), \quad \lambda_{L+1}=\Theta(1).
\end{align*}

\paragraph{Hidden layers (first-order).} 
Unlike SGD, the sign-style update direction has norm determined by dimension
rather than by the raw-gradient magnitude.
Thus, the only change from the $k=2$ AdamW implementation is the hidden
multiplier: Condition~\ref{condition: one-layer} uses the
Depth-$\mu$P-style choice $\alpha_l=\Theta(1/\sqrt{L})$ rather than
$\Theta(1/L)$.
When $\lambda_l = 0$, given the scale of $\Vert\mA_l\Vert_\normrms$ above, we have
\begin{align*}
    \alpha_l\|\Delta\mW_l\|_{\normrms}
    =\Theta(1/\sqrt{L}) \cdot \eta_l\|\mA_l\|_{\normrms}
    =\Theta(\eta_l\nin/\sqrt{L}).
\end{align*}
As desired in Equation~(\ref{eq:condition_wd_1_onelayer}), to satisfy the
first-order update condition on hidden weights in
Condition~\ref{condition: one-layer},
$\alpha_l\|\Delta \mW_l\|_{\normrms}=\Theta(1/L)$, we need to set 
\begin{align*}
    \eta_l=\Theta(\frac{1}{\nin \sqrt{L}}).
\end{align*}
When $\lambda_l \neq 0$, the weight-decay parameterization is unchanged from
the $k=2$ AdamW implementation because it matches the scale of $\mA_l$ and
does not depend on the residual multiplier.
Given the weight norm $\|{\mW}_l\|_{\normrms} = \Theta(1)$ in
Equation~(\ref{eq:w_norm}), we have
\begin{align*}
    \|\lambda_l\mW_l\|_{\normrms} = \Theta(\lambda_l).
\end{align*}
To satisfy Equation~(\ref{eq:condition_wd_2_onelayer}) that
$\|\lambda_l \mW_l\|_{\normrms}
    = \Theta\left( \|\mA_l\|_{\normrms} \right)$, we need to set 
\begin{align*}
    \lambda_l=\Theta(\nin).
\end{align*}

\paragraph{Biases.}
For bias parameters $\vb_l \in \R^{\nout\times 1}$, by the definition we have
\begin{align*}
\|\mathrm{sign}\left(\nabla_{\vb_l}\mathcal{L}\right)\|_\normrms = \Theta(1),  \quad 0 \leq l \leq L.
\end{align*}
To satisfy the one-layer bias update condition,
$\alpha_0 \|\Delta \vb_0\|_{\normrms} = \Theta(\eta_{\vb_0}) = \Theta(1)$ and
$\alpha_l \|\Delta \vb_l\|_{\normrms}
= \Theta(1/\sqrt{L} \cdot \eta_{\vb_l}) = \Theta(1/L)$ for $l \in [L]$, we need to set
\begin{align*}
    \eta_{\vb_0}=\Theta(1), \quad \eta_{\vb_l}=\Theta(1/\sqrt{L}), \quad l \in [L].
\end{align*}
For the weight decays, we match
$\lambda_{\vb_l} \|\vb_l\|_\normrms$ to
$\|\mathrm{sign}\left(\nabla_{\vb_l}\mathcal{L}\right)\|_\normrms=\Theta(1)$.
Given $\|\vb_l\|_\normrms = \Theta(1)$ by the initialization implementation in
Condition~\ref{condition: bias}, we have
\begin{align*}
    \lambda_{\vb_l}=\Theta(1), \quad 0 \leq l \leq L.
\end{align*}

\paragraph{Parameterization of $\varepsilon_l$.}

To make the stabilization term $\varepsilon_l$ effective and not dominate the gradient, we desire it to be of the same scale as $\sqrt{\hat{\vv}_l^{(t)}}$. 
When omitting the momentum, we have $\sqrt{\hat{\vv}_l} = \nabla_{\mW_l}\mathcal{L}$.
Therefore, we need to ensure
$\varepsilon_l = \Theta(\|\nabla_{\mathrm{Vec}(\mW_l)}\mathcal{L}\|_\normrms)$.
The latter can be estimated from the raw-gradient derivation for
$\Vert\nabla_{\mW_l}\mathcal{L}\Vert_\normrms$ in
Equation~(\ref{eqn: sgd gradient matrix one-layer}) of Appendix~\ref{app:sgd dmup}. 

For the input weights $\mW_0$, we have
\begin{align*}
\|\nabla_{\mathrm{Vec}(\mW_0)}\mathcal{L}\|_\normrms 
= \frac{1}{\sqrt{\nin\nout}}\|\nabla_{\mW_0}\mathcal{L}\|_\mathrm{F}
= \Theta\left(\frac{1}{\sqrt{\nin\nout}} \|\nabla_{\mW_0}\mathcal{L}\|_\mathrm{2}\right)
= \Theta\left(\frac{1}{\sqrt{\nin\nout}} \sqrt{\frac{\nin}{\nout}}\right)
= \Theta(\frac{1}{\nout}).
\end{align*}
Therefore, we set 
\[
\varepsilon_0 = \Theta(\frac{1}{\nout}).
\]

For the hidden weights $\mW_l$ where $l\in[L]$, we have
\begin{align*}
\|\nabla_{\mathrm{Vec}(\mW_l)}\mathcal{L}\|_\normrms 
= \Theta\left(\frac{1}{\sqrt{\nin\nout}} \|\nabla_{\mW_l}\mathcal{L}\|_\mathrm{2}\right)
= \Theta\left(\frac{1}{\sqrt{\nin\nout}} \sqrt{\frac{\nin}{\nout L}}\right)
= \Theta(\frac{1}{\nout\sqrt{L}}).
\end{align*}
Therefore, we set 
\[
\varepsilon_l = \Theta(\frac{1}{\nout\sqrt{L}}), \quad l \in [L].
\]

For the output weights $\mW_{L+1}$, we have
\begin{align*}
\|\nabla_{\mathrm{Vec}(\mW_{L+1})}\mathcal{L}\|_\normrms 
= \Theta\left(\frac{1}{\sqrt{\nin\nout}} \|\nabla_{\mW_{L+1}}\mathcal{L}\|_\mathrm{2}\right)
= \Theta\left(\frac{1}{\sqrt{\nin\nout}}  \sqrt{\frac{1}{\nin\nout}}\right)
= \Theta(\frac{1}{\nin}).
\end{align*}
Therefore, we set 
\[
\varepsilon_{L+1} = \Theta(\frac{1}{\nin}).
\]

Similarly, for the biases we have derived in Appendix~\ref{app:sgd dmup} that
\begin{align*}
\Vert\nabla_{\vb_l}\mathcal{L}\Vert_\normrms
=
\begin{cases}
\Theta(1/{\nout}), & l=0,\\
\Theta(1/(\nout \sqrt{L})),       & l\in[L].
\end{cases}
\end{align*}
Therefore, we set the stabilization term as
\begin{align*}
\varepsilon_{\vb_l}
=
\begin{cases}
\Theta(1/{\nout}), & l=0,\\
\Theta(1/(\nout\sqrt{L})),       & l\in[L].
\end{cases}
\end{align*}

This completes the implementation of Condition~\ref{condition: one-layer} for
AdamW, Lion, and Sophia with weight decay, as summarized in
Table~\ref{tab: adamw-mup one-layer}.

\subsection{Spectral Sphere Optimizer (SSO)}
\label{app:sso one-layer}

\begin{table}[t]

\renewcommand{\arraystretch}{1.3}
\renewcommand{\hl}[1]{\textcolor{purple}{#1}}
\renewcommand{\ll}[1]{\textcolor{gray}{#1}}
\centering

\caption{\textbf{$\mu$P implementation of Condition~\ref{condition: one-layer} ($k=1$) for SSO~\citep{xie2026controlled-sso} with weight decay under width-depth scaling.}
Entries in \hl{purple} indicate differences between $\mu$P and SP, while \ll{gray} shows the corresponding SP choices.
Here, $r_n$ and $r_L$ denote the width and depth scaling ratios relative to the base model.
The variance of input weights is $\sigma^2_{\mathrm{base}}$ for language and $\sigma^2_{\mathrm{base}}/d_0$ for image.}
\label{tab: sso-mup one-layer}
\vskip 0.05in
\begin{tabular}{cccc}
\toprule
 & Input weights & Hidden weights & Output weights \\
\midrule

Block Multiplier
& $\alpha_{\mathrm{base}}$
& \hl{$\alpha_{\mathrm{base}}/\sqrt{r_L}$} \ \ll{($\alpha_{\mathrm{base}}$)}
& \hl{$\alpha_{\mathrm{base}}/r_n$} \ \ll{($\alpha_{\mathrm{base}}$)} \\

Initial Variance
& $\sigma^2_{\mathrm{base}}/d_0$ or $\sigma^2_{\mathrm{base}}$
& \hl{$\sigma^2_{\mathrm{base}}/r_n$} \ \ll{($\sigma^2_{\mathrm{base}}$)}
& $\sigma^2_{\mathrm{base}}$ \\

Learning Rate
& $\eta_{\mathrm{base}}$
& \hl{$\eta_{\mathrm{base}}/\sqrt{r_L}$} \ \ll{($\eta_{\mathrm{base}}$)}
& \hl{$\eta_{\mathrm{base}}{r_n}$} \ \ll{($\eta_{\mathrm{base}}$)} \\

Weight Decay
& $\lambda_{\mathrm{base}}$
& $\lambda_{\mathrm{base}}$
& \hl{$\lambda_{\mathrm{base}}/{r_n}$} \ \ll{($\lambda_{\mathrm{base}}$)} \\

\bottomrule
\end{tabular}
\end{table}

In this section, we derive the $\mu$P implementation from
Condition~\ref{condition: one-layer} for SSO.

As in Appendix~\ref{app:sso}, SSO~\citep{xie2026controlled-sso} uses the update
\begin{equation*}
\begin{aligned}
    \Delta{\mW}_l = -\eta_l\bigl(\,\underbrace{R\boldsymbol{\Phi}_l}_{\mA_l}  + \lambda_l {\mW}_l\bigl),
\end{aligned}
\end{equation*}
where
\begin{align*}
    R = \Theta\left(\sqrt{\frac{\nout}{\nin}}\right),\quad \text{and} \quad \boldsymbol{\Phi}_l = \arg\max_{\boldsymbol{\Phi}} \langle \nabla_{\mW_l}\mathcal{L}, \boldsymbol{\Phi}\rangle 
    \ \mathrm{s.t.} \ 
    \Vert \boldsymbol{\Phi} \Vert_2 = 1, \ \Vert \mW_l - \eta_l \boldsymbol{\Phi} \Vert_2 = \Vert \mW_l\Vert_2 = R.
\end{align*}
Using the norm computation in Equation~(\ref{eq:Anorm_sso}), we have
\begin{align*}
    \Vert \mA_l \Vert_{\normrms} = \Theta(1).
\end{align*}

\subsubsection{Derivation of Parameterization}

\paragraph{Input and output layers.} 
The input and output layer conditions in Condition~\ref{condition: one-layer}
are identical to those in Condition~\ref{condition: scale-invariant fl}.
Therefore, as in Appendix~\ref{app:sso}, their SSO learning-rate and
weight-decay parameterizations are
\begin{align*}
    \eta_0=\Theta(1), \quad
    \eta_{L+1}=\Theta({\nin}),\quad
    \lambda_0=\Theta(1), \quad
    \lambda_{L+1}=\Theta(1/{\nin}).
\end{align*}

\paragraph{Hidden layers (first-order).} 
The only change from the $k=2$ SSO implementation is the hidden multiplier:
Condition~\ref{condition: one-layer} uses the Depth-$\mu$P-style choice
$\alpha_l=\Theta(1/\sqrt{L})$ rather than $\Theta(1/L)$.
When $\lambda_l = 0$, given the scale of $\Vert\mA_l\Vert_\normrms$ above, we have
\begin{align*}
    \alpha_l\|\Delta\mW_l\|_{\normrms}
    =\Theta(1/\sqrt{L}) \cdot \eta_l\|\mA_l\|_{\normrms}
    =\Theta(\eta_l/\sqrt{L}).
\end{align*}
As desired in Equation~(\ref{eq:condition_wd_1_onelayer}), to satisfy the
first-order update condition on hidden weights in
Condition~\ref{condition: one-layer},
$\alpha_l\|\Delta \mW_l\|_{\normrms}=\Theta(1/L)$, we need to set 
\begin{align*}
    \eta_l=\Theta(1/\sqrt{L}).
\end{align*}
When $\lambda_l \neq 0$, the weight-decay parameterization is unchanged from
the $k=2$ SSO implementation because it matches the scale of $\mA_l$ and does
not depend on the residual multiplier.
Given the weight norm $\|{\mW}_l\|_{\normrms} = \Theta(1)$ in
Equation~(\ref{eq:w_norm}), we have
\begin{align*}
    \|\lambda_l\mW_l\|_{\normrms} = \Theta(\lambda_l).
\end{align*}
To satisfy Equation~(\ref{eq:condition_wd_2_onelayer}) that
$\|\lambda_l \mW_l\|_{\normrms}
    = \Theta\left( \|\mA_l\|_{\normrms} \right)$, we need to set 
\begin{align*}
    \lambda_l=\Theta(1).
\end{align*}

This completes the implementation of Condition~\ref{condition: one-layer} for
SSO with weight decay, as summarized in Table~\ref{tab: sso-mup one-layer}.

\section{Additional Details and Results of GPT-2 Experiments}
\label{app: Additional Experimental Details}

This section provides the experimental details and complete numerical results for the GPT-2 style language-model experiments in Section~\ref{sec: experiment}. We organize the results around three comparisons. First, we compare SP with the $\mu$P formulation derived from Condition~\ref{condition: scale-invariant fl} ($k\ge2$), which corresponds to the CompleteP-style scaling predicted for residual branches with multiple transformations. Second, we compare this formulation with the $\mu$P formulation derived from Condition~\ref{condition: one-layer} ($k=1$), which corresponds to the Depth-$\mu$P-style scaling for one-layer residual branches. Third, we report additional weight-decay transfer results.

\subsection{Assets and Licenses}
\label{app: Assets and Licenses}

All used assets (datasets and codes) and their licenses are listed in Table~\ref{tab: license}.

\begin{table}[ht]
\renewcommand{\arraystretch}{1.3}
\centering
\caption{\textbf{Used assets and their licenses.}}
\setlength{\tabcolsep}{3.5pt}
\begin{tabular}{ccc} 
\toprule
URL                                                    & Citation & License                                                    \\ 
\midrule
https://github.com/EleutherAI/nanoGPT-mup/tree/completep               & \citep{completep}      & MIT    \\
https://github.com/karpathy/nanoGPT & \citep{nanogpt} &  MIT \\
https://skylion007.github.io/OpenWebTextCorpus/ & \citep{Gokaslan2019OpenWeb} &  Creative Commons CC0 \\
\bottomrule
\end{tabular}
\label{tab: license}
\end{table}

\subsection{Additional Details of Feature Learning Experiments}
\label{app: Additional Details of Feature Learning Experiments}

This subsection reports the coordinate-check experiments used to evaluate feature-scale stability under width and depth scaling. We measure the RMS norm at the output of the final Transformer block, $\|\vh_L\|_{\normrms}$, after short training runs. Results are averaged over three independent runs with random seeds $1$, $2$, and $3$. The base initialization variance for matrix weights and biases is set to $0.02^2$ and $0$, respectively. All models are trained with a constant learning-rate schedule, batch size $8$, gradient clipping $1.0$, and no weight decay for $10$ training steps.
The HP scaling rule can be found in Tables~\ref{tab: optimizer-family-overview} and~\ref{tab: optimizer-family-overview-one-layer} for $\mu$P ($k\ge2$) and $\mu$P ($k=1$), respectively.
Optimizer-specific HPs are listed below.

\paragraph{Muon-Kimi-AdamW.}
Following common practice~\citep{muon-kimi}, hidden matrix parameters are optimized by Muon-Kimi with a Nesterov-style momentum~\citep{nesterov1983method} of 0.95. All other parameters (e.g., embedding layer, LM head, all biases) are updated by AdamW with $\beta_1=0.9$, $\beta_2=0.95$, $\epsilon_{\mathrm{base}}=10^{-16}$. The learning rate is $2^{-7}$ for both Muon-Kimi and AdamW. The results are presented in Figure~\ref{figures: mup vs sp}(a,b).

\paragraph{Muon-AdamW.}
Following common practice~\citep{jordan6muon}, hidden matrix parameters are optimized by Muon with a learning rate of $0.02$ and a Nesterov-style momentum of 0.95. All other parameters (e.g., embedding layer, LM head, all biases) are updated by AdamW with a learning rate of $0.001$, $\beta_1=0.9$, $\beta_2=0.95$, and $\epsilon_{\mathrm{base}}=10^{-16}$. The results are presented in Figure~\ref{figures: mup vs sp muon}(a,b).

\paragraph{Shampoo-AdamW.}
Following common practice~\citep{modded_nanogpt_2024}, hidden matrix parameters are optimized by Shampoo with a learning rate of $0.001$, $\beta_1=0.95$, $\beta_2=0.95$, $\epsilon_{\mathrm{base}}=10^{-16}$, a shampoo precondition frequency of $1$, and a maximal precondition dimension of $20000$. All other parameters (e.g., embedding layer, LM head, all biases) are updated by AdamW with a learning rate of $0.002$, $\beta_1=0.9$, $\beta_2=0.95$, and $\epsilon_{\mathrm{base}}=10^{-16}$. The results are presented in Figure~\ref{figures: mup vs sp shampoo}(a,b).

\paragraph{Sophia.}
Following common practice~\citep{DBLP:conf/iclr/Liu0HL024-sophia}, parameters are updated by Sophia with a learning rate of $0.001$, $\beta_1=0.965$, $\beta_2=0.99$, $\rho=0.05$, and a Hessian update frequency of $1$. The results are presented in Figure~\ref{figures: mup vs sp sophia}(a,b).

\paragraph{Overview of results.}
Across these optimizer settings, the coordinate-check results show the same qualitative pattern as in the main text: SP exhibits feature-scale growth under width or depth scaling, whereas the $\mu$P formulation from Condition~\ref{condition: scale-invariant fl} ($k\ge2$) keeps the final-block feature norm approximately scale-invariant.

\subsection{Additional Details of HP Transfer Experiments}
\label{app: Additional Details of HP Transfer Experiments}

This subsection provides the detailed setup and complete numerical results for the HP-transfer experiments in Section~\ref{sec: experiment}. 
We organize the results by optimizer. 
For each optimizer, we first compare SP with the $\mu$P formulation from Condition~\ref{condition: scale-invariant fl} ($k\ge2$) under width and depth scaling. 
When applicable, we then compare this formulation with the $\mu$P formulation from Condition~\ref{condition: one-layer} ($k=1$) to test the role of residual-block depth. 
For Muon-Kimi-AdamW, we additionally report weight-decay transfer and the no-LayerNorm depth-scaling diagnostic discussed in the main text.

\subsubsection{Basic Experimental Setup}

Unless otherwise stated, all HP-transfer experiments use GPT-2 style models trained on OpenWebText with sequence length $1024$ and the GPT-2 tokenizer, following the setup in Section~\ref{sec: experiment}. 
The base model has width $n_{\mathrm{base}}=256$ and depth $L_{\mathrm{base}}=4$.
For width-scaling experiments, we vary $n$ while keeping $L=4$; for depth-scaling experiments, we vary $L$ while keeping $n=256$.
The base initialization variance for matrix weights is set to $0.02^2$, and biases are initialized to zero.

All models are trained with batch size $240$ for $1221$ iterations, corresponding to about $300$M tokens, using $120$ warmup iterations followed by cosine learning-rate decay to a minimum learning rate of $3\times10^{-5}$, gradient clipping of $1.0$, and the optimizer-specific settings described below.

The HPs shown in the tables are base HPs, such as $\eta_{\mathrm{base}}$ or $\lambda_{\mathrm{base}}$; the actual model HPs are obtained by applying the corresponding SP or $\mu$P scaling rules.
\emph{The $\mu$P implementation can be found in Tables~\ref{tab: optimizer-family-overview} and~\ref{tab: optimizer-family-overview-one-layer} for Condition~\ref{condition: scale-invariant fl} ($k\ge2$) and Condition~\ref{condition: one-layer} ($k=1$)}, respectively.
We report the final validation loss.

\subsubsection{Additional Details of Muon-Kimi-AdamW}
\label{app: Additional Details of Muon-Kimi-AdamW}

\begin{figure*}[t]
\centering
\includegraphics[height=0.25\textwidth]{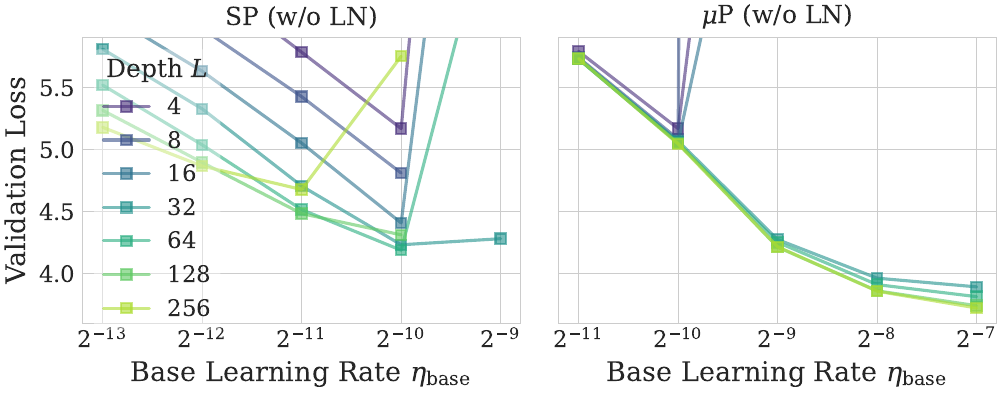}
\vspace{0.1in}
\caption{
\textbf{Feature learning and HP transfer of Muon-Kimi-AdamW under SP and $\mu$P from Condition~\ref{condition: scale-invariant fl} ($k\ge2$) without LayerNorm.}
First, in terms of training stability, SP becomes increasingly prone to loss divergence as depth increases in the absence of LayerNorm, whereas $\mu$P enables stable training. 
Second, unlike SP, $\mu$P preserves HP transferability at large depths without LayerNorm.
}
\label{figures: mup vs sp-hp-nonLN}

\end{figure*}

\begin{figure*}[t]
\centering

\subfloat{
\includegraphics[height=0.185\textwidth]{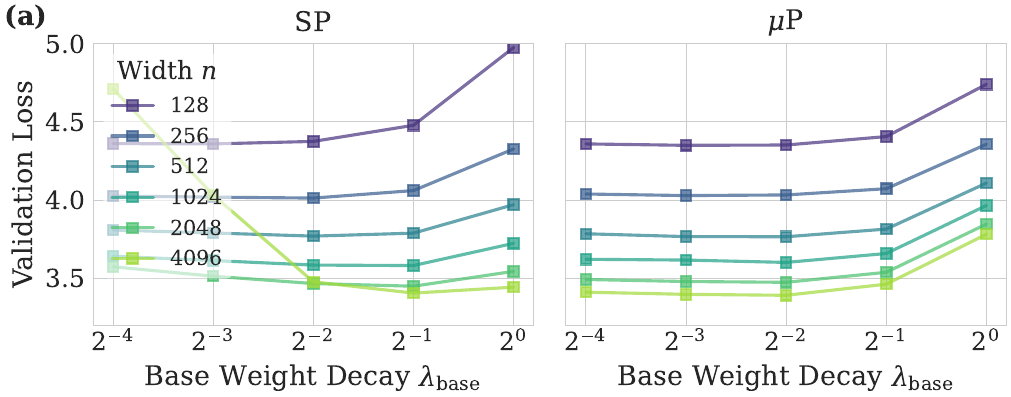}
\label{fig: sp_width_transfer_wd}
}%
\hskip 1ex
\subfloat{
\includegraphics[height=0.185\textwidth]{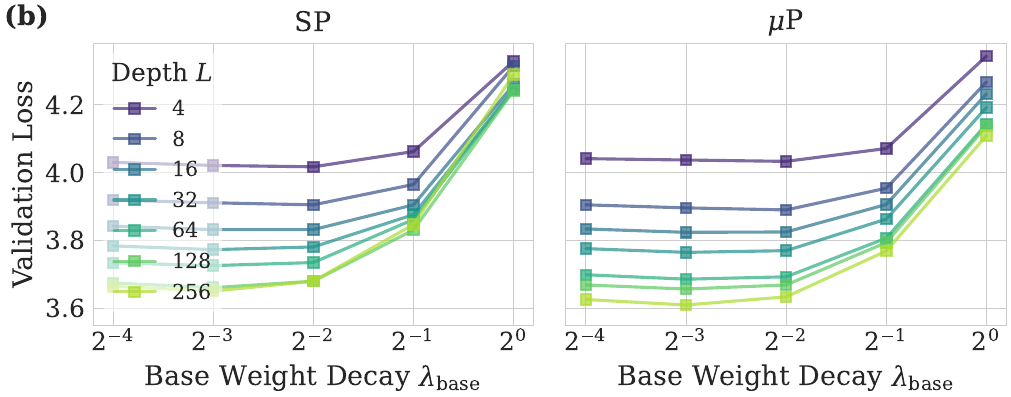}
\label{fig: sp_depth_transfer_wd}
}%

\vskip 0.05in
\caption{
\textbf{Weight decay transfer of Muon-Kimi-AdamW under SP and $\mu$P from Condition~\ref{condition: scale-invariant fl} ($k\ge2$).} 
We train GPT-2 style models with Muon-Kimi and AdamW using SP and $\mu$P derived from Condition~\ref{condition: scale-invariant fl} (see Tables~\ref{tab: muon-kimi-wd mup} and~\ref{tab: adamw-mup}). $\mu$P enables robust HP transfer across both width and depth scaling, while consistently achieving lower loss than SP as the model depth increases. The corresponding numerical values are reported in Appendix~\ref{app: Additional Details of Muon-Kimi-AdamW}.
}

\label{figures: mup vs sp wd}
\end{figure*}

This subsection provides the complete Muon-Kimi-AdamW results underlying Figures~\ref{figures: mup vs sp},~\ref{figures: cp vs dmup muon style},~\ref{figures: mup vs sp-hp-nonLN}, and~\ref{figures: mup vs sp wd}. 
We first report learning-rate transfer under width and depth scaling, including the comparison between Condition~\ref{condition: scale-invariant fl} ($k\ge2$) and Condition~\ref{condition: one-layer} ($k=1$). 
We then report the no-LayerNorm diagnostic and the weight-decay transfer experiments.

\paragraph{Experimental setup.}
Following common practice~\citep{muon-kimi}, hidden matrix parameters are optimized by Muon-Kimi with Nesterov-style momentum $0.95$, while all other parameters (e.g., embeddings, LM head, and biases) are optimized by AdamW with $\beta_1=0.9$, $\beta_2=0.95$, and $\epsilon_\mathrm{base}=10^{-16}$.
For learning-rate transfer experiments, both Muon-Kimi and AdamW use the same base learning rate $\eta_{\mathrm{base}}$, and weight decay is disabled.
For weight-decay transfer experiments, we fix the base learning rate to $2^{-7}$ and sweep the base weight decay $\lambda_{\mathrm{base}}$.

\paragraph{Additional results of width-wise learning rate transfer.}

Complete numerical results of base learning rate transferability across different widths are presented in Table~\ref{tab: width sp} and Table~\ref{tab: width cp} for SP and $\mu$P from Condition~\ref{condition: scale-invariant fl} ($k\ge2$), respectively.

\begin{table}[t]
\centering
\caption{\textbf{For Muon-Kimi-AdamW, SP fails to transfer the optimal base learning rate across widths.}
This table reports the numerical results corresponding to Figure~\ref{figures: mup vs sp}, where the best validation loss for each width is highlighted in \textbf{bold}.}
\renewcommand{\arraystretch}{1.3}
\vskip 0.05in
\begin{tabular}{ccccccc} 
\toprule
$n/\log_2(\eta_{\mathrm{base}})$ & -10   & -9    & -8    & -7    & -6    & -5     \\ 
\midrule
128         & 5.127 & 4.685 & 4.45  & 4.373 & \textbf{4.364} & 4.372  \\
256         & 4.552 & 4.219 & 4.081 & \textbf{4.053} & 4.062 & 4.091  \\
512         & 4.093 & 3.886 & \textbf{3.819} & 3.833 & 3.837 & 4.062  \\
1024        & 3.817 & 3.699 & \textbf{3.672} & 3.68  & 3.952 & 5.603  \\
2048        & 3.654 & 3.571 & \textbf{3.555} & 3.798 & 5.472 & 6.438  \\
4096        & 3.56  & \textbf{3.516} & 3.747 & 5.557 & 6.159 & 6.552  \\
\bottomrule
\end{tabular}
\label{tab: width sp}
\end{table}

\begin{table}[t]
\centering
\caption{\textbf{For Muon-Kimi-AdamW, $\mu$P from Condition~\ref{condition: scale-invariant fl} ($k\ge2$) approximately transfers the optimal base learning rate across widths, and achieves lower loss than SP as the width increases.}
This table reports the numerical results corresponding to Figure~\ref{figures: mup vs sp}, where the best validation loss for each width is highlighted in \textbf{bold}.}
\renewcommand{\arraystretch}{1.3}
\vskip 0.05in
\begin{tabular}{ccccccc} 
\toprule
$n/\log_2(\eta_{\mathrm{base}})$ & -10   & -9    & -8    & -7    & -6    & -5     \\ 
\midrule
128         & 4.875 & 4.53  & 4.42  & \textbf{4.374} & 4.383 & 4.397  \\
256         & 4.561 & 4.227 & 4.081 & \textbf{4.059} & 4.079 & 4.104  \\
512         & 4.305 & 3.974 & 3.83  & \textbf{3.811} & 3.828 & 3.873  \\
1024        & 4.125 & 3.798 & 3.654 & \textbf{3.646} & 3.676 & 3.726  \\
2048        & 3.957 & 3.636 & 3.516 & \textbf{3.515} & 3.552 & 3.689  \\
4096        & 3.882 & 3.531 & \textbf{3.446} & 3.461 & 3.523 & 3.752  \\
\bottomrule
\end{tabular}
\label{tab: width cp}
\end{table}

\paragraph{Additional results of depth-wise learning rate transfer with LayerNorm.}

With LayerNorm, complete numerical results of base learning rate transferability across different depths are presented in Table~\ref{tab: depth sp}, Table~\ref{tab: depth cp}, and Table~\ref{tab: depth dmup} for SP, $\mu$P from Condition~\ref{condition: scale-invariant fl} ($k\ge2$), and $\mu$P from Condition~\ref{condition: one-layer} ($k=1$), respectively.
As discussed in Section~\ref{sec: experiment}, this apparent depth-wise transfer under SP should be interpreted cautiously, because LayerNorm and the tested depth range can partially mask feature-scale instability.

\begin{table}[t]
\centering
\caption{\textbf{For Muon-Kimi-AdamW, with LayerNorm, SP transfers the optimal base learning rate across the tested depths.}
This table reports the numerical results corresponding to Figure~\ref{figures: mup vs sp}, where the best validation loss for each depth is highlighted in \textbf{bold}.}
\renewcommand{\arraystretch}{1.3}
\vskip 0.05in
\begin{tabular}{cccccc} 
\toprule
$L/\log_2(\eta_{\mathrm{base}})$ & -9    & -8    & -7    & -6    & -5     \\ 
\hline
4     & 4.219 & 4.081 & \textbf{4.056} & 4.067 & 4.09   \\
8     & 4.109 & 3.985 & \textbf{3.952} & 3.973 & 4.013  \\
16    & 4.016 & 3.893 & \textbf{3.864} & 3.889 & 3.929  \\
32    & 3.949 & 3.824 & \textbf{3.799} & 3.82  & 3.885  \\
64    & 3.916 & 3.777 & \textbf{3.747} & 3.777 & 3.91   \\
128   & 3.898 & 3.75  & \textbf{3.723} & 3.772 & 4.031  \\
256   & 3.883 & 3.719 & \textbf{3.688} & 3.753 & 4.174  \\
\bottomrule
\end{tabular}
\label{tab: depth sp}
\end{table}

\begin{table}[t]
\centering
\caption{\textbf{For Muon-Kimi-AdamW, with LayerNorm, $\mu$P from Condition~\ref{condition: scale-invariant fl} ($k\ge2$) transfers the optimal base learning rate across depths, and achieves lower loss than SP as the depth increases.}
This table reports the numerical results corresponding to Figure~\ref{figures: mup vs sp}, where the best validation loss for each depth is highlighted in \textbf{bold}.}
\renewcommand{\arraystretch}{1.3}
\vskip 0.05in
\begin{tabular}{cccccc} 
\toprule
$L/\log_2(\eta_{\mathrm{base}})$ & -9    & -8    & -7    & -6    & -5     \\ 
\hline
4     & 4.228 & 4.081 & \textbf{4.06}  & 4.075 & 4.098  \\
8     & 4.089 & 3.972 & \textbf{3.938} & 3.957 & 3.988  \\
16    & 4.01  & 3.886 & \textbf{3.85}  & 3.874 & 3.907  \\
32    & 3.96  & 3.826 & \textbf{3.8}   & 3.828 & 3.879  \\
64    & 3.917 & 3.771 & \textbf{3.747} & 3.796 & 3.942  \\
128   & 3.878 & 3.715 & \textbf{3.694} & 3.754 & 4.002  \\
256   & 3.878 & 3.697 & \textbf{3.678} & 3.761 & 3.964  \\
\bottomrule
\end{tabular}
\label{tab: depth cp}
\end{table}

\begin{table}[t]
\centering
\caption{\textbf{For Muon-Kimi-AdamW, with LayerNorm, $\mu$P from Condition~\ref{condition: one-layer} ($k=1$) fails to transfer the optimal base learning rate across depths.}
This table reports the numerical results corresponding to Figure~\ref{figures: cp vs dmup muon style}, where the best validation loss for each depth is highlighted in \textbf{bold}.
The implementation can be found in Table~\ref{tab: muon-kimi-wd mup one-layer} and~\ref{tab: adamw-mup one-layer}.
}
\renewcommand{\arraystretch}{1.3}
\vskip 0.05in
\begin{tabular}{ccccccc} 
\toprule
$L/\log_2(\eta_{\mathrm{base}})$  & -9    & -8    & -7    & -6    & -5    & -4     \\
\hline
4         & 4.225 & 4.081 & \textbf{4.06}  & 4.067 & 4.095 & 4.971  \\
8         & 4.149 & 3.988 & \textbf{3.923} & 3.934 & 3.951 & 4.03   \\
16        & 4.198 & 3.963 & 3.871 & \textbf{3.857} & 3.88  & 3.922  \\
32        & 4.325 & 3.987 & 3.843 & \textbf{3.796} & 3.814 & 3.855  \\
64        & 4.517 & 4.052 & 3.825 & 3.728 & \textbf{3.725} & 3.768  \\
128       & 4.666 & 4.242 & 3.912 & 3.745 & \textbf{3.7}   & 3.746  \\
256       & 4.774 & 4.454 & 4.025 & 3.788 & \textbf{3.667} & 3.681   \\
\bottomrule
\end{tabular}
\label{tab: depth dmup}
\end{table}

\paragraph{Additional results of depth-wise learning rate transfer without LayerNorm.}

Without LayerNorm, depth-wise base learning-rate transfer results for SP and $\mu$P from Condition~\ref{condition: scale-invariant fl} ($k\ge2$) are shown in Figure~\ref{figures: mup vs sp-hp-nonLN}, with complete numerical results in Tables~\ref{tab: depth sp noln} and~\ref{tab: depth cp noln}.

\begin{table}[t]
\centering
\caption{\textbf{For Muon-Kimi-AdamW, without LayerNorm, SP fails to preserve stable training.}
NaN data points indicate training instability, where the loss explodes.}
\renewcommand{\arraystretch}{1.3}
\vskip 0.05in
\begin{tabular}{cccccc} 
\toprule
$L/\log_2(\eta_{\mathrm{base}})$ & -13   & -12   & -11   & -10   & -9      \\ 
\midrule
4     & 7.318 & 6.394 & 5.784 & {5.169} & 13.77   \\
8     & 6.775 & 5.974 & 5.426 & {4.811} & NaN     \\
16    & 6.115 & 5.631 & 5.052 & {4.409} & 10.814  \\
32    & 5.809 & 5.328 & 4.706 & {4.233} & 4.282   \\
64    & 5.519 & 5.038 & 4.516 & {4.189} & 7.251   \\
128   & 5.316 & 4.896 & 4.484 & {4.313} & NaN     \\
256   & 5.179 & 4.867 & {4.678} & 5.752 & NaN     \\
\bottomrule
\end{tabular}
\label{tab: depth sp noln}
\end{table}

\begin{table}[t]
\centering
\caption{\textbf{For Muon-Kimi-AdamW, without LayerNorm, $\mu$P from Condition~\ref{condition: scale-invariant fl} ($k\ge2$) has stable runs and approximately transfers the optimal base learning rate at large depth $L\ge 32$.}
NaN data points indicate training instability, where the loss explodes.
The best validation loss for each depth is highlighted in \textbf{bold}.}
\renewcommand{\arraystretch}{1.3}
\vskip 0.05in
\begin{tabular}{cccccc} 
\toprule
$L/\log_2(\eta_{\mathrm{base}})$   & -11   & -10   & -9     & -8     & -7      \\ 
\hline
4   & 5.791 & \textbf{5.169} & 11.6   & NaN    & 345.85  \\
8   & 5.741 & \textbf{5.084} & 131.43 & NaN    & NaN     \\
16  & 5.73  & \textbf{5.059} & 8.732  & 246.45 & 122.99  \\
32  & 5.734 & 5.069 & 4.275  & 3.964  & \textbf{3.894}   \\
64  & 5.73  & 5.051 & 4.253  & 3.912  & \textbf{3.815}   \\
128 & 5.728 & 5.052 & 4.214  & 3.862  & \textbf{3.742}   \\
256 & 5.733 & 5.045 & 4.217  & 3.859  & \textbf{3.724}   \\
\bottomrule
\end{tabular}
\label{tab: depth cp noln}
\end{table}

\paragraph{Additional results of width-wise weight decay transfer.}

Width-wise base weight-decay transfer results are shown in Figure~\ref{figures: mup vs sp wd}(a). 
Complete numerical results are presented in Table~\ref{tab: width sp wd} and Table~\ref{tab: width cp wd} for SP and $\mu$P from Condition~\ref{condition: scale-invariant fl} ($k\ge2$), respectively.

\begin{table}[t]
\centering
\caption{\textbf{For Muon-Kimi-AdamW, SP fails to transfer the optimal base weight decay across widths.}
This table reports the numerical results corresponding to Figure~\ref{figures: mup vs sp wd}, where the best validation loss for each width is highlighted in \textbf{bold}.}
\renewcommand{\arraystretch}{1.3}
\vskip 0.05in
\begin{tabular}{cccccc} 
\toprule
$n/\log_2(\lambda_{\mathrm{base}})$ & -4    & -3    & -2    & -1    & 0      \\
\midrule
128            & 4.361 & \textbf{4.359} & 4.375 & 4.478 & 4.975  \\
256            & 4.026 & 4.019 & \textbf{4.013} & 4.061 & 4.327  \\
512            & 3.809 & 3.79  & \textbf{3.77}  & 3.789 & 3.971  \\
1024           & 3.642 & 3.614 & 3.585 & \textbf{3.582} & 3.724  \\
2048           & 3.574 & 3.514 & 3.467 & \textbf{3.45}  & 3.545  \\
4096           & 4.711 & 4.032 & 3.478 & \textbf{3.406} & 3.444   \\
\bottomrule
\end{tabular}
\label{tab: width sp wd}
\end{table}

\begin{table}[t]
\centering
\caption{\textbf{For Muon-Kimi-AdamW, $\mu$P from Condition~\ref{condition: scale-invariant fl} ($k\ge2$) approximately transfers the optimal base weight decay across widths.}
This table reports the numerical results corresponding to Figure~\ref{figures: mup vs sp wd}, where the best validation loss for each width is highlighted in \textbf{bold}.}
\renewcommand{\arraystretch}{1.3}
\vskip 0.05in
\begin{tabular}{cccccc} 
\toprule
$n/\log_2(\lambda_{\mathrm{base}})$ & -4    & -3    & -2    & -1    & 0      \\
\midrule
128            & 4.359 & \textbf{4.35}  & 4.352 & 4.406 & 4.74   \\
256            & 4.039 & \textbf{4.029} & 4.033 & 4.073 & 4.357  \\
512            & 3.785 & 3.767 & \textbf{3.766} & 3.815 & 4.109  \\
1024           & 3.622 & 3.617 & \textbf{3.602} & 3.659 & 3.966  \\
2048           & 3.493 & 3.48  & \textbf{3.475} & 3.539 & 3.847  \\
4096           & 3.412 & 3.398 & \textbf{3.392} & 3.463 & 3.782  \\
\bottomrule
\end{tabular}
\label{tab: width cp wd}
\end{table}

\paragraph{Additional results of depth-wise weight decay transfer with LayerNorm.}

With LayerNorm, depth-wise base weight-decay transfer results are shown in Figure~\ref{figures: mup vs sp wd}(b). 
Complete numerical results are presented in Table~\ref{tab: depth sp wd} and Table~\ref{tab: depth cp wd} for SP and $\mu$P from Condition~\ref{condition: scale-invariant fl} ($k\ge2$), respectively.

\begin{table}[t]
\centering
\caption{
This table reports the numerical results corresponding to \textbf{depth scaling of Muon-Kimi-AdamW under SP} in Figure~\ref{figures: mup vs sp wd}, where the best validation loss for each depth is highlighted in \textbf{bold}.}
\renewcommand{\arraystretch}{1.3}
\vskip 0.05in
\begin{tabular}{cccccc} 
\toprule
$L/\log_2(\lambda_{\mathrm{base}})$ & -4    & -3    & -2    & -1    & 0      \\
\hline
4         & 4.03  & 4.021 & \textbf{4.017} & 4.062 & 4.328  \\
8         & 3.919 & 3.911 & \textbf{3.905} & 3.965 & 4.314  \\
16        & 3.842 & 3.832 & \textbf{3.832} & 3.905 & 4.259  \\
32        & 3.784 & \textbf{3.773} & 3.781 & 3.876 & 4.246  \\
64        & 3.734 & \textbf{3.726} & 3.735 & 3.862 & 4.244  \\
128       & 3.675 & \textbf{3.661} & 3.68  & 3.831 & 4.242  \\
256       & 3.665 & \textbf{3.651} & 3.68  & 3.847 & 4.289  \\
\bottomrule
\end{tabular}
\label{tab: depth sp wd}
\end{table}

\begin{table}[t]
\centering
\caption{\textbf{For Muon-Kimi-AdamW, $\mu$P from Condition~\ref{condition: scale-invariant fl} ($k\ge2$) approximately transfers the optimal base weight decay across depths, and achieves lower loss than SP as the depth increases.}
This table reports the numerical results corresponding to Figure~\ref{figures: mup vs sp wd}, where the best validation loss for each depth is highlighted in \textbf{bold}.}
\renewcommand{\arraystretch}{1.3}
\vskip 0.05in
\begin{tabular}{cccccc} 
\toprule
$L/\log_2(\lambda_{\mathrm{base}})$ & -4    & -3    & -2    & -1    & 0      \\
\hline
4         & 4.041 & 4.037 & \textbf{4.033} & 4.071 & 4.343  \\
8         & 3.905 & 3.896 & \textbf{3.89}  & 3.954 & 4.267  \\
16        & 3.834 & \textbf{3.824} & 3.825 & 3.906 & 4.232  \\
32        & 3.776 & \textbf{3.765} & 3.77  & 3.863 & 4.192  \\
64        & 3.699 & \textbf{3.686} & 3.693 & 3.807 & 4.143  \\
128       & 3.669 & \textbf{3.657} & 3.669 & 3.794 & 4.137  \\
256       & 3.626 & \textbf{3.61}  & 3.634 & 3.769 & 4.109  \\
\bottomrule
\end{tabular}
\label{tab: depth cp wd}
\end{table}

\clearpage
\newpage

\subsubsection{Additional Details of Muon-AdamW}
\label{app: Additional Details of Muon-AdamW}

\begin{figure*}[t]
\centering

\subfloat{
\includegraphics[height=0.185\textwidth]{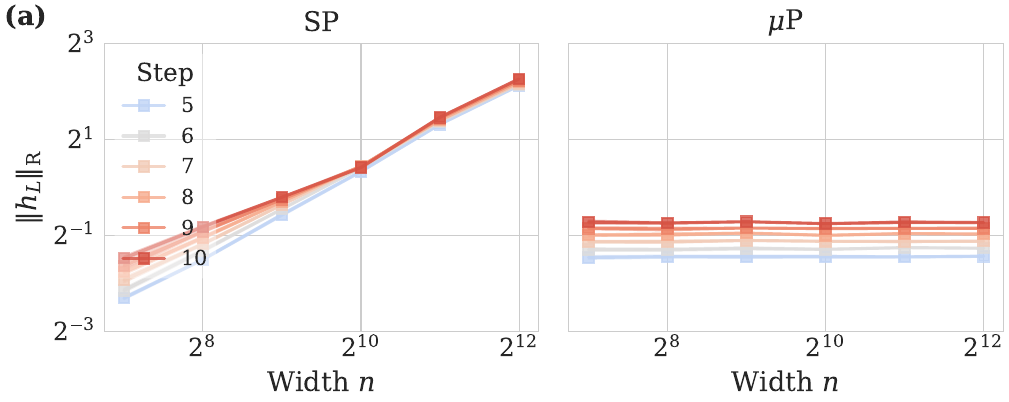}
\label{fig: sp_width_fl muon}
}%
\hskip 1ex
\subfloat{
\includegraphics[height=0.185\textwidth]{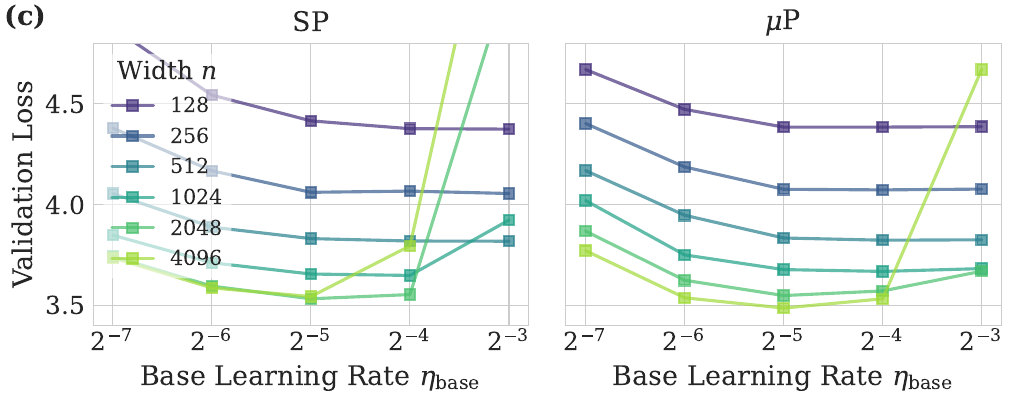}
\label{fig: sp_width_transfer muon}
}%
\hskip 1ex
\subfloat{
\includegraphics[height=0.185\textwidth]{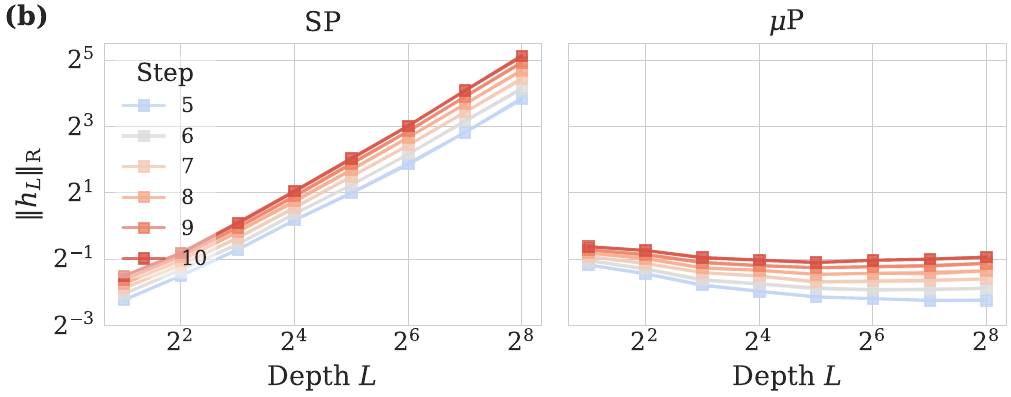}
\label{fig: sp_depth_fl muon}
}%
\hskip 1ex
\subfloat{
\includegraphics[height=0.185\textwidth]{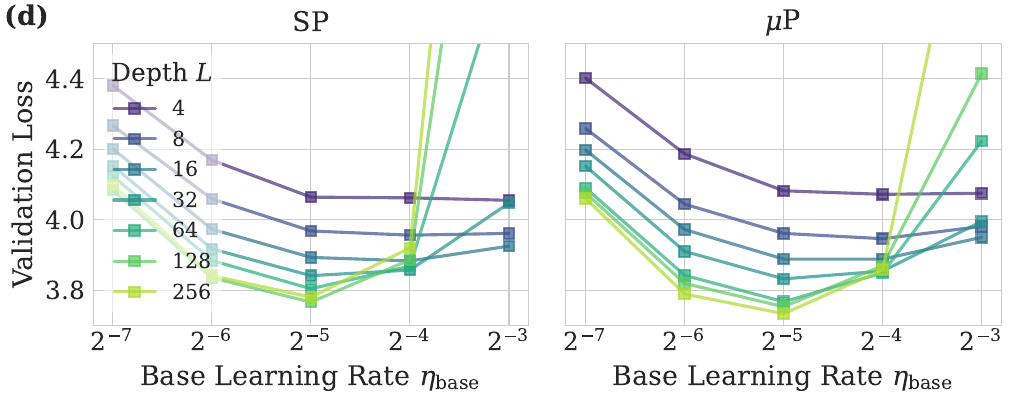}
\label{fig: sp_depth_transfer muon}
}%

\vskip 0.1in
\caption{
\textbf{Feature learning and HP transfer of Muon-AdamW under SP and $\mu$P.} 
We train GPT-2 style models with Muon-AdamW using SP and $\mu$P derived from Condition~\ref{condition: scale-invariant fl} (see Tables~\ref{tab: muon-wd mup} and~\ref{tab: adamw-mup}). $\mu$P maintains stable feature norms and enables robust HP transfer across both width and depth scaling, while generally achieving lower loss than SP as the model size increases. The detailed numerical values are provided in Appendix~\ref{app: Additional Details of Muon-AdamW}.
}

\label{figures: mup vs sp muon}
\end{figure*}

This subsection provides the complete Muon-AdamW results underlying Figures~\ref{figures: mup vs sp muon} and~\ref{figures: cp vs dmup muon style}. 
We report learning-rate transfer under width and depth scaling, including the comparison between SP, Condition~\ref{condition: scale-invariant fl} ($k\ge2$) and Condition~\ref{condition: one-layer} ($k=1$). 

\paragraph{Experimental setup.}
Following common practice~\citep{jordan6muon,modded_nanogpt_2024}, hidden matrix parameters are optimized by Muon with a base learning rate of $\eta_{\mathrm{base}}$, and a Nesterov-style momentum of 0.95, while all other parameters (e.g., embedding layer, LM head, all biases) are updated by AdamW with a base learning rate of $\eta_{\mathrm{base}}/10$, $\beta_1=0.9$, $\beta_2=0.95$, $\epsilon_{\mathrm{base}}=10^{-16}$. We do not use weight decay in all learning rate transfer experiments.

\paragraph{Additional results of width-wise learning rate transfer.}
Complete numerical results of base learning rate transferability across different widths are presented in Table~\ref{tab: width sp muon} and Table~\ref{tab: width cp muon} for SP and $\mu$P from Condition~\ref{condition: scale-invariant fl} ($k\ge2$), respectively.

\begin{table}[t]
\centering
\caption{\textbf{For Muon-AdamW, SP fails to transfer the optimal base learning rate across widths.}
This table reports the numerical results corresponding to Figure~\ref{figures: mup vs sp muon}, where the best validation loss for each width is highlighted in \textbf{bold}.}
\renewcommand{\arraystretch}{1.3}
\vskip 0.05in
\begin{tabular}{ccccccc} 
\toprule
$n/\log_2(\eta_{\mathrm{base}})$ & -7   & -6    & -5    & -4    & -3    \\ 
\midrule
128  & 4.88  & 4.544 & 4.416 & 4.377 & \textbf{4.375}  \\
256  & 4.38  & 4.168 & 4.061 & 4.067 & \textbf{4.055}  \\
512  & 4.054 & 3.888 & 3.831 & 3.819 & \textbf{3.818}  \\
1024 & 3.848 & 3.711 & 3.655 & \textbf{3.648} & 3.923  \\
2048 & 3.742 & 3.595 & \textbf{3.532} & 3.553 & 5.137  \\
4096 & 3.735 & 3.586 & \textbf{3.543} & 3.795 & 5.992  \\
\bottomrule
\end{tabular}
\label{tab: width sp muon}
\end{table}

\begin{table}[t]
\centering
\caption{\textbf{For Muon-AdamW, $\mu$P from Condition~\ref{condition: scale-invariant fl} ($k\ge2$) approximately transfers the optimal base learning rate across widths, and achieves lower loss than SP as the width increases.}
This table reports the numerical results corresponding to Figure~\ref{figures: mup vs sp muon}, where the best validation loss for each width is highlighted in \textbf{bold}.}
\renewcommand{\arraystretch}{1.3}
\vskip 0.05in
\begin{tabular}{ccccccc} 
\toprule
$n/\log_2(\eta_{\mathrm{base}})$ & -7   & -6    & -5    & -4    & -3    \\ 
\midrule
128  & 4.671 & 4.473 & \textbf{4.385} & 4.385 & 4.387  \\
256  & 4.403 & 4.187 & 4.076 & \textbf{4.073} & 4.077  \\
512  & 4.169 & 3.947 & 3.834 & \textbf{3.823} & 3.825  \\
1024 & 4.02  & 3.75  & 3.677 & \textbf{3.668} & 3.682  \\
2048 & 3.868 & 3.624 & \textbf{3.548} & 3.571 & 3.669  \\
4096 & 3.771 & 3.537 & \textbf{3.486} & 3.532 & 4.672 \\
\bottomrule
\end{tabular}
\label{tab: width cp muon}
\end{table}

\paragraph{Additional results of depth-wise learning rate transfer.}
Complete numerical results of base learning rate transferability across different depths are presented in Table~\ref{tab: depth sp muon}, Table~\ref{tab: depth cp muon}, and Table~\ref{tab: depth dmup muon} for SP, $\mu$P from Condition~\ref{condition: scale-invariant fl} ($k\ge2$), and $\mu$P from Condition~\ref{condition: one-layer} ($k = 1$), respectively.

\begin{table}[t]
\centering
\caption{\textbf{For Muon-AdamW, SP shifts the optimal base learning rate across depths.}
This table reports the numerical results corresponding to Figure~\ref{figures: mup vs sp muon}, where the best validation loss for each depth is highlighted in \textbf{bold}.}
\renewcommand{\arraystretch}{1.3}
\vskip 0.05in
\begin{tabular}{cccccc} 
\toprule
$L/\log_2(\eta_{\mathrm{base}})$ & -7    & -6    & -5    & -4    & -3     \\ 
\hline
4         & 4.381 & 4.169 & 4.064 & 4.062 & \textbf{4.055}  \\
8         & 4.267 & 4.059 & 3.968 & \textbf{3.956} & 3.961  \\
16        & 4.201 & 3.973 & 3.893 & \textbf{3.883} & 3.925  \\
32        & 4.153 & 3.917 & \textbf{3.841} & 3.857 & 4.047  \\
64        & 4.124 & 3.884 & \textbf{3.804} & 3.868 & 4.772  \\
128       & 4.084 & 3.835 & \textbf{3.766} & 3.886 & 5.798  \\
256       & 4.099 & 3.841 & \textbf{3.78}  & 3.92  & 6.705   \\
\bottomrule
\end{tabular}
\label{tab: depth sp muon}
\end{table}

\begin{table}[t]
\centering
\caption{\textbf{For Muon-AdamW, $\mu$P from Condition~\ref{condition: scale-invariant fl} ($k\ge2$) approximately transfers the optimal base learning rate across depths, and achieves lower loss than SP as the depth increases.}
This table reports the numerical results corresponding to Figure~\ref{figures: mup vs sp muon}, where the best validation loss for each depth is highlighted in \textbf{bold}.}
\renewcommand{\arraystretch}{1.3}
\vskip 0.05in
\begin{tabular}{cccccc} 
\toprule
$L/\log_2(\eta_{\mathrm{base}})$ & -7   & -6    & -5    & -4    & -3    \\ 
\hline
4         & 4.402 & 4.187 & 4.082 & \textbf{4.072} & 4.075  \\
8         & 4.259 & 4.044 & 3.961 & \textbf{3.946} & 3.981  \\
16        & 4.198 & 3.972 & \textbf{3.888} & 3.888 & 3.95   \\
32        & 4.152 & 3.91  & \textbf{3.832} & 3.854 & 3.995  \\
64        & 4.09  & 3.842 & \textbf{3.767} & 3.849 & 4.223  \\
128       & 4.076 & 3.819 & \textbf{3.753} & 3.869 & 4.415  \\
256       & 4.06  & 3.789 & \textbf{3.733} & 3.858 & 5.201  \\
\bottomrule
\end{tabular}
\label{tab: depth cp muon}
\end{table}

\begin{table}[t]
\centering
\caption{\textbf{For Muon-AdamW, $\mu$P from Condition~\ref{condition: one-layer} ($k=1$) fails to transfer the optimal base learning rate across depths.}
This table reports the numerical results corresponding to Figure~\ref{figures: cp vs dmup muon style}, where the best validation loss for each depth is highlighted in \textbf{bold}.
The implementation can be found in Table~\ref{tab: muon-wd mup one-layer} and~\ref{tab: adamw-mup one-layer}.
}
\renewcommand{\arraystretch}{1.3}
\vskip 0.05in
\begin{tabular}{ccccccc} 
\toprule
$L/\log_2(\eta_{\mathrm{base}})$  & -7    & -6    & -5    & -4    & -3    & -2     \\
\hline
4         & 4.402 & 4.189 & 4.078 & 4.07  & 4.073 & \textbf{4.068}  \\
8         & 4.323 & 4.067 & 3.97  & \textbf{3.937} & 3.96  & 3.983  \\
16        & 4.368 & 4.052 & 3.909 & \textbf{3.862} & 3.881 & 3.947  \\
32        & 4.477 & 4.073 & 3.884 & \textbf{3.809} & 3.823 & 3.902  \\
64        & 4.649 & 4.12  & 3.874 & 3.754 & \textbf{3.735} & 3.834  \\
128       & 4.793 & 4.262 & 3.946 & 3.78  & \textbf{3.722} & 3.795  \\
256       & 4.887 & 4.447 & 4.031 & 3.815 & \textbf{3.707} & 3.72   \\
\bottomrule
\end{tabular}
\label{tab: depth dmup muon}
\end{table}

\clearpage
\newpage

\subsubsection{Additional Details of Shampoo-AdamW}
\label{app: Additional Details of Shampoo-AdamW}

\begin{figure*}[t]
\centering

\subfloat{
\includegraphics[height=0.185\textwidth]{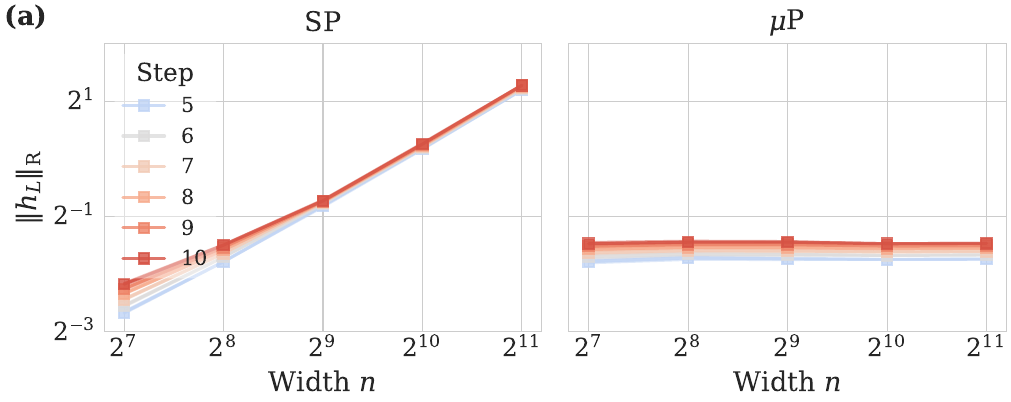}
\label{fig: sp_width_fl shampoo}
}%
\hskip 1ex
\subfloat{
\includegraphics[height=0.185\textwidth]{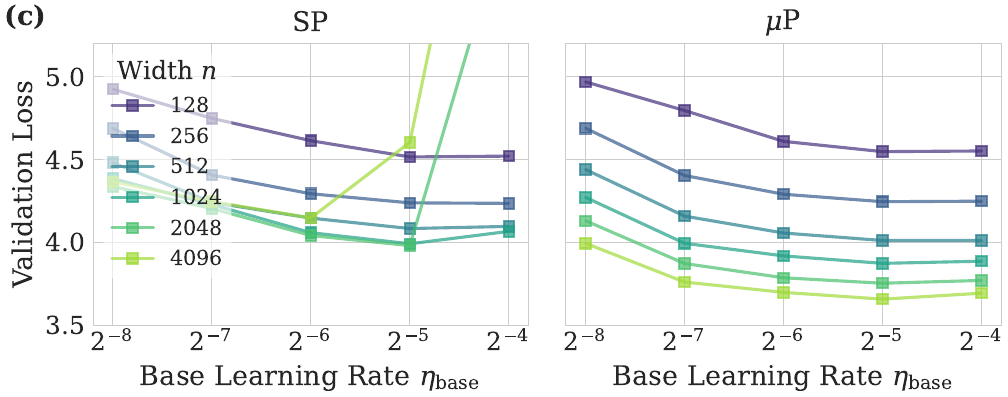}
\label{fig: sp_width_transfer shampoo}
}%
\hskip 1ex
\subfloat{
\includegraphics[height=0.185\textwidth]{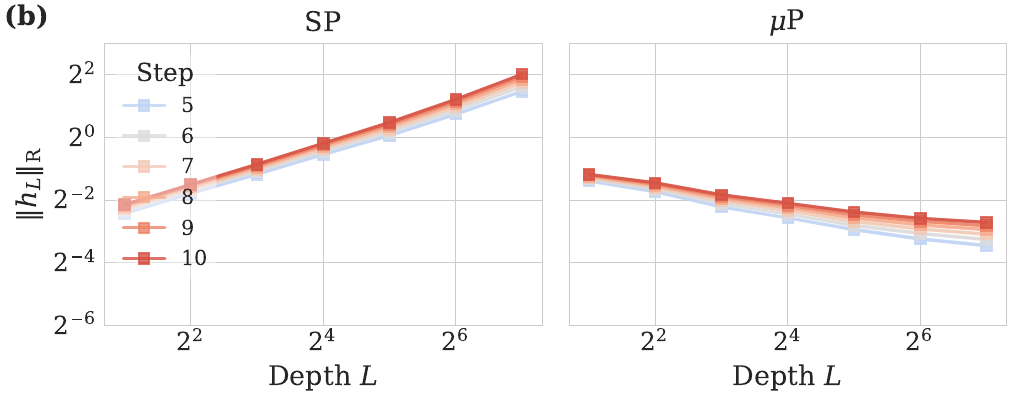}
\label{fig: sp_depth_fl shampoo}
}%
\hskip 1ex
\subfloat{
\includegraphics[height=0.185\textwidth]{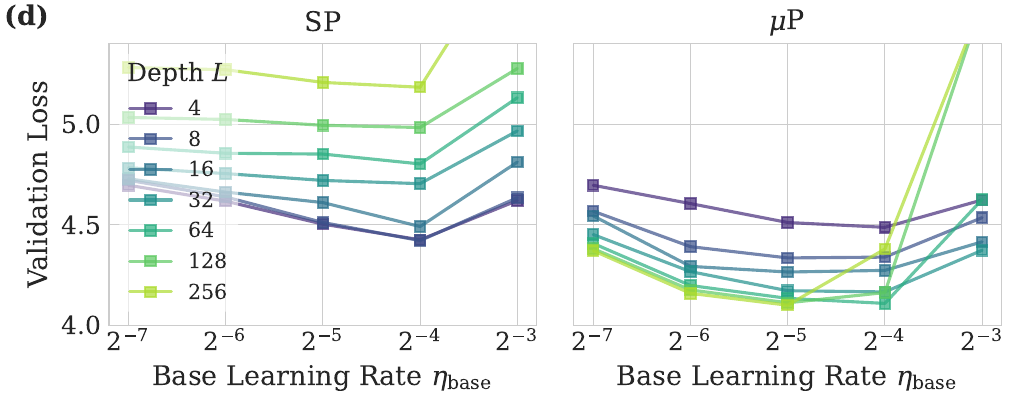}
\label{fig: sp_depth_transfer shampoo}
}%

\vskip 0.1in
\caption{
\textbf{Feature learning and HP transfer of Shampoo-AdamW under SP and $\mu$P.} 
We train GPT-2 style models with Shampoo-AdamW using SP and $\mu$P derived from Condition~\ref{condition: scale-invariant fl} (see Tables~\ref{tab: muon-wd mup} and~\ref{tab: adamw-mup}). $\mu$P maintains stable feature norms and enables robust HP transfer across both width and depth scaling, while generally achieving lower loss than SP as the model size increases. The detailed numerical values are provided in Appendix~\ref{app: Additional Details of Shampoo-AdamW}.
}

\label{figures: mup vs sp shampoo}
\end{figure*}

\begin{figure*}[t]
\centering

\includegraphics[height=0.25\textwidth]{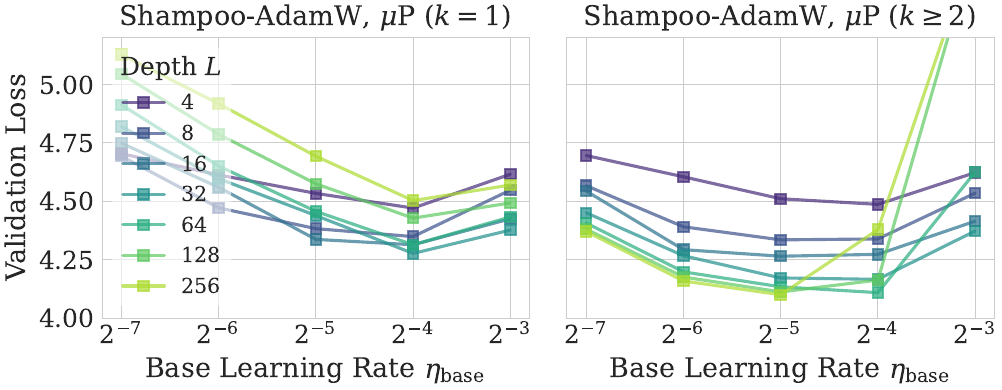}

\vskip 0.05in
\caption{
\textbf{Validating the role of residual block depth $k$.}
We compare two $\mu$P implementations for GPT-2-style models trained with Shampoo-AdamW: Depth-$\mu$P-style formulation from Condition~\ref{condition: one-layer} ($k=1$) and CompleteP-style formulation from Condition~\ref{condition: scale-invariant fl} ($k\ge 2$).
Condition~\ref{condition: scale-invariant fl} yields more stable HP transfer and lower validation loss, empirically supporting it as the appropriate $\mu$P condition for architectures with multi-transformation residual branches.
The numerical values are provided in Appendix~\ref{app: Additional Details of Shampoo-AdamW}.
}

\label{figures: cp vs dmup shampoo}
\end{figure*}

This subsection provides the complete Shampoo-AdamW results underlying Figures~\ref{figures: mup vs sp shampoo} and~\ref{figures: cp vs dmup shampoo}. 
We report learning-rate transfer under width and depth scaling, including the comparison between SP, Condition~\ref{condition: scale-invariant fl} ($k\ge2$) and Condition~\ref{condition: one-layer} ($k=1$).

\paragraph{Experimental setup.}
Following common practice~\citep{modded_nanogpt_2024}, hidden matrix parameters are optimized by Shampoo with a base learning rate of $\eta_{\mathrm{base}}$, $\beta_1=0.95$, $\beta_2=0.95$, a shampoo precondition frequency of $1$, a maximal precondition dimension of $20000$, $\epsilon_{\mathrm{base}}=10^{-8}$ for width scaling, and $\epsilon_{\mathrm{base}}=10^{-5}$ for depth scaling, while all other parameters (e.g., embedding layer, LM head, all biases) are updated by AdamW with a base learning rate of $2\eta_{\mathrm{base}}$, $\beta_1=0.9$, $\beta_2=0.95$, and $\epsilon_{\mathrm{base}}=10^{-16}$. We do not use weight decay in all learning rate transfer experiments.

\paragraph{Additional results of width-wise learning rate transfer.}
Complete numerical results of base learning rate transferability across different widths are presented in Table~\ref{tab: width sp shampoo} and Table~\ref{tab: width cp shampoo} for SP and $\mu$P from Condition~\ref{condition: scale-invariant fl} ($k\ge2$), respectively.

\begin{table}[t]
\centering
\caption{\textbf{For Shampoo-AdamW, SP fails to transfer the optimal base learning rate across widths.}
This table reports the numerical results corresponding to Figure~\ref{figures: mup vs sp shampoo}, where the best validation loss for each width is highlighted in \textbf{bold}.}
\renewcommand{\arraystretch}{1.3}
\vskip 0.05in
\begin{tabular}{ccccccc} 
\toprule
$n/\log_2(\eta_{\mathrm{base}})$ & -8   & -7    & -6    & -5    & -4    \\ 
\midrule
128            & 4.925 & 4.749 & 4.614 & \textbf{4.516} & 4.521  \\
256            & 4.688 & 4.407 & 4.294 & 4.239 & \textbf{4.236}  \\
512            & 4.481 & 4.242 & 4.147 & \textbf{4.084} & 4.097  \\
1024           & 4.385 & 4.229 & 4.059 & \textbf{3.992} & 4.067  \\
2048           & 4.338 & 4.205 & 4.042 & \textbf{3.981} & 6.002  \\
4096           & 4.366 & 4.247 & \textbf{4.148} & 4.604 & 7.476 \\
\bottomrule
\end{tabular}
\label{tab: width sp shampoo}
\end{table}

\begin{table}[t]
\centering
\caption{\textbf{For Shampoo-AdamW, $\mu$P from Condition~\ref{condition: scale-invariant fl} ($k\ge2$) transfers the optimal base learning rate across widths, and achieves lower loss than SP as the width increases.}
This table reports the numerical results corresponding to Figure~\ref{figures: mup vs sp shampoo}, where the best validation loss for each width is highlighted in \textbf{bold}.}
\renewcommand{\arraystretch}{1.3}
\vskip 0.05in
\begin{tabular}{ccccccc} 
\toprule
$n/\log_2(\eta_{\mathrm{base}})$ & -8   & -7    & -6    & -5    & -4    \\ 
\midrule
128            & 4.969 & 4.796 & 4.61  & \textbf{4.548} & 4.552  \\
256            & 4.689 & 4.404 & 4.291 & \textbf{4.246} & 4.249  \\
512            & 4.44  & 4.159 & 4.057 & \textbf{4.012} & 4.012  \\
1024           & 4.272 & 3.995 & 3.919 & \textbf{3.874} & 3.887  \\
2048           & 4.131 & 3.873 & 3.787 & \textbf{3.754} & 3.771  \\
4096           & 3.995 & 3.761 & 3.699 & \textbf{3.658} & 3.694 \\
\bottomrule
\end{tabular}
\label{tab: width cp shampoo}
\end{table}

\paragraph{Additional results of depth-wise learning rate transfer.}
Complete numerical results of base learning rate transferability across different depths are presented in Table~\ref{tab: depth sp shampoo}, Table~\ref{tab: depth cp shampoo}, and Table~\ref{tab: depth dmup shampoo} for SP, $\mu$P from Condition~\ref{condition: scale-invariant fl} ($k\ge2$), and $\mu$P from Condition~\ref{condition: one-layer} ($k = 1$), respectively.

\begin{table}[t]
\centering
\caption{
\textbf{For Shampoo-AdamW, SP yields increasing validation loss under depth scaling.}
This table reports the numerical results corresponding to Figure~\ref{figures: mup vs sp shampoo}.}
\renewcommand{\arraystretch}{1.3}
\vskip 0.05in
\begin{tabular}{cccccc} 
\toprule
$L/\log_2(\eta_{\mathrm{base}})$ & -7    & -6    & -5    & -4    & -3     \\ 
\hline
4         & 4.696 & 4.617 & 4.503 & 4.425 & 4.62   \\
8         & 4.721 & 4.638 & 4.511 & 4.421 & 4.634  \\
16        & 4.728 & 4.662 & 4.61  & 4.491 & 4.811  \\
32        & 4.783 & 4.754 & 4.72  & 4.704 & 4.966  \\
64        & 4.886 & 4.855 & 4.851 & 4.802 & 5.131  \\
128       & 5.033 & 5.023 & 4.994 & 4.983 & 5.276  \\
256       & 5.281 & 5.27  & 5.207 & 5.183 & 5.945   \\
\bottomrule
\end{tabular}
\label{tab: depth sp shampoo}
\end{table}

\begin{table}[t]
\centering
\caption{\textbf{For Shampoo-AdamW, $\mu$P from Condition~\ref{condition: scale-invariant fl} ($k\ge2$) approximately transfers the optimal base learning rate across depths, and achieves lower loss than SP as the depth increases.}
This table reports the numerical results corresponding to Figure~\ref{figures: mup vs sp shampoo}, where the best validation loss for each depth is highlighted in \textbf{bold}.}
\renewcommand{\arraystretch}{1.3}
\vskip 0.05in
\begin{tabular}{cccccc} 
\toprule
$L/\log_2(\eta_{\mathrm{base}})$ & -7   & -6    & -5    & -4    & -3    \\ 
\hline
4         & 4.696 & 4.605 & 4.511 & \textbf{4.487} & 4.623  \\
8         & 4.567 & 4.391 & \textbf{4.335} & 4.339 & 4.535  \\
16        & 4.544 & 4.293 & \textbf{4.265} & 4.273 & 4.415  \\
32        & 4.451 & 4.267 & 4.172 & \textbf{4.166} & 4.373  \\
64        & 4.407 & 4.198 & 4.134 & \textbf{4.109} & 4.625  \\
128       & 4.381 & 4.176 & \textbf{4.112} & 4.163 & 5.603  \\
256       & 4.371 & 4.159 & \textbf{4.1}   & 4.379 & 5.601  \\
\bottomrule
\end{tabular}
\label{tab: depth cp shampoo}
\end{table}

\begin{table}[t]
\centering
\caption{
\textbf{For Shampoo-AdamW, $\mu$P from Condition~\ref{condition: one-layer} ($k=1$) gives weaker depth scaling and higher validation loss than $\mu$P from Condition~\ref{condition: scale-invariant fl} ($k\ge2$).}
This table reports the numerical results corresponding to Figure~\ref{figures: cp vs dmup shampoo}.}
\renewcommand{\arraystretch}{1.3}
\vskip 0.05in
\begin{tabular}{cccccc} 
\toprule
$L/\log_2(\eta_{\mathrm{base}})$ & -7    & -6    & -5    & -4    & -3     \\ 
\hline
4         & 4.704 & 4.613 & 4.534 & 4.471 & 4.616  \\
8         & 4.693 & 4.472 & 4.383 & 4.349 & 4.548  \\
16        & 4.749 & 4.56  & 4.337 & 4.314 & 4.425  \\
32        & 4.82  & 4.597 & 4.439 & 4.277 & 4.377  \\
64        & 4.913 & 4.651 & 4.457 & 4.314 & 4.433  \\
128       & 5.045 & 4.787 & 4.575 & 4.428 & 4.493  \\
256       & 5.128 & 4.918 & 4.693 & 4.503 & 4.57   \\
\bottomrule
\end{tabular}
\label{tab: depth dmup shampoo}
\end{table}

\clearpage
\newpage

\subsubsection{Additional Details of Sophia}
\label{app: Additional Details of Sophia}

\begin{figure*}[t]
\centering

\subfloat{
\includegraphics[height=0.185\textwidth]{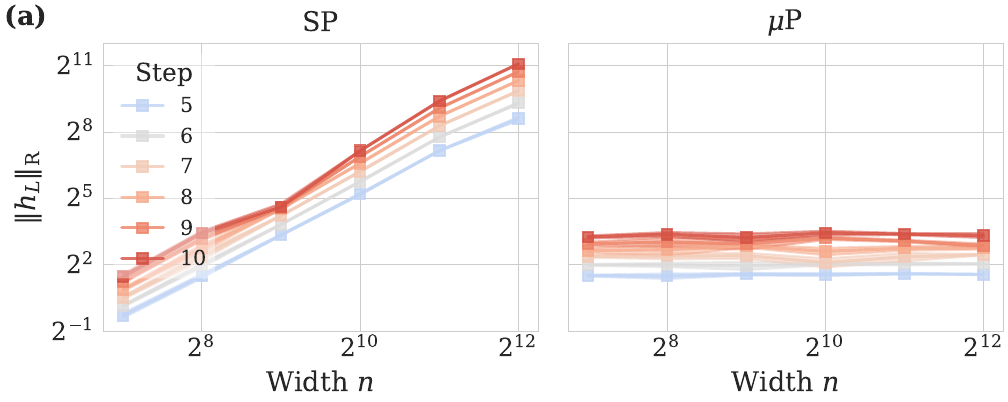}
\label{fig: sp_width_fl sophia}
}%
\hskip 1ex
\subfloat{
\includegraphics[height=0.185\textwidth]{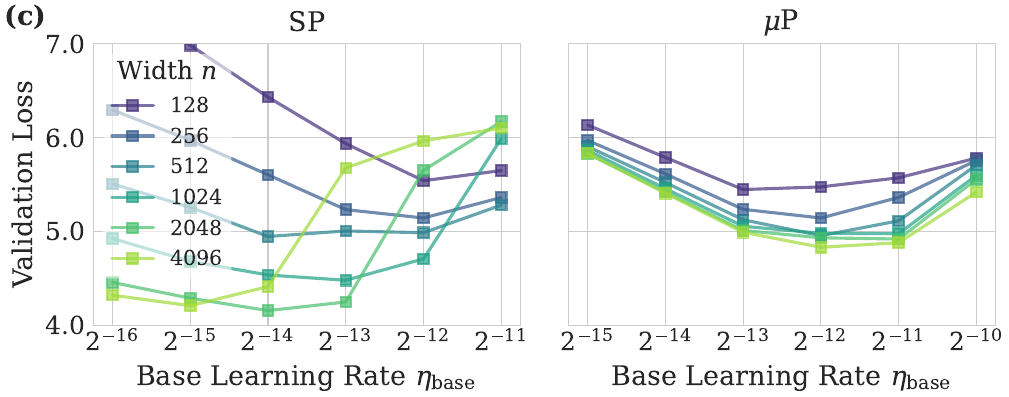}
\label{fig: sp_width_transfer sophia}
}%
\hskip 1ex
\subfloat{
\includegraphics[height=0.185\textwidth]{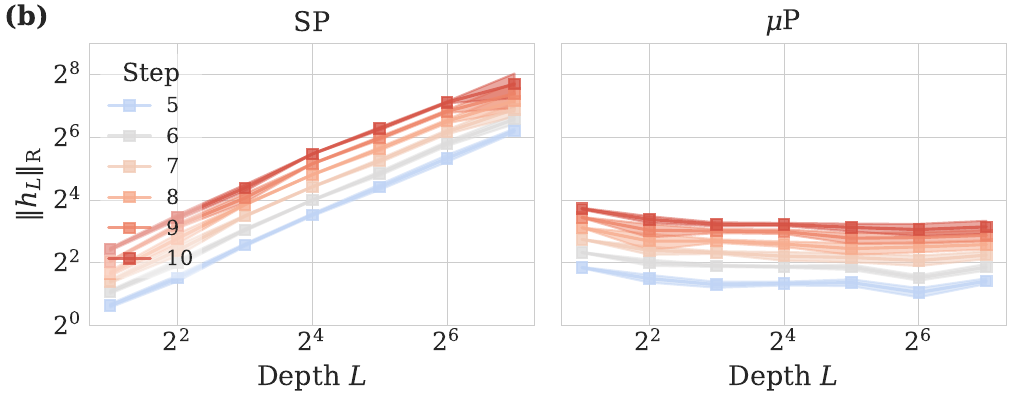}
\label{fig: sp_depth_fl sophia}
}%
\hskip 1ex
\subfloat{
\includegraphics[height=0.185\textwidth]{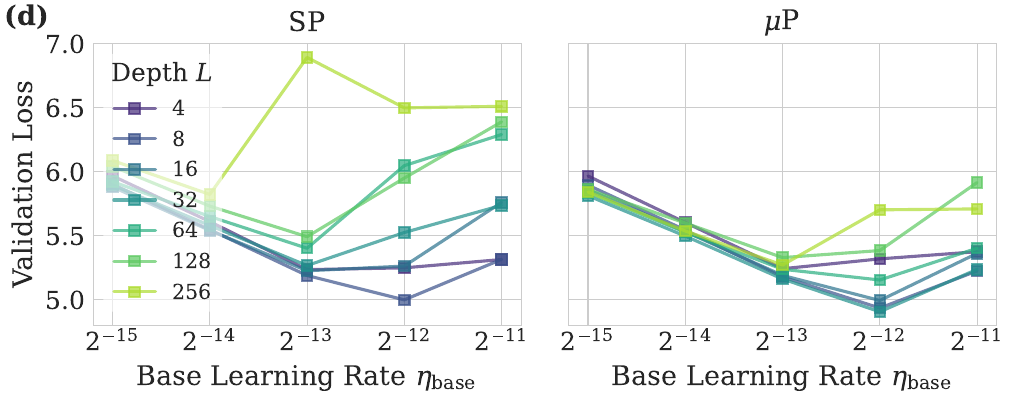}
\label{fig: sp_depth_transfer sophia}
}%

\vskip 0.1in
\caption{
\textbf{Feature learning and HP transfer of Sophia under SP and $\mu$P.} 
We train GPT-2 style models with Sophia using SP and $\mu$P derived from Condition~\ref{condition: scale-invariant fl} (see Table~\ref{tab: adamw-mup}). $\mu$P maintains stable feature norms and enables robust HP transfer across both width and depth scaling. The detailed numerical values are provided in Appendix~\ref{app: Additional Details of Sophia}.
}

\label{figures: mup vs sp sophia}
\end{figure*}

\begin{figure*}[t]
\centering

\includegraphics[height=0.25\textwidth]{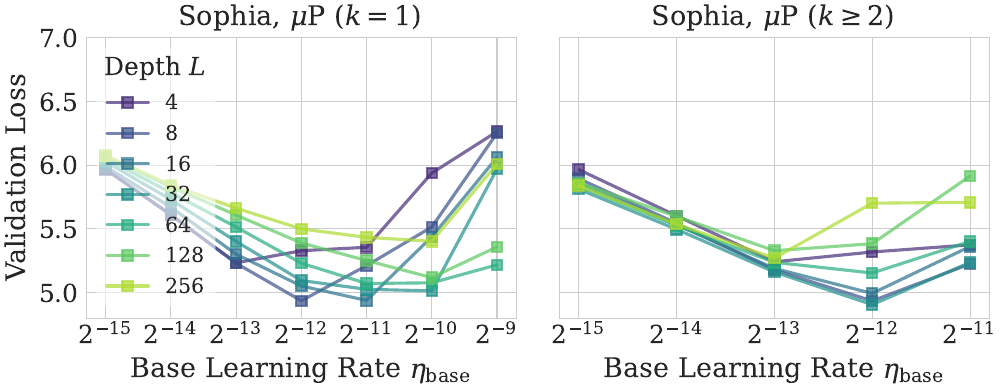}

\vskip 0.05in
\caption{
\textbf{Validating the role of residual block depth $k$.}
We compare two $\mu$P implementations for GPT-2-style models trained with Sophia: Depth-$\mu$P-style formulation from Condition~\ref{condition: one-layer} ($k=1$) and CompleteP-style formulation from Condition~\ref{condition: scale-invariant fl} ($k\ge 2$).
Condition~\ref{condition: scale-invariant fl} yields more stable HP transfer, empirically supporting it as the appropriate $\mu$P condition for architectures with multi-transformation residual branches.
The numerical values are provided in Appendix~\ref{app: Additional Details of Sophia}.
}

\label{figures: cp vs dmup sophia}
\end{figure*}

This subsection provides the complete Sophia results underlying Figures~\ref{figures: mup vs sp sophia} and~\ref{figures: cp vs dmup sophia}. 
We report learning-rate transfer under width and depth scaling, including the comparison between SP, Condition~\ref{condition: scale-invariant fl} ($k\ge2$) and Condition~\ref{condition: one-layer} ($k=1$). 

\paragraph{Experimental setup.}
Following common practice~\citep{DBLP:conf/iclr/Liu0HL024-sophia}, parameters are updated by Sophia with a base learning rate of $\eta_{\mathrm{base}}$, $\beta_1=0.965$, $\beta_2=0.99$, $\rho=0.05$, and a Hessian update frequency of $10$.

\paragraph{Additional results of width-wise learning rate transfer.}
Completed numerical results of base learning rate transferability across different widths are presented in Table~\ref{tab: width sp sophia} and Table~\ref{tab: width cp sophia} for SP and  $\mu$P from Condition~\ref{condition: scale-invariant fl} ($k\ge2$), respectively.

\begin{table}[t]
\centering
\caption{\textbf{For Sophia, SP fails to transfer the optimal base learning rate across widths.}
This table reports the numerical results corresponding to Figure~\ref{figures: mup vs sp sophia}, where the best validation loss for each width is highlighted in \textbf{bold}.}
\renewcommand{\arraystretch}{1.3}
\vskip 0.05in
\begin{tabular}{ccccccc} 
\toprule
$n/\log_2(\eta_{\mathrm{base}})$ & -16   & -15    & -14    & -13    & -12  & -11   \\ 
\midrule
128         & 7.485 & 6.982 & 6.43  & 5.935 & \textbf{5.54}  & 5.648  \\
256         & 6.293 & 5.97  & 5.603 & 5.232 & \textbf{5.143} & 5.361  \\
512         & 5.506 & 5.257 & 4.946 & 5.005 & \textbf{4.986} & 5.28   \\
1024        & 4.924 & 4.682 & 4.536 & \textbf{4.48}  & 4.708 & 5.988  \\
2048        & 4.457 & 4.289 & \textbf{4.157} & 4.249 & 5.651 & 6.169  \\
4096        & 4.321 & \textbf{4.209} & 4.413 & 5.677 & 5.964 & 6.101 \\
\bottomrule
\end{tabular}
\label{tab: width sp sophia}
\end{table}

\begin{table}[t]
\centering
\caption{\textbf{For Sophia, $\mu$P from Condition~\ref{condition: scale-invariant fl} ($k\ge2$) approximately transfers the optimal base learning rate across widths larger than $256$.}
This table reports the numerical results corresponding to Figure~\ref{figures: mup vs sp sophia}, where the best validation loss for each width is highlighted in \textbf{bold}.}
\renewcommand{\arraystretch}{1.3}
\vskip 0.05in
\begin{tabular}{ccccccc} 
\toprule
$n/\log_2(\eta_{\mathrm{base}})$ & -15   & -14    & -13    & -12    & -11 & -10   \\ 
\midrule
128         & 6.133 & 5.788 & \textbf{5.446} & 5.474 & 5.571 & 5.781  \\
256         & 5.969 & 5.613 & 5.235 & \textbf{5.143} & 5.362 & 5.756  \\
512         & 5.903 & 5.521 & 5.124 & \textbf{4.955} & 5.113 & 5.707  \\
1024        & 5.863 & 5.455 & 5.055 & 4.98  & \textbf{4.976} & 5.598  \\
2048        & 5.827 & 5.419 & 5.01  & 4.932 & \textbf{4.919} & 5.555  \\
4096        & 5.833 & 5.406 & 4.991 & \textbf{4.831} & 4.88  & 5.419 \\
\bottomrule
\end{tabular}
\label{tab: width cp sophia}
\end{table}

\paragraph{Additional results of depth-wise learning rate transfer.}
Complete numerical results of base learning rate transferability across different depths are presented in Table~\ref{tab: depth sp sophia}, Table~\ref{tab: depth cp sophia}, and Table~\ref{tab: depth dmup sophia} for SP,  $\mu$P from Condition~\ref{condition: scale-invariant fl} ($k\ge2$), and $\mu$P from Condition~\ref{condition: one-layer} ($k = 1$), respectively.

\begin{table}[t]
\centering
\caption{
\textbf{For Sophia, SP yields increasing validation loss under depth scaling.}
This table reports the numerical results corresponding to Figure~\ref{figures: mup vs sp sophia}.}
\renewcommand{\arraystretch}{1.3}
\vskip 0.05in
\begin{tabular}{cccccc} 
\toprule
$L/\log_2(\eta_{\mathrm{base}})$ & -15    & -14    & -13    & -12    & -11     \\ 
\hline
4         & 5.967 & 5.61  & 5.235 & 5.249 & 5.314  \\
8         & 5.902 & 5.547 & 5.189 & 4.999 & 5.315  \\
16        & 5.881 & 5.541 & 5.226 & 5.263 & 5.76   \\
32        & 5.894 & 5.58  & 5.267 & 5.525 & 5.737  \\
64        & 5.922 & 5.65  & 5.402 & 6.047 & 6.29   \\
128       & 6.041 & 5.731 & 5.492 & 5.95  & 6.389  \\
256       & 6.087 & 5.822 & 6.892 & 6.498 & 6.51   \\
\bottomrule
\end{tabular}
\label{tab: depth sp sophia}
\end{table}

\begin{table}[t]
\centering
\caption{\textbf{For Sophia, $\mu$P from Condition~\ref{condition: scale-invariant fl} ($k\ge2$) approximately transfers the optimal base learning rate across depths, and achieves lower loss than SP as the depth increases.}
This table reports the numerical results corresponding to Figure~\ref{figures: mup vs sp sophia}, where the best validation loss for each depth is highlighted in \textbf{bold}.}
\renewcommand{\arraystretch}{1.3}
\vskip 0.05in
\begin{tabular}{cccccc} 
\toprule
$L/\log_2(\eta_{\mathrm{base}})$ & -15   & -14    & -13    & -12    & -11    \\ 
\hline
4         & 5.966 & 5.603 & \textbf{5.242} & 5.319 & 5.375  \\
8         & 5.895 & 5.544 & 5.178 & \textbf{4.935} & 5.225  \\
16        & 5.861 & 5.532 & 5.19  & \textbf{4.994} & 5.362  \\
32        & 5.812 & 5.497 & 5.162 & \textbf{4.906} & 5.237  \\
64        & 5.837 & 5.527 & 5.238 & \textbf{5.154} & 5.402  \\
128       & 5.861 & 5.599 & \textbf{5.329} & 5.384 & 5.915  \\
256       & 5.841 & 5.537 & \textbf{5.271} & 5.702 & 5.709  \\
\bottomrule
\end{tabular}
\label{tab: depth cp sophia}
\end{table}

\begin{table}[t]
\centering
\caption{\textbf{For Sophia, $\mu$P from Condition~\ref{condition: one-layer} ($k=1$) fails to transfer the optimal base learning rate across depths.}
This table reports the numerical results corresponding to Figure~\ref{figures: cp vs dmup sophia}, where the best validation loss for each depth is highlighted in \textbf{bold}.
The implementation can be found in Table~\ref{tab: adamw-mup one-layer}.
}
\renewcommand{\arraystretch}{1.3}
\vskip 0.05in
\begin{tabular}{cccccccc} 
\toprule
$L/\log_2(\eta_{\mathrm{base}})$ & -15   & -14   & -13   & -12   & -11   & -10   & -9      \\ 
\hline
4         & 5.969 & 5.609 & \textbf{5.234} & 5.329 & 5.356 & 5.938 & 6.264  \\
8         & 5.966 & 5.612 & 5.231 & \textbf{4.936} & 5.212 & 5.515 & 6.252  \\
16        & 5.99  & 5.67  & 5.302 & 5.052 & \textbf{4.939} & 5.438 & 6.061  \\
32        & 6.028 & 5.73  & 5.402 & 5.095 & 5.026 & \textbf{5.014} & 5.97   \\
64        & 6.051 & 5.785 & 5.515 & 5.231 & \textbf{5.072} & 5.077 & 5.218  \\
128       & 6.066 & 5.83  & 5.609 & 5.39  & 5.252 & \textbf{5.118} & 5.359  \\
256       & 6.078 & 5.845 & 5.663 & 5.5   & 5.434 & \textbf{5.403} & 6.009   \\
\bottomrule
\end{tabular}
\label{tab: depth dmup sophia}
\end{table}

\clearpage
\newpage

\section{Justification of Upper Bound Estimation}
\label{app: lower bound}

In the derivation in Section~\ref{sec: spec condition}, we rely on the assumption that the subadditivity and submultiplicativity inequalities used throughout the analysis are tight under standard neural network initialization and training dynamics.
Under this assumption, controlling the upper bounds of $\|\vh_l(\vx)\|_{\normrms}$ and $\|\Delta\vh_l(\vx)\|_{\normrms}$ is sufficient to characterize the typical scaling behavior of $\|\vh_l(\vx)\|_{\normrms}$ and $\|\Delta\vh_l(\vx)\|_{\normrms}$ themselves, up to constant factors.
In this section, we provide a more concrete justification for the validity of this assumption.

\subsection{Subadditivity Inequalities}
\label{app: Subadditivity Inequalities}

Subadditivity inequalities are used in the derivation of the update conditions to control the norm of the accumulated feature update.
For instance, by decomposing $\Delta \vh_s(\vx)$ into several layerwise contributions, we obtain
\begin{align}
\Delta\vh_s(\vx) 
&= \Delta\vh_0(\vx) + \underbrace{\sum_{l=1}^s \alpha_l \mW_l^{(2)}\mW_l^{(1)}\Delta\vh_{l-1}(\vx)}_{\vepsilon_0(s)} + \underbrace{\sum_{l=1}^s \alpha_l \mW_l^{(2)}\Delta\mW_l^{(1)}(\vh_{l-1}(\vx)+\Delta\vh_{l-1}(\vx))}_{\vepsilon_1^{(1)}(s)} \nonumber \\
&+ \underbrace{\sum_{l=1}^s \alpha_l \Delta\mW_l^{(2)}\mW_l^{(1)}(\vh_{l-1}(\vx)+\Delta\vh_{l-1}(\vx))}_{\vepsilon_1^{(2)}(s)} + \underbrace{\sum_{l=1}^s \alpha_l \Delta\mW_l^{(2)}\Delta\mW_l^{(1)}(\vh_{l-1}(\vx)+\Delta\vh_{l-1}(\vx))}_{\vepsilon_2(s)} \label{eq: hidden update}
\end{align}
which leads to the upper bound in Equation~(\ref{eqn: last hidden update}):
\begin{align*}
\Vert\Delta\vh_L(\vx)\Vert_{\normrms}
\leq \Vert\Delta\vh_0(\vx)\Vert_{\normrms}
+ \Vert\vepsilon_0(L)\Vert_{\normrms}
+ \Vert\vepsilon_1^{(1)}(L)\Vert_{\normrms}
+ \Vert\vepsilon_1^{(2)}(L)\Vert_{\normrms}
+ \Vert\vepsilon_2(L)\Vert_{\normrms}.
\end{align*}
A similar subadditivity argument is further applied to each term, e.g.,
\begin{align*}
\|\vepsilon_1^{(1)}(L)\|_{\normrms}
\leq
\sum_{l=1}^L \alpha_l
\big\|
\mW_l^{(2)}\Delta\mW_l^{(1)}\big(\vh_{l-1}(\vx)+\Delta\vh_{l-1}(\vx)\big)
\big\|_{\normrms}.
\end{align*}

In principle, such subadditivity bounds may be loose when the summands point in largely different or canceling directions.
However, due to the \emph{chain rule in backpropagation, the parameter updates $\{\Delta \mW_l\}_{l=1}^L$ across different layers are strongly correlated} (e.g., see discussion in~\citet{completep,TP-6}).
More precisely, each $\Delta \mW_l$ is proportional to the product of a forward feature $\vh_{l-1}(\vx)$ and a backpropagated error signal, which itself is obtained by repeatedly multiplying upstream Jacobians.
As a result, the layerwise update contributions to $\Delta\vh_L(\vx)$ share similar directions in feature space rather than behaving as independent or adversarial vectors.

Consequently, the terms appearing in the sums defining $\vepsilon_0(L)$, $\vepsilon_1^{(1)}(L)$, and $\vepsilon_1^{(2)}(L)$ tend to be positively aligned, and cancellations between different layers are atypical~\citep{completep,TP-6}.
In this regime, the norm of the sum scales proportionally to the sum of the norms, implying that the subadditivity inequality provides an accurate characterization of the magnitude of $\Delta\vh_L(\vx)$ up to constant factors.
Therefore, under standard training dynamics, controlling the subadditive upper bounds suffices to capture the typical scaling behavior of the feature updates.

\subsection{Submultiplicativity Inequalities}
\label{app: Submultiplicativity Inequalities}

Submultiplicativity inequalities are extensively used in the analysis of both the initial condition and the update condition.
In this section, we discuss these two scenarios separately and clarify why the resulting upper bounds are typically tight under standard neural network initialization and training dynamics. Our reasoning is closely aligned with that of~\citet{mup-spectral}, which employs a similar perspective in deriving spectral conditions for width scaling.

\subsubsection{Initalization Condition}

In the derivation of the initialization conditions, submultiplicativity inequalities are applied to the input, hidden, and output layers.
For the input and output layers, the analysis is the same as for the width-scaling setting, since each involves a single linear transformation (e.g., we used $\Vert\vh_0(\vx)\Vert_\normrms
\leq \alpha_0\Vert\mW_0\Vert_\normrms \Vert\vx\Vert_\normrms$).
Accordingly, the tightness of the corresponding bounds directly follows from Claim~1 in~\citet{mup-spectral}.
In contrast, the hidden layers in our setting require additional justification, since each residual block consists of two or more stacked linear transformations rather than a single mapping (e.g., we used $\|\alpha_l \mW_l^{(2)} \mW_l^{(1)} \vh_{l-1}(\vx)\|_{\normrms} \leq \alpha_l
\|\mW_l^{(2)}\|_{\normrms}
\|\mW_l^{(1)}\|_{\normrms}
\|\vh_{l-1}(\vx)\|_{\normrms}$).
In what follows, we therefore focus on establishing the tightness of the submultiplicativity bounds for these multi-layer residual blocks.

\begin{claim}[Alignment of initial weight matrices]
\label{claim:init}
Fix a feature vector $\vh_{l-1}(\vx)\in\R^{n}$.
Recall that $\mW_l^{(1)} \in \R^{n_l\times n}, \mW_l^{(2)} \in \R^{n \times n_l}$ are initialized with 
$\left({\mW}_l^{(1)}\right)_{ij}, \left({\mW}_l^{(2)}\right)_{ij} \overset{\mathrm{i.i.d.}}{\sim} \mathcal{N}(0,\sigma_l^2)$. 
Provided that $n_l = \Theta(n)$, then with high probability:
\begin{equation*}
    \| \mW_l^{(2)} \mW_l^{(1)} \vh_{l-1}(\vx)\|_\normrms = \Theta \Big(\|\mW_l^{(2)}\|_\normrms \|\mW_l^{(1)}\|_\normrms  \|\vh_{l-1}(\vx)\|_\normrms \Big),
\end{equation*}
which means that the submultiplicativity inequalities used in the initialization regime are tight.
\end{claim}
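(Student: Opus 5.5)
The plan is to compute both sides explicitly and show they have the same order, using only Gaussian rotational invariance and the spectral-norm fact for random matrices recalled in Section~\ref{sec: prelimi}. Since $\vh_{l-1}(\vx)$ is fixed and independent of the fresh weights, the upper bound is submultiplicativity itself. It therefore suffices to prove the matching lower bound
\[
\|\mW_l^{(2)}\mW_l^{(1)}\vh_{l-1}(\vx)\|_\normrms = \Omega\big(\|\mW_l^{(2)}\|_\normrms\|\mW_l^{(1)}\|_\normrms\|\vh_{l-1}(\vx)\|_\normrms\big)
\]
with high probability.

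First, I would compute the right-hand side. By Equation~(\ref{eq:w_norm})-style reasoning, $\|\mW_l^{(1)}\|_2=\Theta(\sigma_l(\sqrt{n_l}+\sqrt n))=\Theta(\sigma_l\sqrt n)$ because $n_l=\Theta(n)$. Hence $\|\mW_l^{(1)}\|_\normrms=\sqrt{n/n_l}\,\|\mW_l^{(1)}\|_2=\Theta(\sigma_l\sqrt n)$. The same holds for $\mW_l^{(2)}$. The right-hand side is then $\Theta(\sigma_l^2 n\,\|\vh_{l-1}(\vx)\|_\normrms)$.

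Second, I would compute the left-hand side in two stages, conditioning as I go.

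\emph{Stage one.} Let $\vu=\mW_l^{(1)}\vh_{l-1}(\vx)$. By rotational invariance, $\vu\sim\mathcal N(0,\sigma_l^2\|\vh_{l-1}(\vx)\|_2^2\mI_{n_l})$. Standard $\chi^2$ concentration (Theorem~3.1.1 in~\citet{hdp}) then gives $\|\vu\|_2=\Theta(\sigma_l\sqrt{n_l}\,\|\vh_{l-1}(\vx)\|_2)$ with probability $1-e^{-cn_l}$.

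\emph{Stage two.} Conditioned on $\vu$, and using that $\mW_l^{(2)}$ is independent of $\mW_l^{(1)}$, we have $\mW_l^{(2)}\vu\sim\mathcal N(0,\sigma_l^2\|\vu\|_2^2\mI_n)$. The same concentration gives $\|\mW_l^{(2)}\vu\|_2=\Theta(\sigma_l\sqrt n\,\|\vu\|_2)$.

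Combining the two stages and converting to RMS norms (dividing by $\sqrt n$ on both the output and the input side) yields
\[
\|\mW_l^{(2)}\mW_l^{(1)}\vh_{l-1}(\vx)\|_\normrms=\Theta(\sigma_l^2\sqrt{n_l n}\,\|\vh_{l-1}(\vx)\|_\normrms)=\Theta(\sigma_l^2 n\,\|\vh_{l-1}(\vx)\|_\normrms).
\]
This matches the right-hand side from the first step. A union bound over the two events gives the high-probability statement.

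The main point to handle carefully is the independence structure. The argument needs $\vh_{l-1}(\vx)$ to be independent of $\mW_l^{(1)},\mW_l^{(2)}$. That holds at initialization, since $\vh_{l-1}$ depends only on earlier layers, so I would state the claim conditionally on $\vh_{l-1}(\vx)$. The other point is that the conclusion relies on $n_l=\Theta(n)$. Without it, the spectral norm would be dominated by $\max(\sqrt{n_l},\sqrt n)$ while a single Gaussian vector only picks up $\sqrt{n_l}\cdot\sqrt n$ through the two stages, and the orders would not match. I would remark that the same two-stage conditioning extends inductively to $k$ factors, which covers Condition~\ref{condition: multi-layer}.
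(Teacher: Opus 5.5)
Your proposal is correct and follows essentially the same route as the paper: apply $\mW_l^{(1)}$ to the fixed vector and use Gaussian concentration to get $\|\mW_l^{(1)}\vh_{l-1}(\vx)\|_2=\Theta(\sigma_l\sqrt{n_l}\,\|\vh_{l-1}(\vx)\|_2)$, then repeat for $\mW_l^{(2)}$ on the intermediate vector, and finally match against $\|\mW_l^{(i)}\|_\normrms=\Theta(\sigma_l\sqrt{n})$, which holds because $n_l=\Theta(n)$. Your version is more careful than the paper's, using explicit conditioning and independence, $\chi^2$ concentration in place of the law-of-large-numbers heuristic, and a union bound, and your remark that the orders fail to match without $n_l=\Theta(n)$ is also correct.
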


\begin{proof}
We first consider the intermediate feature $\vz_l := \mW_l^{(1)} \vh_{l-1}(\vx) \in \R^{n_l}$. 
Since $\mW_l^{(1)}$ has i.i.d. Gaussian entries with zero mean and variance $\sigma_l^2$, by the law of large numbers, we have
\begin{align*}
\|\vz_l\|_\normrms = \|\mW_l^{(1)} \vh_{l-1}(\vx)\|_\normrms \approx \sigma_l \sqrt{n} \|\vh_{l-1}(\vx)\|_\normrms.
\end{align*}

In the meanwhile, by the standard concentration inequalities for random matrices~\citep{hdp} we have, with high probability that $\|\mW_l^{(1)}\|_\normrms = \sqrt{n/n_l} \cdot \sigma_l (\sqrt{n} + \sqrt{n_l}) = \Theta(\sigma_l \sqrt{n})$. Therefore, we obtain
\begin{align}
\|\vz_l\|_\normrms = \|\mW_l^{(1)} \vh_{l-1}(\vx)\|_\normrms = \Theta(\|\mW_l^{(1)} \|_\normrms \|\vh_{l-1}(\vx)\|_\normrms).
\label{eq:w1_align}
\end{align}

Next, we apply $\mW_l^{(2)}$ to $\vz_l$. Again, $\mW_l^{(2)}$ is an i.i.d. Gaussian matrix with variance $\sigma_l^2$, so by the law of large numbers, we have
\[
\|\mW_l^{(2)} \vz_l\|_\normrms \approx \sigma_l \sqrt{n_l} \|\vz_l\|_\normrms.
\]
As well, by the standard concentration inequalities for random matrices~\citep{hdp} we have, with high probability that $\|\mW_l^{(2)}\|_\normrms = \sqrt{n_l/n} = \sigma_l (\sqrt{n} + \sqrt{n_l}) = \Theta(\sigma_l \sqrt{n_l})$. Therefore, we obtain
\begin{align*}
\| \mW_l^{(2)} \mW_l^{(1)} \vh_{l-1}(\vx)\|_\normrms
=
\|\mW_l^{(2)} \vz_l\|_\normrms 
=
\Theta(\|\mW_l^{(2)}\|_\normrms \|\vz_l\|_\normrms)
= \Theta(\|\mW_l^{(1)} \|_\normrms \|\mW_l^{(1)} \|_\normrms \|\vh_{l-1}(\vx)\|_\normrms),
\end{align*}
which shows that the submultiplicativity $\|\alpha_l \mW_l^{(2)} \mW_l^{(1)} \vh_{l-1}(\vx)\|_{\normrms} \leq \alpha_l
\|\mW_l^{(2)}\|_{\normrms}
\|\mW_l^{(1)}\|_{\normrms}
\|\vh_{l-1}(\vx)\|_{\normrms}$ used to derive the initial condition is tight.
\end{proof}

\subsubsection{Update Condition}
\label{app: Update Condition}

We now justify the use of submultiplicativity inequalities in the update regime and argue that the resulting upper bounds on $\|\Delta\vh_L(\vx)\|$ are tight in terms of scaling.
For the input and output layers, the analysis is identical to that in the width-scaling regime.
As a consequence, the tightness of the corresponding bounds follows directly from Claim~2 in~\citet{mup-spectral}.
In contrast, the hidden layers in our setting require additional justification, as each residual block consists of multiple stacked linear transformations and gives rise to a more involved update structure due to the presence of residual connections.
Analogous to Claim~2 in~\citet{mup-spectral}, we therefore begin by establishing the following observation.

\begin{claim}[Alignment of updates]
\label{claim:update}
For any $l\in[L]$, an update $\Delta\mW_l^{(2)}$ given by gradient descent with batch size 1, we have
    \begin{equation*}
        \| \Delta\mW_{l}^{(2)} \mW_{l}^{(1)} \vh_{l-1}(\vx)\|_\normrms = \Theta\left(\| \Delta\mW_{l}^{(2)}\|_\normrms \|\mW_{l}^{(1)}\|_\normrms \|{\vh_{l-1}(\vx)}\|_\normrms\right).
    \end{equation*}
\end{claim}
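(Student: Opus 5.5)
The plan is to exploit the rank-one structure of a batch-size-one gradient for $\mW_l^{(2)}$. By the chain rule, with $\vz_l := \mW_l^{(1)}\vh_{l-1}(\vx)$ the input to $\mW_l^{(2)}$, the gradient is
\[
\nabla_{\mW_l^{(2)}}\mathcal{L} = \alpha_l\, \vg_l \vz_l^\top ,
\]
where $\vg_l$ is the backpropagated error at the output of the residual block. Hence a plain gradient step gives $\Delta\mW_l^{(2)} = -\eta_l^{(2)}\alpha_l\, \vg_l\vz_l^\top$, a rank-one matrix whose right singular vector is exactly $\vz_l/\|\vz_l\|_2$. The only facts I need are this structure and the conclusion of Claim~\ref{claim:init}, specifically equation~(\ref{eq:w1_align}), $\|\vz_l\|_\normrms = \Theta(\|\mW_l^{(1)}\|_\normrms\|\vh_{l-1}(\vx)\|_\normrms)$.

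First I compute both sides exactly for the rank-one matrix $\mA = \vu\vv^\top$ with $\vu\in\R^n$ and $\vv\in\R^{n_l}$. Its spectral norm is $\|\vu\|_2\|\vv\|_2$, so
\[
\|\mA\|_\normrms = \sqrt{n_l/n}\,\|\vu\|_2\|\vv\|_2 .
\]
Next, $\mA\vv = \vu\|\vv\|_2^2$, so
\[
\|\mA\vv\|_\normrms = \|\vu\|_2\|\vv\|_2^2/\sqrt{n} .
\]
Since $\|\vv\|_\normrms = \|\vv\|_2/\sqrt{n_l}$, it follows that $\|\mA\|_\normrms\|\vv\|_\normrms = \|\vu\|_2\|\vv\|_2^2/\sqrt{n}$. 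So for the update acting on the very vector that generated it, submultiplicativity is an \emph{equality}:
\[
\|\Delta\mW_l^{(2)}\vz_l\|_\normrms = \|\Delta\mW_l^{(2)}\|_\normrms\|\vz_l\|_\normrms .
\]
This is the same mechanism as Claim~2 in \citet{mup-spectral}. The scalar prefactors $\eta_l^{(2)}\alpha_l$ and the size of $\vg_l$ cancel from both sides, so no estimate of the backward signal is needed.

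Second, I substitute equation~(\ref{eq:w1_align}) into this identity to obtain
\[
\|\Delta\mW_l^{(2)}\mW_l^{(1)}\vh_{l-1}(\vx)\|_\normrms = \Theta\bigl(\|\Delta\mW_l^{(2)}\|_\normrms\|\mW_l^{(1)}\|_\normrms\|\vh_{l-1}(\vx)\|_\normrms\bigr).
\]
The $\Theta$ comes only from the high-probability Gaussian estimate of $\mW_l^{(1)}$ at initialization, which was already established in Claim~\ref{claim:init}.

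The step I expect to need care is making sure $\mW_l^{(1)}$ and $\vh_{l-1}(\vx)$ in the claim are the quantities at which the gradient was taken, namely the pre-update initialization and the same input $\vx$. Under the one-step, single-example setup this holds by construction, so the alignment is exact rather than approximate. If $\mW_l^{(1)}$ were instead read as the updated weight, I would split
\[
(\mW_l^{(1)}+\Delta\mW_l^{(1)})\vh_{l-1} = \vz_l + \Delta\mW_l^{(1)}\vh_{l-1}.
\]
I would then argue that the second piece has norm $\mathcal{O}(\|\vz_l\|_\normrms)$ under the first-order update condition (\ref{eq:update_hidd_1}), so it cannot cancel $\vz_l$ beyond a constant factor. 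I would state the claim in the former, exact form and only remark on the latter.
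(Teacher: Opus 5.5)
Your proof is correct and follows the paper's argument: both use the rank-one form of the batch-size-one gradient, $\Delta\mW_l^{(2)}\propto\nabla_{\vh_l(\vx)}\mathcal{L}\,\vz_l^\top$, to get the exact identity $\|\Delta\mW_l^{(2)}\vz_l\|_{\normrms}=\|\Delta\mW_l^{(2)}\|_{\normrms}\|\vz_l\|_{\normrms}$, and then invoke Equation~(\ref{eq:w1_align}); keeping the factor $\alpha_l$, which the paper drops, is harmless because the scalar cancels. Your side remark about reading $\mW_l^{(1)}$ as the updated weight is unnecessary and, as written, insufficient: an $\mathcal{O}(\|\vz_l\|_{\normrms})$ bound on $\Delta\mW_l^{(1)}\vh_{l-1}(\vx)$ does not rule out cancellation in the scalar $\vz_l^\top\bigl(\vz_l+\Delta\mW_l^{(1)}\vh_{l-1}(\vx)\bigr)$ that the rank-one update acts on, so that case would need a non-cancellation assumption in the spirit of Assumption~\ref{ass:extension_1_update}.
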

\begin{proof}
By the chain rule, we can write $\Delta\mW_{l}^{(2)}$ as
\begin{align*}
\Delta\mW_{l}^{(2)} = -\eta_l^{(2)} \nabla_{\vh_l(\vx)}\mathcal{L} \cdot (\mW_{l}^{(1)}\vh_{l-1}(\vx))^\top,
\end{align*}
which is rank-one and aligns with the incoming feature. Therefore, we have
\begin{align*}
\| \Delta\mW_{l}^{(2)} \mW_{l}^{(1)} \vh_{l-1}(\vx)\|_\normrms &= 
\| \eta_l^{(2)} \nabla_{\vh_l(\vx)}\mathcal{L} \cdot (\mW_{l}^{(1)}\vh_{l-1}(\vx))^\top \mW_{l}^{(1)} \vh_{l-1}(\vx)\|_\normrms \\
&=\eta_l^{(2)} \|\nabla_{\vh_l(\vx)}\mathcal{L}\|_\normrms
\|\mW_{l}^{(1)}\vh_{l-1}(\vx)\|_2^2 \\
&=\sqrt{\frac{n_l}{n}} \cdot \eta_l^{(2)} \sqrt{n}\|\nabla_{\vh_l(\vx)}\mathcal{L}\|_\normrms \|\mW_{l}^{(1)}\vh_{l-1}(\vx)\|_2 \cdot
\|\mW_{l}^{(1)}\vh_{l-1}(\vx)\|_\normrms \\
&=\sqrt{\frac{n_l}{n}} \cdot \eta_l^{(2)} \|\nabla_{\vh_l(\vx)}\mathcal{L}\|_2 \|\mW_{l}^{(1)}\vh_{l-1}(\vx)\|_2 \cdot
\|\mW_{l}^{(1)}\vh_{l-1}(\vx)\|_\normrms \\
&=\sqrt{\frac{n_l}{n}} \cdot \|\Delta\mW_{l}^{(2)}\|_2 \cdot
\|\mW_{l}^{(1)}\vh_{l-1}(\vx)\|_\normrms \\
&=\|\Delta\mW_{l}^{(2)}\|_\normrms \cdot
\|\mW_{l}^{(1)}\vh_{l-1}(\vx)\|_\normrms.
\end{align*}
Furthermore, by the initial alignment $\|\mW_l^{(1)} \vh_{l-1}(\vx)\|_\normrms = \Theta(\|\mW_l^{(1)} \|_\normrms \|\vh_{l-1}(\vx)\|_\normrms)$ in Equation~(\ref{eq:w1_align}), we obtain
\begin{equation*}
        \| \Delta\mW_{l}^{(2)} \mW_{l}^{(1)} \vh_{l-1}(\vx)\|_\normrms = \Theta\left(\| \Delta\mW_{l}^{(2)}\|_\normrms \|\mW_{l}^{(1)}\|_\normrms \|{\vh_{l-1}(\vx)}\|_\normrms\right),
\end{equation*}
which completes the proof.

\end{proof}

Based on Claim~\ref{claim:update}, we demonstrate how the tightness of such submultiplicativity inequality directly leads to a tight upper bound on $\|\Delta\vh_L(\vx)\|_\normrms$ in terms of scaling.
In particular, the claim ensures that the norms of the layerwise update contributions are accurately captured by their submultiplicative estimates, so that summing these bounds yields an upper bound that faithfully reflects the true magnitude of the accumulated feature update. We can rewrite the expression of the hidden layer update in Equation~(\ref{eq: hidden update}) as
\begin{align*}
\Delta\vh_L(\vx) =
\sum_{l=1}^L \alpha_l \Delta\mW_l^{(2)}\mW_l^{(1)}\vh_{l-1}(\vx) + \cdots
\end{align*}
Therefore, as long as the term $\sum_{l=1}^s \alpha_l \Delta\mW_l^{(2)}\mW_l^{(1)}\vh_{l-1}(\vx)$ does not perfectly cancel with other terms, we have
\begin{align*}
\|\Delta\vh_L(\vx)\|_\normrms 
&= \Omega\left(\|\sum_{l=1}^L \alpha_l \Delta\mW_l^{(2)}\mW_l^{(1)}\vh_{l-1}(\vx)\|_\normrms\right) = 
\Omega\left(\sum_{l=1}^L \alpha_l  \| \Delta\mW_l^{(2)}\mW_l^{(1)}\vh_{l-1}(\vx)\|_\normrms\right) \\
&= \Omega\left(\sum_{l=1}^L \alpha_l  \| \Delta\mW_{l}^{(2)}\|_\normrms \|\mW_{l}^{(1)}\|_\normrms \|{\vh_{l-1}(\vx)}\|_\normrms \right) \\
&= \Omega(1),
\end{align*}
where the second equality uses the tightness of subadditivity inequalities under the principle in Appendix~\ref{app: Subadditivity Inequalities}. Therefore, the estimation of $\|\Delta\vh_L(\vx)\|_\normrms$ by using submultiplicativity inequalities in Section~\ref{sec: spec condition} is tight.

\section{Extension to General Training Settings}
\label{app: Extension to General Training Settings}

As derived in the main text, our theoretical framework primarily investigates a simplified scenario: a \emph{one-step} update of a \emph{linear} residual MLP on a \emph{single} datapoint. In this section, we discuss its extension to the general practical setting: \emph{finite multi-step} updates of a \emph{non-linear} residual MLP on a \emph{finite batch} of datapoints.

This extension relies on three key assumptions, as those justified in the width-scaling literature~\citep{mup-spectral}. Below, we formally restate these assumptions and empirically verify their validity in the width-depth scaling context using the experimental setup detailed in Appendix~\ref{app:entension_experiment}.

\subsection{Assumptions for Extensions}
\label{app: Assumptions for Extensions}

\textbf{Multi-step Training.} 
In the main text, we derived a parameterization that ensures the weight matrices $\mW_l$ and their first-step updates $\Delta \mW_l$ (also, $\Vert\vh_l\Vert_\normrms$ and $\Vert\Delta\vh_l\Vert_\normrms$) scale correctly with the model size to achieve feature learning. To ensure these properties hold throughout \emph{finite multi-step} training, the updated parameters must maintain the same scaling order as the first step. This is formalized in Assumption~\ref{ass:extension_1_update}.

\begin{assumption}[Non-vanishing update]
\label{ass:extension_1_update}
During finite-step training, we assume the updated weights and feature vectors for any layer $l$ satisfy:
\begin{align*}
    &\Vert\mW_l + \Delta\mW_l\Vert_\normrms = \Theta\left( \Vert\mW_l\Vert_\normrms + \Vert\Delta\mW_l\Vert_\normrms \right), \\
    &\Vert\vh_l(\vx) + \Delta\vh_l(\vx)\Vert_\normrms = \Theta\left( \Vert\vh_l(\vx)\Vert_\normrms + \Vert\Delta\vh_l(\vx)\Vert_\normrms \right).
\end{align*}
\end{assumption}

In Assumption~\ref{ass:extension_1_update}, the upper bound of the orders of the left-hand side by the right-hand side (or say, $O(\cdot)$) is guaranteed by the subadditivity. The core constraint is the lower bound of the orders (or say $\Omega(\cdot)$), which implies that the update $\Delta\mW_l$ does not destructively cancel out the existing weight $\mW_l$ (i.e., the update does not cause the norm to vanish). As discussed in~\citet{mup-spectral}, such exact cancellation is extremely rare in practical neural network training. 
We empirically verify this assumption in Figures~\ref{figures:extension_1_weight_update} and \ref{figures:extension_1_hidden_update}, where the norm ratios remain constant across varying depths.

\begin{figure*}[t]
    \centering
    \includegraphics[width=0.95\textwidth]{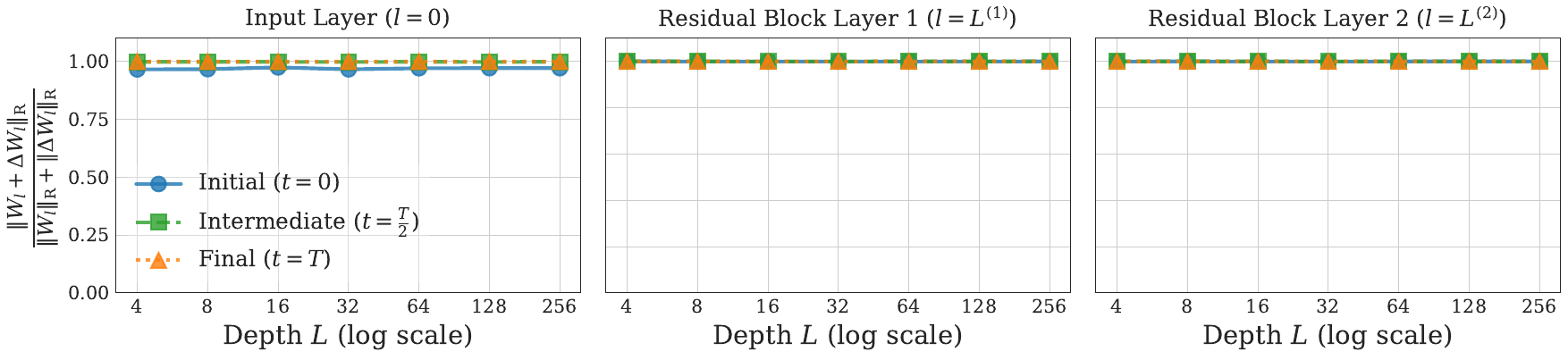}
    \caption{
    \textbf{Validation of Assumption~\ref{ass:extension_1_update} (weight update).} During the training, the ratio $\frac{\Vert\mW_l + \Delta\mW_l\Vert_\normrms}{ \Vert\mW_l\Vert_\normrms + \Vert\Delta\mW_l\Vert_\normrms}$ remains constant near 1 across depth for the input layer and residual block layers, showing non-vanishing updates throughout multiple-step training.}
    \label{figures:extension_1_weight_update}
\end{figure*}

\begin{figure*}[t]
    \centering
    \includegraphics[width=0.95\textwidth]{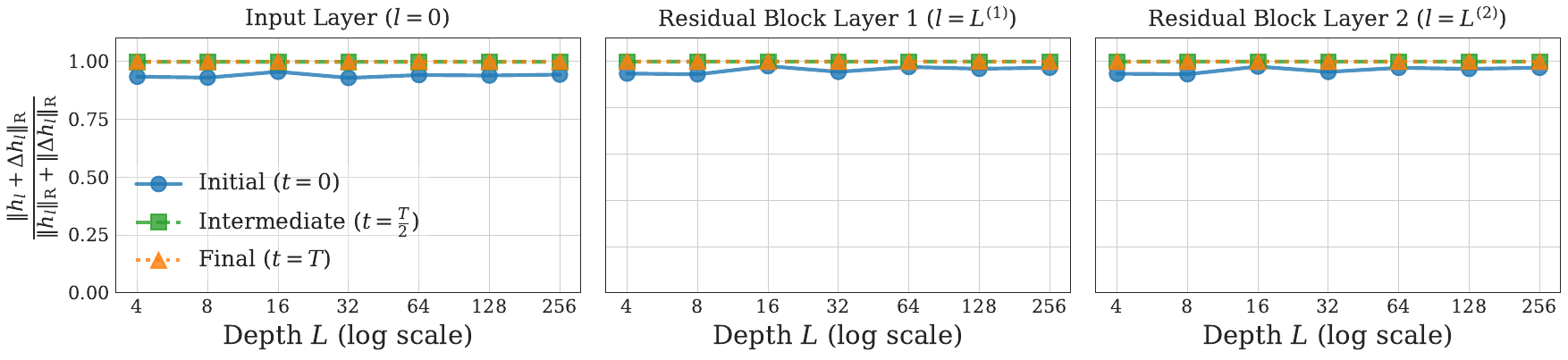}
    \caption{\textbf{Validation of Assumption~\ref{ass:extension_1_update} (feature update).} During the training, the ratio $\frac{\Vert\vh_l + \Delta\vh_l\Vert_\normrms}{  \Vert\vh_l\Vert_\normrms + \Vert\Delta\vh_l\Vert_\normrms}$ remains around constant near 1 across varying depths, showing non-vanishing updates throughout multiple-step training.}
    \label{figures:extension_1_hidden_update}
\end{figure*}

\textbf{Non-linearity.}
To extend the analysis to non-linear activations, we substitute the linear transformation $\mW_l\vh_{l-1}(\vx)$ with $\phi\left(\mW_l\vh_{l-1}(\vx)\right)$, where $\phi(\cdot)$ is an activation function (e.g., ReLU). The resulting architecture is as in Equation~(\ref{eq:network_nonlinear}) versus Equation~(\ref{eqn: resnet}) in the linear case.
We assume that the activation function preserves the asymptotic order of the feature norms, ensuring that the scaling properties derived for the pre-activations remain valid for the post-activations.

\begin{assumption}[Stable activation]
\label{ass:extension_2_stable_act}
During the training, we assume that the features before and after the nonlinear activation are of the same scale:
\begin{align*}
    & \Vert \phi(\mW_l\vh_{l-1}(\vx))\Vert_\normrms = \Theta(\Vert \mW_l\vh_{l-1}(\vx)\Vert_\normrms).
\end{align*}
\end{assumption}

Figure~\ref{figures:extension_2_activation} empirically verifies this assumption for the ReLU activation, showing that the ratio of post-activation to pre-activation norms is stable across depth.
\begin{figure*}[t]
    \centering
    \includegraphics[width=0.95\textwidth]{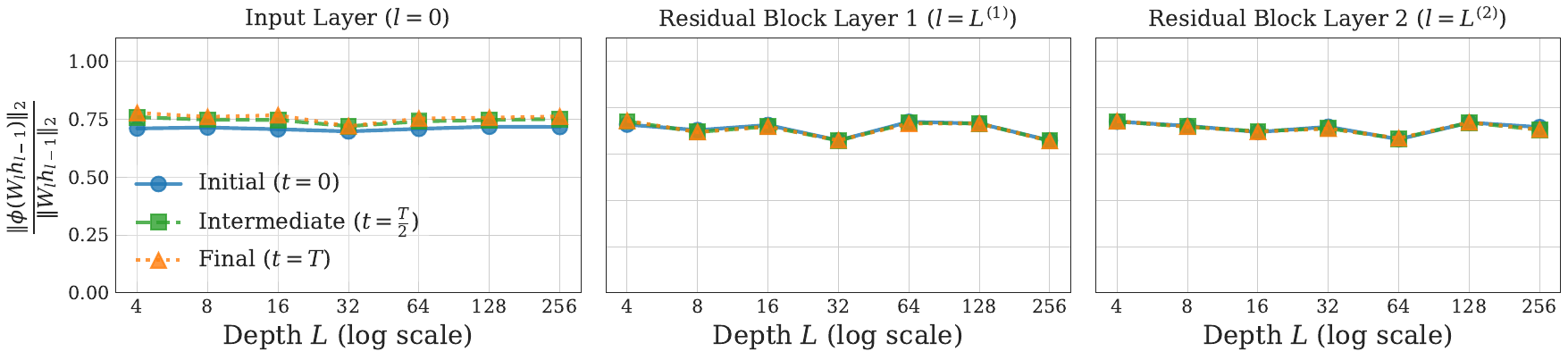}
    \caption{\textbf{Validation of Assumption~\ref{ass:extension_2_stable_act} (stable activation).} During the training, the ratio of post-activation to pre-activation norms $\frac{\Vert \phi(\mW_l\vh_{l-1})\Vert_\normrms}{\Vert \mW_l\vh_{l-1}\Vert_\normrms}$ remains stable across varying depths, confirming that the ReLU activation does not collapse the norm in non-linear networks.}
    \label{figures:extension_2_activation}
\end{figure*}

\textbf{Training with Mini-batch.}
Finally, to extend beyond the single-sample setting, we consider updates computed on a batch of data $\{\vx^{(i)}, \vy^{(i)}\}_{i=1}^B$. Let $\Delta \mW_l^{(i)}$ denote the update contribution from the $i$-th datapoint (e.g., $\Delta \mW_l^{(i)} = -\eta_l \nabla_{\mW_l}\mathcal{L}(\vx^{(i)}, \vy^{(i)})$ for SGD), such that the total batch update is $\Delta \mW_l = \frac{1}{B}\sum_{i=1}^B \Delta \mW_l^{(i)}$.
We expect that the gradient contributions from different samples do not destructively cancel out. This is formalized in Assumption~\ref{ass:extension_3_alignment}.

\begin{assumption}[Per-sample update alignment]
\label{ass:extension_3_alignment}
We assume that the batch size satisfies $B=\Theta(1)$, and the batch update norm scales consistently with the per-sample update norm during the training:
\begin{align*}
    & \Vert \Delta \mW_l\vh_{l-1}(\vx^{(i)}) \Vert_\normrms = \Theta\left( \frac{1}{B} \Vert \Delta \mW_l^{(i)}\vh_{l-1}(\vx^{(i)})\Vert_\normrms \right).
\end{align*}
\end{assumption}

We verify this in Figure~\ref{figures:extension_3_sample_alignment}, where the alignment ratio remains $\Theta(1)$, indicating that batch-averaged updates preserve the scaling properties of single-sample updates.

\begin{figure*}[t]
    \centering
    \includegraphics[width=0.95\textwidth]{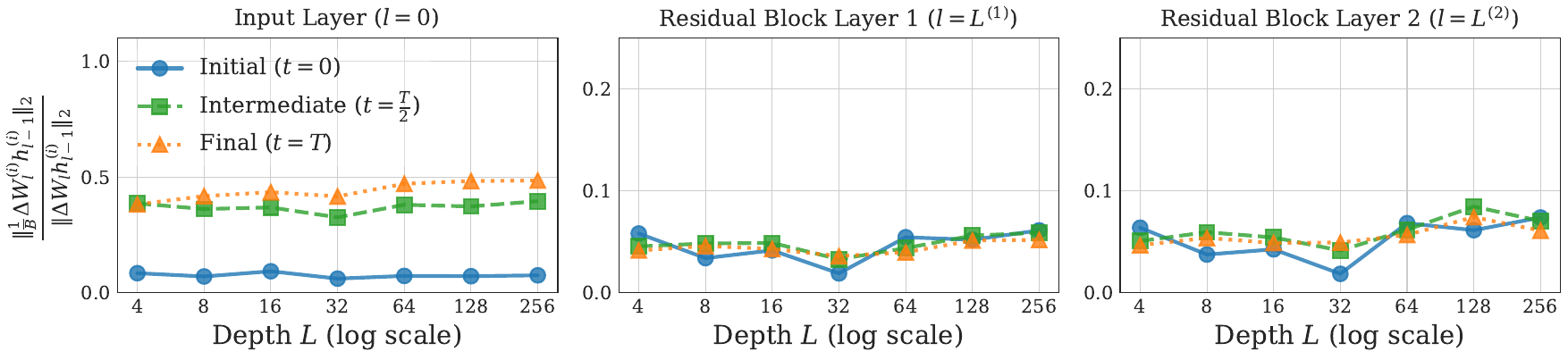}
    \caption{\textbf{Validation of Assumption~\ref{ass:extension_3_alignment} (per-sample update alignment).} We report the averaged ratio of $\frac{\Vert \frac{1}{B}\Delta \mW_l^{(i)} \vh_{l-1}^{(i)} \Vert_\normrms}{\Vert \Delta \mW_l \vh_{l-1}^{(i)} \Vert_\normrms}$ across the batch of data. The values remain stable across varying depths, suggesting that the batch update does not alter the depth-wise scaling of the single-sample update.}
    \label{figures:extension_3_sample_alignment}
\end{figure*}

\subsection{Experimental Details}
\label{app:entension_experiment}

We conduct simulations to empirically Assumptions~\ref{ass:extension_1_update}--\ref{ass:extension_3_alignment}. 
Our experimental setup largely follows~\citet{mup-spectral}, with a different emphasis on depth scaling instead of width scaling.
Details are provided below.

\textbf{Dataset.}
We construct a binary classification dataset using a subset of CIFAR-10, selecting 100 samples each from the ``airplane'' and ``automobile'' classes. The inputs are flattened image vectors in $\mathbb{R}^{3072}$ associated with binary labels in $\{0, 1\}$. 

\textbf{Architecture and Training.}
The architecture is a deep residual MLP with ReLU activations, consisting of an input layer, $L$ residual blocks, and a final linear output layer:
\begin{equation}
\begin{aligned}
& \vh_0(\vx) = \alpha_0\phi\left(\mW_0\vx\right), \\
& \vh_l(\vx) = \vh_{l-1}(\vx) + \alpha_l\phi\left(\mW_l^{(2)}\phi\left(\mW_l^{(1)}\vh_{l-1}(\vx)\right)\right), \quad l \in [L], \\
&\vh_{L+1}(\vx) = \alpha_{L+1}\mW_{L+1}\vh_L(\vx),
\end{aligned}
\label{eq:network_nonlinear}
\end{equation}
where $\phi$ denotes the ReLU activation. The dimensions are set as: input dimension $d_0 = 3072$, model width $n = 256$, residual block width $n_l = n$, and output dimension $d_{L+1} = 1$.
This aligns well with the simplified setup discussed in the main text (Section~\ref{sec: two-layer setup}).
Models are trained to minimize the binary cross-entropy loss using full-batch Gradient Descent (GD) for $T=200$ steps.

\textbf{Parameterization.}
We implement the width-depth $\mu$P parametrization for SGD derived in Table~\ref{tab: sgd-mup} of Appendix~\ref{app:sgd_mup} as follows:
\begin{align*}
    \alpha_0 &= \alpha_{\mathrm{base}}, \quad \alpha_l = \frac{\alpha_{\mathrm{base}}}{L}, \quad \alpha_{L+1} = \frac{\alpha_{\mathrm{base}}}{n}, \\
    \sigma^2_0 &= \frac{\sigma^2_{\mathrm{base}}}{d_0}, \quad \sigma^2_l = \frac{\sigma^2_{\mathrm{base}}}{n}, \quad \sigma^2_{L+1} = \sigma^2_{\mathrm{base}},\\
    \eta_0 &= \eta_{\mathrm{base}}n, \quad \eta_l = \eta_{\mathrm{base}}{L}, \quad \eta_{L+1} = \eta_{\mathrm{base}}n,
\end{align*}
with base constants set to:
\begin{align*}
    \alpha_{\mathrm{base}} = 1, \quad \sigma^2_{\mathrm{base}} = 2, \quad \eta_{\mathrm{base}} = 0.001.
\end{align*}

\textbf{Verification of Assumptions.}
We perform a depth scaling analysis by training networks with depths $L \in \{4, 8, 16, 32, 64, 128, 256\}$. We track the metrics corresponding to the assumptions above at three distinct training phases: initialization ($t=0$), intermediate training ($t=T/2$), and the end of training ($t=T$), and different layers: the input layer ($l=0$) and the internal layers of the final residual block (here, we denote them by $l=L^{(1)}$ and $l=L^{(2)}$), as representatives.

\end{appendices}

\end{document}